\newcommand*\widefbox[1]{\fbox{\hspace{2em}#1\hspace{2em}}}
\begin{document}

\title{String and Membrane Gaussian Processes}

\author{\name Yves-Laurent Kom Samo \email ylks@robots.ox.ac.uk  \AND
       \name Stephen J.\ Roberts \email sjrob@robots.ox.ac.uk \\
       \addr Department of Engineering Science and Oxford-Man Institute\\
       University of Oxford\\
       Eagle House, Walton Well Road,\\
       OX2 6ED, Oxford, United Kingdom}

\editor{Neil Lawrence}

\maketitle

\begin{abstract}%
In this paper we introduce a novel framework for making exact nonparametric Bayesian inference on latent functions that is particularly suitable for \emph{Big Data} tasks. Firstly, we introduce a class of stochastic processes we refer to as \emph{string Gaussian processes} (\emph{string GPs} which are not to be mistaken for Gaussian processes operating on text). We construct \emph{string GPs} so that their finite-dimensional marginals exhibit suitable \emph{local} conditional independence structures, which allow for \emph{scalable}, \emph{distributed}, and \emph{flexible} nonparametric Bayesian inference, without resorting to approximations, and while ensuring some mild global regularity constraints. Furthermore, \emph{string GP} priors naturally cope with heterogeneous input data, and the gradient of the learned latent function is readily available for explanatory analysis. Secondly, we provide some theoretical results relating our approach to the \emph{standard GP paradigm}. In particular, we prove that some \emph{string GPs} are Gaussian processes, which provides a complementary \emph{global} perspective on our framework. Finally, we derive a scalable and distributed MCMC scheme for supervised learning tasks under \emph{string GP} priors. The proposed MCMC scheme has computational time complexity $\mathcal{O}(N)$ and memory requirement $\mathcal{O}(dN)$, where $N$ is the data size and $d$ the dimension of the input space. We illustrate the efficacy of the proposed approach on several synthetic and real-world data sets, including a data set with $6$ millions input points and $8$ attributes.
\end{abstract}

\begin{keywords}
 String Gaussian processes, scalable Bayesian nonparametrics, Gaussian processes, nonstationary kernels, reversible-jump MCMC, point process priors
\end{keywords}

%
%
%
%
%
%
%
%
\section{Introduction}
Many problems in statistics and machine learning involve inferring a latent function from training data (for instance regression, classification, inverse reinforcement learning, inference on point processes to name but a few). Real-valued stochastic processes, among which Gaussian processes (GPs), are often used as functional priors for such problems, thereby allowing for a full Bayesian nonparametric treatment. In the machine learning community,  interest in  GPs grew out of the observation that some Bayesian neural networks converge to GPs as the number of hidden units approaches infinity (\cite{neal}). Since then, other similarities have been established between GPs and popular models such as Bayesian linear regression, Bayesian basis function regression, spline models and support vector machines (\cite{rasswill}). However, they often perform poorly on \emph{Big Data} tasks primarily for two reasons. Firstly, large data sets are likely to exhibit multiple types of local patterns that should appropriately be accounted for by flexible and possibly nonstationary covariance functions, the development of which is still an active subject of research. Secondly, inference under GP priors often consists of looking at the values of the GP at all input points as a jointly Gaussian vector with fully dependent coordinates, which induces a memory requirement and time complexity respectively squared and cubic in the training data size, and thus is intractable for large data sets. We refer to this approach as the \emph{standard GP paradigm}. The framework we introduce in this paper addresses both of the above limitations. 

Our work  is rooted in the observation that, from a Bayesian nonparametric perspective, it is inefficient to define a stochastic process through \emph{fully-dependent} marginals, as it is the case for Gaussian processes. Indeed, if a stochastic process $(f(x))_{x \in \mathbb{R}^d}$ has fully dependent marginals and exhibits no additional conditional independence structure then, when $f$ is used as functional prior and some observations related to $\left(f(x_1), \dots, f(x_n)\right)$ are gathered, namely $\left(y_1, \dots, y_n\right)$, the \emph{additional} memory required to take into account an additional piece of information $(y_{n+1}, x_{n+1})$ grows in $\mathcal{O}(n)$, as one has to keep track of the extent to which $y_{n+1}$ informs us about $f(x_i)$ for every $i\leq n$, typically through a covariance matrix whose size will increase by $2n +1$ terms. Clearly, this is inefficient, as $y_{n+1}$ is unlikely to be informative about $f(x_i)$, unless $x_i$ is sufficiently close to $x_{n+1}$. More generally, the larger $n$, the less information a single additional pair $(y_{n+1}, x_{n+1})$ will add to existing data, and yet the increase in memory requirement will be much higher than that required while processing earlier and more informative data. This inefficiency in resource requirements extends to computational time, as the \emph{increase} in computational time resulting from adding $(y_{n+1}, x_{n+1})$ typically grows in $\mathcal{O}(n^2)$, which is the difference between the numbers of operations required to invert a $n \times n$ matrix and to invert a $(n+1) \times (n+1)$ matrix. A solution for addressing this inefficiency is to appropriately limit the extent to which values $f(x_1), \dots, f(x_n)$ are related to each other. Existing approaches such as sparse Gaussian processes (see \cite{FTCI} for a review), resort to an \emph{ex-post} approximation of fully-dependent Gaussian marginals with multivariate Gaussians exhibiting conditional independence structures. Unfortunately, these approximations trade-off accuracy for scalability through a control variable, namely the number of inducing points, whose choice is often left to the user. The approach we adopt in this paper consists of going back to stochastic analysis basics, and constructing stochastic processes whose finite-dimensional marginals exhibit suitable conditional independence structures so that we need not resorting to \emph{ex-post} approximations. Incidentally, unlike sparse GP techniques, the conditional independence structures we introduce also allow for flexible and principled learning of local patterns, and this increased flexibility does not come at the expense of scalability.

The contributions of this paper are as follows. We introduce  a novel class of stochastic processes, string Gaussian processes (\emph{string GPs}), that may be used as priors over latent functions within a Bayesian nonparametric framework, especially for large scale problems and in the presence of possibly multiple types of local patterns. We propose a framework for analysing the flexibility of random functions and surfaces, and prove that our approach yields more flexible stochastic processes than isotropic Gaussian processes. We demonstrate that exact inference under a \emph{string GP} prior scales considerably better than in the \emph{standard GP paradigm}, and is amenable to distributed computing. We illustrate that popular stationary kernels can be well approximated within our framework, making \emph{string GPs} a scalable alternative to commonly used GP models. We derive the joint law of a \emph{string GP} and its gradient, thereby allowing for explanatory analysis on the learned latent function. We propose a reversible-jump Markov Chain Monte Carlo sampler for automatic learning of model complexity and local patterns from data.

The rest of the paper is structured as follows. In Section \ref{sct:related_work} we review recent advances on Gaussian processes in relation to inference on large data sets. In Section \ref{sct:model} we formally construct \emph{string GPs} and derive some important results. In Section \ref{sct:comp} we provide detailed illustrative and theoretical comparisons between \emph{string GPs} and the \emph{standard GP paradigm}. In Section \ref{sct:infer} we propose methods for inferring latent functions under \emph{string GP} priors with time complexity and memory requirement that are linear in the size of the data set. The efficacy of our approach compared to competing alternatives is illustrated in Section \ref{sct:exp}. Finally, we finish with a discussion in Section \ref{sct:discussion}.

%
%
%
%
%
%
%
%
\section{Related Work}
\label{sct:related_work}
The two primary drawbacks of the \emph{standard GP paradigm} on large scale problems are the lack of scalability resulting from postulating a full multivariate Gaussian prior on function values at \emph{all} training inputs, and the difficulty postulating \emph{a priori} a class of covariance functions capable of capturing intricate and often local patterns likely to occur in large data sets. A tremendous amount of work has been published that attempt to address either of the aforementioned limitations. However, scalability is often achieved either through approximations or for specific applications, and nonstationarity is usually introduced at the expense of scalability, again for specific applications.

\subsection{Scalability Through Structured Approximations}

As far as scalability is concerned, sparse GP methods have been developed  that approximate the multivariate Gaussian probability density function (pdf) over training data with the marginal over a smaller set of inducing points multiplied by an approximate conditional pdf (\cite{Smola01sparsegreedy, lawr03, seeger2003pac, Seeger03bayesiangaussian, Snelson06sparsegaussian}). This approximation yields a time complexity linear---rather than cubic---in the data size and squared in the number of inducing points. We refer to \cite{FTCI} for a review of sparse GP approximations. More recently, \cite{gpbigdatareg,gpbigdatacls} combined sparse GP methods with Stochastic Variational Inference (\cite{JMLR:v14:hoffman13a}) for GP regression and GP classification. However, none of these sparse GP methods addresses the selection of the number of inducing points (and the size of the minibatch in the case of \cite{gpbigdatareg,gpbigdatacls}), although this may greatly affect scalability. More importantly, although these methods do not impose strong restrictions on the covariance function of the GP model to approximate, they do not address the need for flexible covariance functions inherent to large scale problems, which are more likely to exhibit intricate and local patterns, and applications considered by the authors typically use the vanilla squared exponential kernel.

\cite{sparsespectrum} proposed approximating stationary kernels with truncated Fourier series in Gaussian process regression. An interpretation of the resulting sparse spectrum Gaussian process model as Bayesian basis function regression with a finite number $K$ of trigonometric basis functions allows making inference in time complexity and memory requirement that are both linear in the size of the training sample. However, this model has two major drawbacks. Firstly, it is prone to over-fitting. In effect, the learning machine will aim at inferring the $K$ major spectral frequencies evidenced in the training data. This will only lead to appropriate prediction out-of-sample when the underlying latent phenomenon can be appropriately characterised by a finite discrete spectral decomposition that is expected to be the same everywhere on the domain. Secondly, this model implicitly postulates that the covariance between the values of the GP at two points does not vanish as the distance between the points becomes arbitrarily large. This imposes \emph{a priori} the view that the underlying function is highly structured, which might be unrealistic in many real-life non-periodic applications. This approach is generalised by the so-called \emph{random Fourier features} methods (\cite{rahimi07, le13, le15}). Unfortunately all existing random Fourier features methods give rise to stationary covariance functions, which might not be appropriate for data sets exhibiting local patterns.

The bottleneck of inference in the \emph{standard GP paradigm} remains inverting and computing the determinant of a covariance matrix, normally achieved through the Cholesky decomposition or Singular Value Decomposition. Methods have been developed that speed-up these decompositions through low rank approximations (\cite{Williams01usingthe}) or by exploiting specific structures in the covariance function and in the input data (\cite{saatchi11, GPatt}), which typically give rise to Kronecker or Toeplitz covariance matrices. While the Kronecker method used by \cite{saatchi11} and \cite{GPatt} is restricted to inputs that form a Cartesian grid and to separable kernels,\footnote{That is multivariate kernel that can be written as product of univariate kernels.} low rank approximations such as the Nystr$\ddot{\text{o}}$m method used by \cite{Williams01usingthe} modify the covariance function and hence the functional prior in a non-trivial way. Methods have also been proposed to interpolate the covariance matrix on a uniform or Cartesian grid in order to benefit from some of the computational gains of Toeplitz and Kronecker techniques even when the input space is not structured (\cite{wilson2015kernel}). However, none of these solutions is general as they require that either the covariance function be separable (Kronecker techniques), or the covariance function be stationary and the input space be one-dimensional (Toeplitz techniques).

\subsection{Scalability Through Data Distribution}
A family of methods have been proposed to scale-up inference in GP models that are based on the observation that it is more computationally efficient to compute the pdf of $K$ independent small Gaussian vectors with size $n$ than to compute the pdf of a single bigger Gaussian vector of size $nK$. For instance, \cite{kim} and \cite{gramacy} partitioned the input space, and put independent stationary GP priors on the restrictions of the latent function to the subdomains forming the partition, which can be regarded as independent \emph{local GP experts}. \cite{kim} partitioned the domain using Voronoi tessellations, while \cite{gramacy} used tree based partitioning. These two approaches are provably equivalent to postulating a (nonstationary) GP prior on the whole domain that is discontinuous along the boundaries of the partition, which might not be desirable if the latent function we would like to infer is continuous, and might affect predictive accuracy. The more local experts there are, the more scalable the model will be, but the more discontinuities the latent function will have, and subsequently the less accurate the approach will be. 

Mixtures of Gaussian process experts models (MoE) (\cite{tresp, Rasmussen01infinitemixtures, Meeds_analternative, icml2013_ross13a}) provide another implementation of this idea. MoE models assume that there are multiple latent functions to be inferred from the data, on which it is placed independent GP priors, and each training input is associated to one latent function. The number of latent functions and the repartition of data between latent functions can then be performed in a full Bayesian nonparametric fashion (\cite{Rasmussen01infinitemixtures, icml2013_ross13a}). When there is a single continuous latent function to be inferred, as it is the case for most regression models, the foregoing Bayesian nonparametric approach will learn a single latent function, thereby leading to a time complexity and a memory requirement that are the same as in the \emph{standard GP paradigm}, which defies the scalability argument. 

The last implementation of the idea in this section consists of distributing the training data over multiple independent but identical GP models. In regression problems, examples include the \emph{Bayesian Committee Machines} (BCM) of \cite{tresp2000bayesian}, the \emph{generalized product of experts} (gPoE) model of \cite{cao2014generalized}, and the \emph{robust Bayesian Committee Machines} (rBCM) of \cite{deisenroth2015distributed}. These models propose splitting the training data in small subsets, each subset being assigned to a different GP regression model---referred to as an expert---that has the same hyper-parameters as the other experts, although experts are assumed to be mutually independent. Training is performed by maximum marginal likelihood, with time complexity (resp. memory requirement) linear in the number of experts and cubic (resp. squared) in the size of the largest data set processed by an expert. Predictions are then obtained by aggregating the predictions of all GP experts in a manner that is specific to the method used (that is the BCM, the gPoE or the rBCM). However, these methods present major drawbacks in the training and testing procedures. In effect, the assumption that experts have identical hyper-parameters is inappropriate for data sets exhibiting local patterns. Even if one would allow GP experts to be driven by different hyper-parameters as in \cite{nguyen2014fast} for instance, learned hyper-parameters would lead to overly simplistic GP experts and poor aggregated predictions when the number of training inputs assigned to each expert is small---this is a direct consequence of the (desirable) fact that maximum marginal likelihood GP regression abides by Occam's razor. Another critical pitfall of BCM, gPoE and rBCM is that their methods for aggregating expert predictions are Kolmogorov \emph{inconsistent}. For instance, denoting $\hat{p}$ the predictive distribution in the BCM, it can be easily seen from Equations (2.4) and (2.5) in \cite{tresp2000bayesian} that the predictive distribution $\hat{p}(f(x_1^*) \vert \mathcal{D})$ (resp. $\hat{p}(f(x_2^*) \vert \mathcal{D})$)\footnote{Here $f$ is the latent function to be inferred, $x_1^*, x_2^*$ are test points and $\mathcal{D}$ denotes training data.} provided by the aggregation procedure of the BCM is \emph{not} the marginal over $f(x_2^*)$ (resp. over $f(x_1^*)$) of the multivariate predictive distribution $\hat{p}(f(x_1^*), f(x_2^*) \vert \mathcal{D})$ obtained from experts multivariate predictions $p_k(f(x_1^*), f(x_2^*) \vert \mathcal{D})$ using the same aggregation procedure: $\hat{p} (f(x_1^*) \vert \mathcal{D}) \neq \int \hat{p} (f(x_1^*), f(x_2^*)  \vert \mathcal{D}) df(x_2^*)$. Without Kolmogorov consistency, it is impossible to make principled Bayesian inference of latent function values. A principled Bayesian nonparametric model should not provide predictions about $f(x_1^*)$ that differ depending on whether or not one is also interested in predicting other values $f(x_i^*)$ simultaneously. This pitfall might be the reason why \cite{cao2014generalized} and \cite{deisenroth2015distributed} restricted their expositions to predictive distributions about a single function value at a time $\hat{p}(f(x^*) \vert \mathcal{D})$, although their procedures (Equation 4 in \cite{cao2014generalized} and Equation 20 in \cite{deisenroth2015distributed}) are easily extended to posterior distributions over multiple function values. These extensions would also be Kolmogorov \emph{inconsistent}, and restricting the predictions to be of exactly one function value is unsatisfactory as it does not allow determining the posterior covariance between function values at two test inputs.

\subsection{Expressive Stationary Kernels}
In regards to flexibly handling complex patterns likely to occur in large data sets, \cite{wilson2013gaussian} introduced a class of expressive stationary kernels obtained by summing up convolutions of Gaussian basis functions with Dirac delta functions in the spectral domain. The sparse spectrum kernel can be thought of as the special case where the convolving Gaussian is degenerate. Although such kernels perform particularly well in the presence of globally repeated patterns in the data, their stationarity limits their utility on data sets with local patterns. Moreover the proposed covariance functions generate infinitely differentiable random functions, which might be too restrictive in some applications. 

\subsection{Application-Specific Nonstationary Kernels}
As for nonstationary kernels, \cite{paciorek2004nonstationary} proposed a method for constructing nonstationary covariance functions from any stationary one that involves introducing $n$ input dependent $d \times d$ covariance matrices that will be inferred from the data. \cite{plagemann08ecml} proposed a faster approximation to the model of \cite{paciorek2004nonstationary}. However, both approaches scale poorly with the input dimension and the data size as they have time complexity $\mathcal{O}\left(\max(nd^3, n^3) \right)$. \cite{gp_intro}, \cite{Schmidt03}, and \cite{calandra2014manifold} proposed kernels that can be regarded as stationary after a non-linear transformation $d$ on the input space: $k(x, x^\prime) = h \left(\| d(x) - d(x^\prime) \| \right),$ where $h$ is positive semi-definite. Although for a given deterministic function $d$ the kernel $k$ is nonstationary, \cite{Schmidt03} put a GP prior on $d$ with mean function $m(x)=x$ and covariance function invariant under translation, which unfortunately leads to a kernel that is (unconditionally) stationary, albeit more flexible than $h \left(\| x - x^\prime \| \right).$ To model nonstationarity, \cite{ggpm} introduced a functional prior of the form $y(x)= f(x) \exp{g(x)}$ where $f$ is a stationary GP and $g$ is some scaling function on the domain. For a given non-constant function $g$ such a prior indeed yields a nonstationary Gaussian process. However, when a stationary GP prior is placed on the function $g$ as \cite{ggpm} did, the resulting functional prior $y(x)= f(x) \exp{g(x)}$ becomes stationary. The piecewise GP (\cite{kim}) and treed GP (\cite{gramacy}) models previously discussed also introduce nonstationarity. The authors' premise is that heterogeneous patterns might be locally homogeneous. However, as previously discussed such models are inappropriate for modelling continuous latent functions.

\subsection{Our Approach}
The approach we propose in this paper for inferring latent functions in large scale problems, possibly exhibiting locally homogeneous patterns, consists of constructing a novel class of \emph{smooth}, \emph{nonstationary} and \emph{flexible} stochastic processes we refer to as \emph{string Gaussian processes} (\emph{string GPs}), whose finite dimensional marginals are structured enough so that full Bayesian nonparametric inference scales linearly with the sample size, without resorting to approximations. Our approach is analogous to MoE models in that, when the input space is one-dimensional, a \emph{string GP} can be regarded as a \emph{collaboration of local GP experts} on non-overlapping supports, that implicitly exchange messages with one another, and that are independent conditional on the aforementioned messages. Each local GP expert only shares just enough information with adjacent local GP experts for the whole stochastic process to be sufficiently smooth (for instance continuously differentiable), which is an important improvement over MoE models as the latter generate discontinuous latent functions. These messages will take the form of boundary conditions, conditional on which each local GP expert will be independent from any other local GP expert. Crucially, unlike the BCM, the gPoE and the rBCM, we do not assume that local GP experts share the same prior structure (that is mean function, covariance function, or hyper-parameters). This allows each local GP expert to flexibly learn local patterns from the data if there are any, while preserving global smoothness, which will result in improved accuracy. Similarly to MoEs, the computational gain in our approach stems from the fact that the conditional independence of the local GP experts conditional on shared boundary conditions will enable us to write the joint distribution over function and derivative values at a large number of inputs as the product of pdfs of much smaller Gaussian vectors. The resulting effect on time complexity is a decrease from $\mathcal{O}(N^3)$ to $\mathcal{O}(\underset{k}{\max} ~ n_k^3)$, where $N=\sum_{k} n_k, ~ n_k \ll N$. In fact, in Section \ref{sct:infer} we will propose Reversible-Jump Monte Carlo Markov Chain (RJ-MCMC) inference methods that achieve memory requirement and time complexity $\mathcal{O}(N)$, without any loss of flexibility. All these results are preserved by our extension of \emph{string GPs} to multivariate input spaces, which we will occasionally refer to as \emph{membrane Gaussian processes} (or membrane GPs). Unlike the BCM, the gPoE and the rBCM, the approach we propose in this paper, which we will refer to as the \emph{string GP paradigm}, is Kolmogorov consistent, and enables principled inference of the posterior distribution over the values of the latent function at multiple test inputs.

%
%
%
%
%
%
%
%
\section{Construction of String and Membrane Gaussian Processes}
\label{sct:model}
In this section we formally construct \emph{string} Gaussian processes, and we provide some important theoretical results including smoothness, and the joint law of \emph{string GPs} and their gradients. We construct \emph{string GPs} indexed on $\mathbb{R}$, before generalising to \emph{string GPs} indexed on $\mathbb{R}^d$, which we will occasionally refer to as \emph{membrane GPs} to stress that the input space is multivariate. We start by considering the joint law of a differentiable GP on an interval and its derivative, and introducing some related notions that we will use in the construction of \emph{string GPs}.

\begin{proposition}(\textbf{Derivative Gaussian processes})\\
\label{prop:derivative_processes}
Let $I$ be an interval, $k: I \times I \rightarrow \mathbb{R}$ a $\mathcal{C}^2$ symmetric positive semi-definite function,\footnote{$\mathcal{C}^1$ (resp. $\mathcal{C}^2$) functions denote functions that are once (resp. twice) continuously differentiable on their domains.} $m: I \rightarrow \mathbb{R}$ a $\mathcal{C}^1$ function.\\

\noindent (A) There exists a $\mathbb{R}^2$-valued stochastic process $\left(D_t\right)_{t \in I}, ~D_t=(z_t, z_t^\prime)$, such that for all $t_1, \dots, t_n \in I$, $(z_{t_1}, \dots, z_{t_n}, z_{t_1}^\prime, \dots, z_{t_n}^\prime)$ is a Gaussian vector with mean 
$\left(m(t_1), \dots, m(t_n), \frac{\text{d}m}{\text{dt}}(t_1), \dots, \frac{\text{d}m}{\text{dt}}(t_n)\right)$ and covariance matrix such that 
$$\text{cov}(z_{t_i}, z_{t_j})=k(t_i, t_j), ~~~ \text{cov}(z_{t_i}, z_{t_j}^\prime)=\frac{\partial k}{\partial y }(t_i, t_j), ~~~\text{and}~~~ \text{cov}(z_{t_i}^\prime, z_{t_j}^\prime)=\frac{\partial^2 k}{\partial x \partial y }(t_i, t_j),$$ 
where $\frac{\partial}{\partial x}$ (resp. $\frac{\partial}{\partial y}$) refers to the partial derivative with respect to the first (resp. second) variable of $k$. We herein refer to $(D_t)_{t \in I}$ as a \textbf{derivative Gaussian process}.\\

\noindent (B) $(z_t)_{t \in I}$ is a Gaussian process with mean function $m$, covariance function $k$ and that is $\mathcal{C}^1$ in the $L^2$ (mean square) sense. \\

\noindent (C) $(z^\prime_t)_{t \in I}$ is a Gaussian process with mean function $\frac{\text{d}m}{\text{dt}}$ and covariance function $\frac{\partial^2 k}{\partial x \partial y }$. Moreover, $(z^\prime_t)_{t \in I}$ is the $L^2$ derivative of the process $(z_t)_{t \in I}$.
\end{proposition}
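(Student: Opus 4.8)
The plan is to derive everything from a single scalar Gaussian process $(z_t)_{t\in I}$ with mean $m$ and covariance $k$, and to obtain $z'_t$ as a mean-square limit of difference quotients. First I would invoke the Kolmogorov extension theorem: since $k$ is symmetric positive semi-definite and $m$ is real-valued, there exists a stochastic process $(z_t)_{t\in I}$ whose finite-dimensional laws are Gaussian with mean $\big(m(t_1),\dots,m(t_n)\big)$ and covariance $\big(k(t_i,t_j)\big)_{i,j}$. Next, fixing $t\in I$, I would show that $\left(\frac{z_{t+h}-z_t}{h}\right)_h$ converges in $L^2$ as $h\to 0$ via the mean-square Cauchy (Lo\`eve) criterion: it suffices that $\mathbb{E}\left[\frac{z_{t+h}-z_t}{h}\cdot\frac{z_{t+h'}-z_t}{h'}\right]$ have a finite limit as $h,h'\to 0$. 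Expanding this expectation in terms of $m$ and $k$ and using $m\in\mathcal{C}^1$, $k\in\mathcal{C}^2$, a Taylor/mean-value argument identifies the limit as $\frac{\partial^2 k}{\partial x\partial y}(t,t)+\left(\frac{\text{d}m}{\text{d}t}(t)\right)^2$. I define $z'_t$ to be this $L^2$ limit, which is by construction the $L^2$ derivative of $(z_t)$ at $t$.

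For part (A), I would establish joint mean-square convergence: for arbitrary $t_1,\dots,t_n,s_1,\dots,s_m\in I$ the vector $\left(z_{t_1},\dots,z_{t_n},\frac{z_{s_1+h}-z_{s_1}}{h},\dots,\frac{z_{s_m+h}-z_{s_m}}{h}\right)$ is Gaussian for each $h\neq 0$, being an affine image of the Gaussian vector formed by $z$ at finitely many points, and its components converge in $L^2$ as $h\to 0$ to $\left(z_{t_1},\dots,z_{t_n},z'_{s_1},\dots,z'_{s_m}\right)$ by the previous step. Since componentwise $L^2$ limits of Gaussian vectors are Gaussian (the characteristic functions, being $\exp(i\theta^{\!\top}\!\mu_h-\tfrac12\theta^{\!\top}\!\Sigma_h\theta)$, converge to that of the Gaussian with the limiting mean and covariance), the limit is Gaussian, and its mean and covariance are the limits of those of the prelimit vectors. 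Computing those limits yields $\text{cov}(z_{t_i},z_{t_j})=k(t_i,t_j)$, $\text{cov}(z_{t_i},z'_{s_j})=\lim_h\frac{k(t_i,s_j+h)-k(t_i,s_j)}{h}=\frac{\partial k}{\partial y}(t_i,s_j)$, $\text{cov}(z'_{s_i},z'_{s_j})=\lim_{h,h'}\frac{k(s_i+h,s_j+h')-k(s_i+h,s_j)-k(s_i,s_j+h')+k(s_i,s_j)}{hh'}=\frac{\partial^2 k}{\partial x\partial y}(s_i,s_j)$, and $\mathbb{E}[z'_{s_j}]=\frac{\text{d}m}{\text{d}t}(s_j)$. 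Setting $D_t=(z_t,z'_t)$ produces a process whose finite-dimensional laws are exactly those prescribed in (A); consistency of this family is automatic, since all these laws are read off from the single underlying process $(z_t)$.

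Parts (B) and (C) then follow quickly. That $(z_t)$ is a GP with mean $m$ and covariance $k$ is immediate; for $\mathcal{C}^1$-ness in the $L^2$ sense it remains to show $t\mapsto z'_t$ is $L^2$-continuous, which I would do by writing $\mathbb{E}\left[(z'_{t+\delta}-z'_t)^2\right]$ in terms of $\frac{\partial^2 k}{\partial x\partial y}$ and $\frac{\text{d}m}{\text{d}t}$ via the covariance formulas of (A) and noting it tends to $0$ as $\delta\to 0$ because $\frac{\partial^2 k}{\partial x\partial y}$ is continuous ($k\in\mathcal{C}^2$) and $\frac{\text{d}m}{\text{d}t}$ is continuous ($m\in\mathcal{C}^1$); $L^2$-continuity of $z$ itself is the same argument with $k$ in place of its mixed partial. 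For (C), the Gaussianity of $(z'_t)$, its mean function $\frac{\text{d}m}{\text{d}t}$ and its covariance function $\frac{\partial^2 k}{\partial x\partial y}$ are the special case of the computation in (A) with no $z_{t_i}$ components, and $(z'_t)$ being the $L^2$ derivative of $(z_t)$ is exactly how it was constructed.

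The main obstacle is the second-order estimate underlying $\text{cov}(z'_{s_i},z'_{s_j})\to\frac{\partial^2 k}{\partial x\partial y}(s_i,s_j)$: one must control the mixed second difference quotient of $k$ when $h$ and $h'$ go to $0$ independently, possibly at different rates. The clean route is the classical fact that for a $\mathcal{C}^2$ function of two variables, $\frac{k(s+h,t+h')-k(s+h,t)-k(s,t+h')+k(s,t)}{hh'}=\frac{\partial^2 k}{\partial x\partial y}(\xi,\eta)$ for some $\xi$ between $s$ and $s+h$ and $\eta$ between $t$ and $t+h'$ (two applications of the mean value theorem), after which continuity of $\frac{\partial^2 k}{\partial x\partial y}$ closes the argument; the analogous one-sided estimates handle the $\text{cov}(z_{t_i},z'_{s_j})$ term and the means. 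Everything else — the Kolmogorov extension, stability of Gaussianity under $L^2$ limits, and the mean-square Cauchy criterion — is standard and I would simply cite it.
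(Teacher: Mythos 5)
Your proof is correct, but it reverses the logical order of the paper's argument in a way that is worth noting. The paper constructs the $\mathbb{R}^2$-valued process $(D_t)$ abstractly: it first verifies that the proposed joint covariance structure of $(z_{t_1},\dots,z_{t_n},z'_{t_1},\dots,z'_{t_n})$ is positive semi-definite (by exhibiting it as a limit of honest covariance matrices of a GP and its difference quotients), then applies Kolmogorov's extension theorem to the resulting family of $\mathbb{R}^2$-valued Gaussian finite-dimensional measures, and only afterwards proves, in part (C), that the second coordinate so constructed is in fact the $L^2$ derivative of the first (by showing $\frac{z_{t+h}-z_t}{h}-z'_t$ is Gaussian with mean and variance tending to $0$). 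You instead build only the scalar GP $(z_t)$ via Kolmogorov, define $z'_t$ \emph{as} the $L^2$ limit of difference quotients (justified by the Lo\`eve mean-square Cauchy criterion and the mixed second-difference/mean-value estimate), and read off the joint law from stability of Gaussianity under $L^2$ limits. Your route makes positive semi-definiteness of the joint covariance and Kolmogorov consistency of the bivariate laws automatic, since everything lives on the single probability space carrying $(z_t)$, and the derivative property holds by construction rather than requiring a separate verification; the price is that you must carry out the joint $L^2$-limit argument for arbitrary mixed vectors $(z_{t_1},\dots,z_{t_n},z'_{s_1},\dots,z'_{s_m})$, which you do correctly. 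The paper's route, while slightly more laborious here, rehearses exactly the extension-theorem machinery it reuses to construct string GPs in Theorem \ref{theo:sgp}, where no single pre-existing scalar process is available to anchor the construction. The only points you leave implicit are the one-sided difference quotients needed at the endpoints of $I$ and the inclusion of the mean terms $\frac{(m(t+h)-m(t))(m(t+h')-m(t))}{hh'}$ in the Lo\`eve criterion computation, both of which are routine and which you essentially acknowledge.
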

\begin{proof}
Although this result is known in the Gaussian process community, we provide a proof for the curious reader in \ref{app:derivative_processes}.
\end{proof}
We will say of a kernel $k$ that it is \textbf{degenerate at} $a$ when a \emph{derivative Gaussian process} $(z_t, z_t^\prime)_{t \in I}$ with kernel $k$ is such that $z_a$ and $z_a^\prime$ are perfectly correlated,\footnote{Or equivalently when the Gaussian vector $(z_a, z_a^\prime)$ is degenerate.} that is  $$\vert \text{corr}(z_a, z_a^\prime) \vert= 1.$$ As an example, the linear kernel $k(u,v) = \sigma^2(u-c)(v-c)$ is degenerate at $0$. Moreover, we will say of a kernel $k$ that it is \textbf{degenerate at} $b$ \textbf{given} $a$ when it is not degenerate at $a$ and when the \emph{derivative Gaussian process} $(z_t, z_t^\prime)_{t \in I}$ with kernel $k$ is such that the variances of $z_b$ and $z_b^\prime$ conditional on $(z_a, z_a^\prime)$ are both zero.\footnote{Or equivalently when the Gaussian vector $(z_a, z_a^\prime)$ is not degenerate but $(z_a, z_a^\prime, z_b, z_b^\prime)$ is.} For instance, the periodic kernel proposed by \cite{gp_intro} with period $T$ is degenerate at $u+T$ given $u$.

An important subclass of \emph{derivative Gaussian processes} in our construction are the processes resulting from conditioning paths of a \emph{derivative Gaussian process} to take specific values at certain times $(t_1, \dots, t_c)$. We herein refer to those processes as \textbf{\emph{conditional derivative Gaussian process}}.  As an illustration, when $k$ is $\mathcal{C}^3$ on $I \times I$ with $I=[a,b]$, and neither degenerate at $a$ nor degenerate at $b$ given $a$, the \emph{conditional derivative Gaussian process} on $I=[a,b]$ with unconditional mean function $m$ and unconditional covariance function $k$ that is conditioned to start at $(\tilde{z}_a, \tilde{z}_a^\prime)$ is the \emph{derivative Gaussian process} with mean function
\begin{align}
\label{eq:single_cond_mean}
& \forall ~ t \in I, ~~~ m_c^a(t; \tilde{z}_a, \tilde{z}^\prime_a) = m(t) +  \tilde{\textbf{K}}_{t; a}  \textbf{K}_{a; a}^{-1} \begin{bmatrix} \tilde{z}_a - m(a) \\ \tilde{z}_a^\prime - \frac{d m}{dt}(a) \end{bmatrix},
\end{align}
 and covariance function $k_c^a$ that reads
\begin{align}
\label{eq:single_cond_cov}
 \forall ~ t, s \in I, ~~~ k_c^a(t, s) =  k(t, s) -  \tilde{\textbf{K}}_{t; a}  \textbf{K}_{a; a}^{-1} \tilde{\textbf{K}}_{s; a}^T
\end{align}
where $~~~
\textbf{K}_{u; v} = 
\begin{bmatrix} 
k(u, v) & \frac{\partial k}{\partial y}(u, v) \\
\frac{\partial k}{\partial x}(u, v)  & \frac{\partial^2 k}{\partial x \partial y}(u, v) 
\end{bmatrix},~~~$ and $~~~
\tilde{\textbf{K}}_{t; a} =
\begin{bmatrix} 
k(t, a) & \frac{\partial k}{\partial y}(t, a) 
\end{bmatrix}.$
Similarly, when the process is conditioned to start at $(\tilde{z}_a, \tilde{z}_a^\prime)$ and to end at $(\tilde{z}_b, \tilde{z}_b^\prime)$, the mean function reads
\begin{align}
\label{eq:double_cond_mean}
& \forall ~ t \in I,~~~ m_c^{a, b}(t; \tilde{z}_a, \tilde{z}^\prime_a,  \tilde{z}_b, \tilde{z}^\prime_b) = m(t) +  \tilde{\textbf{K}}_{t; (a, b)} \textbf{K}_{(a,b); (a, b)}^{-1}  \begin{bmatrix} \tilde{z}_a - m(a) \\ \tilde{z}_a^\prime - \frac{d m}{dt}(a) \\ \tilde{z}_b - m(b) \\ \tilde{z}_b^\prime - \frac{d m}{dt}(b)\end{bmatrix},
\end{align}
 and the covariance function $k_c^{a,b}$ reads
\begin{align}
\label{eq:double_cond_cov}
 \forall ~ t, s \in I, ~~~k_c^{a,b}(t, s) =  k(t, s) - \tilde{\textbf{K}}_{t; (a, b)} \textbf{K}_{(a,b); (a, b)}^{-1}  \tilde{\textbf{K}}_{s; (a, b)}^T,
\end{align}
where
$~~~\textbf{K}_{(a, b); (a, b)} = 
\begin{bmatrix} 
\textbf{K}_{a; a} & \textbf{K}_{a; b} \\
\textbf{K}_{b; a} & \textbf{K}_{b; b}
\end{bmatrix},~~~$and $~~~
\tilde{\textbf{K}}_{t; (a, b)} = 
\begin{bmatrix} 
\tilde{\textbf{K}}_{t; a} & \tilde{\textbf{K}}_{t; b}
\end{bmatrix}.$
It is important to note that both $\textbf{K}_{a; a}$ and $\textbf{K}_{(a,b); (a, b)}$ are indeed invertible because the kernel is assumed to be neither degenerate at $a$ nor degenerate at $b$ given $a$. Hence, the support of $(z_a, z_a^\prime, z_b, z_b^\prime)$ is $\mathbb{R}^4$, and any function and derivative values can be used for conditioning. Figure \ref{fig:example_plot} illustrates example independent draws from a \emph{conditional derivative Gaussian process}.
\begin{figure}[p]
\begin{center}
\centerline{\includegraphics[width=0.7\textwidth]{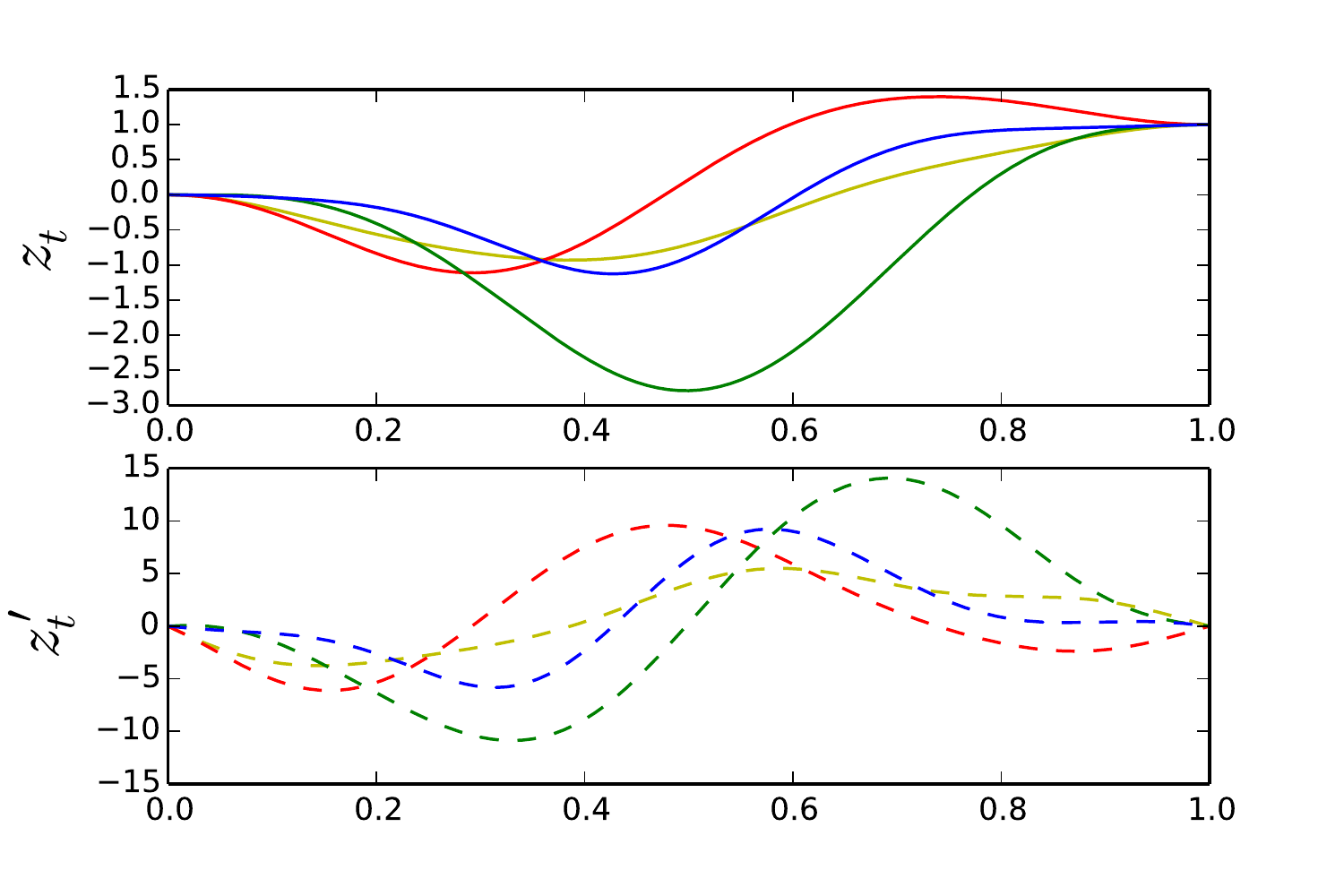}}
\caption{Draws from a conditional derivative GP conditioned to start at $0$ with derivative $0$ and to finish at $1.0$ with derivative $0.0$. The unconditional kernel is the squared exponential kernel with variance $1.0$ and input scale $0.2$.}
\label{fig:example_plot}
\end{center}
\end{figure}
%
%
%
%
%
%
%
%
\subsection{String Gaussian Processes on $\mathbb{R}$}
The intuition behind string Gaussian processes on an interval comes from the analogy of collaborative local GP experts we refer to as \emph{strings} that are connected but independent of each other conditional on some regularity boundary conditions. While each string is tasked with representing local patterns in the data, a string only shares the \emph{states} of its extremities (value and derivative) with adjacent strings. Our aim is to preserve global smoothness and limit the amount of information shared between strings, thus reducing computational complexity. Furthermore, the conditional independence between strings will allow for distributed inference, greater flexibility and principled nonstationarity construction. 

The following theorem at the core of our framework establishes that it is possible to connect together GPs on a partition of an interval $I$, in a manner consistent enough that the newly constructed stochastic object will be a stochastic process on $I$ and in a manner restrictive enough that any two connected GPs will share just enough information to ensure that the constructed stochastic process is continuously differentiable ($\mathcal{C}^1$) on $I$ in the $L^2$ sense.
\begin{theorem} \textbf{(String Gaussian process)}
\label{theo:sgp}
\\Let $a_0<\dots<a_k< \dots<a_K$, $I=[a_0, a_K]$ and let $p_\mathcal{N}(x; \mu, \Sigma)$ be the multivariate Gaussian density with mean vector $\mu$ and covariance matrix $\Sigma$. Furthermore, let $(m_k:  [a_{k-1}, a_k] \to \mathbb{R})_{k \in [1..K]}$ be $\mathcal{C}^1$ functions, and   $(k_k: [a_{k-1}, a_k] \times  [a_{k-1}, a_k]\to \mathbb{R})_{k \in [1..K]}$ be $\mathcal{C}^3$ symmetric positive semi-definite functions, neither degenerate at $a_{k-1}$, nor degenerate at $a_k$ given $a_{k-1}$.\\

\noindent (A) There exists an $\mathbb{R}^2$-valued stochastic process $(SD_{t})_{t \in I}, ~ SD_t=(z_t, z_t^\prime)$  satisfying the following conditions:

\noindent 1) The probability density of $(SD_{a_0}, \dots, SD_{a_K})$ reads: 
\begin{equation}
\label{eq:bound_pdf}
p_{b}(x_0, \dots, x_K) := \prod_{k=0}^K p_\mathcal{N}\left(x_k;  \mu^b_k, \Sigma_k^b\right)
\end{equation}
\begin{equation}
\label{eq:sb0k}
\text{where:}~~~~
\Sigma_0^b = {}_1\textbf{K}_{a_0; a_0}, ~~
\forall ~ k>0 ~~~\Sigma_k^b = {}_k\textbf{K}_{a_k; a_k} - {}_k\textbf{K}_{a_k; a_{k-1}}~{}_k\textbf{K}_{a_{k-1}; a_{k-1}}^{-1}~{}_k\textbf{K}_{a_k; a_{k-1}}^T,
\end{equation}
\begin{equation}
\label{eq:mb0k}
\mu_0^b={}_1\textbf{M}_{a_0}, ~~
\forall ~ k>0 ~~~\mu^b_k={}_k\textbf{M}_{a_k} + {}_k\textbf{K}_{a_k; a_{k-1}}~{}_k\textbf{K}_{a_{k-1}; a_{k-1}}^{-1}(x_{k-1}-{}_k\textbf{M}_{a_{k-1}}), 
\end{equation}
\[
\text{with}~~~~{}_k\textbf{K}_{u;v} = 
\begin{bmatrix} 
k_k(u, v) & \frac{\partial k_k}{\partial y}(u, v) \\
\frac{\partial k_k}{\partial x}(u, v)  & \frac{\partial^2 k_k}{\partial x \partial y}(u, v) 
\end{bmatrix}, ~~~~\text{and}~~~~
{}_k\textbf{M}_u =\begin{bmatrix} m_k(u) \\ \frac{d m_k}{dt}(u) \end{bmatrix}.
\]

\noindent 2) Conditional on $(SD_{a_k} = x_k)_{k \in [0..K]}$, the restrictions $(SD_{t})_{t \in  ]a_{k-1}, a_k[},~ k \in [1..K]$ are \textbf{independent conditional derivative Gaussian processes}, respectively with unconditional mean function $m_k$ and unconditional covariance function $k_k$ and that are conditioned to take values $x_{k-1}$ and $x_k$ at $a_{k-1}$ and $a_k$ respectively. We refer to $(SD_{t})_{t \in I}$  as a \textbf{string derivative Gaussian process}, and to its first coordinate $(z_{t})_{t \in I}$ as a \textbf{string Gaussian process} namely, \[(z_{t})_{t \in I} \sim \mathcal{SGP}(\{a_k\}, \{m_k\}, \{k_k\}).\]

\noindent (B) The \textbf{string Gaussian process} $(z_t)_{t \in I}$ defined in (A) is $\mathcal{C}^1$ in the $L^2$ sense and its $L^2$ derivative is the process $(z_t^\prime)_{t \in I}$ defined in (A).
\end{theorem}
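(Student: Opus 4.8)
The plan is to establish part (A) by an explicit construction of the finite-dimensional distributions and an appeal to the Kolmogorov extension theorem, and then to establish part (B) by a local argument reducing to the known smoothness of conditional derivative Gaussian processes (Proposition \ref{prop:derivative_processes}). For part (A), I would first note that Equations (\ref{eq:bound_pdf})--(\ref{eq:mb0k}) define a genuine probability density on $(\mathbb{R}^2)^{K+1}$: each factor $p_\mathcal{N}(x_k; \mu_k^b, \Sigma_k^b)$ is a valid Gaussian density because the non-degeneracy hypotheses on each $k_k$ guarantee that ${}_k\textbf{K}_{a_{k-1};a_{k-1}}$ is invertible (not degenerate at $a_{k-1}$) and that the Schur complement $\Sigma_k^b$ is positive definite (not degenerate at $a_k$ given $a_{k-1}$); this is exactly the observation already made after Equation (\ref{eq:double_cond_cov}). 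Reading the product telescopically, $p_b$ is the joint law of a Markov chain $(SD_{a_0},\dots,SD_{a_K})$ whose transitions are precisely the conditional laws of a derivative Gaussian process with kernel $k_k$ at $a_k$ given its value at $a_{k-1}$. Then, for an arbitrary finite set of times $t_1<\dots<t_n$ in $I$, I would define their joint law by: (i) augmenting with all the boundary points $a_k$, (ii) using $p_b$ for the boundary values, and (iii) conditionally on the boundary values, taking the times falling in each open interval $]a_{k-1},a_k[$ to be independent across $k$ and, within interval $k$, distributed as the conditional derivative Gaussian process of Equations (\ref{eq:single_cond_mean})--(\ref{eq:double_cond_cov}) pinned at $a_{k-1}$ and $a_k$ (using the one-sided conditioning (\ref{eq:single_cond_mean})--(\ref{eq:single_cond_cov}) only for the degenerate edge cases where $t_i$ coincides with a boundary, or for the leftmost interval), and finally marginalising out any boundary points that were not in the original set. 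I would then check the two Kolmogorov consistency conditions: symmetry under permutation of the $t_i$ (immediate, since the construction only depends on the set of times and the Gaussian densities involved are exchangeable in the obvious sense) and consistency under marginalisation (removing a $t_i$ integrates out one coordinate of a Gaussian vector, which again yields a Gaussian with the appropriate submatrix — one must separately verify that marginalising out an interior boundary point $a_k$ and re-expressing the law of the two adjacent blocks as a single conditional-GP block is consistent, which is where the Markov/telescoping structure of $p_b$ is used). Kolmogorov's extension theorem then yields the process $(SD_t)_{t\in I}$, and conditions (A.1) and (A.2) hold by construction.

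For part (B), the claim is that $(z_t)_{t\in I}$ is $L^2$-differentiable with $L^2$ derivative $(z'_t)_{t\in I}$. I would argue this locally: fix $t\in I$. If $t$ lies in an open interval $]a_{k-1},a_k[$, then on a neighbourhood of $t$ the process $(z_s,z'_s)$ is, conditionally on $(SD_{a_{k-1}},SD_{a_k})$, a conditional derivative Gaussian process with a $\mathcal{C}^2$ (indeed, derived from a $\mathcal{C}^3$) kernel $k_c^{a_{k-1},a_k}$, so Proposition \ref{prop:derivative_processes}(C) gives that $z'_s$ is the $L^2$ derivative of $z_s$ in that conditional law; since the unconditional law is a mixture (over the Gaussian-distributed boundary values) of these conditional laws with the mean shifted by the smooth function $m_c^{a_{k-1},a_k}(\cdot;\cdot)$, and since the $L^2$ difference quotient convergence is uniform enough to survive integration against the boundary law, the $L^2$ differentiability passes to the unconditional process. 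The only delicate points are the boundary times $t=a_k$, where one must show that the left derivative (computed inside interval $k$) and the right derivative (computed inside interval $k+1$) agree in $L^2$ with the single random variable $z'_{a_k}$; this follows because in the conditional law of interval $k$ the process is pinned to have value $z_{a_k}$ and derivative $z'_{a_k}$ at its right endpoint (the conditioning in (A.2) is on the full state $SD_{a_k}=(z_{a_k},z'_{a_k})$), and likewise for interval $k+1$ at its left endpoint, so the one-sided $L^2$ derivatives at $a_k$ both equal $z'_{a_k}$ — consistency of the shared boundary state is exactly what forces $\mathcal{C}^1$ gluing. I would then combine the one-sided statements into two-sided $L^2$ differentiability at each $a_k$ (for $0<k<K$), and handle $a_0,a_K$ with the appropriate one-sided derivative.

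The main obstacle I anticipate is the bookkeeping in the Kolmogorov consistency check, specifically verifying marginalisation consistency when the point being removed is an interior knot $a_k$: one must show that the law obtained by conditioning interval $k$ on $(a_{k-1},a_k)$ and interval $k+1$ on $(a_k,a_{k+1})$ and then integrating out $SD_{a_k}$ coincides with the law obtained by conditioning the merged interval $]a_{k-1},a_{k+1}[$ directly — but note this is NOT literally true as stated, because the two pieces use different unconditional kernels $k_k$ and $k_{k+1}$, so in fact the knots $a_k$ are genuinely part of the index structure and cannot be freely marginalised; the correct statement is that the family of finite-dimensional laws indexed by finite subsets of $I$ that always include all the $a_k$ is consistent, and one extends first to that family and then notes it determines a unique process on all of $I$. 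Getting this formulation right — and making sure the conditional-GP formulas (\ref{eq:single_cond_mean})--(\ref{eq:double_cond_cov}) are invoked with the correct pinning in every edge case (interior times, times equal to a knot, the leftmost block which is pinned only on the right) — is the part that requires genuine care; the rest is routine Gaussian algebra and an application of Proposition \ref{prop:derivative_processes}.
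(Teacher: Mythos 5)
Your part (A) follows essentially the paper's own route: you build the finite-dimensional laws on index sets that contain all the knots $a_k$ (the boundary density $p_b$ times a product over $k$ of doubly-pinned conditional derivative GP densities on the open intervals), extend to arbitrary index sets by marginalising out the absent knots, and then check the two Kolmogorov conditions. Your closing observation that the knots cannot be ``re-merged'' across different unconditional kernels, and that one must instead adjoin them to every index set and integrate them out afterwards, is exactly how the paper organises the consistency check. One small slip: the leftmost interval is not ``pinned only on the right'' --- in the construction every open interval $]a_{k-1},a_k[$, including $k=1$, is pinned at both endpoints, with $SD_{a_0}$ and $SD_{a_1}$ both drawn from $p_b$; the one-sided conditioning formulas (\ref{eq:single_cond_mean})--(\ref{eq:single_cond_cov}) enter only through the transition densities inside $p_b$ itself, never for interior string times.

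The genuine gap is in part (B), at the sentence claiming the $L^2$ difference-quotient convergence is ``uniform enough to survive integration against the boundary law''. It is not uniform. Writing $\Delta z_h := \mathrm{E}\bigl[\bigl(\frac{z_{t+h}-z_t}{h}-z_t'\bigr)^2 \,\big\vert\, SD_{a_0},\dots,SD_{a_K}\bigr]$, Proposition \ref{prop:derivative_processes}(C) applied conditionally gives $\Delta z_h \to 0$ almost surely, but $\Delta z_h$ splits as a conditional variance plus a squared conditional mean, and the latter is the square of a linear function of the boundary values whose convergence to zero is pointwise in those values, not uniform; for fixed $h$ it is unbounded over the boundary state space, so no dominating function is immediately available. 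What actually closes the argument --- and is the entire content of the paper's proof of (B) --- is: (i) both summands are nonnegative, so each tends to $0$ a.s.; (ii) the conditional variance does not depend on the boundary \emph{values}, only on the boundary times, so its a.s.\ convergence is deterministic convergence; (iii) the conditional mean is linear in the jointly Gaussian vector of boundary conditions, hence Gaussian, and almost-sure convergence of Gaussian variables implies $L^2$ convergence, so its second moment tends to $0$. Taking expectations of $\Delta z_h$ then yields the unconditional $L^2$ statement, and the same device is needed again for the $L^2$ continuity of $(z_t')$. Without (ii) and (iii), or an explicit substitute, the interchange of $\lim_{h\to 0}$ and $\mathrm{E}$ is unjustified. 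Your handling of the agreement of the two one-sided derivatives at interior knots is correct and in fact more explicit than the paper's.
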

\begin{proof}
See \ref{app:sgp}.
\end{proof}
In our \emph{collaborative local GP experts} analogy, Theorem \ref{theo:sgp} stipulates that each local expert takes as message from the previous expert its left hand side boundary conditions, conditional on which it generates its right hand side boundary conditions, which it then passes on to the next expert. Conditional on their boundary conditions local experts are independent of each other, and resemble vibrating pieces of string on fixed extremities, hence the name \emph{string Gaussian process}.
%
%
%
%
%
%
%
%
\subsection{Pathwise Regularity}
\label{sct:reg_upgrade}
Thus far we have dealt with regularity only in the $L^2$ sense. However, we note that a sufficient condition for the process $(z_t^\prime)_{t \in I}$ in Theorem \ref{theo:sgp} to be almost surely continuous (i.e. sample paths are continuous with probability $1$) and to be the almost sure  derivative of the string Gaussian process $(z_t)_{t \in I}$, is that the Gaussian processes on $I_k=[a_{k-1}, a_k]$ with mean and covariance functions $m_{ck}^{a_{k-1}, a_k}$ and $k^{a_{k-1}, a_k}_{ck}$ (as per Equations \ref{eq:double_cond_mean} and \ref{eq:double_cond_cov} with $m:= m_k$ and $k:= k_k$) are themselves almost surely $\mathcal{C}^1$ for every boundary condition.\footnote{The proof is provided in \ref{app:path_reg}.} We refer to  \citep[Theorem 2.5.2]{adlertaylor} for a sufficient condition under which a $\mathcal{C}^1$ in $L^2$ Gaussian process is also almost surely $\mathcal{C}^1$. As the above question is provably equivalent to that of the almost sure continuity of a Gaussian process   \citep[see][p. 30]{adlertaylor}, \emph{Kolmogorov's continuity theorem} \citep[see][Theorem 2.2.3]{oks} provides a more intuitive, albeit stronger, sufficient condition than that of \citep[Theorem 2.5.2]{adlertaylor}.
%
%
%
%
%
%
%
%
\subsection{Illustration}
\label{sct:simu}
Algorithm \ref{alg:simulation} illustrates sampling jointly from a string Gaussian process and its derivative on an interval $I=[a_0, a_K]$. We start off by sampling the string boundary conditions $(z_{a_k}, z^\prime_{a_k})$ sequentially, conditional on which we sample the values of the stochastic process on each string. This we may do in parallel as the strings are  independent of each other conditional on boundary conditions. The resulting time complexity is the sum of $\mathcal{O}(\max ~ n_k^3)$ for sampling values within strings, and  $\mathcal{O}(n)$ for sampling boundary conditions, where the sample size is $n=\sum_{k} n_k$. The memory requirement grows as the sum of  $\mathcal{O}(\sum_{k} n_k^2)$, required to store conditional covariance matrices of the values within strings, and $\mathcal{O}(K)$ corresponding to the storage of covariance matrices of boundary conditions. In the special case where strings are all empty, that is inputs and boundary times are the same, the resulting time complexity and memory requirement are $\mathcal{O}(n)$. Figure \ref{fig:example_plot_string} illustrates a sample from a string Gaussian process, drawn using this approach. 

\begin{figure}[p]
\begin{center}
\centerline{\includegraphics[width=0.7\textwidth]{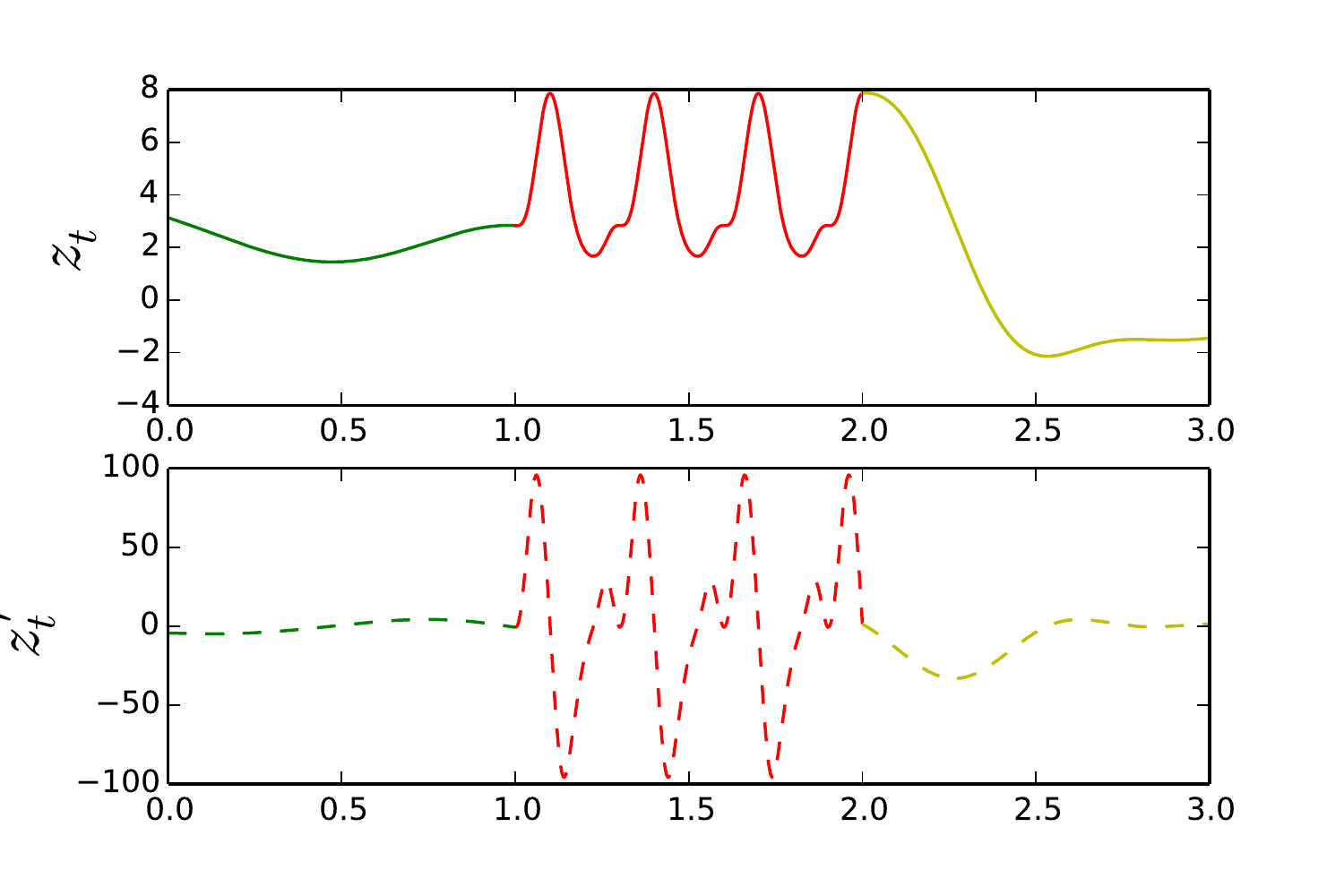}}
\caption{Draw from a \emph{string GP} $(z_t)$ with 3 strings and its derivative $(z_t^\prime)$, under squared exponential kernels (green and yellow strings), and the periodic kernel of \cite{gp_intro} (red string).}
\label{fig:example_plot_string}
\end{center}
\end{figure} 

\begin{algorithm}[p]
   \caption{Simulation of a string derivative Gaussian process}
   \label{alg:simulation}
\begin{algorithmic}
   \STATE {\bfseries Inputs:}  boundary times $a_0<\dots<a_K$, string times $\{t_j^k \in ]a_{k-1}, a_k[\}_{j \in [1..n_k], k \in [1..K]}$, unconditional mean (resp. covariance) functions $m_k$ (resp. $k_k$)
   \STATE {\bfseries Output:} $\{\dots, z_{a_k}, z_{a_k}^\prime, \dots, z_{t_j^k}, z_{t_j^k}^\prime, \dots\}$.
   \\\hrulefill
   \STATE {\bfseries \underline{Step 1}: sample the boundary conditions sequentially.}
    \FOR{$k=0$ {\bfseries to} $K$}
    \STATE Sample $(z_{a_k}, z_{a_k}^\prime) \sim \mathcal{N}\Big(\mu_k^b, \Sigma_k^b\Big)$, with $\mu_k^b$ and $\Sigma_k^b$ as per Equations (\ref{eq:mb0k}) and (\ref{eq:sb0k}).
   \ENDFOR
   \STATE {\bfseries \underline{Step 2}: sample the values on each string conditional on the boundary conditions in \emph{parallel}.}
\PARFOR{$k=1$ {\bfseries to} $K$}
\STATE Let ${}_k \textbf{M}_{u}$ and ${}_k \textbf{K}_{u;v}$ be as in Theorem  \ref{theo:sgp},
 \[
{}_k \Lambda = \begin{bmatrix}
{}_k \textbf{K}_{t_1^k; a_{k-1}} & {}_k \textbf{K}_{t_1^k; a_{k}} \\ 
\dots& \dots\\ 
{}_k \textbf{K}_{t_{n_k}^k; a_{k-1}} & {}_k \textbf{K}_{t_{n_k}^k; a_{k}}
\end{bmatrix}
\begin{bmatrix}
{}_k \textbf{K}_{a_{k-1}; a_{k-1}} & {}_k \textbf{K}_{a_{k-1}; a_{k}} \\ 
{}_k \textbf{K}_{a_{k}; a_{k-1}} & {}_k \textbf{K}_{a_{k}; a_{k}}
\end{bmatrix}^{-1},
\]
\STATE \begin{equation}
\label{eq:msk}
\mu_k^s=
\begin{bmatrix}
{}_k \textbf{M}_{t_1^k} \\ 
\dots\\ 
{}_k \textbf{M}_{t_{n_k}^k}
\end{bmatrix} +
{}_k \Lambda
\begin{bmatrix}
z_{a_{k-1}}-m_k(a_{k-1}) \\ 
z_{a_{k-1}}^\prime-\frac{dm_k}{dt}(a_{k-1}) \\ 
z_{a_{k}}-m_k(a_{k}) \\ 
z_{a_{k}}^\prime-\frac{dm_k}{dt}(a_{k}) \\ 
\end{bmatrix},
\end{equation}
\STATE \begin{equation}
\label{eq:ssk}
\Sigma_k^s = 
 \begin{bmatrix}
{}_k \textbf{K}_{t_1^k; t_1^k} & \dots & {}_k \textbf{K}_{t_1^k;t_{n_k}^k} \\ 
\dots & \dots& \dots\\ 
{}_k \textbf{K}_{t_{n_k}^k; t_1^k} & \dots & {}_k \textbf{K}_{t_{n_k}^k;t_{n_k}^k}
\end{bmatrix}- {}_k \Lambda
\begin{bmatrix}
{}_k \textbf{K}_{t_1^k; a_{k-1}} & {}_k \textbf{K}_{t_1^k; a_{k}} \\ 
\dots& \dots\\ 
{}_k \textbf{K}_{t_{n_k}^k; a_{k-1}} & {}_k \textbf{K}_{t_{n_k}^k; a_{k}}
\end{bmatrix}^T.
\end{equation}
\STATE Sample $\left(z_{t_1^k}, z_{t_1^k}^\prime \dots, z_{t_{n_k}^k}, z_{t_{n_k}^k}^\prime\right) \sim \mathcal{N}\left(\mu_k^s, \Sigma_k^s\right)$.
   \ENDPARFOR
\end{algorithmic}
\end{algorithm}

%
%
%
%
%
%
%
%
%
\subsection{String Gaussian Processes on $\mathbb{R}^d$}
So far the input space has been assumed to be an interval. We generalise \emph{string GPs} to hyper-rectangles in $\mathbb{R}^d$ as stochastic processes of the form:
\begin{equation}
\label{eq:multi_inputs_prior}
f(t_1, \dots, t_d)= \phi\left(z^1_{t_1}, \dots, z^d_{t_d}\right),
\end{equation}
where the \emph{link function} $\phi: \mathbb{R}^d \to \mathbb{R}$ is a $\mathcal{C}^1$ function and $(z^j_t)$ are $d$ independent ($\perp$) latent string Gaussian processes on intervals. We will occasionally refer to \emph{string GPs} indexed on $\mathbb{R}^d$ with $d>1$ as \emph{membrane GPs} to avoid any ambiguity. We note that when $d=1$ and when the link function is $\phi(x)=x$, we recover \emph{string GPs} indexed on an interval as previously defined. When the \emph{string GPs} $(z_t^j)$ are a.s. $\mathcal{C}^1$, the \emph{membrane GP} $f$ in Equation (\ref{eq:multi_inputs_prior}) is also a.s. $\mathcal{C}^1$, and the partial derivative with respect to the $j$-th coordinate reads:
\begin{equation}
\label{eq:sgp_gradient}
\frac{\partial f}{\partial t_j}(t_1, \dots, t_d) = z^{j \prime}_{t_j} \frac{\partial \phi}{\partial t_j} \left(z^1_{t_1}, \dots, z^d_{t_d}\right).
\end{equation}
Thus in high dimensions, \emph{string GPs} easily allow an explanation of the sensitivity of the learned latent function to inputs. 
%
%
%
%
%
%
%
%
\subsection{Choice of Link Function}
\label{sct:need_not}
Our extension of \emph{string GPs} to $\mathbb{R}^d$ departs from the \emph{standard GP paradigm} in that we did not postulate a covariance function on $\mathbb{R}^d \times \mathbb{R}^d$ directly. Doing so usually requires using a metric on $\mathbb{R}^d$, which is often problematic for heterogeneous input dimensions, as it introduces an arbitrary comparison between distances in each input dimension. This problem has been partially addressed by approaches such as Automatic Relevance Determination (ARD) kernels, that allow for a linear rescaling of input dimensions to be learned jointly with kernel hyper-parameters. However,   inference under a \emph{string GP} prior can be thought of as learning a coordinate system in which the latent function $f$ resembles the link function $\phi$ through non-linear rescaling of input dimensions. In particular, when $\phi$ is symmetric, the learned univariate \emph{string GPs} (being interchangeable in $\phi$) implicitly aim at normalizing input data across dimensions, making \emph{string GPs} naturally cope with heterogeneous data sets. 

An important question arising from our extension is  whether or not the link function $\phi$ needs to be learned to achieve a flexible functional prior. The flexibility of a \emph{string GP} as a functional prior depends on both the \emph{link function} and the covariance structures of the underlying \emph{string GP} building blocks $(z_t^j)$. To address the impact of the choice of $\phi$ on flexibility, we constrain the \emph{string GP} building blocks by restricting them to be independent identically distributed \emph{string GPs} with one string each (i.e. $(z_t^j)$ are i.i.d Gaussian processes). Furthermore, we restrict ourselves to isotropic kernels as they provide a consistent basis for putting the same covariance structure in $\mathbb{R}$ and $\mathbb{R}^d$. One question we might then ask, for a given \emph{link function} $\phi_0$, is whether or not an isotropic GP indexed on $\mathbb{R}^d$ with covariance function $k$ yields more flexible random surfaces than the stationary \emph{string GP} $f(t_1, \dots, t_d) = \phi_0(z_{t_1}^1, \dots, z_{t_d}^d)$, where $(z_{t_j}^j)$ are stationary GPs indexed on $\mathbb{R}$ with the same covariance function $k$. If we find a \emph{link function} $\phi_0$ generating more flexible random surfaces than isotropic GP counterparts it would suggest  $\phi$ need not to be inferred in dimension $d>1$ to be more flexible than any GP using one of the large number of commonly used isotropic kernels, among which squared exponential kernels, rational quadratic kernels, and Mat\'ern kernels to name but a few.

Before discussing whether such a $\phi_0$ exists, we need to introduce a rigorous meaning to `flexibility'. An intuitive qualitative definition of the flexibility of a stochastic process indexed on $\mathbb{R}^d$ is the ease with which it can generate surfaces with varying shapes from one random sample to another independent one. We recall that the tangent hyperplane to a $\mathcal{C}^1$ surface $y-f(x) =0, x \in \mathbb{R}^d$ at some point $x_0=(t^0_1, \dots, t^0_d)$ has equation $\nabla f(x_0)^T(x-x_0)-(y-f(x_0))=0$ and admits as normal vector $(\frac{\partial f}{\partial t_1}(t^0_1), \dots, \frac{\partial f}{\partial t_d}(t^0_d), -1)$. As tangent hyperplanes approximate a surface locally, a first criterion of flexibility for a random surface $y-f(x)=0, ~x \in \mathbb{R}^d$ is the proclivity of the (random) direction of its tangent hyperplane at any point $x$---and hence the proclivity of $\nabla f(x)$---to vary. This criterion alone, however, does not capture the difference between the local shapes of the random surface at two distinct points. A complementary second criterion of flexibility is the proclivity of the (random) directions of the tangent hyperplanes at any two distinct points $x_0, x_1 \in \mathbb{R}^d$---and hence the proclivity of $\nabla f(x_0)$ and $\nabla f(x_1)$---to be independent. The first criterion can be measured using the entropy of the gradient at a point, while the second criterion can be measured through the mutual information between the two gradients. The more flexible a stochastic process, the higher the entropy of its gradient at any point, and the lower the mutual information between its gradients at any two distinct points. This is formalised in the definition below.

\begin{definition} \textbf{(Flexibility of stochastic processes)}\\
Let $f$ and $g$ be two real valued, almost surely $\mathcal{C}^1$ stochastic processes indexed on $\mathbb{R}^d$, and whose gradients have a finite entropy everywhere (i.e. $\forall ~ x, ~ H(\nabla f(x)), H(\nabla g(x)) < \infty$). We say that $f$ is more flexible than $g$ if the following conditions are met:

\noindent 1)$~\forall ~ x, ~ H(\nabla f(x)) \geq H(\nabla g(x)),$

\noindent 2)$~\forall ~ x \neq y, ~ I(\nabla f(x); \nabla f(y)) \leq I(\nabla g(x); \nabla g(y))$,\\
where $H$ is the entropy operator, and $I(X; Y)= H(X) + H(Y) - H(X, Y)$ stands for the mutual information between $X$ and $Y$.
\end{definition}

\noindent The following proposition establishes that the \emph{link function} $\phi_s\left(x_1, \dots, x_d\right) = \sum_{i=j}^d x_j$ yields more flexible stationary \emph{string GPs} than their isotropic GP counterparts, thereby providing a theoretical underpinning for not inferring $\phi$.  

\begin{proposition}\textbf{(Additively separable \emph{string GPs} are flexible)}\\
\label{prop:diversity}Let $k(x, y) := \rho\left(\vert\vert x-y \vert\vert^2_{{L^2}}\right)$ be a stationary covariance function generating a.s. $\mathcal{C}^1$ GP paths indexed on $\mathbb{R}^d, ~ d>0$, and $\rho$ a function that is $\mathcal{C}^2$ on $]0, +\infty[$ and continuous at $0$. Let $\phi_s(x_1, \dots, x_d)=\sum_{j=1}^d x_j$, let $(z_t^j)_{t \in I^j, ~ j \in [1..d]}$ be independent stationary Gaussian processes with mean $0$ and covariance function $k$ (where the $L^2$ norm is on $\mathbb{R}$), and let $f(t_1, \dots, t_d)=\phi_s(z_{t_1}^1, \dots, z_{t_d}^d)$ be the corresponding stationary string GP. Finally, let $g$ be an isotropic Gaussian process indexed on $I^1 \times \dots \times I^d$ with mean 0 and covariance function $k$ (where the $L^2$ norm is on $\mathbb{R}^d$). Then: \\\\
1)$~\forall ~ x \in I^1 \times \dots \times I^d, ~ H(\nabla f(x)) = H(\nabla g(x))$,\\
2)$~\forall ~ x \neq y \in I^1 \times \dots \times I^d, ~ I(\nabla f(x); \nabla f(y)) \leq I(\nabla g(x); \nabla g(y))$.
\end{proposition}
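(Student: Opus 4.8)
The plan is to reduce both parts to explicit computations with jointly Gaussian vectors, leaning on Proposition~\ref{prop:derivative_processes} for the joint law of each $z^j$ and its derivative. Since $\phi_s$ is linear, $\nabla f(t_1,\dots,t_d)=\bigl(z^{1\prime}_{t_1},\dots,z^{d\prime}_{t_d}\bigr)$, so $\bigl(\nabla f(x),\nabla f(y)\bigr)$ is Gaussian and, because the $z^j$ are independent, its law factorises over the $d$ coordinates. For the isotropic field $g$, differentiating its covariance shows $\bigl(\nabla g(x),\nabla g(y)\bigr)$ is Gaussian with $\operatorname{cov}\!\bigl(\tfrac{\partial g}{\partial t_i}(x),\tfrac{\partial g}{\partial t_j}(y)\bigr)=\tfrac{\partial^2}{\partial x_i\partial y_j}\rho(\|x-y\|^2)=-2\rho'(q)\delta_{ij}-4\rho''(q)(x_i-y_i)(x_j-y_j)$ with $q=\|x-y\|^2$. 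Thus every entropy and mutual information in the statement is a finite multiple of $\log$ of a determinant of an explicit covariance matrix.

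Part~1 is then immediate: $\operatorname{var}(z^{j\prime}_t)$ is a constant $\sigma^2:=-2\rho'(0)$ independent of $j$ and $t$, and the coordinates of $\nabla f(x)$ are independent, so $\nabla f(x)\sim\mathcal{N}(0,\sigma^2 I_d)$; evaluating the displayed cross-covariance at $x=y$ gives $\nabla g(x)\sim\mathcal{N}(0,\sigma^2 I_d)$ as well. The two gradients have the same law, hence the same differential entropy.

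For part~2, write $I(X;Y)=H(X)+H(Y)-H(X,Y)$; by part~1 the marginal entropies of $\nabla f(x),\nabla f(y)$ agree with those of $\nabla g(x),\nabla g(y)$, so the claimed inequality is equivalent to $\det\operatorname{Cov}\!\bigl(\nabla f(x),\nabla f(y)\bigr)\ge\det\operatorname{Cov}\!\bigl(\nabla g(x),\nabla g(y)\bigr)$. Each of these is a symmetric block matrix with diagonal blocks $\sigma^2 I_d$ and off-diagonal block $M$, so its determinant is $\det(\sigma^4 I_d-M^{\top}M)$ and depends only on the eigenvalues of the cross-block. Setting $u_j=t_j-s_j$ and $q=\sum_j u_j^2$, the cross-block for $f$ is diagonal with entries $\sigma^2\psi(u_j^2)$, where $\psi(\tau):=\bigl(\rho'(\tau)+2\tau\rho''(\tau)\bigr)/\rho'(0)$ is precisely the correlation of the derivative process $z^{j\prime}$ at lag $\sqrt{\tau}$; the cross-block for $g$ is $-2\rho'(q)I_d-4\rho''(q)\,r\,r^{\top}$ with $r=x-y$, whose eigenvalues are $\sigma^2\psi(q)$ along $r$ and $\sigma^2\beta(q)$ on $r^{\perp}$, with $\beta(\tau):=\rho'(\tau)/\rho'(0)$. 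Hence part~2 reduces to the scalar inequality
\[
\prod_{j=1}^d\bigl(1-\psi(u_j^2)^2\bigr)\ \ge\ \bigl(1-\psi(q)^2\bigr)\bigl(1-\beta(q)^2\bigr)^{d-1},\qquad q=\sum_{j=1}^d u_j^2 .
\]

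Proving this last inequality is the main obstacle, and it is where the regularity hypotheses on $\rho$ really enter. I would treat $\psi,\beta$ as functions on $[0,\infty)$ with $\psi(0)=\beta(0)=1$, use that $u\mapsto\sigma^2\psi(u^2)=-\tfrac{d^2}{du^2}\rho(u^2)$ is a bona fide covariance function on $\mathbb{R}$ (so $|\psi|\le 1$ and $\psi$ admits a spectral representation) and that $\beta$ inherits positive-definiteness from $\rho$, in order to recast the inequality in terms of these representations. Concretely I would first dispose of the degenerate configurations in which some $u_j$ vanish, then try to reduce to $d=2$ by grouping coordinates, and finally attack the two-variable inequality $\bigl(1-\psi(a)^2\bigr)\bigl(1-\psi(b)^2\bigr)\ge\bigl(1-\psi(a+b)^2\bigr)\bigl(1-\beta(a+b)^2\bigr)$ by a monotonicity/convexity argument in $(a,b)\in[0,\infty)^2$. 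The genuinely delicate point is that the factors $\bigl(1-\beta(q)^2\bigr)^{d-1}$ on the right accumulate with $d$, so no term-by-term comparison will close the argument; one has to control the decay of $\psi$ against that of $\beta$ simultaneously and uniformly over admissible radial profiles, and I expect that to be the crux.
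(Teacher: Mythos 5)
Your part 1 is correct and is essentially the paper's argument (compute $\Sigma_{\nabla f(x)}=-2\rho'(0)I_d$ directly, obtain $\Sigma_{\nabla g(x)}$ as a limit of finite-difference Gaussian vectors, conclude the two gradients are equidistributed). For part 2, however, your proposal stops at the inequality
\[
\prod_{j=1}^d\bigl(1-\psi(u_j^2)^2\bigr)\ \ge\ \bigl(1-\psi(q)^2\bigr)\bigl(1-\beta(q)^2\bigr)^{d-1},\qquad q=\sum_{j=1}^d u_j^2,
\]
which you acknowledge you cannot prove. That is a genuine gap, and it is not merely a technical one: the inequality is false for admissible $\rho$. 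Take the squared exponential $\rho(\tau)=e^{-\tau/2}$, so $\psi(\tau)=(1-\tau)e^{-\tau/2}$ and $\beta(\tau)=e^{-\tau/2}$; with $d=2$ and $u_1=u_2=0.1$ (hence $q=0.02$) the left-hand side is $\approx 8.8\times 10^{-4}$ while the right-hand side is $\approx 1.16\times 10^{-3}$. So the route you set up, taken literally, terminates in an inequality that fails, and no amount of spectral-representation work on $\psi$ and $\beta$ will close it.

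The paper's endgame is different and much shorter. It computes (Equation \ref{eq:cross_cov_f}) the diagonal entries of the cross-covariance of $\bigl(\nabla f(x),\nabla f(y)\bigr)$ as $-2\bigl[\rho'(\|x-y\|^2)+2(t_i^x-t_i^y)^2\rho''(\|x-y\|^2)\bigr]$, i.e.\ with $\rho'$ and $\rho''$ evaluated at the full $d$-dimensional squared distance; it then observes (Equation \ref{eq:cross_cov_g_1}) that these coincide entrywise with the diagonal of the cross-covariance of $\bigl(\nabla g(x),\nabla g(y)\bigr)$, and finishes with Lemma \ref{lem}: replacing the off-diagonal block of a joint Gaussian covariance by its diagonal part can only increase the determinant, hence the joint entropy, hence decrease the mutual information. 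Your computation instead gives the $f$ cross-covariance entries as $\sigma^2\psi\bigl((t_i^x-t_i^y)^2\bigr)$, with the per-coordinate lag — which is what the definition of $f$ as a sum of independent univariate GPs actually yields, since $\operatorname{cov}\bigl(z^{i\prime}_{t_i^x},z^{i\prime}_{t_i^y}\bigr)$ cannot depend on the other coordinates of $x-y$. This is exactly where you and the paper part ways: with your (correct) entries the two diagonals no longer match, Lemma \ref{lem} does not apply, and the counterexample above shows the mismatch is not repairable. Before going further you should reconcile your formula with Equation \ref{eq:cross_cov_f}; as written, your derivation is the one that follows from the statement of the proposition, and the resulting discrepancy with the paper's proof (and, for the squared exponential kernel, with claim 2 itself) is worth raising with the authors rather than papering over.
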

\begin{proof}
 See \ref{app:diversity}.
\end{proof}
Although the link function need not be inferred in a full nonparametric fashion to yield comparable if not better results than most isotropic kernels used in the \emph{standard GP paradigm}, for some problems certain link functions might outperform others. In Section \ref{sct:means_kerns} we analyse a broad family of link functions, and argue that they extend successful anisotropic approaches such as the Automatic Relevance Determination (\cite{gp_intro}) and the \text{additive kernels} of \cite{DuvenaudNR2012}. Moreover, in Section \ref{sct:infer} we propose a scalable inference scheme applicable to any link function.
%
%
%
%
%
%
%
%
\section{Comparison with the Standard GP Paradigm}
\label{sct:comp}
We have already established that sampling \emph{string GPs} scales better than sampling GPs under the \emph{standard GP paradigm} and is amenable to distributed computing. We have also established that stationary additively separable \emph{string GPs} are more flexible than their isotropic counterparts in the \emph{standard GP paradigm}. In this section, we provide further theoretical results relating the \emph{string GP paradigm} to the \emph{standard GP paradigm}. Firstly we establish that \emph{string GPs} with link function $\phi_s\left(x_1, \dots, x_d\right) = \sum_{i=j}^d x_j$ are GPs. Secondly, we derive the global mean and covariance functions induced by the \emph{string GP} construction for a variety of link functions. Thirdly, we provide a sense in which the \emph{string GP paradigm} can be thought of as extending the \emph{standard GP paradigm}. And finally, we show that the \emph{string GP paradigm} may serve as a scalable approximation of commonly used stationary kernels.
%
%
%
%
%
%
%
%
%
\subsection{Some String GPs are GPs}
On one hand we note from Theorem \ref{theo:sgp} that the restriction of a \emph{string GP} defined on an interval to the support of the first string---in other words the first local GP expert---is a Gaussian process. On the other hand, the messages passed on from one local GP expert to the next are not necessarily consistent with the unconditional law of the receiving local expert, so that overall a \emph{string GP} defined on an interval, that is when looked at globally and unconditionally, might not be a Gaussian process. However, the following proposition establishes that some \emph{string GPs} are indeed Gaussian processes.
\begin{proposition}\textbf{(Additively separable \emph{string GPs} are GPs)}\\
\label{prop:is_gp}String Gaussian processes on $\mathbb{R}$ are Gaussian processes. Moreover, string Gaussian processes on $\mathbb{R}^d$ with link function $\phi_s(x_1, \dots, x_d) = \sum_{j=1}^d x_j$ are also Gaussian processes.
\end{proposition}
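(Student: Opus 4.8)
The plan is to prove the statement in two stages: first show that every \emph{string GP} on an interval is a Gaussian process by verifying that its finite-dimensional marginals (jointly with those of the associated derivative process) are Gaussian, and then obtain the $\mathbb{R}^d$ statement by exploiting that $\phi_s$ is linear and that the underlying one-dimensional building blocks are independent.

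For the one-dimensional case, I would first argue that the boundary vector $(SD_{a_0}, \dots, SD_{a_K})$ is jointly Gaussian. This follows by induction on $K$ directly from the factorised density in Equation (\ref{eq:bound_pdf}): Equations (\ref{eq:sb0k}) and (\ref{eq:mb0k}) exhibit a linear-Gaussian Markov chain, namely $SD_{a_0} \sim \mathcal{N}(\mu_0^b, \Sigma_0^b)$ and, conditional on $SD_{a_{k-1}}$, $SD_{a_k}$ is Gaussian with mean affine in $SD_{a_{k-1}}$ and covariance $\Sigma_k^b$ that does not depend on $SD_{a_{k-1}}$; a Gauss--Markov chain of this type is jointly Gaussian (most cleanly checked via the joint characteristic function, which also removes any concern about a possibly singular $\Sigma_k^b$). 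I would then record the elementary fact that if $X$ is a Gaussian vector and, conditional on $X$, $Y$ is Gaussian with mean $AX+b$ and covariance not depending on $X$, then $(X,Y)$ is jointly Gaussian.

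To conclude, fix $t_1, \dots, t_n \in I$; without loss of generality adjoin the boundary times $a_0, \dots, a_K$ (dropping duplicates) and group the interior points by the string $]a_{k-1}, a_k[$ containing them. By Part (A)-2 of Theorem \ref{theo:sgp}, conditional on $(SD_{a_k})_{k\in[0..K]}$ the interior families $(SD_t)_{t \in ]a_{k-1},a_k[}$ are \emph{independent} conditional derivative Gaussian processes, and by Proposition \ref{prop:derivative_processes} together with Equations (\ref{eq:double_cond_mean}) and (\ref{eq:double_cond_cov}) the joint law of the interior values lying in string $k$ is Gaussian with mean affine in $(SD_{a_{k-1}}, SD_{a_k})$ and covariance given by the conditional covariance function of Equation (\ref{eq:double_cond_cov}), which does not depend on the boundary values. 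Stacking over $k$, the conditional law of the whole interior vector given the boundary vector is Gaussian with mean affine in the boundary vector and a block-diagonal covariance that is constant in the conditioning value — block-diagonality being precisely the conditional independence across strings. Applying the elementary fact with $X$ the boundary vector and $Y$ the interior vector, $(SD_{t_1},\dots,SD_{t_n}, SD_{a_0},\dots,SD_{a_K})$ is jointly Gaussian; passing to first coordinates and to the relevant subvector shows $(z_{t_1}, \dots, z_{t_n})$ is Gaussian, so $(z_t)_{t\in I}$ is a GP.

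For $\mathbb{R}^d$ with $\phi_s(x_1,\dots,x_d)=\sum_{j=1}^d x_j$, write $f(x^{(i)}) = \sum_{j=1}^d z^j_{t^{(i)}_j}$ for arbitrary input points $x^{(1)},\dots,x^{(n)}$ with $x^{(i)}=(t_1^{(i)},\dots,t_d^{(i)})$. By the one-dimensional result each vector $(z^j_{t^{(1)}_j}, \dots, z^j_{t^{(n)}_j})$ is Gaussian, and these $d$ vectors are mutually independent since the $(z^j_t)$ are independent; as $(f(x^{(1)}),\dots,f(x^{(n)}))$ is the sum of these $d$ independent Gaussian vectors, it is Gaussian, hence $f$ is a GP. The only genuine subtlety is the bookkeeping in the one-dimensional step — checking that, after adjoining the boundary times and partitioning interior points by string, the conditional law of all interior values given all boundary values really does have affine mean and constant block-diagonal covariance; once that is in place the Gaussianity is automatic, and I anticipate no analytic difficulty beyond being careful with possibly degenerate conditional covariances, which the characteristic-function formulation of ``jointly Gaussian'' handles without fuss.
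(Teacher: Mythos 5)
Your proposal is correct and follows essentially the same route as the paper's proof in \ref{app:is_gp}: an induction establishing joint Gaussianity of the boundary vector, the same linear-conditional-Gaussian lemma (the paper's Lemma \ref{lem:gaussian_message}) applied to the stacked interior values given the boundary conditions, marginalisation to the first coordinates, and the reduction of the $\mathbb{R}^d$ case to sums of independent univariate GPs. Your remark about handling possibly degenerate conditional covariances via characteristic functions is a minor technical refinement the paper does not dwell on, but it does not change the argument.
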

\begin{proof}
The intuition behind this proof lies in the fact that if $X$ is a multivariate Gaussian, and if conditional on $X$, $Y$ is a multivariate Gaussian, providing that the conditional mean of $Y$ depends linearly on $X$ and the conditional covariance matrix of $Y$ does not depend on $X$, the vector $(X, Y)$ is jointly Gaussian. This will indeed be the case for our collaboration of local GP experts as the boundary conditions picked up by an expert from the previous will not influence the conditional covariance structure of the expert (the conditional covariance strucuture depends only on the partition of the domain, not the values of the boundary conditions) and will affect the mean linearly. See \ref{app:is_gp} for the full proof.
\end{proof}

The above result guarantees that commonly used closed form predictive equations under GP priors are still applicable under some \emph{string GP} priors, providing the global mean and covariance functions, which we derive in the following section, are available. Proposition \ref{prop:is_gp} also guarantees stability of the corresponding \emph{string GPs} in the GP family under addition of independent Gaussian noise terms as in regression settings. Moreover, it follows from Proposition \ref{prop:is_gp} that inference techniques developed for Gaussian processes can be readily used under \emph{string GP} priors. In Section \ref{sct:infer} we provide an additional MCMC scheme that exploits the conditional independence between strings to yield greater scalability and distributed inference.
%
%
%
%
%
%
%
%
%
\subsection{String GP Kernels and String GP Mean Functions}
\label{sct:means_kerns}
The approach we have adopted in the construction of \emph{string GPs} and \emph{membrane GPs} did not require explicitly postulating a global mean function or covariance function.  In \ref{app:global_mean_cov} we derive the global mean and covariance functions that result from our construction. The global covariance function could be used for instance as a stand-alone kernel in any kernel method, for instance GP models under the \emph{standard GP paradigm}, which would provide a flexible and nonstationary alternative to commonly used kernels that may be used to learn local patterns in data sets---some successful example applications are provided in Section \ref{sct:infer}. That being said, adopting such a global approach should be limited to small scale problems as the conditional independence structure of \emph{string GPs} does not easily translate into structures in covariance matrices over \emph{string GP} values (without derivative information) that can be exploited to speed-up SVD or Cholesky decomposition. Crucially, marginalising out all derivative information  in the distribution of \emph{derivative string GP} values at some inputs  would destroy any conditional independence structure, thereby limiting opportunities for scalable inference. In Section \ref{sct:infer} we will provide a RJ-MCMC inference scheme that fully exploits the conditional independence structure in \emph{string GPs} and scales to \emph{very large} data sets. 
\subsection{Connection Between Multivariate String GP Kernels and Existing Approaches}
\label{sct:ex_kern}
We recall that for $n\leq d$, the $n$-th order \emph{elementary symmetric polynomial} (\cite{macdonald}) is given by
\begin{align}
e_0(x_1, \dots, x_d) := 1, ~~~ \forall 1 \leq n \leq d ~~ e_n(x_1, \dots, x_d)= \sum_{1 \leq j_1 < j_2 < \dots < j_n \leq d} ~~ \prod_{k=1}^n x_{j_k}.
\end{align}
As an illustration, \[e_1(x_1, \dots, x_d) = \sum_{j=1}^{d} x_j = \phi_s(x_1, \dots, x_d),\] \[e_2(x_1, \dots, x_d) = x_1x_2 + x_1x_3 + \dots +  x_1x_d + \dots +  x_{d-1}x_d,\]\[\dots\]
\[e_d(x_1, \dots, x_d) = \prod_{j=1}^{d} x_j = \phi_p(x_1, \dots, x_d).\]
Covariance kernels of \emph{string GPs}, using as link functions \emph{elementary symmetric polynomials} $e_n$, extend most popular approaches that combine unidimensional kernels over features for greater flexibility or cheaper design experiments.

The first-order polynomial $e_1$ gives rise to additively separable Gaussian processes, that can be regarded as Bayesian nonparametric \emph{generalised additive models} (GAM), particularly popular for their interpretability. Moreover, as noted by \cite{durrande}, additively separable Gaussian processes are considerably cheaper than alternate transformations in design experiments with high-dimensional input spaces. In addition to the above, additively separable \emph{string GPs} also allow postulating the existence of local properties in the experimental design process at no extra cost.

The $d$-th order polynomial $e_d$ corresponds to a product of unidimensional kernels, also known as separable kernels. For instance, the popular squared exponential kernel is separable. Separable kernels have been successfully used on large scale inference problems where the inputs form a grid \citep{saatchi11, GPatt}, as they yield covariance matrices that are Kronecker products, leading to maximum likelihood inference in linear time complexity and with linear memory requirement. Separable kernels are often used in conjunction with the \emph{automatic relevance determination} (ARD) model, to learn the relevance of features through global linear rescaling. However, ARD kernels might be limited in that we might want the relevance of a feature to depend on its value. As an illustration, the market value of a watch can be expected to be a stronger indicator of its owner's wealth when it is in the top 1 percentile, than when it is in the bottom 1 percentile; the rationale being that possessing a luxurious watch is an indication that one can afford it, whereas possessing a cheap watch might be either an indication of lifestyle or an indication that one cannot afford a more expensive one. Separable \emph{string GP} kernels extend ARD kernels, in that strings between input dimensions and within an input dimension may have unconditional kernels with different hyper-parameters, and possibly different functional forms, thereby allowing for \emph{automatic local relevance determination} (ALRD).

More generally, using as link function the $n$-th order elementary symmetric polynomial $e_n$ corresponds to the $n$-th order interaction of the \emph{additive kernels} of \cite{DuvenaudNR2012}. We also note that the class of link functions $\phi(x_1, \dots, x_d)= \sum_{i=1}^d \sigma_i e_i(x_1, \dots, x_d)$ yield full \emph{additive kernels}. \cite{DuvenaudNR2012} noted that such kernels are `exceptionally well-suited' to learn non-local structures in data. \emph{String GPs} complement \emph{additive kernels} by allowing them to learn local structures as well.
\subsection{String GPs as Extension of the Standard GP Paradigm}
The following proposition provides a perspective from which \emph{string GPs} may be considered as extending Gaussian processes on an interval.

\begin{proposition}\textbf{(Extension of the \emph{standard GP paradigm})}\\
\label{prop:extension}Let $K \in \mathbb{N}^{*}$, let $I=[a_0, a_K]$ and $I_k= [a_{k-1}, a_k]$ be intervals with $a_0 < \dots < a_K$. Furthermore, let $m: I \to \mathbb{R}$ be a $\mathcal{C}^1$ function, $m_k$ the restriction of $m$ to $I_k$, $h: I \times I \to \mathbb{R}$ a $\mathcal{C}^3$ symmetric positive semi-definite function, and $h_k$ the restriction of $h$ to $I_k \times I_k$. If \[(z_t)_{t \in I} \sim \mathcal{SGP}(\{a_k\}, \{m_k\}, \{h_k\}),\] then \[ \forall ~ k \in [1..K], ~ (z_t)_{t \in I_k} \sim \mathcal{GP}(m, h).\]
\end{proposition}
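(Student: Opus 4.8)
The plan is to show, string by string, that the finite-dimensional laws produced by the \emph{string GP} construction coincide with those of the global derivative Gaussian process attached to $m$ and $h$. Let $\mathbf{M}_u$ and $\mathbf{K}_{u;v}$ denote the mean vector and covariance block of Proposition~\ref{prop:derivative_processes} built from the \emph{global} pair $(m,h)$, and let $(D_t)_{t\in I}=(z_t,z_t')_{t\in I}$ be the corresponding derivative Gaussian process on $I$ (it exists, and its covariance blocks are invertible at the knots, by the non-degeneracy hypotheses). The one observation that makes everything work is trivial: since $m_k=m|_{I_k}$ and $h_k=h|_{I_k\times I_k}$, we have ${}_k\mathbf{M}_u=\mathbf{M}_u$ and ${}_k\mathbf{K}_{u;v}=\mathbf{K}_{u;v}$ whenever $u,v\in I_k=[a_{k-1},a_k]$. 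In particular every matrix and vector appearing in (\ref{eq:sb0k})--(\ref{eq:mb0k}), and in the double-conditioning formulas (\ref{eq:double_cond_mean})--(\ref{eq:double_cond_cov}) instantiated with $m:=m_k,\ k:=h_k$, is literally the object one would write for the global process $(D_t)$.

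First I would prove, by induction on $k\in[0..K]$, that under $\mathcal{SGP}(\{a_k\},\{m_k\},\{h_k\})$ the pair $(SD_{a_{k-1}},SD_{a_k})$ has the same law as $(D_{a_{k-1}},D_{a_k})$; equivalently that the marginal of $SD_{a_k}$ is $\mathcal{N}(\mathbf{M}_{a_k},\mathbf{K}_{a_k;a_k})$. The base case follows directly from $\mu_0^b={}_1\mathbf{M}_{a_0}=\mathbf{M}_{a_0}$ and $\Sigma_0^b={}_1\mathbf{K}_{a_0;a_0}=\mathbf{K}_{a_0;a_0}$. For the inductive step, the factorised density (\ref{eq:bound_pdf}) shows that under the \emph{string GP}, $SD_{a_k}$ conditional on $SD_{a_{k-1}}=x_{k-1}$ is $\mathcal{N}(\mu_k^b,\Sigma_k^b)$ with $\mu_k^b$ affine in $x_{k-1}$ and $\Sigma_k^b$ independent of $x_{k-1}$; by the restriction identities this conditional is exactly the Gaussian conditional of $D_{a_k}$ given $D_{a_{k-1}}=x_{k-1}$ under $(D_t)$. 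Combining this with the induction hypothesis on the $SD_{a_{k-1}}$-marginal (via a routine law-of-total-covariance computation, the same mechanism used in the proof sketch of Proposition~\ref{prop:is_gp}) gives both that $(SD_{a_{k-1}},SD_{a_k})$ has the law of $(D_{a_{k-1}},D_{a_k})$ and that $SD_{a_k}\sim\mathcal{N}(\mathbf{M}_{a_k},\mathbf{K}_{a_k;a_k})$, closing the induction.

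Next I would bring in the within-string law. By Theorem~\ref{theo:sgp}(A.2), conditional on $SD_{a_{k-1}}=x_{k-1}$ and $SD_{a_k}=x_k$, the restriction $(SD_t)_{t\in\,]a_{k-1},a_k[}$ is the conditional derivative Gaussian process with unconditional mean $m_k$, unconditional covariance $h_k$, pinned at the endpoints, whose finite-dimensional conditional laws are those of (\ref{eq:double_cond_mean})--(\ref{eq:double_cond_cov}). Because $(D_t)$ restricted to $I_k$ is itself a derivative Gaussian process with covariance $h_k$, its conditional law given $D_{a_{k-1}}=x_{k-1},\,D_{a_k}=x_k$ is given by the same formulas; hence the two within-string conditional laws agree. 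Therefore, for any $t_1,\dots,t_n\in I_k$, the joint law of $(SD_{t_1},\dots,SD_{t_n})$---which factorises as the law of $(SD_{a_{k-1}},SD_{a_k})$ times the conditional law of the $SD_{t_i}$ given those two boundary states---equals the joint law of $(D_{t_1},\dots,D_{t_n})$ under $(D_t)$. Reading off first coordinates and applying Proposition~\ref{prop:derivative_processes}(B) to $(D_t)$ yields $(z_{t_1},\dots,z_{t_n})\sim\mathcal{N}\big((m(t_i))_i,(h(t_i,t_j))_{i,j}\big)$, i.e. $(z_t)_{t\in I_k}\sim\mathcal{GP}(m,h)$.

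The only real subtlety---and the place one could go wrong---is that one cannot strengthen the induction to "$(SD_{a_0},\dots,SD_{a_K})$ has the law of $(D_{a_0},\dots,D_{a_K})$": the \emph{string GP} construction makes the boundary states a Markov chain, whereas a generic Gaussian process with covariance $h$ need not be Markov at the knots $a_0,\dots,a_K$, so these full joints genuinely differ. What saves the argument is that each string interacts only with the two knots bounding it, so only the \emph{consecutive-pair} marginals need to match; establishing that, and then gluing with the conditional-derivative-GP law of Theorem~\ref{theo:sgp}, is the whole proof, the rest being Gaussian-conditioning bookkeeping together with the restriction identities $m_k=m|_{I_k}$, $h_k=h|_{I_k\times I_k}$.
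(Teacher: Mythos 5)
Your proof is correct and follows essentially the same route as the paper's: both arguments reduce the claim to showing by induction that the marginal law of the boundary state $SD_{a_k}$ under the string construction coincides with that of the global derivative Gaussian process built from $(m,h)$, using the restriction identities $m_k=m|_{I_k}$, $h_k=h|_{I_k\times I_k}$ to identify the knot-to-knot and within-string conditionals with those of the global process, and then conclude via Proposition~\ref{prop:derivative_processes}(B). Your closing remark about why the induction cannot be strengthened to the full joint over all knots is a correct and worthwhile observation that the paper makes only informally elsewhere.
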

\begin{proof}
See \ref{app:extension}.
\end{proof}

We refer to the case where unconditional string mean and kernel functions are restrictions of the same functions as in Proposition \ref{prop:extension} as \emph{uniform string GPs}. Although uniform \emph{string GPs} are not guaranteed to be as much regular at boundary times as their counterparts in the \emph{standard GP paradigm}, we would like to stress that they may well generate paths that are. In other words, the functional space induced by a uniform \emph{string GP} on an interval extends the functional space of the GP with the same mean and covariance functions $m$ and $h$ taken globally and unconditionally on the whole interval as in the \emph{standard GP paradigm}. This allows for (but does not enforce) less regularity at the boundary times. When \emph{string GPs} are used as functional prior, the posterior mean can in fact have more regularity at the boundary times than the continuous differentiability enforced in the \emph{string GP paradigm}, providing such regularity is evidenced in the data.

We note from Proposition \ref{prop:extension} that when $m$ is constant and $h$ is stationary, the restriction of the uniform \emph{string GP} $(z_t)_{t \in I}$ to any interval whose interior does not contain a boundary time, the largest of which being the intervals $[a_{k-1}, a_k]$, is a stationary GP. We refer to such cases as \emph{partition stationary string GPs}. 
\subsection{Commonly Used Covariance Functions and their String GP Counterparts}
\label{sct:approx}
Considering the superior scalability of the \emph{string GP paradigm}, which we may anticipate from the scalability of sampling \emph{string GPs}, and which we will confirm empirically in Section \ref{sct:infer}, a natural question that comes to mind is whether or not kernels commonly used in the \emph{standard GP paradigm} can be well approximated by \emph{string GP} kernels, so as to take advantage of the improved scalability of the \emph{string GP paradigm}. We examine the distortions to commonly used covariance structures resulting from restricting strings to share only $\mathcal{C}^1$ boundary conditions, and from increasing the number of strings.

Figure \ref{fig:string_effects} compares some popular stationary kernels on $[0,1] \times [0, 1]$ (first column) to their uniform \emph{string GP} kernel counterparts with 2, 4, 8 and 16 strings of equal length. The popular kernels considered are the squared exponential kernel (SE), the rational quadratic kernel $k_{RQ}(u,v) = \left(1+\frac{2(u-v)^2}{\alpha}\right)^{-\alpha}$ with $\alpha = 1$ (RQ 1) and  $\alpha = 5$ (RQ 5), the Mat\'ern 3/2 kernel (MA 3/2), and the Mat\'ern 5/2 kernel (MA 5/2), each with output scale (variance) $1$ and input scale $0.5$. Firstly, we observe that each of the popular kernels considered coincides with its uniform \emph{string GP} counterparts regardless of the number of strings, so long as the arguments of the covariance function are less than an input scale apart. Except for the Mat\'ern 3/2, the loss of information induced by restricting strings to share only $\mathcal{C}^1$ boundary conditions becomes noticeable when the arguments of the covariance function are more than $1.5$ input scales apart, and the effect is amplified as the number of strings increases. As for the Mat\'ern 3/2, no loss of information can been noticed, as further attests Table \ref{table:string_effects}. In fact, this comes as no surprise given that stationary Mat\'ern 3/2 GP are $1$-Markov, that is the corresponding derivative Gaussian process is a Markov process so that the vector $(z_t, z_t^\prime)$ contains as much information as all \emph{string GP} or derivative values prior to $t$ (see \cite{doob44}). Table \ref{table:string_effects} provides some statistics on the absolute errors between each of the popular kernels considered and uniform \emph{string GP} counterparts.
\begin{figure}[p]
\begin{center}
\centerline{\includegraphics[width=0.7\textwidth]{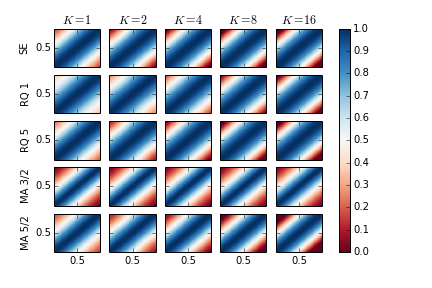}}
\caption{Commonly used covariance functions on $[0, 1] \times [0, 1]$ with the same input and output scales (first column) and their uniform \emph{string GP} counterparts with $K>1$ strings of equal length.}
\label{fig:string_effects}
\end{center}
\end{figure} 
\begin{landscape}
\begin{table*}
\centering
\begin{tabular}{@{}rrrrcrrrcrrrcrrr@{}}\toprule
& \multicolumn{3}{c}{$K = 2$} & \phantom{abc}& \multicolumn{3}{c}{$K = 4$} &
\phantom{abc} & \multicolumn{3}{c}{$K = 8$}& \phantom{abc} & \multicolumn{3}{c}{$K = 16$}\\
\cmidrule{2-4} \cmidrule{6-8} \cmidrule{10-12} \cmidrule{14-16}
& min & avg & max 		&& 	min & avg & max  	&& min & avg & max	&& min & avg & max \\ \midrule
SE & 0 & 0.01 & 0.13 	&& 	0 & 0.02 & 0.25 	&& 0 & 0.03 & 0.37 	&& 0 & 0.04 & 0.44\\
RQ 1 & 0 & 0.01 & 0.09 	&& 	0 & 0.03 & 0.20 	&& 0 & 0.05 & 0.37 	&& 0 & 0.07 & 0.52\\
RQ 5 & 0 & 0.01 & 0.12 	&& 	0 & 0.02 & 0.24 	&& 0 & 0.04 & 0.37 	&& 0 & 0.05 & 0.47\\
MA 3/2 & 0 & 0 & 0		 && 	0 & 0	 & 0	 	&& 0 & 0 & 0		 	&& 0 & 0	 & 0\\
MA 5/2 & 0 & 0.01 & 0.07 	&& 	0 & 0.03 & 0.15 	&& 0 & 0.05 & 0.29 	&& 0 & 0.08 & 0.48\\
\bottomrule
\end{tabular}
\caption{Minimum, average, and maximum absolute errors between some commonly used stationary covariance functions on $[0, 1] \times [0, 1]$ (with unit variance and input scale 0.5) and their uniform \emph{string GP} counterparts with $K>1$ strings of equal length.}
\label{table:string_effects}
\end{table*}
\end{landscape}

\section{Inference under String and Membrane GP Priors}
\label{sct:infer}
In this section we move on to developing inference techniques for Bayesian nonparametric inference of latent functions under \emph{string GP} priors. We begin with marginal likelihood inference in regression problems. We then propose a novel reversible-jump MCMC sampler that enables automatic learning of model complexity (that is the number of different unconditional kernel configurations) from the data, with a time complexity and memory requirement both linear in the number of training inputs.

\subsection{Maximum Marginal Likelihood for Small Scale Regression Problems}
\label{sct:ml}
Firstly, we leverage the fact that additively separable \emph{string GPs} are Gaussian processes to perform Bayesian nonparametric regressions in the presence of local patterns in the data, using standard Gaussian process techniques \citep[see][p.112 \S 5.4.1]{rasswill}. We use as generative model 
\[~~y_i = f(x_i) + \epsilon_i,~~~ \epsilon_i \overset{\text{}}{\sim} \mathcal{N}\left(0, \sigma^2_{k_i}\right), ~~~\sigma^2_{k_i} >0,~~~x_i \in I^1 \times \dots \times I^d, ~~~y_i, \epsilon_i \in \mathbb{R}\] we are given the training data set $\mathcal{D}=\{\tilde{x}_i, \tilde{y}_i\}_{i \in [1..N]},$ and we place a mean-zero additively separable \emph{string GP} prior on $f$, namely
\begin{equation}
f(x)= \sum_{j=1}^d z^j_{x[j]},~~~(z_t^j) \sim \mathcal{SGP}\left(\{a_k^j\}, \{0\}, \{k_k^j\}\right), ~~~ \forall j<l, ~ (z_t^j) \perp (z_t^l) \nonumber,
\end{equation}
which we assume to be independent of the measurement noise process. Moreover, the noise terms are assumed to be independent, and the noise variance $\sigma^2_{k_i}$ affecting $f(x_i)$ is assumed to be the same for any two inputs whose coordinates lie on the same string intervals. Such a heteroskedastic noise model fits nicely within the \emph{string GP paradigm}, can be very useful when the dimension of the input space is small, and may be replaced by the typical constant noise variance assumption in high-dimensional input spaces.

Let us define $\textbf{y}=(\tilde{y}_1, \dots, \tilde{y}_N)$, $\textbf{X}=(\tilde{x}_1, \dots, \tilde{x}_N)$, $\textbf{f}=(f(\tilde{x}_1), \dots, f(\tilde{x}_N))$ and let $\bar{\textbf{K}}_{\textbf{X};\mathbf{X}}$ denote the auto-covariance matrix of $\textbf{f}$ (which we have derived in Section \ref{sct:means_kerns}), and let $\textbf{D} = \text{diag}(\{\sigma^2_{k_i}\})$ denote the diagonal matrix of noise variances. It follows that $\textbf{y}$ is a Gaussian vector with mean $\bm{0}$ and auto-covariance matrix  $\textbf{K}_{\textbf{y}}:=\bar{\textbf{K}}_{\textbf{X};\textbf{X}}+\textbf{D}$ and that the log marginal likelihood reads:
\begin{align}
\label{eq:gp_margin}
\log p\left( \textbf{y} \Big| \textbf{X}, \{\sigma_{k_i}\}, \{\theta_k^j\}, \{a_k^j\}\right) = -\frac{1}{2} \textbf{y}^{T}\textbf{K}_{\textbf{y}}^{-1}\textbf{y} -\frac{1}{2} \log{\text{det}\left(\textbf{K}_{\textbf{y}}\right)} - \frac{n}{2} \log{2\pi}.
\end{align}
We obtain estimates of the string measurement noise standard deviations $\{\hat{\sigma}_{k_i}\}$ and estimates of the string hyper-parameters $\{\hat{\theta}_k^j\}$ by maximising the marginal likelihood for a given domain partition $\{a_k^j\}$, using gradient-based methods. We deduce the predictive mean and covariance matrix of the latent function values $\textbf{f}^*$ at test points $\mathbf{X}^*$, from the estimates $\{\hat{\theta}_k^j\}, \{\hat{\sigma}_{k_i}\}$ as
\begin{equation}
\text{E}(\textbf{f}^*|\textbf{y}) = \bar{\textbf{K}}_{\textbf{X}^*;\textbf{X}}\textbf{K}_{\textbf{y}}^{-1}\textbf{y}~~~~\text{and}~~~~\text{cov}(\textbf{f}^*|\textbf{y}) = \bar{\textbf{K}}_{\textbf{X}^*;\textbf{X}^*} - \bar{\textbf{K}}_{\textbf{X}^*;\textbf{X}}\textbf{K}_{\textbf{y}}^{-1}\bar{\textbf{K}}_{\textbf{X};\textbf{X}^*},
\end{equation}
using the fact that $(\textbf{f}^*, \textbf{y})$ is jointly Gaussian, and that the cross-covariance matrix between $\textbf{f}^*$ and $\textbf{y}$ is $\bar{\textbf{K}}_{\textbf{X}^*;\textbf{X}}$ as the additive measurement noise is assumed to be independent from the latent process $f$.\\

\subsubsection{Remarks} 
The above analysis and equations still hold when a GP prior is placed on $f$ with one of the multivariate \emph{string GP} kernels derived in Section \ref{sct:means_kerns} as covariance function. 

It is also worth noting from the derivation of \emph{string GP} kernels in \ref{app:global_mean_cov} that the marginal likelihood Equation (\ref{eq:gp_margin}) is continuously differentiable in the locations of boundary times. Thus, for a given number of boundary times, the positions of the boundary times can be determined as part of the marginal likelihood maximisation. The derivatives of the marginal log-likelihood (Equation \ref{eq:gp_margin}) with respect to the aforementioned locations $\{a_k^j\}$ can be determined from the recursions of \ref{app:global_mean_cov}, or approximated numerically by finite differences. The number of boundary times in each input dimension can then be learned by trading off model fit (the maximum marginal log likelihood) and model simplicity (the number of boundary times or model parameters), for instance using information criteria such as AIC and BIC. When the input dimension is large, it might be advantageous to further constrain the hypothesis space of boundary times before using information criteria, for instance by assuming that the number of boundary times is the same in each dimension. An alternative Bayesian nonparametric approach to learning the number of boundary times will be discussed in section  \ref{sct:flashback}.

This method of inference cannot exploit the structure of \emph{string GPs} to speed-up inference, and as a result it scales like the \emph{standard GP paradigm}. In fact, any attempt to marginalize out univariate derivative processes, including in the prior, will inevitably destroy the conditional independence structure. Another perspective to this observation is found by noting from the derivation of global \emph{string GP} covariance functions in \ref{app:global_mean_cov} that the conditional independence structure does not easily translate in a matrix structure that may be exploited to speed-up matrix inversion, and that marginalizing out terms relating to derivatives processes as in Equation (\ref{eq:gp_margin}) can only make things worse.
\subsection{Generic Reversible-Jump MCMC Sampler for Large Scale Inference}
\label{sct:gen_rjmcmc}
More generally, we consider learning a smooth real-valued latent function $f$, defined on a $d$-dimensional hyper-rectangle, under a generative model with likelihood $p\left(\mathcal{D} \vert \mathbf{f}, \mathbf{u} \right)$, where $\mathbf{f}$ denotes values of $f$ at training inputs points and $\mathbf{u}$ denotes other likelihood parameters that are not related to $f$. A large class of machine learning problems aiming at inferring a latent function have a likelihood model of this form. Examples include celebrated applications such as nonparametric regression and nonparametric binary classification problems, but also more recent applications such as learning a profitable portfolio generating-function in \emph{stochastic portfolio theory} (\cite{karatzas2009stochastic}) from the data. In particular, we do not assume that $p\left(\mathcal{D} \vert \mathbf{f}, \mathbf{u} \right)$ factorizes over training inputs. Extensions to likelihood models that depend on the values of multiple latent functions are straight-forward and will be discussed in Section \ref{sct:multi}.

\subsubsection{Prior Specification}
We place a prior $p(\mathbf{u})$ on other likelihood parameters. For instance, in regression problems under a Gaussian noise model, $\mathbf{u}$ can be the noise variance and we may choose $p(\mathbf{u})$ to be the inverse-Gamma distribution for conjugacy. We place a mean-zero \emph{string GP} prior on $f$
\begin{equation}
\label{eq:recov}
f(x)= \phi\left(z^1_{x[1]}, \dots, z^d_{x[d]}\right),~~~~(z_t^j) \sim \mathcal{SGP}\left(\{a_k^j\}, \{0\}, \{k_k^j\}\right), ~~~~ \forall j<l, ~ (z_t^j) \perp (z_t^l).
\end{equation}
As discussed in Section \ref{sct:need_not}, the link function $\phi$ need not be inferred as the symmetric sum was found to yield a sufficiently flexible functional prior. Nonetheless, in  this section we do not impose any restriction on the link function $\phi$ other than continuous differentiability. Denoting $\mathbf{z}$ the vector of univariate \emph{string GP} processes and their derivatives, evaluated at all distinct input coordinate values, we may re-parametrize the likelihood as $p\left(\mathcal{D} \vert \mathbf{z}, \mathbf{u} \right)$, with the understanding that $\mathbf{f}$ can be recovered from $\mathbf{z}$ through the link function $\phi$. To complete our prior specification, we need to discuss the choice of boundary times $\{a_k^j\}$ and the choice of the corresponding unconditional kernel structures $\{k_k^j\}$. Before doing so, we would like to stress that key requirements of our sampler are that i) it should decouple the need for scalability from the need for flexibility, ii) it should scale linearly with the number of training and test inputs, and iii) the user should be able to express prior views on model complexity/flexibility in an intuitive way, but the sampler should be able to validate or invalidate the prior model complexity from the data. While the motivations for the last two requirements are obvious, the first requirement is motivated by the fact that a massive data set may well be more homogeneous than a much smaller data set.

\subsubsection{Scalable Choice of Boundary Times}
To motivate our choice of boundary times that achieves great scalability, we first note that the evaluation of the likelihood, which will naturally be needed by the MCMC sampler, will typically have at least linear time complexity and linear memory requirement, as it will require performing computations that use each training sample at least once. Thus, the best we can hope to achieve overall is linear time complexity and linear memory requirement. Second, in MCMC schemes with functional priors, the time complexity and memory requirements for sampling from the posterior $$p\left(\mathbf{f} \vert \mathcal{D} \right) \propto p\left( \mathcal{D} \vert \mathbf{f} \right) p(\mathbf{f})$$ are often the same as the resource requirements for sampling from the prior $p\left(\mathbf{f} \right)$, as evaluating the model likelihood is rarely the bottleneck. Finally, we note from Algorithm \ref{alg:simulation} that, when each input coordinate in each dimension is a boundary time, the sampling scheme has time complexity and memory requirement that are linear in the maximum number of unique input coordinates across dimensions, which is at most the number of training samples. In effect, each univariate derivative \emph{string GP} is sampled in \emph{parallel} at as many times as there are unique input coordinates in that dimension, before being combined through the link function. In a given input dimension, univariate \emph{derivative string GP} values are sampled sequentially, one boundary time conditional on the previous. The foregoing sampling operation is very scalable not only asymptotically but also in absolute terms; it merely requires storing and inverting at most as many $2 \times 2$ matrices as the number of input points. We will evaluate the actual overall time complexity and memory requirement when we discuss our MCMC sampler in greater details. For now, we would like to stress that i) choosing each distinct input coordinate value as a boundary time in the corresponding input dimension before training is a perfectly valid choice, ii) we expect this choice to result in resource requirements that grow linearly with the sample size and iii) in the \emph{string GP} theory we have developed thus far there is no requirement that two adjacent strings be driven by different kernel hyper-parameters.

\subsubsection{Model Complexity Learning as a Change-Point Problem}
\label{sct:km}
The remark iii) above pertains to model complexity. In the simplest case, all strings are driven by the same kernel and hyper-parameters as it was the case in Section \ref{sct:approx}, where we discussed how this setup departs from postulating the unconditional string covariance function $k_k^j$ globally similarly to the \emph{standard GP paradigm}. The more distinct unconditional covariance structures there are, the more complex the model is, as it may account for more types of local patterns. Thus, we may identify model complexity to the number of different kernel configurations across input dimensions. In order to learn model complexity, we require that some (but not necessarily all) strings share their kernel configuration.\footnote{That is, the functional form of the unconditional kernel $k_k^j$ and its hyper-parameters.} Moreover, we require kernel membership to be dimension-specific in that two strings in different input dimensions may not explicitly share a kernel configuration in the prior specification, although the posterior distribution over their hyper-parameters might be similar if the data support it. 

In each input dimension $j$, kernel membership is defined by a partition of the corresponding domain operated by a (possibly empty) set of change-points,\footnote{We would like to stress that  change-points do not introduce new input points or boundary times, but solely define a partition of the domain of each input dimension.} as illustrated in Figure \ref{fig:kernel_membership}. When there is no change-point as in Figure \ref{fig:kernel_membership}-(a), all strings are driven by the same kernel and hyper-parameters. Each change-point $c_p^j$ induces a new kernel configuration $\theta_p^j$ that is shared by all strings whose boundary times $a_k^j$ and $a_{k+1}^j$ both lie in $[c_p^j, c_{p+1}^j[$. When one or multiple change-points $c_p^j$ occur between two adjacent boundary times as illustrated in Figures \ref{fig:kernel_membership}-(b-d), for instance $a_k^j \leq c_p^j \leq a_{k+1}^j$, the kernel configuration of the string defined on $[a_k^j,  a_{k+1}^j]$ is that of the largest change-point that lies in $[a_k^j,  a_{k+1}^j]$ (see for instance Figure \ref{fig:kernel_membership}-(d)). For consistency, we denote $\theta_0^j$ the kernel configuration driving the first string in the $j$-th dimension; it also drives strings that come before the first change-point, and all strings when there is no change-point.

\begin{figure}[p]
\begin{center}
\centerline{\includegraphics[width=0.75\textwidth]{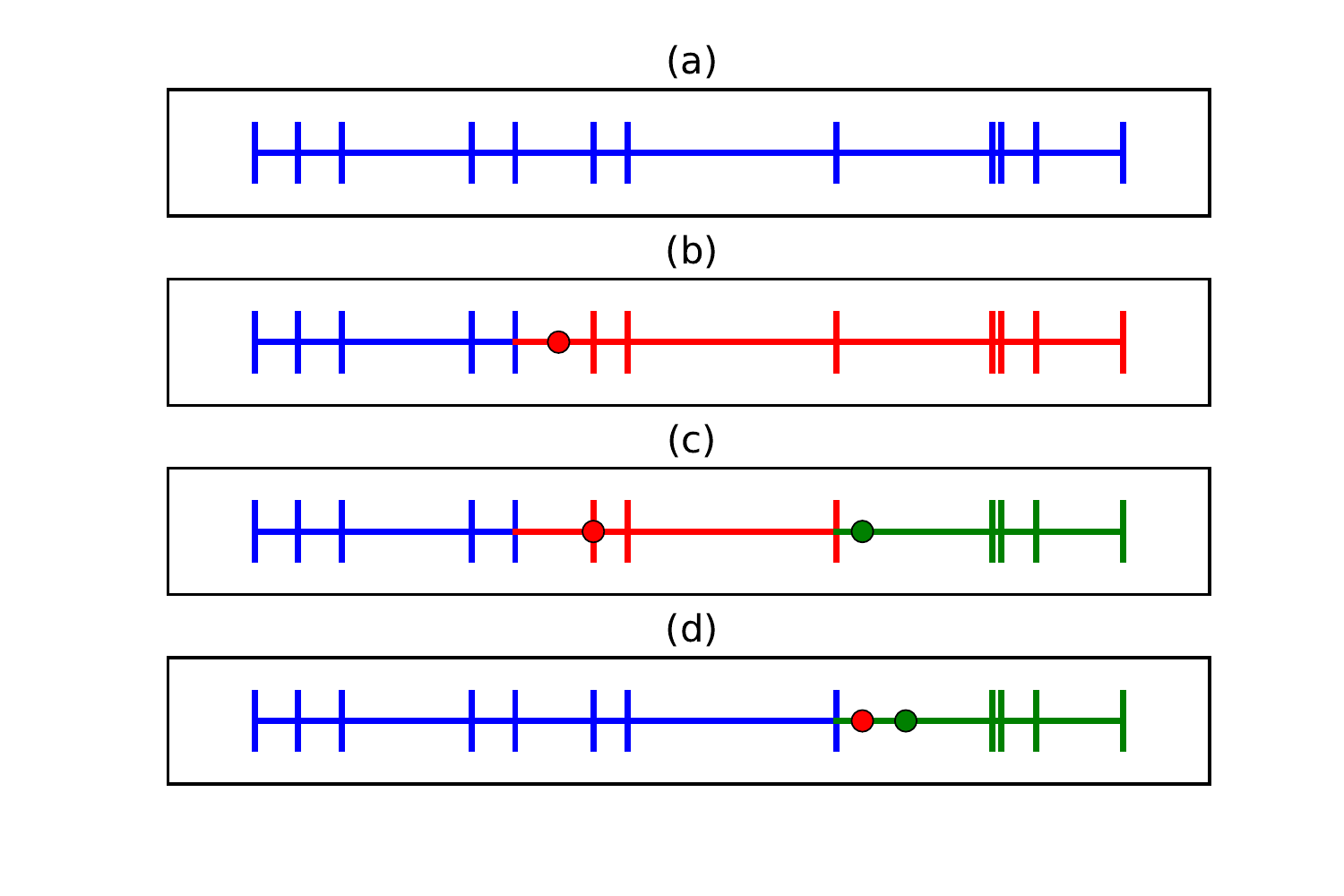}}
\caption{Effects of domain partition through change-points (coloured circles), on kernel membership. Each vertical bar corresponds to a distinct boundary time $a_k^j$. For the same collection of boundary times, we consider four scenarios: (a) no partition, (b) partition of the domain in two by a single  change-point that does not coincide with any existing boundary time, (c) partition of the domain in three by two change-points, one of which coincides with an existing boundary time, and (d) partition of the domain in two by two distinct change-points. In each scenario, kernel membership is illustrated by colour-coding. The colour of the interval between two consecutive boundary times $a_k^j$ and $a_{k+1}^j$ reflects what kernel configuration drives the corresponding string; in particular, the colour of the vertical bar corresponding to boundary time $a_{k+1}^j$ determines what kernel configuration should be used to compute the conditional distribution of the value of the \emph{derivative string GP} $(z_t^j, z_t^{j\prime})$ at $a_{k+1}^j$, given its value at $a_k^j$.}
\label{fig:kernel_membership}
\end{center}
\end{figure}

To place a prior on model complexity, it suffices to define a joint probability measure on the set of change-points and the corresponding kernel configurations. As kernel configurations are not shared across input dimensions, we choose these priors to be independent across input dimensions. Moreover, $\{c_p^j\}$ being a random collection of points on an interval whose number and positions are both random, it is \emph{de facto} a point process (\cite{Daley08}). To keep the prior specification of change-points uninformative, it is desirable that conditional on the number of change-points, the positions of change-points be i.i.d. uniform on the domain. As for the number of change-points, it is important that the support of its distribution not be bounded, so as to allow for an arbitrarily large model complexity if warranted. The two requirements above are satisfied by a homogeneous Poisson process or HPP (\cite{Daley08}) with constant intensity $\lambda^j$. More precisely, the prior probability measure on $\left(\{c_p^j, \theta_p^j\}, \lambda^j  \right)$ is constructed as follows:
\begin{equation}
\label{eq:comp_prior}
\begin{cases}
\lambda^j \sim \Gamma(\alpha^j, \beta^j),\\
\{c_p^j\} \big\vert \lambda^j  \sim \text{HPP}(\lambda^j)\\
 \theta_p^j[i] \big\vert  \{c_p^j\}, \lambda^j  ~~\overset{\text{i.i.d}}{\sim} ~~\log \mathcal{N}(0, \rho^j)\\
\forall (j, p) \neq (l, q) ~ \theta_p^j \perp  \theta_q^l,
\end{cases},
\end{equation}
where we choose the Gamma distribution $\Gamma$ as prior on the intensity $\lambda^j $ for conjugacy, we assume all kernel hyper-parameters are positive as is often the case in practice,\footnote{This may easily be relaxed if needed, for instance by putting normal priors on parameters that may be negative and log-normal priors on positive parameters.} the coordinates of the hyper-parameters of a kernel configuration are assumed i.i.d., and kernel hyper-parameters are assumed independent between kernel configurations. Denoting the domain of the $j$-th input $[a^j, b^j]$, it follows from applying the laws of total expectation and total variance on Equation (\ref{eq:comp_prior}) that the expected number of change-points in the $j$-th dimension under our prior is
\begin{equation}
\text{E}\left( \# \{c_p^j\}  \right) = \left(b^j-a^j\right) \frac{\alpha^j}{\beta^j},
\end{equation}
and the variance of the number of change-points in the $j$-dimension under our prior is
\begin{equation}
\text{Var}\left( \# \{c_p^j\}  \right) = \left(b^j-a^j\right) \frac{\alpha^j}{\beta^{j}} \left(1+\frac{\left(b^j-a^j\right) }{\beta^j} \right).
\end{equation}
The two equations above may guide the user when setting the parameters $\alpha^j$ and $\beta^j$. For instance, these values may be set so that the expected number of change-points in a given input dimension be a fixed fraction of the number of boundary times in that input dimension, and so that the prior variance over the number of change-points be large enough that overall the prior isn't too informative.

We could have taken a different approach to construct our prior on change-points. In effect, assuming for the sake of the argument that the boundaries of the domain of the $j$-th input, namely $a^j$ and $b^j$, are the first and last change-point in that input dimension, we note that the mapping $$\left(\dots, c_p^j, \dots \right) \to  \left(\dots, p_p^j, \dots\right) := \left(\dots, \frac{c_{p+1}^j-c_p^j}{b^j-a^j},\dots \right)$$ defines a bijection between the set of possible change-points in the $j$-th dimension and the set of all discrete probability distributions. Thus, we could have placed as prior on  $\left(\dots, p_p^j, \dots\right)$ a Dirichlet process (\cite{ferguson1973bayesian}), a Pitman-Yor process (\cite{pitman1997two}), more generally \emph{normalized completely random measures} (\cite{kingman1967completely}) or any other probability distribution over partitions. We prefer the point process approach primarily because it provides an easier way of expressing prior belief about model complexity through the expected number of change-points $\# \{c_p^j\}$, while remaining uninformative about positions thereof. 

One might also be tempted to regard change-points in an input dimension $j$ as inducing a partition, not of the domain $[a^j, b^j]$, but of the set of boundary times $a_k^j$ in the same dimension, so that one may define a prior over kernel memberships through a prior over partitions of the set of boundary times. However, this approach would be inconsistent with the aim to learn local patterns in the data if the corresponding random measure is \emph{exchangeable}. In effect, as boundary times are all input coordinates, local patterns may only arise in the data as a result of adjacent strings sharing kernel configurations. An exchangeable random measure would postulate a priori that two kernel membership assignments that have the same kernel configurations (i.e. the same number of configurations and the same set of hyper-parameters) and the same \emph{number} of boundary times in each kernel cluster (although not exactly the same boundary times), are equally likely to occur, thereby possibly putting more probability mass on kernel membership assignments that do not respect boundary time adjacency. Unfortunately,  \emph{exchangeable} random measures (among which the Dirichlet process and the Pitman-Yor process) are by far more widely adopted by the machine learning community than non-exchangeable random measures. Thus, this approach might be perceived as overly complex. That being said, as noted by \cite{foti2015survey}, non-exchangeable normalized random measures may be regarded as Poisson point processes (with varying intensity functions) on some augmented spaces, which makes this choice of prior specification somewhat similar, but stronger (that is more informative) than the one we adopt in this paper.

Before deriving the sampling algorithm, it is worth noting that the prior defined in Equation (\ref{eq:comp_prior}) does not admit a density with respect to the same base measure,\footnote{That is the joint prior probability measure is neither discrete, nor continuous.} as the number of change-points $\#\{c_p^j\}$, and subsequently the number of kernel configurations, may vary from one sample to another. Nevertheless, the joint distribution over the data $\mathcal{D}$ and all other model parameters is well defined and, as we will see later, we may leverage reversible-jump MCMC techniques (\cite{green1995reversible,green09}) to construct a Markov chain that converges to the posterior distribution.

\subsubsection{Overall Structure of the MCMC Sampler} 
To ease notations, we denote $\bm{c}$ the set of all change-points in all input dimensions, we denote $\bm{n}= \left(\dots, \#\{c_p^j \}, \dots \right) \in \mathbb{N}^d$ the vector of the numbers of change-points in each input dimension, we denote $\bm{\theta}$ the set of kernel hyper-parameters,\footnote{To simplify the exposition, we assume without loss of generality that each kernel configuration has the same kernel functional form, so that configurations are defined by kernel hyper-parameters.} and $\bm{\rho} := (\dots, \rho^j, \dots)$ the vector of variances of the independent log-normal priors  on $\bm{\theta}$. We denote $\bm{\lambda} := (\dots, \lambda^j, \dots)$ the vector of change-points intensities, we denote $\bm{\alpha} := (\dots, \alpha^j, \dots)$ and $\bm{\beta} := (\dots, \beta^j, \dots)$ the vectors of parameters of the Gamma priors we placed on the change-points intensities across the $d$ input dimensions, and we recall that $\mathbf{u}$ denotes the vector of likelihood parameters other than the values of the latent function $f$. 

We would like to sample from the posterior distribution $p(\mathbf{f}, \mathbf{f}^*, \nabla \mathbf{f}, \nabla \mathbf{f}^* | \mathcal{D}, \bm{\alpha}, \bm{\beta}, \bm{\rho})$, where $\mathbf{f}$ and $\mathbf{f}^*$ are the vectors of values of the latent function $f$ at training and test inputs respectively, and $\nabla \mathbf{f}, \nabla \mathbf{f}^*$ the corresponding gradients. Denoting $\mathbf{z}$ the vector of univariate \emph{string GP} processes and their derivatives, evaluated at all distinct training and test input coordinate values, we note that to sample from $p(\mathbf{f}, \mathbf{f}^*,  \nabla \mathbf{f}, \nabla \mathbf{f}^* | \mathcal{D}, \bm{\alpha}, \bm{\beta}, \bm{\rho})$, it suffices to sample from $p(\mathbf{z} | \mathcal{D}, \bm{\alpha}, \bm{\beta}, \bm{\rho})$, compute $\mathbf{f}$ and $\mathbf{f}^*$ using the link function, and compute the gradients using Equation (\ref{eq:sgp_gradient}). To sample from $p(\mathbf{z} | \mathcal{D}, \bm{\alpha}, \bm{\beta}, \bm{\rho})$, we may sample from the target distribution
\begin{equation}
\pi(\bm{n}, \bm{c}, \bm{\theta}, \bm{\lambda}, \mathbf{z}, \mathbf{u}) := p(\bm{n}, \bm{c}, \bm{\theta}, \bm{\lambda}, \mathbf{z}, \mathbf{u} \vert \mathcal{D}, \bm{\alpha}, \bm{\beta}, \bm{\rho}),
\end{equation} 
and discard variables that are not of interest. As previously discussed, $\pi$ is not absolutely continuous with respect to the same base measure, though we may still decompose it as 
\begin{equation}
\label{eq:decomp}
\pi(\bm{n}, \bm{c}, \bm{\theta}, \bm{\lambda}, \mathbf{z}, \mathbf{u}) = \color{green} \frac{1}{p(\mathcal{D} \vert \bm{\alpha}, \bm{\beta}, \bm{\rho})} \color{blue} p(\bm{n} \vert \bm{\lambda}) p(\bm{\lambda} \vert \bm{\alpha}, \bm{\beta}) \color{red} p(\bm{c}\vert \bm{n}) p(\bm{\theta} \vert \bm{n}, \bm{\rho}) p(\mathbf{u}) p(\mathbf{z} \vert \mathbf{c}, \bm{\theta})\color{black} p(\mathcal{D} \vert  \mathbf{z}, \mathbf{u}),
\end{equation}
where we use the notation $p(.)$ and $p(. \vert .)$ to denote probability measures rather than probability density functions or probability mass functions, and where product and scaling operations are usual measure operations. Before proceeding any further, we will introduce a slight re-parametrization of Equation (\ref{eq:decomp}) that will improve the inference scheme.

Let $\mathbf{n}_a = (\dots, \#\{a_k^j\}_k, \dots)$ be the vector of the numbers of unique boundary times in all $d$ input dimensions. We recall from our prior on $f$ that
\begin{equation}
\label{eq:comp_z}
p(\mathbf{z} \vert \mathbf{c}, \bm{\theta}) = \prod_{j=1}^d p\left(z_{a_0^j}^j, z_{a_0^j}^{j\prime}\right) \prod_{k=1}^{\bm{n}_a[j]-1} p\left(z_{a_k^j}^j, z_{a_k^j}^{j\prime} \Big\vert z_{a_{k-1}^j}^j, z_{a_{k-1}^j}^{j\prime} \right),
\end{equation}
where each factor in the decomposition above is a bivariate Gaussian density whose mean vector and covariance matrix is obtained from the partitions $\mathbf{c}$, the kernel hyper-parameters $\bm{\theta}$, and the kernel membership scheme described in Section \ref{sct:km} and illustrated in Figure \ref{fig:kernel_membership}, and using Equations (\ref{eq:sb0k}-\ref{eq:mb0k}). Let ${}_k^j\textbf{K}_{u; v}$ be the unconditional covariance matrix between $\left(z_{u}^j, z_{u}^{j\prime}\right)$ and $\left(z_{v}^j, z_{v}^{j\prime}\right)$ as per the unconditional kernel structure driving the string defined on the interval $[a_k^j, a_{k+1}^j[$. Let $\Sigma_0^j := {}_0^j\textbf{K}_{a_0^j; a_0^j}$ be the auto-covariance matrix of $\left(z_{a_0^j}^j, z_{a_0^j}^{j\prime}\right)$. Let  $$\Sigma_k^j := {}_k^j\textbf{K}_{a_k^j; a_k^j} - {}_k^j\textbf{K}_{a_k^j; a_{k-1}^j} {}_k^j\textbf{K}_{a_{k-1}^j; a_{k-1}^j}^{-1} {}_k^j\textbf{K}_{a_k^j; a_{k-1}^j}^{T}$$ be the covariance matrix of $\left(z_{a_k^j}^j, z_{a_k^j}^{j\prime}\right)$ given  $\left(z_{a_{k-1}^j}^j, z_{a_{k-1}^j}^{j\prime}\right)$, and $$M_k^j = {}_k^j\textbf{K}_{a_k^j; a_{k-1}^j} {}_k^j\textbf{K}_{a_{k-1}^j; a_{k-1}^j}^{-1}.$$ Finally, let  $L_k^j := U_k^j (D_k^j)^{\frac{1}{2}}$ with $\Sigma_k^j =  U_k^j D_k^j (U_k^j)^T$ the singular value decomposition (SVD) of $\Sigma_k^j$. We may choose to represent $\left(z_{a_0^j}^j, z_{a_0^j}^{j\prime}\right)$ as 
\begin{equation}
\label{eq:xrep0}
\begin{bmatrix}
z_{a_0^j}^j \\
 z_{a_0^j}^{j\prime}
\end{bmatrix} =  L_0^j x_0^j,
\end{equation}
and for $k>0$ we may also choose to represent $\left(z_{a_k^j}^j, z_{a_k^j}^{j\prime}\right)$ as 
\begin{equation}
\label{eq:xrep1}
\begin{bmatrix}
z_{a_k^j}^j \\
 z_{a_k^j}^{j\prime}
\end{bmatrix} = M_k^j
\begin{bmatrix}
z_{a_{k-1}^j}^j \\
 z_{a_{k-1}^j}^{j\prime}
\end{bmatrix} + L_k^j x_k^j,
\end{equation}
where $\{x_k^j\}$ are independent bivariate standard normal vectors. Equations (\ref{eq:xrep0}-\ref{eq:xrep1}) provide an equivalent representation. In effect, we recall that if $Z=M + LX,$ where $X \sim \mathcal{N}(0, I)$ is a standard multivariate Gaussian, $M$ is a real vector, and $L$ is a real matrix, then $Z \sim \mathcal{N}(M, LL^T)$. Equations (\ref{eq:xrep0}-\ref{eq:xrep1}) result from applying this result to $\left(z_{a_0^j}^j, z_{a_0^j}^{j\prime}\right)$ and $\left(z_{a_k^j}^j, z_{a_k^j}^{j\prime}\right) \Big\vert \left(z_{a_{k-1}^j}^j, z_{a_{k-1}^j}^{j\prime}\right)$. We note that at training time, $M_k^j$ and $L_k^j$ only depend on kernel hyper-parameters. Denoting $\mathbf{x}$ the vector of all $x_k^j$,  $\mathbf{x}$ is a so-called `whitened' representation of $\mathbf{z}$, which we prefer for reasons we will discuss shortly. In the whitened representation, the target distribution $\pi$ is re-parameterized as 
\begin{equation}
\label{eq:decomp2}
\pi(\bm{n}, \bm{c}, \bm{\theta}, \bm{\lambda}, \mathbf{x}, \mathbf{u}) = \color{green} \frac{1}{p(\mathcal{D} \vert \bm{\alpha}, \bm{\beta}, \bm{\rho})} \color{blue} p(\bm{n} \vert \bm{\lambda}) p(\bm{\lambda} \vert \bm{\alpha}, \bm{\beta}) \color{red} p(\bm{c}\vert \bm{n}) p(\bm{\theta} \vert \bm{n}, \bm{\rho}) p(\mathbf{u}) p(\mathbf{x})\color{black} p(\mathcal{D} \vert  \mathbf{x}, \mathbf{c}, \bm{\theta}, \mathbf{u}),
\end{equation}
where the dependency of the likelihood term on the partitions and the hyper-parameters stems from the need to recover $\mathbf{z}$ and subsequently $\mathbf{f}$ from $\mathbf{x}$ through Equations (\ref{eq:xrep0}) and (\ref{eq:xrep1}). The whitened representation Equation (\ref{eq:decomp2}) has two primary advantages. Firstly, it is robust to ill-conditioning of $\Sigma_k^j$, which would typically occur when two adjacent boundary times are too close to each other. In the representation of Equation (\ref{eq:decomp}), as one needs to evaluate the density $p(\mathbf{z} \vert \mathbf{c}, \bm{\theta})$, ill-conditioning of $\Sigma_k^j$ would result in numerical instabilities. In contrast, in the whitened representation, one needs to evaluate the density $p(\mathbf{x})$, which is that of i.i.d. standard Gaussians and as such can be evaluated robustly. Moreover, the SVD required to evaluate $L_k^j$ is also robust to ill-conditioning of $\Sigma_k^j$, so that Equations (\ref{eq:xrep0}) and  (\ref{eq:xrep1}) hold and can be robustly evaluated for degenerate Gaussians too. The second advantage of the whitened representation is that it improves mixing by establishing a link between kernel hyper-parameters and the likelihood.

Equation (\ref{eq:decomp2}) allows us to cast our inference problem as a Bayesian model selection problem under a countable family of models indexed by $\bm{n} \in \mathbb{N}^d$, each defined on a different parameter subspace $\mathcal{C}_{\bm{n}}$, with cross-model normalizing constant $\color{green} p(\mathcal{D}\vert \bm{\alpha}, \bm{\beta}, \bm{\rho})$, model probability driven by $ \color{blue} p(\bm{n} \vert \bm{\lambda}) p(\bm{\lambda} \vert \bm{\alpha}, \bm{\beta})$, model-specific prior $\color{red} p(\bm{c}\vert \bm{n}) p(\bm{\theta} \vert \bm{n}, \bm{\rho}) p(\mathbf{u})p(\mathbf{x})$, and likelihood $p(\mathcal{D} \vert  \mathbf{x}, \mathbf{c}, \bm{\theta}, \mathbf{u})$. Critically, it can be seen from Equation (\ref{eq:decomp2}) that the conditional probability distribution $$\pi( \bm{c}, \bm{\theta}, \bm{\lambda}, \mathbf{x}, \mathbf{u} \vert \bm{n})$$ admits a density with respect to Lebesgue's measure on $\mathcal{C}_{\bm{n}}$.

Our setup is therefore analogous to that which motivated the seminal paper \cite{green1995reversible}, so that to sample from the posterior $\pi( \bm{c}, \bm{\theta}, \bm{\lambda}, \mathbf{x}, \mathbf{u}, \bm{n})$ we may use any Reversible-Jump Metropolis-Hastings (RJ-MH) scheme satisfying detailed balance and dimension-matching as described in section 3.3 of \cite{green1995reversible}. To improve mixing of the Markov chain, we will alternate between a \emph{between-models} RJ-MH update with target distribution $\pi(\bm{n}, \bm{c}, \bm{\theta}, \bm{\lambda}, \mathbf{x}, \mathbf{u})$, and a \emph{within-model} MCMC-within-Gibbs sampler with target distribution $\pi( \bm{c}, \bm{\theta}, \bm{\lambda}, \mathbf{x}, \mathbf{u} \vert \bm{n})$. Constructing reversible-jump samplers by alternating between within-model sampling and between-models sampling is standard practice, and it is well-known that doing so yields a Markov chain that converges to the target distribution of interest \citep[see][p. 50]{brooks2011handbook}. 

In a slight abuse of notation, in the following we might use the notations $p(. \vert .)$ and $p(.)$, which we previously used to denote probability measures, to refer to the corresponding probability density functions or probability mass functions.

\subsubsection{Within-Model Updates}
We recall from Equation (\ref{eq:decomp2}) that $\bm{c}, \bm{\theta}, \bm{\lambda}, \mathbf{x}, \mathbf{u} \vert \mathcal{D}, \bm{\alpha}, \bm{\beta}, \bm{\rho}, \bm{n}$ has probability density function
\begin{equation}
\label{eq:wtt}
\color{blue} p(\bm{n} \vert \bm{\lambda}) p(\bm{\lambda} \vert \bm{\alpha}, \bm{\beta}) \color{red} p(\bm{c}\vert \bm{n}) p(\bm{\theta} \vert \bm{n}, \bm{\rho}) p(\mathbf{u})p(\mathbf{x})\color{black} p(\mathcal{D} \vert  \mathbf{x},  \mathbf{c}, \bm{\theta}, \mathbf{u}),
\end{equation}
up to a normalizing constant.

\textbf{Updating $\bm{\lambda}$}: By independence of the priors over $(\bm{\lambda}[j], \bm{n}[j])$, the distributions $\bm{\lambda}[j] ~ \big\vert ~ \bm{n}[j]$ are also independent, so that the updates may be performed in \emph{parallel}. Moreover, recalling that the prior number of change-points in the $j$-th input dimension is Poisson distributed with intensity $\bm{\lambda}[j] \left(b^j - a^j\right)$, and by conjugacy of the Gamma distribution to the Poisson likelihood, it follows that 
\begin{equation}
\label{eq:lam_up}
\bm{\lambda}[j] ~ \big\vert ~ \bm{n}[j] \sim \Gamma\left( \frac{\bm{n}[j]}{b^j - a^j} + \bm{\alpha}[j], 1+\bm{\beta}[j]\right).
\end{equation}
This update step has memory requirement and time complexity both constant in the number of training and test samples.

\textbf{Updating $\mathbf{u}$}: When the likelihood has additional parameters $\mathbf{u}$, they may be updated with a Metropolis-Hastings step. Denoting $q(\mathbf{u} \to \mathbf{u}^\prime)$ the proposal probability density function, the acceptance ratio reads
\begin{equation}
\label{eq:u}
r_\mathbf{u} = \min ~ \left(1, \frac{p(\mathbf{u}^\prime) p\left(\mathcal{D} \vert  \mathbf{x}, \mathbf{c}, \bm{\theta}, \mathbf{u}^\prime\right)q(\mathbf{u}^\prime \to \mathbf{u})}{p(\mathbf{u}) p\left(\mathcal{D} \vert  \mathbf{x}, \mathbf{c}, \bm{\theta}, \mathbf{u}\right)q(\mathbf{u} \to \mathbf{u}^\prime)}\right).
\end{equation}
In some cases however, it might be possible and more convenient to choose $p(\mathbf{u})$ to be conjugate to the likelihood $p\left(\mathcal{D} \vert  \mathbf{x}, \mathbf{c}, \bm{\theta}, \mathbf{u}\right)$. For instance, in regression problems under a Gaussian noise model, we may take $\mathbf{u}$ to be the noise variance on which we may place an inverse-gamma prior. Either way, the computational bottleneck of this step is the evaluation of the likelihood $p\left(\mathcal{D} \vert  \mathbf{x},  \mathbf{c}, \bm{\theta}, \mathbf{u}^\prime\right)$, which in most cases can be done with a time complexity and memory requirement that are both linear in the number of training samples.

\textbf{Updating $\mathbf{c}$}: We update the positions of change-points sequentially using the Metropolis-Hastings algorithm, one input dimension $j$ at a time, and for each input dimension we proceed in increasing order of change-points. The proposal new position for the change-point $c^j_p$ is sampled uniformly at random on the interval $]c^j_{p-1}, c^j_{p+1}[$, where $c^j_{p-1}$ (resp. $c^j_{p+1}$) is replaced by $a^j$ (resp. $b^j$) for the first (resp. last) change-point. The acceptance probability of this proposal is easily found to be
\begin{equation}
\label{eq:update_cjp}
r_{c_p^j} = \min~ \left(1, \frac{ p\left(\mathcal{D} \vert  \mathbf{x}, \mathbf{c}^\prime, \bm{\theta}, \mathbf{u}\right)}{ p\left(\mathcal{D} \vert  \mathbf{x}, \mathbf{c}, \bm{\theta}, \mathbf{u}\right)} \right),
\end{equation}
where $\mathbf{c}^\prime$ is identical to $\mathbf{c}$ except for the change-point to update. This step requires computing the factors $\{L_k^j, M_k^j\}$ corresponding to inputs in $j$-th dimension whose kernel configuration would change if the proposal were to be accepted, the corresponding vector of \emph{derivative string GP} values $\textbf{z}$, and the observation likelihood under the proposal $p\left(\mathcal{D} \vert  \mathbf{x}, \mathbf{c}^\prime, \bm{\theta}, \mathbf{u}\right)$. The computational bottleneck of this step is therefore once again the evaluation of the new likelihood $p\left(\mathcal{D} \vert  \mathbf{x}, \mathbf{c}^\prime, \bm{\theta}, \mathbf{u}\right)$.

\textbf{Updating $\mathbf{x}$}: The target conditional density of $\mathbf{x}$ is proportional to
\begin{equation}
\label{eq:w_rep}
p(\textbf{x})p(\mathcal{D} \vert \mathbf{x}, \bm{c}, \bm{\theta}, \mathbf{u}).
\end{equation}
Recalling that $p(\mathbf{x})$ is a multivariate standard normal, it follows that the form of Equation (\ref{eq:w_rep}) makes it convenient to use elliptical slice sampling (\cite{Murray09b}) to sample from the unormalized conditional $p(\textbf{x})p(\mathcal{D} \vert \mathbf{x}, \bm{c}, \bm{\theta}, \mathbf{u})$. The two bottlenecks of this update step are sampling a new proposal from $p(\mathbf{x})$ and evaluating the likelihood $p(\mathcal{D} \vert \mathbf{x}, \bm{c}, \bm{\theta}, \mathbf{u})$. Sampling from the multivariate standard normal $p(\mathbf{x})$ may be \emph{massively parallelized}, for instance by using GPU Gaussian random number generators. When no parallelism is available, the overall time complexity reads $\mathcal{O}\left(\sum_{j=1}^d \mathbf{n}_a[j]\right)$, where we recall that $\mathbf{n}_a[j]$ denotes the number of distinct training and testing input coordinates in the $j$-th dimension. In particular, if we denote $N$ the total number of training and testing $d$-dimensional input samples, then $\sum_{j=1}^d  \mathbf{n}_a[j] \leq dN$, although for many classes of data sets with sparse input values such as images, where each input (single-colour pixel value) may have at most $256$ distinct values, we might have $\sum_{j=1}^d \mathbf{n}_a[j] \ll dN$. As for the memory required to sample from $p(\mathbf{x})$, it grows proportionally to the size of $\mathbf{x}$, that is in $\mathcal{O}\left(\sum_{j=1}^d  \mathbf{n}_a[j]\right)$. In regards to the evaluation of the likelihood $p(\mathcal{D} \vert \mathbf{x}, \bm{c}, \bm{\theta}, \mathbf{u})$, as previously discussed its resource requirements are application-specific, but it will typically have time complexity that grows in $\mathcal{O}\left(N\right)$ and memory requirement that grows in $\mathcal{O}\left(dN\right)$. For instance, the foregoing resource requirements always hold for i.i.d. observation models such as in nonparametric regression and nonparametric classification problems.

\textbf{Updating $\bm{\theta}$}: We note from Equation (\ref{eq:wtt}) that the conditional distribution of $\bm{\theta}$ given everything else has unormalized density

\begin{equation}
p(\bm{\theta} \vert \bm{n}, \bm{\rho}) p(\mathcal{D} \vert \mathbf{x}, \bm{c}, \bm{\theta}, \mathbf{u}),
\end{equation}
which we may choose to represent as
\begin{equation}
\label{eq:up_log_theta}
p(\log \bm{\theta} \vert \bm{n}, \bm{\rho}) p(\mathcal{D} \vert \mathbf{x}, \bm{c}, \log \bm{\theta}, \mathbf{u}).
\end{equation}
As we have put independent log-normal priors on the coordinates of $\bm{\theta}$ (see Equation \ref{eq:comp_prior}), we may once again use elliptical slice sampling to sample from $\log \bm{\theta}$ before taking the exponential. The time complexity of  generating a new sample from $p(\log \bm{\theta} \vert \bm{n}, \bm{\rho})$ will typically be at most linear in the total number of distinct kernel hyper-parameters. Overall, the bottleneck of this update is the evaluation of the likelihood $p(\mathcal{D} \vert \mathbf{x}, \bm{c}, \log \bm{\theta}, \mathbf{u})$.  In this update, the latter operation requires recomputing the factors $M_k^j$ and $L_k^j$ of Equations (\ref{eq:xrep0}) and (\ref{eq:xrep1}), which requires computing and taking the SVD of unrelated $2\times 2$ matrices, computations we may perform in \emph{parallel}. Once the foregoing factors have been computed, we evaluate $\mathbf{z}$, the \emph{derivative string GP} values at boundary times, parallelizing over input dimensions, and running a sequential update within an input dimension using Equations (\ref{eq:xrep0}) and (\ref{eq:xrep1}). Updating $\mathbf{z}$ therefore has time complexity that is, in the worst case where no distributed computing is available, $\mathcal{O}\left(dN\right)$, and $\mathcal{O}\left(N\right)$ when there are up to $d$ computing cores. The foregoing time complexity will also be that of this update step, unless the observation likelihood is more expensive to evaluate. The memory requirement, as in previous updates, is $\mathcal{O}\left(dN\right)$.

\textbf{Overall resource requirement}: To summarize previous remarks, the overall computational bottleneck of a \emph{within-model} iteration is the evaluation of the likelihood $p(\mathcal{D} \vert \mathbf{x}, \bm{c}, \bm{\theta}, \mathbf{u})$. For i.i.d. observation models such as classification and regression problems for instance, the corresponding time complexity grows in $\mathcal{O}(N)$ when $d$ computing cores are available, or $\mathcal{O}(dN)$ otherwise, and the memory requirement grows in $\mathcal{O}(dN)$.

\subsubsection{Between-Models Updates}
\label{sct:btw_md}
Our reversible-jump Metropolis-Hastings update proceeds as follows. We choose an input dimension, say $j$, uniformly at random. If $j$ has no change-points, that is $\mathbf{n}[j] = 0$, we randomly choose between not doing anything, and adding a change-point, each outcome having the same probability. If $\mathbf{n}[j] > 0$, we either do nothing, add a change-point, or delete a change-point, each outcome having the same probability of occurrence.

Whenever we choose not to do anything, the acceptance ratio is easily found to be one:
\begin{equation}
 r_{j0} = 1.
\end{equation}

Whenever we choose to add a change-point, we sample the position $c^j_*$ of the proposal new change-point uniformly at random on the domain $[a^j, b^j]$ of the $j$-th input dimension. This proposal will almost surely break an existing kernel membership cluster, say the $p$-th, into two; that is $c^j_p < c^j_*< c_{p+1}^j$ where we may have $a^j=c^j_p$ and/or $b^j=c^j_{p+1}$. In the event $c^j_*$ coincides with an existing change-point, which should happen with probability $0$, we do nothing. When adding a change-point, we sample a new vector of hyper-parameters $\theta_*^j$ from the log-normal prior of Equation (\ref{eq:comp_prior}), and we propose as hyper-parameters for the tentative new clusters $[c^j_p, c^j_*[$ and $[c^j_*, c^j_{p+1}[$ the vectors $\theta_{\text{add-left}}^j$ and $\theta_{\text{add-right}}^j$ defined as
\begin{equation}
\label{eq:theta_add_left}
 \log \theta_{\text{add-left}}^j := \cos(\alpha) \log \theta_p^j - \sin(\alpha) \log \theta_*^j
\end{equation}
and
\begin{equation}
\label{eq:theta_add_right}
 \log \theta_{\text{add-right}}^j :=  \sin(\alpha) \log \theta_p^j + \cos(\alpha) \log \theta_*^j
\end{equation}
 respectively, where $\alpha \in [0, \frac{\pi}{2}]$ and $\theta_p^j$ is the vector of hyper-parameters currently driving the kernel membership defined by the cluster $[c^j_p, c_{p+1}^j[$. We note that if $\theta_p^j$ is distributed as per the prior  in Equation (\ref{eq:comp_prior}) then $\theta_{\text{add-left}}^j$ and $\theta_{\text{add-right}}^j$ are i.i.d. distributed as per the foregoing prior. More generally, this elliptical transformation determines the extent to which the new proposal kernel configurations should deviate from the current configuration $\theta_p^j$. $\alpha$ is restricted to $ [0, \frac{\pi}{2}]$ so as to give a positive weight the the current vector of hyper-parameters $\theta_p^j$. When $\alpha=0$, the left hand-side cluster $[c^j_p, c^j_*[$ will fully exploit the current kernel configuration, while the right hand-side cluster $[c^j_*, c^j_{p+1}[$ will use the prior to explore a new set of hyper-parameters. When $\alpha= \frac{\pi}{2}$ the reverse occurs. To preserve symmetry between the left and right hand-side kernel configurations, we choose 
 \begin{equation}
 \alpha = \frac{\pi}{4}.
 \end{equation}

Whenever we choose to delete a change-point, we choose an existing change-point uniformly at random, say $c^j_p$. Deleting $c^j_p$, would merge the clusters $[c^j_{p-1}, c^j_p[$ and $[c^j_p, c^j_{p+1}[$, where we may have $a^j=c^j_{p-1}$ and/or $b^j=c^j_{p+1}$. We propose as vector of hyper-parameters for the tentative merged cluster $[c^j_{p-1}, c^j_{p+1}[$ the vector $\theta_{\text{del-merged}}^j$ satisfying:
\begin{equation}
 \log \theta_{\text{del-merged}}^j = \cos(\alpha) \log \theta^j_{p-1} + \sin(\alpha) \log \theta^j_{p},
\end{equation}
which together with 
\begin{equation}
 \log \theta_{\text{del-*}}^j = -\sin(\alpha) \log \theta^j_{p-1} + \cos(\alpha) \log \theta^j_{p},
\end{equation}
constitute the inverse of the transformation defined by Equations (\ref{eq:theta_add_left}) and (\ref{eq:theta_add_right}).

Whenever a proposal to add or delete a change-point occurs, the factors $L_k^j$ and $M_k^j$ that would be affected by the change in kernel membership structure are recomputed, and so are the affected coordinates of $\mathbf{z}$.

This scheme satisfies the reversibility and dimension-matching requirements of \cite{green1995reversible}. Moreover, the absolute value of the Jacobian of the mapping $$\left( \log \theta_p^j, ~\log \theta_*^j \right) \to \left(\log \theta_{\text{add-left}}^j , ~\log \theta_{\text{add-right}}^j \right)$$ of the move to add a change-point in $[c^j_p, c^j_{p+1}[$ reads
\begin{equation}
\label{eq:jac_add}
\left\vert \frac{\partial \left(\log \theta_{\text{add-left}}^j , \log \theta_{\text{add-right}}^j \right)}{\partial \left(\log \theta_p^j, \log \theta_*^j \right)} \right\vert  = 1.
\end{equation}
Similarly, the absolute value of the Jacobian of the mapping corresponding to a move to delete change-point $c^j_p$, namely  $$\left(\log \theta^j_{p-1}, ~\log \theta^j_{p} \right) \to \left(\log \theta_{\text{del-merged}}^j , ~\log \theta_{\text{del-*}}^j \right),$$  reads:
\begin{equation}
\label{eq:jac_del}
\Bigg\vert \frac{\partial \left( \log \theta_{\text{del-merged}}^j , \log \theta_{\text{del-*}}^j \right)}{\partial \left(\log \theta^j_{p-1}, \log \theta^j_{p} \right)} \Bigg\vert  = 1.
\end{equation}
Applying the standard result Equation (8) of \cite{green1995reversible}, the acceptance ratio of the move to add a change-point is found to be
\begin{align}
\label{eq:r_add}
r_{j+} = \min \left(1,  {\color{ForestGreen}{\frac{p(\mathcal{D} \vert \mathbf{x}, \bm{c}_+, \bm{\theta}_+, \mathbf{u})}{p(\mathcal{D} \vert \mathbf{x}, \bm{c}, \bm{\theta}, \mathbf{u})}}} {\color{red}{\frac{\bm{\lambda}[j] \left(b^j - a^j\right)}{1 + \mathbf{n}[j]}}} {\color{blue}{\frac{p_{\log \theta^j}\left(\log \theta^j_{\text{add-left}}\right)p_{\log \theta^j}\left(\log \theta^j_{\text{add-right}}\right)}{p_{\log \theta^j}\left(\log \theta^j_p\right)p_{\log \theta^j}\left(\log \theta^j_*\right)}}}  \right)
\end{align}
where  $p_{\log \theta^j}$ is the prior over log hyper-parameters in the $j$-th input dimension (as per the prior specification Equation \ref{eq:comp_prior}), which we recall is i.i.d. centred Gaussian with variance $\bm{\rho}[j]$, and $\bm{c}_+$ and $\bm{\theta}_+$ denote the proposal new vector of change-points and the corresponding vector of hyper-parameters. The three coloured terms in the acceptance probability are very intuitive. The green term ${\color{ForestGreen}{\frac{p(\mathcal{D} \vert \mathbf{x}, \bm{c}_+, \bm{\theta}_+, \mathbf{u})}{p(\mathcal{D} \vert \mathbf{x}, \bm{c}, \bm{\theta}, \mathbf{u})}}}$ represents the fit improvement that would occur if the new proposal is accepted. In the red term ${\color{red}{\frac{\bm{\lambda}[j] \left(b^j - a^j\right)}{1 + \mathbf{n}[j]}}}$, ${\color{red}{\bm{\lambda}[j] \left(b^j - a^j\right)}}$ represents the average number of change-points in the $j$-th input dimension as per the HPP prior, while ${\color{red}{1 + \mathbf{n}[j]}}$ corresponds to the proposal new number of change-points in the $j$-th dimension, so that the whole red term acts as a complexity regulariser. Finally, the blue term ${\color{blue}{\frac{p_{\log \theta^j}\left(\log \theta^j_{\text{add-left}}\right)p_{\log \theta^j}\left(\log \theta^j_{\text{add-right}}\right)}{p_{\log \theta^j}\left(\log \theta^j_p\right)p_{\log \theta^j}\left(\log \theta^j_*\right)}}}$ plays the role of hyper-parameters regulariser.

Similarly, the acceptance ratio of the move to delete change-point $c_p^j$, thereby changing the number of change-points in the $j$-th input dimension from $\mathbf{n}[j]$ to $\mathbf{n}[j]-1$, is found to be 
\begin{equation}
\label{eq:r_del}
r_{j-}= \min \left(1, {\color{ForestGreen}{\frac{p(\mathcal{D} \vert \mathbf{x}, \bm{c}_-, \bm{\theta}_-, \mathbf{u})}{p(\mathcal{D} \vert \mathbf{x}, \bm{c}, \bm{\theta}, \mathbf{u})}}} {\color{red}{\frac{\mathbf{n}[j]}{\bm{\lambda}[j] \left(b^j - a^j \right)}}}{\color{blue}{\frac{p_{\log \theta^j}\left(\log \theta^j_{\text{del-merged}}\right)p_{\log \theta^j}\left(\log \theta^j_{\text{del-*}}\right)}{p_{\log \theta^j}\left(\log \theta^j_{p-1}\right)p_{\log \theta^j}\left(\log \theta^j_p\right)}}} \right),
\end{equation}
where $\bm{c}_-$ and $\bm{\theta}_-$ denote the proposal new vector of change-points and the corresponding vector of hyper-parameters. Once more, each coloured term plays the same intuitive role as its counterpart in Equation (\ref{eq:r_add}).
\begin{algorithm}[p]
   \caption{MCMC sampler for nonparametric Bayesian inference of a real-valued latent function under a \emph{string GP} prior}
   \label{alg:mcmc}
\begin{algorithmic}
\vspace{0.15cm}
   \STATE {\bfseries Inputs:}  Likelihood model $p(\mathcal{D} \vert \mathbf{f}, \mathbf{u})$, link function $\phi$, training data $\mathcal{D}$, test inputs, type of unconditional kernel, prior parameters $\bm{\alpha}, \bm{\beta}, \bm{\rho}$.
   \STATE {\bfseries Outputs:} Posterior samples of the values of the latent function at training and test inputs $\mathbf{f}$ and $\mathbf{f}^*$, and the corresponding gradients $\nabla \mathbf{f}$ and $\nabla \mathbf{f}^*$.
   
\hrulefill
\vspace{0.15cm}

\STATE {\bfseries \underline{Step 0}:} Set $\bm{n}=0$ and $\bm{c}= \emptyset$, and sample $\bm{\theta}, \bm{\lambda}, \mathbf{x}, \mathbf{u}$ from their priors.
\REPEAT
\STATE {\bfseries \underline{Step 1}:} Perform a within-model update.
\STATE\hspace{0.3cm}{\bfseries 1.1:} Update each $\bm{\lambda}[j]$ by sampling from the Gamma distribution in Equation (\ref{eq:lam_up}).
\STATE\hspace{0.3cm}{\bfseries 1.2:} Update $\mathbf{u}$, the vector of other likelihood parameters, if any, using Metropolis-Hastings (MH) with proposal $q$ and acceptance ratio Equation (\ref{eq:u}) or by sampling directly from the posterior when $p(\mathbf{u})$ is conjugate to the likelihood model.
\STATE\hspace{0.3cm}{\bfseries 1.3:} Update $\bm{\theta}$, using elliptical slice sampling (ESS) with target distribution Equation (\ref{eq:up_log_theta}), and record the newly computed factors $\{L_k^j, M_k^j\}$ that relate $\mathbf{z}$ to its whitened representation $\mathbf{x}$.
\STATE\hspace{0.3cm}{\bfseries 1.4:} Update $\mathbf{x}$ using ESS with target distribution Equation (\ref{eq:w_rep}).
\STATE\hspace{0.3cm}{\bfseries 1.5:} Update change-point positions $\mathbf{c}$ sequentially using MH, drawing a proposal update for $c^j_p$ uniformly at random on $]c^J_{p-1}, c^j_{p+1}[$, and accepting the update with probability $r_{c^j_p}$ (defined Equation \ref{eq:update_cjp}). On accept, update the factors $\{L_k^j, M_k^j\}$.
\STATE {\bfseries \underline{Step 2}}: Perform a between-models update.
\STATE\hspace{0.3cm}{\bfseries 2.1:} Sample a dimension to update, say $j$, uniformly at random.
\STATE\hspace{0.3cm}{\bfseries 2.2:} Consider adding or deleting a change-point
\STATE\hspace{1.0cm} \textbf{if} $\mathbf{n}[j]=0$ \textbf{then}
\STATE\hspace{1.4cm}Randomly choose to add a change-point with probability $1/2$.
\STATE\hspace{1.4cm}\textbf{if} we should consider adding a change-point  \textbf{then}
\STATE\hspace{1.7cm} Construct proposals to update following Section \ref{sct:btw_md}.
\STATE\hspace{1.7cm} Accept proposals with probability $r_+^j$ (see Equation \ref{eq:r_add}).
\STATE\hspace{1.4cm}\textbf{end if}
\STATE\hspace{1.0cm}\textbf{else}
\STATE\hspace{1.4cm}Randomly choose to add/delete a change-point with probability $1/3$.
\STATE\hspace{1.4cm}\textbf{if} we should consider adding a change-point  \textbf{then}
\STATE\hspace{1.7cm} Construct proposals to update following Section \ref{sct:btw_md}.
\STATE\hspace{1.7cm} Accept proposals with probability $r_+^j$ (see Equation \ref{eq:r_add}).
\STATE\hspace{1.4cm}\textbf{else if} we should consider deleting a change-point  \textbf{then}
\STATE\hspace{1.7cm} Construct proposals to update following Section \ref{sct:btw_md}.
\STATE\hspace{1.7cm} Accept proposals with probability $r_-^j$ (see Equation \ref{eq:r_del}).
\STATE\hspace{1.4cm}\textbf{else}
\STATE\hspace{1.7cm} Continue.
\STATE\hspace{1.4cm}\textbf{end if}
\STATE\hspace{1.0cm} \textbf{end if}
\STATE {\bfseries \underline{Step 3}}: Compute $\mathbf{f}, \mathbf{f}^*, \nabla \mathbf{f}$ and $\nabla \mathbf{f}^*$, first recovering $\mathbf{z}$ from $\mathbf{x}$, and then recalling that $f(x) = \phi \left(z_{x[1]}^1, \dots, z_{x[d]}^d\right)$ and $\nabla f(x) = \left( z_{x[1]}^{1\prime} \frac{\partial \phi}{\partial x[1]}(x), \dots, z_{x[d]}^{d\prime} \frac{\partial \phi}{\partial x[d]}(x) \right)$.
\UNTIL{enough samples are generated after mixing.}
\end{algorithmic}
\end{algorithm}

\textbf{Overall resource requirement}: The bottleneck of between-models updates is the evaluation of the new likelihoods $p(\mathcal{D} \vert \mathbf{x}, \bm{c}_+, \bm{\theta}_+, \mathbf{u})$ or $p(\mathcal{D} \vert \mathbf{x}, \bm{c}_{-}, \bm{\theta}_{-}, \mathbf{u})$, whose resource requirements, which are the same as those of within-models updates, we already discussed.

Algorithm \ref{alg:mcmc} summarises the proposed MCMC sampler.

\subsection{Multi-Output Problems}
\label{sct:multi}
Although we have restricted ourselves to cases where the likelihood model depends on a single real-valued function for brevity and to ease notations, cases where the likelihood depends on vector-valued functions, or equivalently multiple real-valued functions, present no additional theoretical or practical challenge. We may simply put independent \emph{string GP} priors on each of the latent functions. An MCMC sampler almost identical to the one introduced herein  may be used to sample from the posterior. All that is required to adapt the proposed MCMC sampler to multi-outputs problems is to redefine $\mathbf{z}$ to include all univariate \emph{derivative string GP} values across input dimensions and across latent functions, perform step 1.1 of Algorithm \ref{alg:mcmc}  for each of the latent function, and update step 2.1 so as to sample uniformly at random not only what dimension to update but also what latent function. Previous analyses and derived acceptance ratios remain unchanged. The resource requirements of the resulting multi-outputs MCMC sampler on a problem with $K$ latent functions, $N$ training and test $d$-dimensional inputs, are the same as those of the MCMC sampler for a single output (Algorithm \ref{alg:mcmc}) with $N$ training and test $dK$-dimensional inputs. The time complexity is $\mathcal{O}(N)$ when $dK$ computing cores are available, $\mathcal{O}(dKN)$ when no distributed computing is available, and the memory requirement becomes $\mathcal{O}(dKN)$.

\subsection{Flashback to Small Scale GP Regressions with String GP Kernels}
\label{sct:flashback}
In Section \ref{sct:ml} we discussed maximum marginal likelihood inference in Bayesian nonparametric regressions under additively separable \emph{string GP} priors, or GP priors with \emph{string GP} covariance functions. We proposed learning the positions of boundary times, conditional on their number, jointly with kernel hyper-parameters and noise variances by maximizing the marginal likelihood using gradient-based techniques. We then suggested learning the number of strings in each input dimension by trading off goodness-of-fit with model simplicity using information criteria such as AIC and BIC. In this section, we propose a fully Bayesian nonparametric alternative.

Let us consider the Gaussian process regression model
\begin{equation}
y_i = f(x_i) + \epsilon_i,~~~ f \sim \mathcal{GP}\left(0, k_{\text{SGP}}(.,.)\right), ~~~ \epsilon_i \overset{\text{}}{\sim} \mathcal{N}\left(0, \sigma^2\right),
\end{equation}
\begin{equation}
x_i \in [a^1, b^1] \times \dots \times [a^d, b^d], ~~~y_i, \epsilon_i \in \mathbb{R},
\end{equation}
where $ k_{\text{SGP}}$ is the covariance function of some \emph{string GP} with boundary times $\{a_k^j\}$ and corresponding unconditional kernels $\{k_k^j \}$ in the $j$-th input dimension. It is worth stressing that we place a GP (not \emph{string GP}) prior on the latent function $f$, but the covariance function of the GP is a \emph{string GP} covariance function (as discussed in Section \ref{sct:means_kerns} and as derived in \ref{app:global_mean_cov}). Of course when the \emph{string GP} covariance function $k_{\text{SGP}}$ is separately additive, the two functional priors are the same. However, we impose no restriction on the link function of the \emph{string GP} that $k_{\text{SGP}}$ is the covariance function of, other than continuous differentiability. To make full Bayesian nonparametric inference, we may place on the boundary times $\{a_k^j\}$ independent homogeneous Poisson process priors, each with intensity $\lambda^j$. Similarly to the previous section (Equation \ref{eq:comp_prior}) our full prior specification of the \emph{string GP} kernel reads
\begin{equation}
\begin{cases}
\lambda^j \sim \Gamma(\alpha^j, \beta^j),\\
\{a_k^j\} \big\vert \lambda^j  \sim \text{HPP}(\lambda^j)\\
 \theta_k^j[i] \big\vert  \{a_k^j\}, \lambda^j  ~\overset{\text{i.i.d}}{\sim}~ \log \mathcal{N}(0, \rho^j)\\
\forall (j, k) \neq (l, p) ~ \theta_k^j \perp  \theta_p^l,
\end{cases},
\end{equation}
where $ \theta_k^j$ is the vector of hyper-parameters driving the unconditional kernel $k_k^j$. The method developed in the previous section and the resulting MCMC sampling scheme (Algorithm \ref{alg:mcmc}) may be reused to sample from the posterior over function values, pending the following two changes. First,  gradients $\nabla \mathbf{f}$ and $\nabla \mathbf{f}^*$ are no longer necessary. Second, we may work with function values $(\mathbf{f},\mathbf{f}^*)$ directly (that is in the original as opposed to whitened space). The resulting (Gaussian) distribution of function values $(\mathbf{f},\mathbf{f}^*)$ conditional on all other variables is then analytically derived using standard Gaussian identities, like it is done in vanilla Gaussian process regression, so that the within-model update of $(\mathbf{f},\mathbf{f}^*)$ is performed using a single draw from a multivariate Gaussian.

This approach to model complexity learning is advantageous over the information criteria alternative of Section \ref{sct:ml} in that it scales better with large input-dimensions. Indeed, rather than performing complete maximum marginal likelihood inference a number of times that grows exponentially with the input dimension, the approach of this section alternates between exploring a new combination of numbers of kernel configurations in each input dimension, and exploring function values and kernel hyper-parameters (given their number). That being said, this approach should only be considered as an alternative to commonly used kernels for \emph{small scale} regression problems to enable the learning of local patterns. Crucially, it scales as poorly as the \emph{standard GP paradigm}, and Algorithm \ref{alg:mcmc} should be preferred for large scale problems.

\section{Experiments}
\label{sct:exp}
We now move on to presenting empirical evidence for the efficacy of \emph{string GPs} in coping with local patterns in data sets, and in doing so in a scalable manner. Firstly we consider maximum marginal likelihood inference on two small scale problems exhibiting local patterns. We begin with a toy experiment that illustrates the limitations of the \emph{standard GP paradigm} in extrapolating and interpolating simple local periodic patterns. Then, we move on to comparing the accuracy of Bayesian nonparametric regression under a \emph{string GP} prior to that of the standard Gaussian process regression model and existing mixture-of-experts alternatives on the motorcycle data set of \cite{silverman}, commonly used for the local patterns and heteroskedasticity it exhibits. Finally, we illustrate the performance of the previously derived MCMC sampler on two large scale Bayesian inference problems, namely the prediction of U.S. commercial airline arrival delays of \cite{gpbigdatareg} and a new large scale dynamic asset allocation problem.

\subsection{Extrapolation and Interpolation of Synthetic Local Patterns}
\label{sct:exp1}
In our first experiment, we illustrate a limitation of the standard approach consisting of postulating a global covariance structure on the domain, namely that this approach might result in unwanted global extrapolation of local patterns, and we show that this limitation is addressed by the \emph{string GP paradigm}. To this aim, we use $2$ toy regression problems. We consider the following functions:
\begin{equation}
\label{eq:synth1d}
  \begin{array}{c c c}
f_0(t) = \left\{ 
  \begin{array}{l l}
    \sin(60\pi t) & ~ t \in [0, 0.5]\\
  \frac{15}{4}\sin(16\pi t) & ~ t \in ]0.5, 1]
  \end{array} \right., &
  f_1(t) = \left\{ 
  \begin{array}{l l}
    \sin(16\pi t) & ~ t \in [0, 0.5]\\
  \frac{1}{2}\sin(32\pi t) & ~ t \in ]0.5, 1]
  \end{array} \right..
    \end{array}
\end{equation}
$f_0$ (resp. $f_1$) undergoes a sharp (resp. mild) change in frequency and amplitude at $t=0.5$. We consider using their restrictions to $[0.25, 0.75]$ for training. We sample those restrictions with frequency $300$, and we would like to extrapolate the functions to the rest of their domains using Bayesian nonparametric regression. 

We compare marginal likelihood \emph{string GP} regression models, as described in Section \ref{sct:ml}, to vanilla GP regression models using popular and expressive kernels. All \emph{string GP} models have two strings and the partition is learned in the marginal likelihood maximisation. Figure \ref{fig:synth_bench} illustrates plots of the posterior means for each kernel used,  and Table \ref{table:synth_bench} compares predictive errors. Overall, it can be noted that the \emph{string GP kernel} with the periodic kernel (\cite{gp_intro}) as building block outperforms competing kernels, including the expressive spectral mixture kernel $$k_{\text{SM}}(r) = \sum_{k=1}^{K}  \sigma_k^2 \exp(-2\pi^2r^2\gamma^2_k)\cos(2\pi r \mu_k)$$ of \cite{wilson2013gaussian} with $K=5$ mixture components.\footnote{The sparse spectrum kernel of \cite{sparsespectrum} can be thought of as the special case $\gamma_k \ll 1$.}

The comparison between the spectral mixture kernel and the string spectral mixture kernel is of particular interest, since spectral mixture kernels are pointwise dense in the family of stationary kernels, and thus can be regarded as flexible enough for learning \emph{stationary} kernels from the data. In our experiment, the \emph{string spectral mixture kernel} with  a single mixture component per string significantly outperforms the spectral mixture kernel with $5$ mixture components. This intuitively can be attributed to the fact that, regardless of the number of mixture components in the spectral mixture kernel, the learned kernel must account for both types of patterns present in each training data set. Hence, each local extrapolation on each side of $0.5$ will attempt to make use of both amplitudes and both frequencies evidenced in the corresponding training data set, and will struggle to recover the true local sine function. We would expect that the performance of the spectral mixture kernel in this experiment will not improve drastically as the number of mixture components increases. However, under a \emph{string GP} prior, the left and right hand side strings are independent conditional on the (unknown) boundary conditions. Therefore, when the \emph{string GP} domain partition occurs at time $0.5$, the training data set on $[0.25, 0.5]$ influences the hyper-parameters of the string to the right of $0.5$ only to the extent that both strings should agree on the value of the latent function and its derivative at $0.5$. To see why this is a weaker condition, we consider the family of pair of functions: $$(\alpha \omega_1 \sin(\omega_2 t), ~  \alpha \omega_2 \sin(\omega_1 t)), ~~~ \omega_i = 2\pi k_i, ~~~ k_i \in \mathbb{N}, ~~~ \alpha \in \mathbb{R}.$$
Such functions always have the same value and derivative at $0.5$, regardless of their frequencies, and they are plausible GP paths under a spectral mixture kernel with one single mixture component ($\mu_k = k_i$ and $\gamma_k \ll 1$),  and under a periodic kernel. As such it is not surprising that extrapolation under a string spectral mixture kernel or a string periodic kernel should perform well. 

To further illustrate that \emph{string GPs} are able to learn local patterns that GPs with commonly used and expressive kernels can't, we consider interpolating two bivariate functions $f_2$ and $f_3$ that exhibit local patterns. The functions are defined as:
\begin{equation}
\forall u, v \in [0.0, 1.0] ~~~ f_2(u, v)=f_0(u)f_1(v), ~~~  f_3(u, v)=\sqrt{f_0(u)^2 + f_1(v)^2}.
\end{equation}
We consider recovering the original functions as the posterior mean of a GP regression model trained on $[0.0, 0.4] \cup [0.6, 1.0] \times [0.0, 0.4] \cup [0.6, 1.0]$. Each bivariate kernel used is a product of two univariate kernels in the same family, and we used standard Kronecker techniques to speed-up inference \citep[see][p.134]{saatchi11}.  The univariate kernels we consider are the same as previously. Each univariate \emph{string GP} kernel has one change-point (two strings) whose position is learned by maximum marginal likelihood. Results are illustrated in Figures \ref{fig:pred_f2} and \ref{fig:pred_f3}. Once again it can be seen that unlike any competing kernel, the product of string periodic kernels recover both functions almost perfectly. In particular, it is impressive to see that, despite $f_3$ not being a separable function, a product of string periodic kernels recovered it almost perfectly. The interpolations performed by the spectral mixture kernel (see Figures \ref{fig:pred_f2} and \ref{fig:pred_f3}) provide further evidence for our previously developed narrative: the spectral mixture kernel tries to blend all local patterns found in the training data during the interpolation. The periodic kernel learns a single global frequency characteristic of the whole data set, ignoring local patterns, while the squared exponential, Mat\'{e}rn and rational quadratic kernels merely attempt to perform interpolation by smoothing.

Although we used synthetic data to ease illustrating our argument, it is reasonable to expect that in real-life problems the bigger the data set, the more likely there might be local patterns that should not be interpreted as noise and yet are not indicative of the data set as whole.

\begin{figure}[p]
\begin{center}
\centerline{\includegraphics[width=0.8\textwidth]{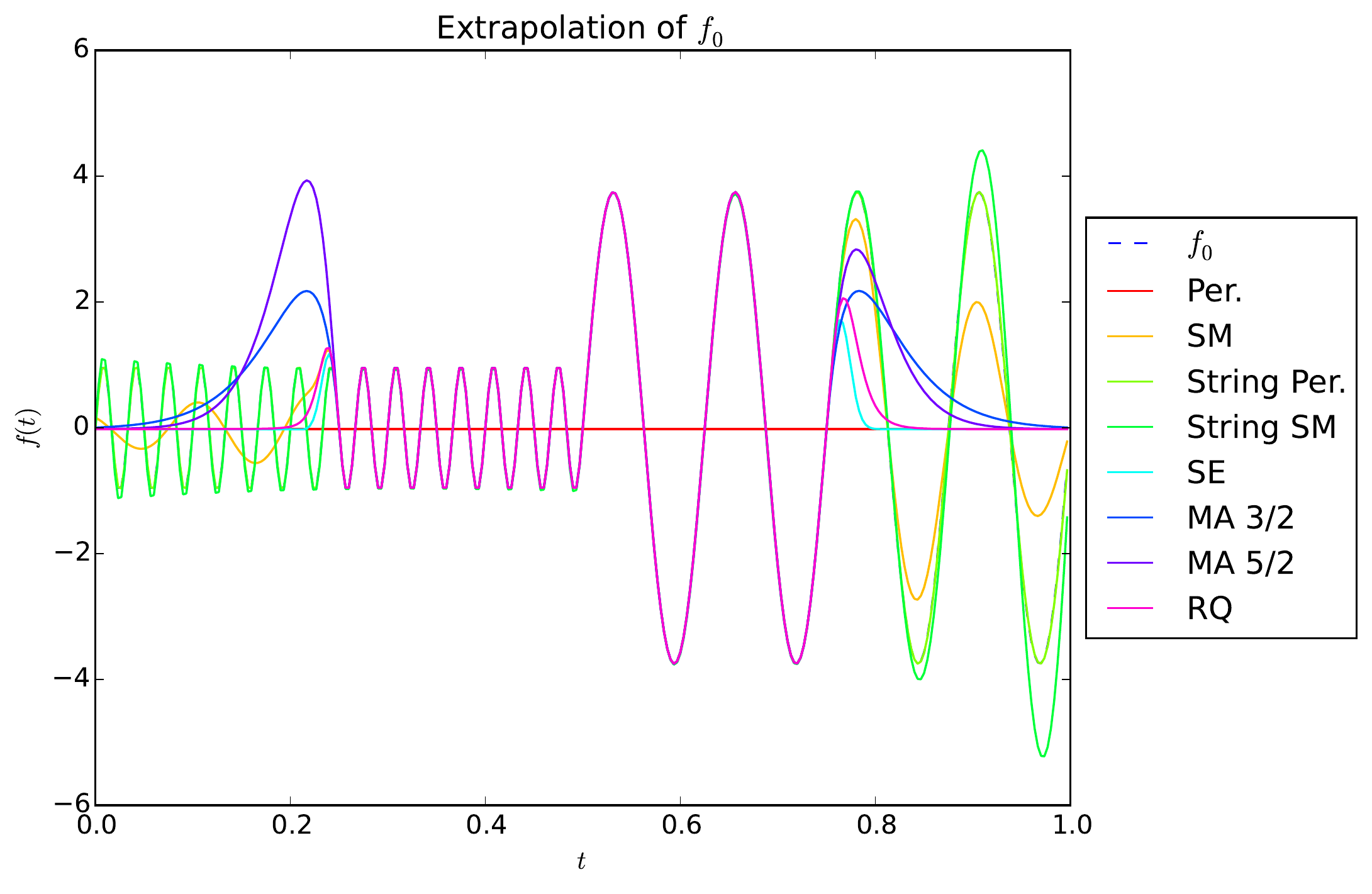}}
\centerline{\includegraphics[width=0.8\textwidth]{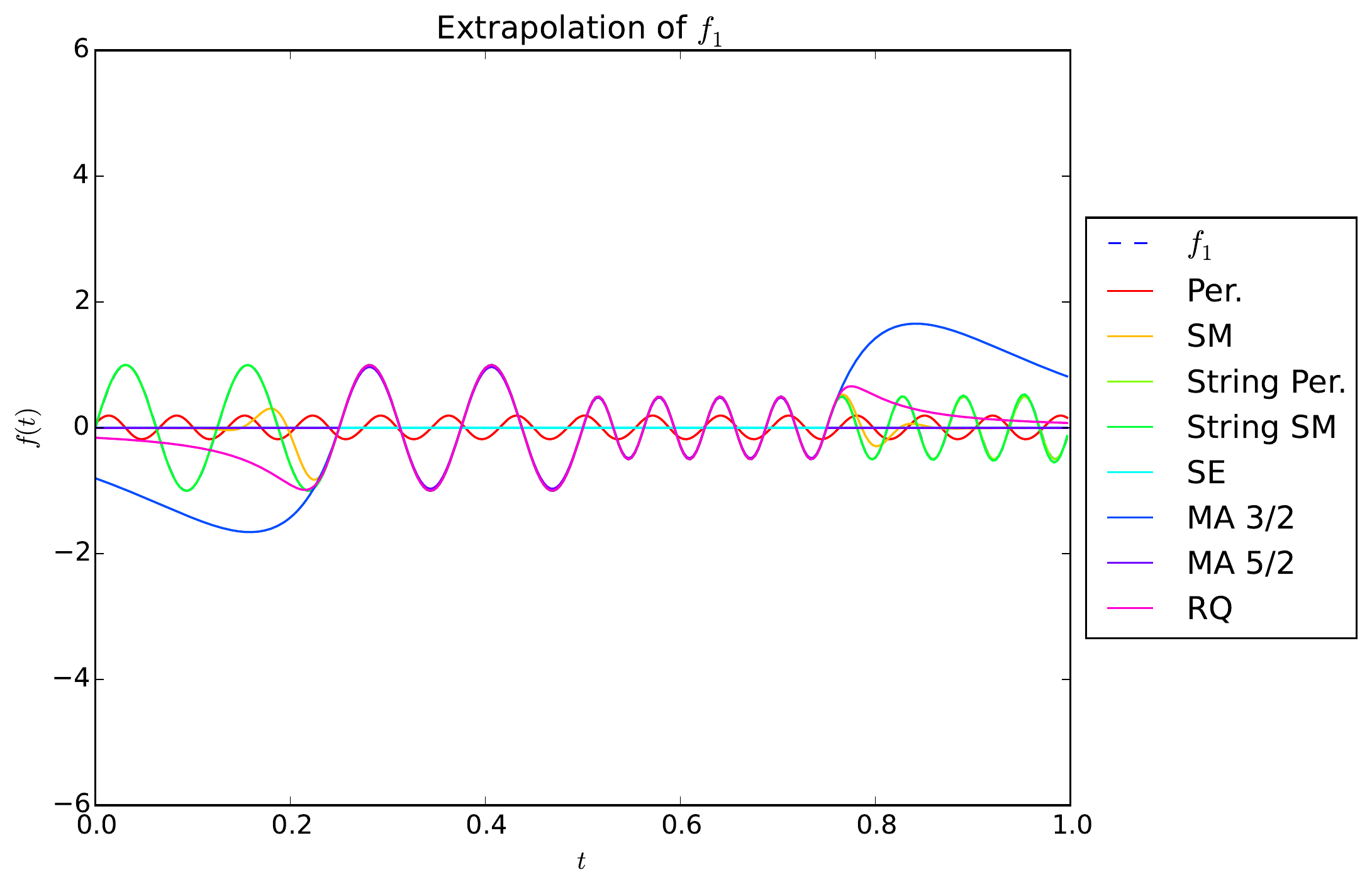}}
\caption{Extrapolation of two functions $f_0$ and $f_1$ through Bayesian nonparametric regression under \emph{string GP} priors and vanilla GP priors with popular and expressive kernels. Each model is trained on $[0.25, 0.5]$ and extrapolates to $[0, 1.0]$.}
\label{fig:synth_bench}
\end{center}
\end{figure} 

\begin{landscape}
\begin{table*}
\centering
\begin{tabular}{l l l l l  }\toprule
&\multicolumn{2}{c}{Absolute Error} & \multicolumn{2}{c}{Squared Error} \\
\cmidrule{2-5} 
Kernel & $f_0$ & $f_1$  & $f_0$ & $f_1$  \\
\cmidrule{1-5}
Squared exponential & $1.44 \pm 2.40$ & $0.48 \pm 0.58$  & $3.50 \pm 9.20$  & $0.31 \pm 0.64$ \\
Rational quadratic  & $1.39 \pm 2.31$ &$0.51 \pm 0.83$ &  $3.28 \pm 8.79$ & $0.43 \pm 1.15$	\\
Mat\'{e}rn 3/2 & $1.63 \pm 2.53$ & $1.26 \pm 1.37$  & $4.26 \pm 11.07$ & $2.06 \pm 3.55$ 	\\
Mat\'{e}rn 5/2 & $1.75 \pm 2.77$ & $0.48 \pm 0.58$  & $5.00 \pm 12.18$ & $0.31 \pm 0.64$ 	\\
Periodic & $1.51 \pm 2.45$ & $0.53 \pm 0.60$  & $3.79\pm 9.62$ & $0.37 \pm 0.72$ 	\\
Spec. Mix. (5 comp.) & $0.75 \pm 1.15$ & $0.39\pm 0.57$ & $0.94 \pm 2.46$ & $0.24 \pm 0.58$   \\ 
String Spec. Mix. (2 strings, 1 comp.) & $0.23 \pm 0.84$ & $0.01\pm 0.03$ & $0.21 \pm 1.07$ & $\bf{0.00} \pm \bf{0.00}$ 	\\
String Periodic &$\bf{0.02} \pm \bf{0.02}$ & $\bf{0.00} \pm \bf{0.01}$ & $\bf{0.00} \pm \bf{0.00}$ & $\bf{0.00} \pm \bf{0.00}$  \\
\bottomrule
\end{tabular}
\caption{Predictive accuracies in the extrapolation of the two functions $f_0$ and $f_1$ of Section \ref{sct:exp1} through Bayesian nonparametric regression under \emph{string GP} priors and vanilla GP priors with popular and expressive kernels. Each model is trained on $[0.25, 0.5]$ and extrapolates to $[0, 1.0]$. The predictive errors are reported as average $\pm$ 2 standard deviations.}
\label{table:synth_bench}
\end{table*}
\end{landscape}

\begin{figure}[p]
\begin{center}
\centerline{\includegraphics[width=0.44\textwidth]{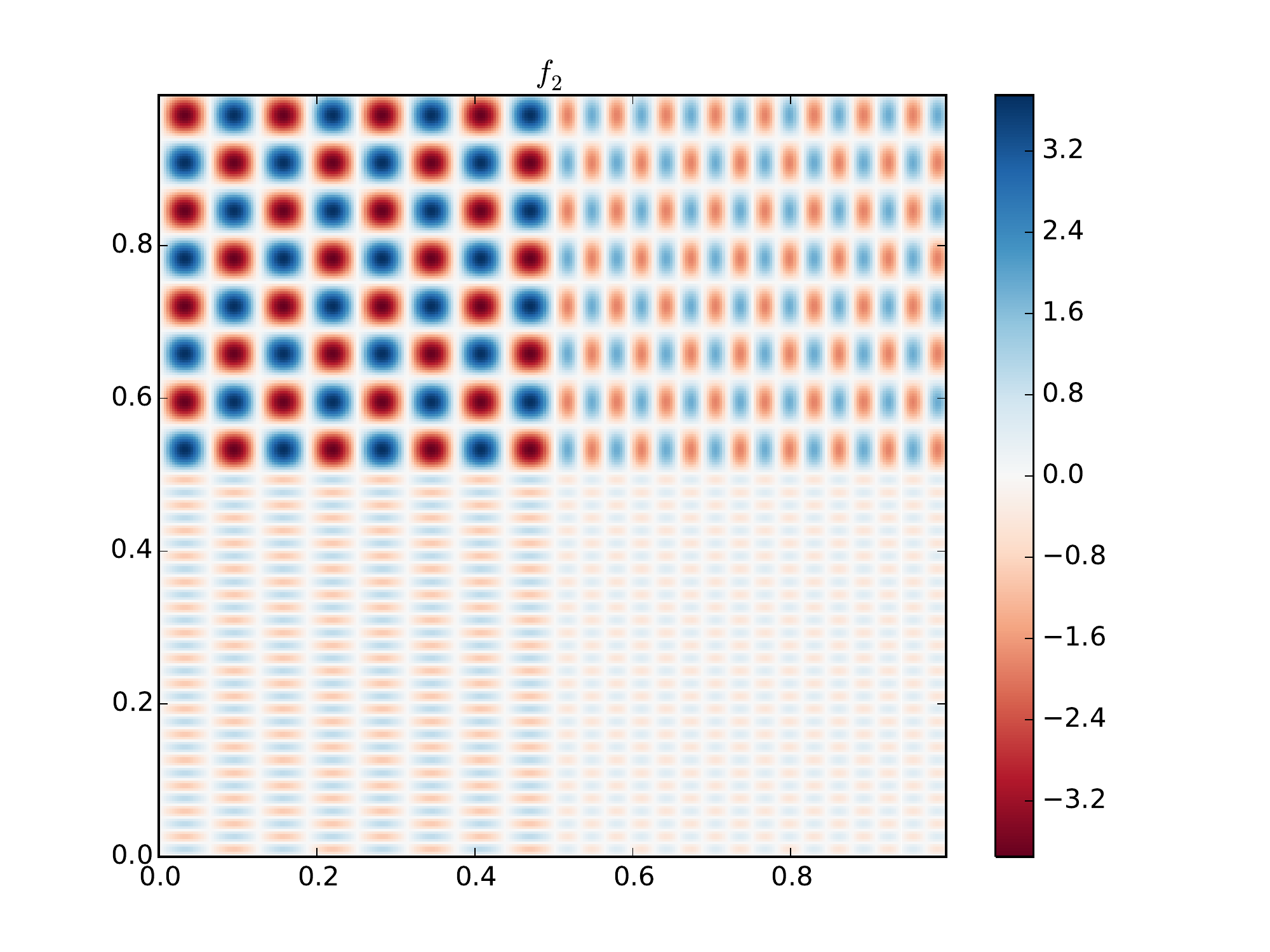} \includegraphics[width=0.44\textwidth]{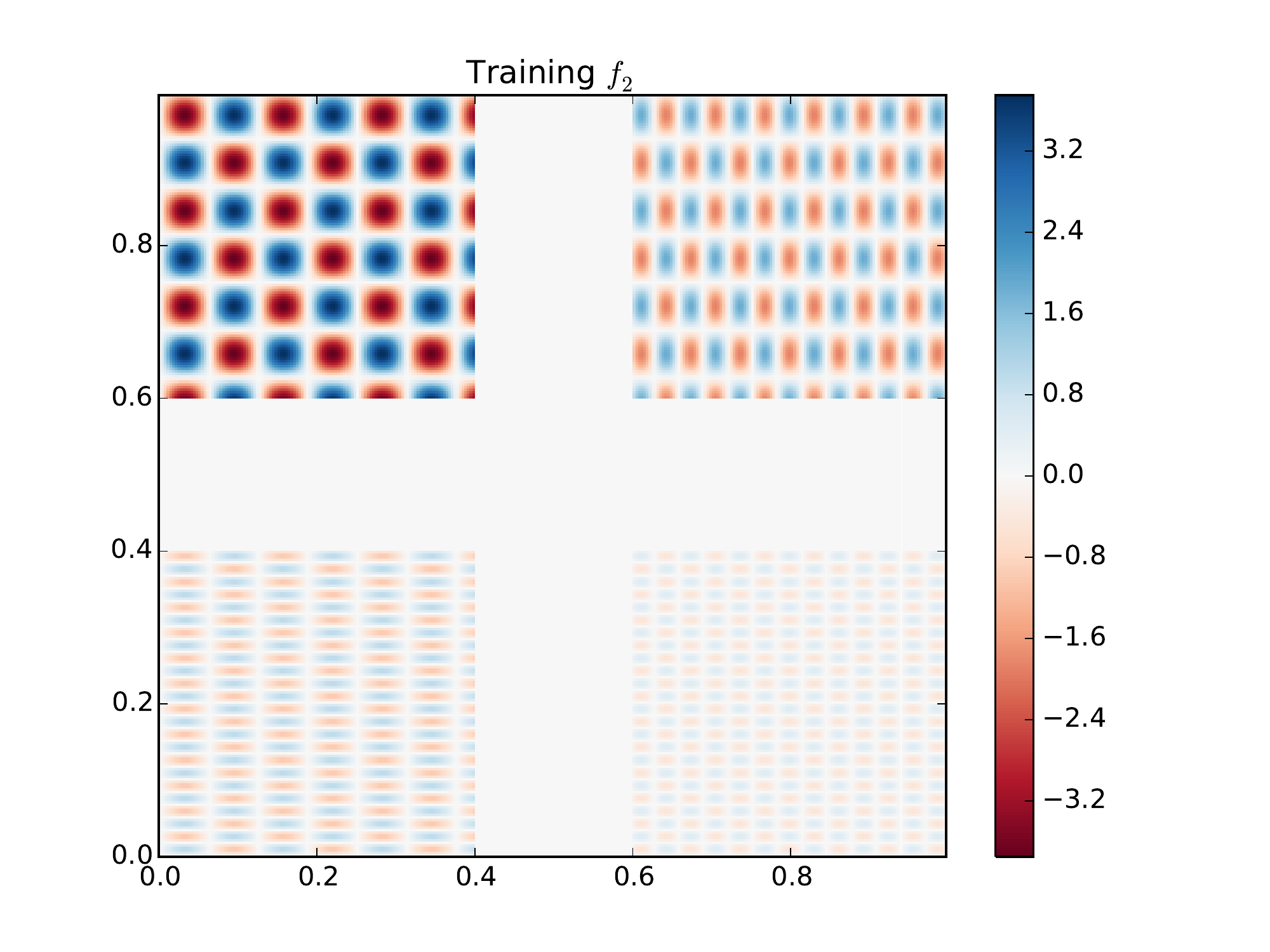}} 
\centerline{\includegraphics[width=0.44\textwidth]{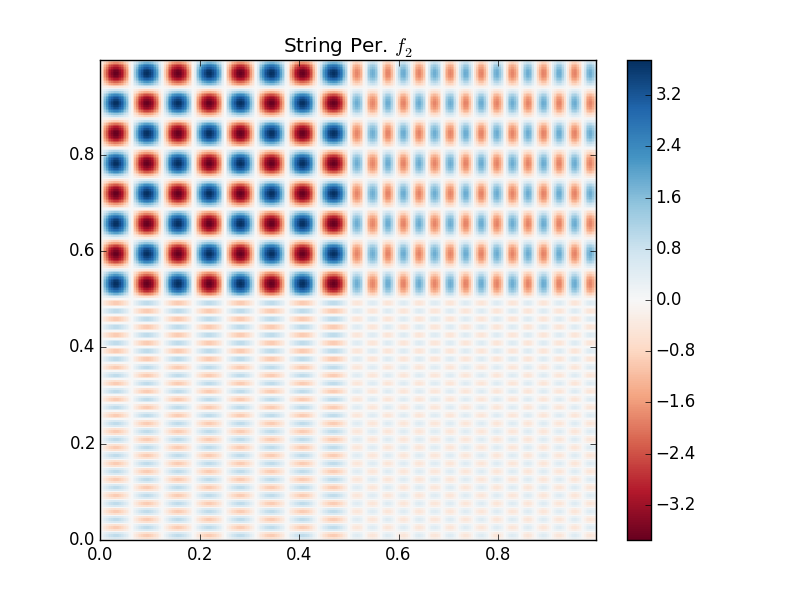} \includegraphics[width=0.44\textwidth]{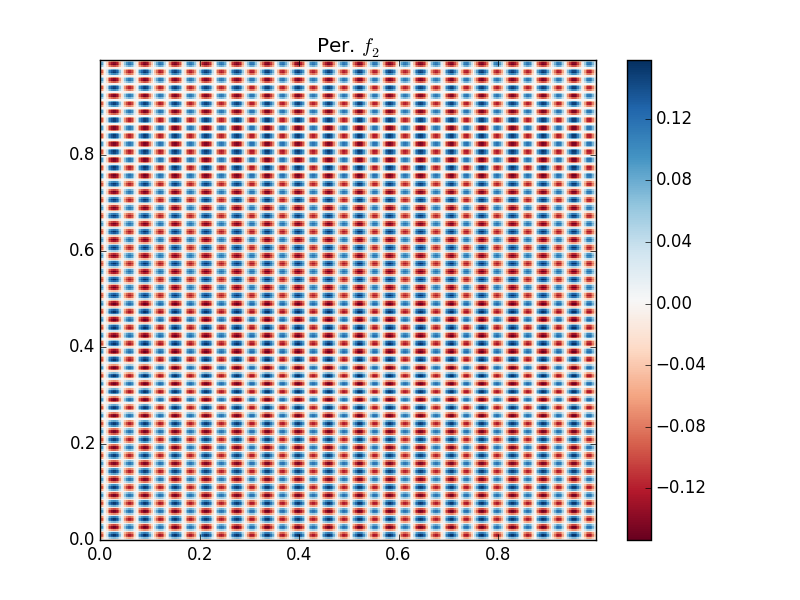}}
\centerline{\includegraphics[width=0.44\textwidth]{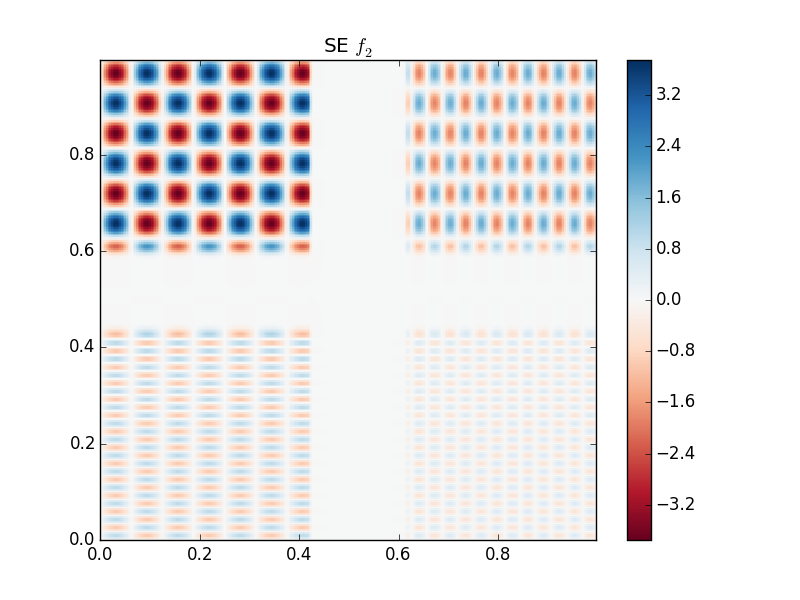} \includegraphics[width=0.44\textwidth]{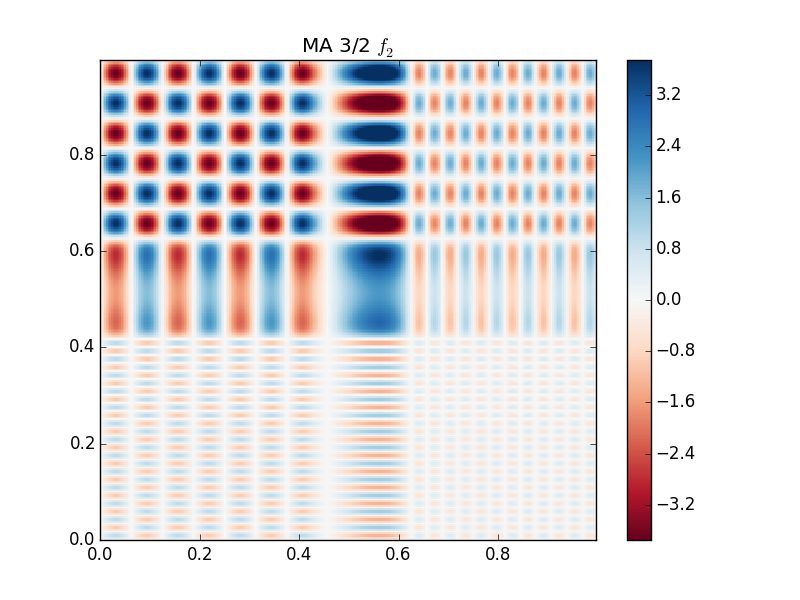}}
\centerline{\includegraphics[width=0.44\textwidth]{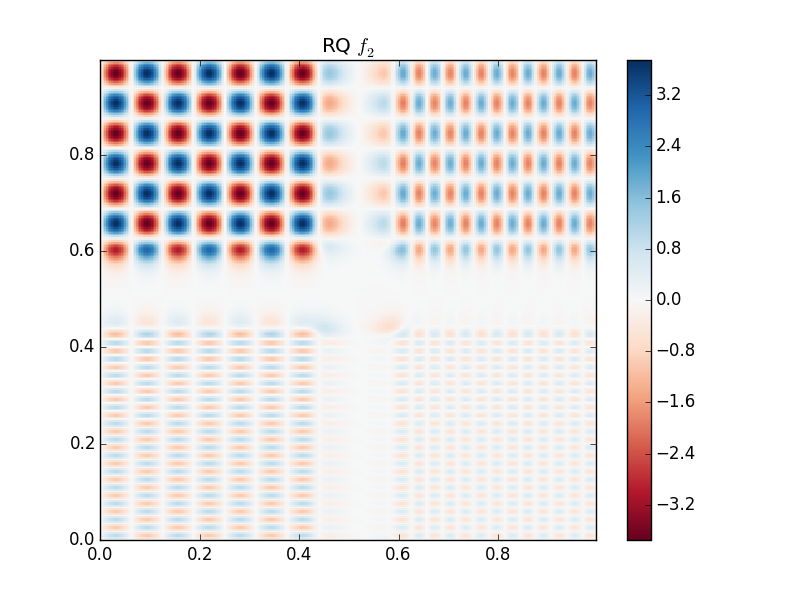} \includegraphics[width=0.44\textwidth]{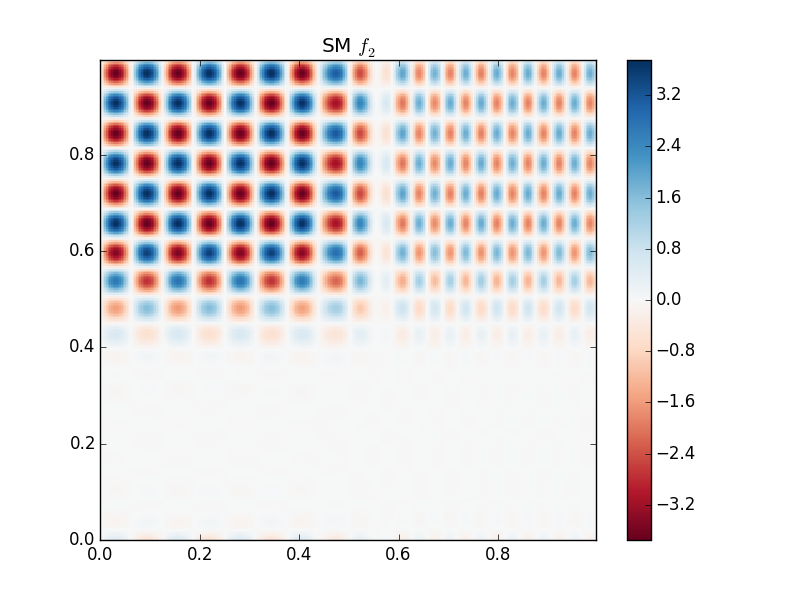}}
\caption{Extrapolation of a synthetic function $f_2$ (top left corner), cropped in the middle for training (top right corner), using \emph{string GP} regression and vanilla GP regression with various popular and expressive kernels.}
\label{fig:pred_f2}
\end{center}
\end{figure}

\begin{figure}[p]
\begin{center}
\centerline{\includegraphics[width=0.44\textwidth]{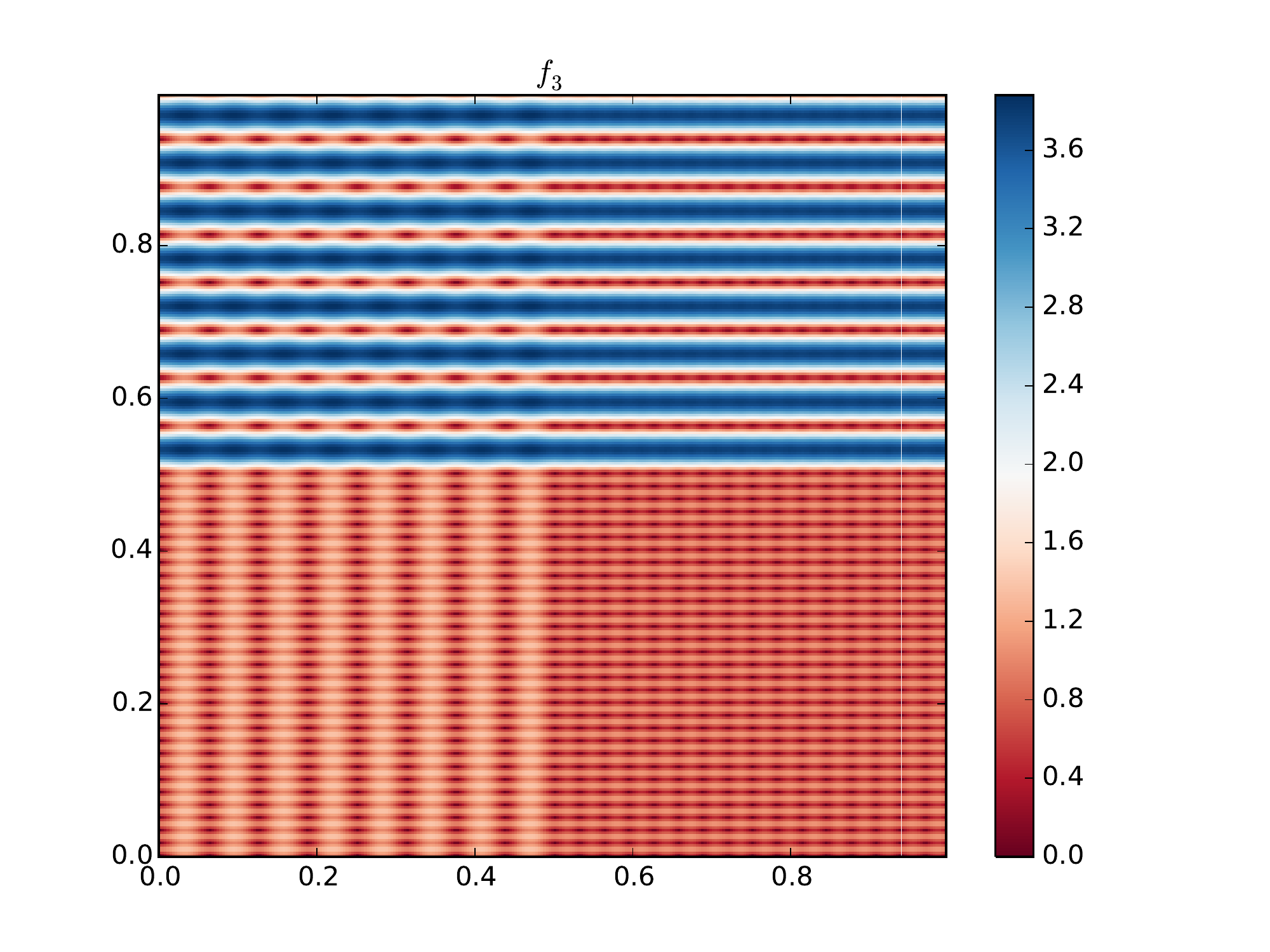} \includegraphics[width=0.44\textwidth]{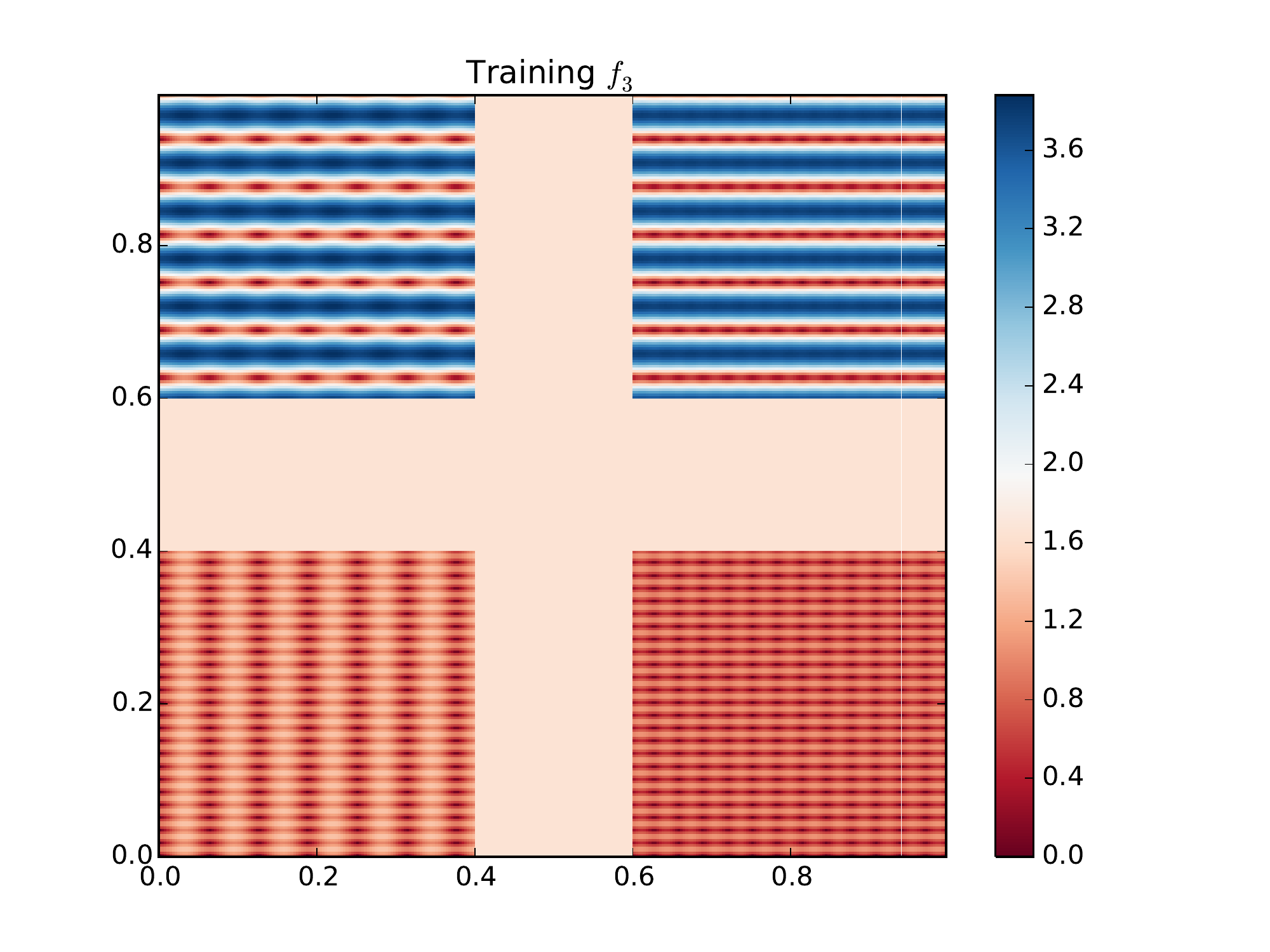}} 
\centerline{\includegraphics[width=0.44\textwidth]{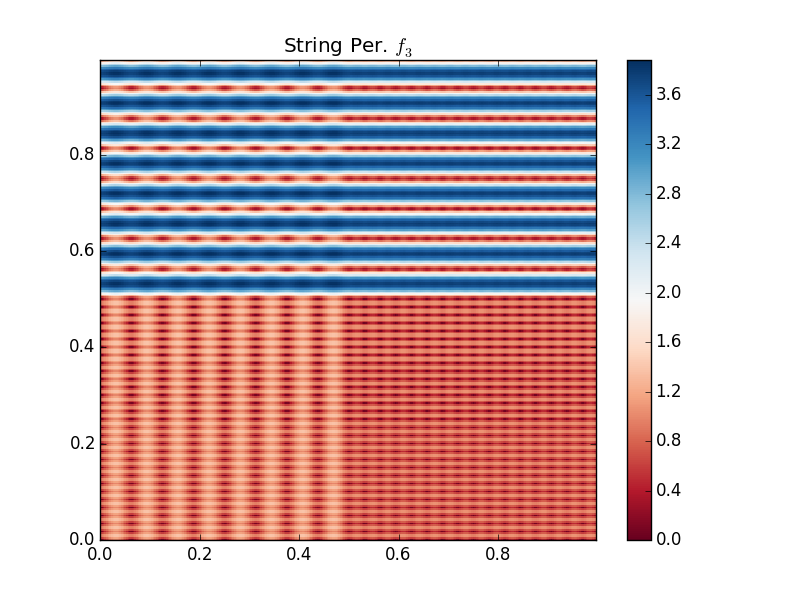} \includegraphics[width=0.44\textwidth]{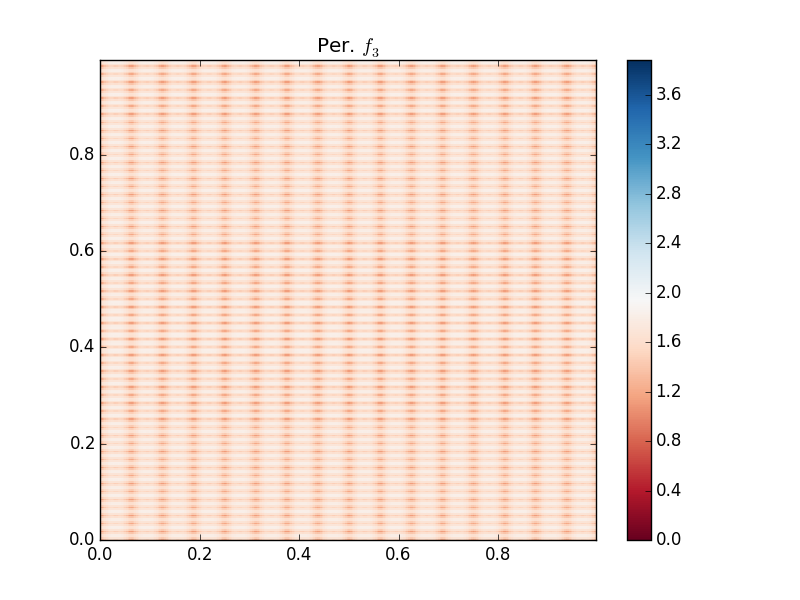}}
\centerline{\includegraphics[width=0.44\textwidth]{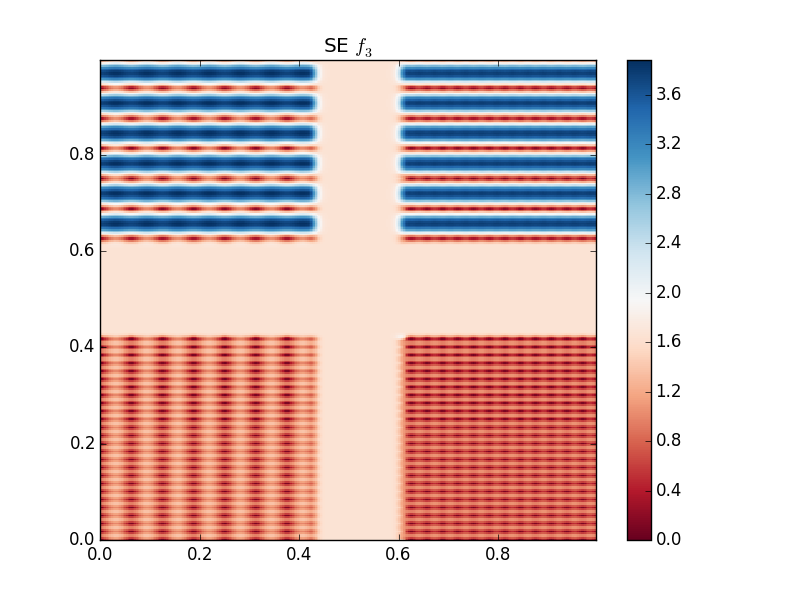} \includegraphics[width=0.44\textwidth]{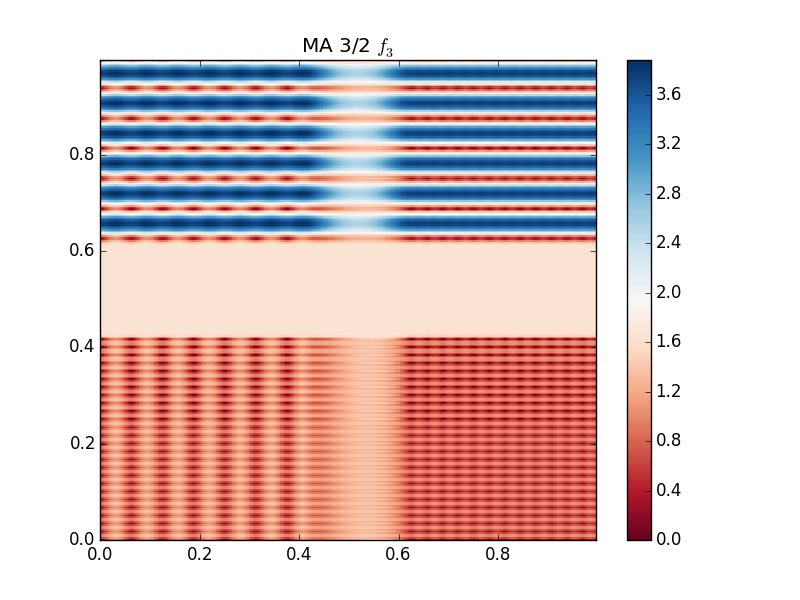}}
\centerline{\includegraphics[width=0.44\textwidth]{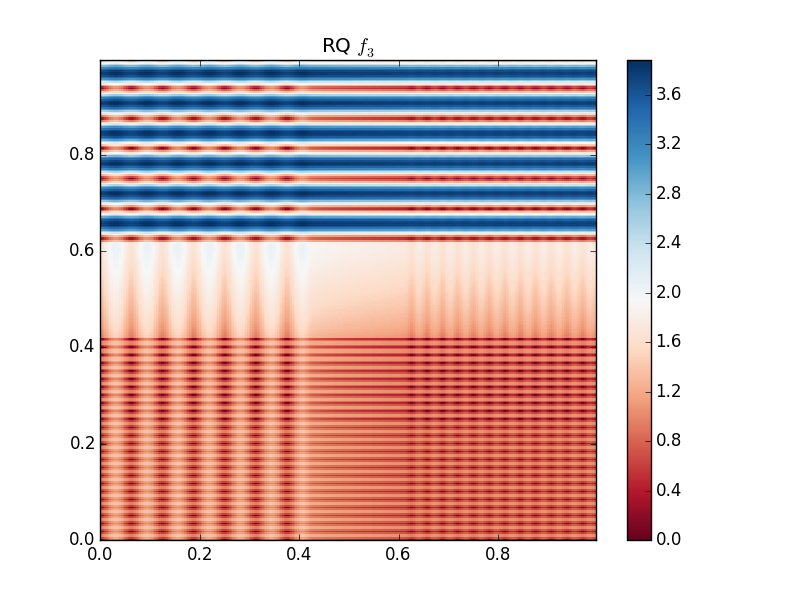} \includegraphics[width=0.44\textwidth]{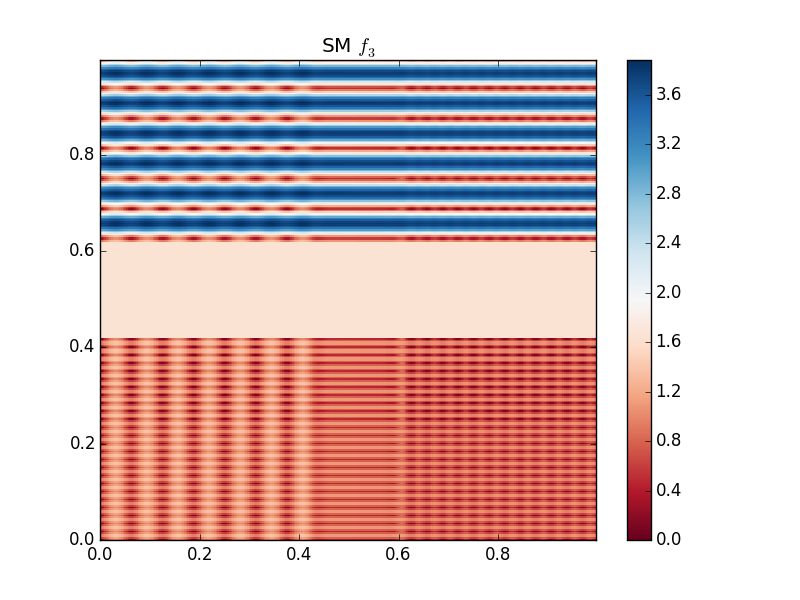}}
\caption{Extrapolation of a synthetic function $f_3$ (top left corner), cropped in the middle for training (top right corner), using \emph{string GP} regression and vanilla GP regression with various popular and expressive kernels.}
\label{fig:pred_f3}
\end{center}
\end{figure}

\subsection{Small Scale Heteroskedastic Regression}
In our second experiment, we consider illustrating the advantage of the \emph{string GP paradigm} over the \emph{standard GP paradigm}, but also over the alternatives of  \cite{kim}, \cite{gramacy}, \cite{tresp2000bayesian} and \cite{deisenroth2015distributed} that consist of considering independent GP experts on disjoint parts of the domain or handling disjoint subsets of the data. Using the motorcycle data set of  \cite{silverman}, commonly used for the local patterns and heteroskedasticity it exhibits, we show that our approach outperforms the aforementioned competing alternatives, thereby providing empirical evidence that the collaboration between consecutive GP experts introduced in the \emph{string GP paradigm} vastly improves predictive accuracy and certainty in regression problems with local patterns. We also illustrate learning of the derivative of the latent function, solely from noisy measurements of the latent function.

The observations consist of accelerometer readings taken through time in an experiment on the efficacy of crash helmets. It can be seen at a glance  in Figure \ref{fig:motorcycle_jerk} that the data set exhibits roughly $4$ regimes. Firstly, between $0$ms and $15$ms the acceleration was negligible. Secondly, the impact slowed down the helmet, resulting in a sharp deceleration between $15$ms and $28$ms. Thirdly, the helmet seems to have bounced back between $28$ms and $32$ms, before it finally gradually slowed down and came to a stop between $32$ms and $60$ms. It can also be noted that the measurement noise seems to have been higher in the second half of the experiment. 

We ran $50$ independent random experiments, leaving out $5$ points selected uniformly at random from the data set for prediction, the rest being used for training. The models we considered include the vanilla GP regression model, the \emph{string GP} regression model with marginal maximum likelihood inference as described in Section \ref{sct:ml}, mixtures of independent GP experts acting on disjoint subsets of the data both for training and testing, the Bayesian committee machine (\cite{tresp2000bayesian}), and the robust Bayesian committee machine (\cite{deisenroth2015distributed}). We considered \emph{string GPs} with $4$ and $6$ strings whose boundary times are learned as part of the maximum likelihood inference. For consistency, we used the resulting partitions of the domain to define the independent experts in the competing alternatives we considered. The Mat\'{e}rn 3/2 kernel was used throughout. The results are reported in Table \ref{table:motorcycle_bench}. To gauge the ability of each model to capture the physics of the helmets crash experiment, we have also trained all models with all data points. The results are illustrated in Figures \ref{fig:motorcycle_jerk} and \ref{fig:motorcycle}.

It can be seen at a glance from Figure \ref{fig:motorcycle} that mixtures of independent GP experts are inappropriate for this experiment as i) the resulting posterior means exhibit discontinuities (for instance at $t=30\text{ms}$ and $t=40\text{ms}$) that are inconsistent with the physics of the underlying phenomenon, and ii) they overfit the data towards the end. The foregoing discontinuities  do not come as a surprise as each GP regression expert acts on a specific subset of the domain that is disjoint from the ones used by the other experts, both for training and prediction. Thus, there is no guarantee of consistency between expert predictions at the boundaries of the domain partition. Another perspective to this observation is found in noting that postulating independent GP experts, each acting on an element of a partition of the domain, is equivalent to putting as prior on the whole function a stochastic process that is discontinuous at the boundaries of the partition. Thus, the posterior stochastic process should not be expected to be continuous at the boundaries of the domain either.

This discontinuity issue is addressed by the Bayesian committee machine (BCM) and the robust Bayesian committee machine (rBCM) because, despite each independent expert being trained on a disjoint subset of the data, each expert is tasked with making predictions about all test inputs, not just the ones that fall into its input subspace. Each GP expert prediction is therefore continuous on the whole input domain,\footnote{So long as the functional prior is continuous, which is the case here.} and the linear weighting schemes operated by the BCM and the rBCM on expert predictions to construct the overall predictive mean preserve continuity. However, we found that the BCM and the rBCM suffer from three pitfalls. First, we found them to be less accurate than any other alternative out-of-sample on this data set (see Table \ref{table:motorcycle_bench}). Second, their predictions of latent function values are overly uncertain. This might be due to the fact that, each GP expert being trained only with training samples that lie on its input subspace, its predictions about test inputs that lie farther away from its input subspace will typically be much more uncertain, so that, despite the weighting scheme of the Bayesian committee machine putting more mass on `confident' experts, overall the posterior variance over latent function values might still be much higher than in the \emph{standard GP paradigm} for instance. This is well illustrated by both the last column of Table \ref{table:motorcycle_bench} and the BCM and rBCM plots in Figure \ref{fig:motorcycle}. On the contrary, no \emph{string GP} model suffers from this excess uncertainty problem. Third, the posterior means of the BCM, the rBCM and the vanilla GP regression exhibit oscillations towards the end ($t >40 \text{ms}$) that are inconsistent with the experimental setup; the increases in acceleration as the helmet slows down suggested by these posterior means would require an additional source of energy after the bounce.

In addition to being more accurate and more certain about predictions than vanilla GP regression, the BCM and the rBCM (see Table \ref{table:motorcycle_bench}), \emph{string GP} regressions yield posterior mean acceleration profiles that are more consistent with the physics of the experiment: steady speed prior to the shock, followed by a deceleration resulting from the shock, a brief acceleration resulting from the change in direction after the bounce, and finally a smooth slow down due to the dissipation of kinetic energy. Moreover, unlike the vanilla GP regression, the rBCM and the BCM, \emph{string GP} regressions yield smaller posterior variances towards the beginning and the end of the experiment than in the middle, which is consistent with the fact that the operator would be less uncertain about the acceleration at the beginning and at the end of the experiment---one would indeed expect the acceleration to be null at the beginning and at the end of the experiment. This desirable property can be attributed to the heteroskedasticity of the noise structure in the \emph{string GP} regression model.

We also learned the derivative of the latent acceleration with respect to time, purely from noisy acceleration measurements using the joint law of a \emph{string GP} and its derivative (Theorem \ref{theo:sgp}). This is illustrated in Figure \ref{fig:motorcycle_jerk}.
\begin{landscape}
\begin{center}
\begin{table*}
\centering
\begin{tabular}{lllllll}
\toprule
& \multicolumn{1}{c}{Training} && \multicolumn{4}{c}{Prediction} \\
\cmidrule{2-2} \cmidrule{4-7}
& 							Log. lik.				&& Log. lik. & 				Absolute Error &  				Squared Error &		Pred. Std  \\ 
\midrule
String GP (4 strings)&			$-388.36 \pm 0.36$	&& $-22.16 \pm 0.41$  & 	$\bf{15.70 \pm 1.05}$ & 	$\bf{466.47 \pm 50.74}$ &	$0.70 / 2.25 / 3.39$\\ 
String GP (6 strings)&			$\bf{-367.21 \pm 0.43}$	&& $-21.99 \pm 0.37$  &		$15.89 \pm 1.06$ & 		$475.59 \pm 51.95$ &	$\bf{0.64 / 2.21 / 3.46}$	\\
Vanilla GP&					$-420.69 \pm 0.24$ 	&& $-22.77 \pm 0.24$ & 		$16.84 \pm 1.09$ & 		$524.18 \pm 58.33$ &	$2.66 / 3.09 / 4.94$ \\

Mix. of 4 GPs & 				$-388.37 \pm 0.38$ 	&& $-20.90 \pm 0.38$  & 	$16.61 \pm 1.10$ & 		$512.30 \pm 56.08$ &	$1.67 / 2.85 / 4.59$\\
Mix. of 6 GPs & 				$-369.05 \pm 0.45$ 	&& $\bf{-20.11 \pm 0.45}$  & $16.05 \pm 1.11$ & 		$500.43 \pm 58.26$ &	$0.62 / 2.83 / 4.63$ \\ 

BCM with 4 GPs & 				 $-419.08 \pm 0.30$	 && $-22.94 \pm 0.26$  & 	$17.17 \pm 1.13$ & 		$538.94 \pm 61.91$ &	$7.20 / 9.92 / 22.92$\\
BCM with 6 GPs & 				 $-422.15 \pm 0.30$ 	 && $-22.91 \pm 0.26$  & 	$16.93 \pm 1.12$ & 		$533.21 \pm 61.78$ &	$7.09 / 9.93 / 25.10$ \\

rBCM with 4 GPs & 				 $-419.08 \pm 0.30$	 && $-22.99 \pm 0.27$  & 	$17.29 \pm 1.11$ & 		$546.95 \pm 61.21$ &	$5.86 / 9.08 / 27.52$\\
rBCM with 6 GPs & 				 $-422.15\pm 0.30$ 	 && $-22.96 \pm 0.28$  & 	$16.79 \pm 1.12$ & 		$542.95 \pm 61.95$ &	$5.15 / 8.61 / 29.15$ \\
\bottomrule
\end{tabular}
\caption{Performance comparison between \emph{string GPs}, vanilla GPs, mixture of independent GPs, the  Bayesian committee machine (\cite{tresp2000bayesian}) and the robust Bayesian committee machine (\cite{deisenroth2015distributed}) on the motorcycle data set of \cite{silverman}. The Mat\'{e}rn 3/2 kernel was used throughout. The domain partitions were learned in the \emph{string GP} experiments by maximum likelihood. The learned partitions were then reused to allocate data between GP experts in other models. $50$ random runs were performed, each run leaving $5$ data points out for testing and using the rest for training. All results (except for predictive standard deviations) are reported as average over the $50$ runs $\pm$ standard error. The last column contains the minimum, average and maximum of the predictive standard deviation of the values of the latent (noise-free) function at all test points across random runs.}
\label{table:motorcycle_bench}
\end{table*}
\end{center}
\end{landscape}

\begin{figure}[p]
\begin{center}
\centerline{\includegraphics[width=0.7\textwidth]{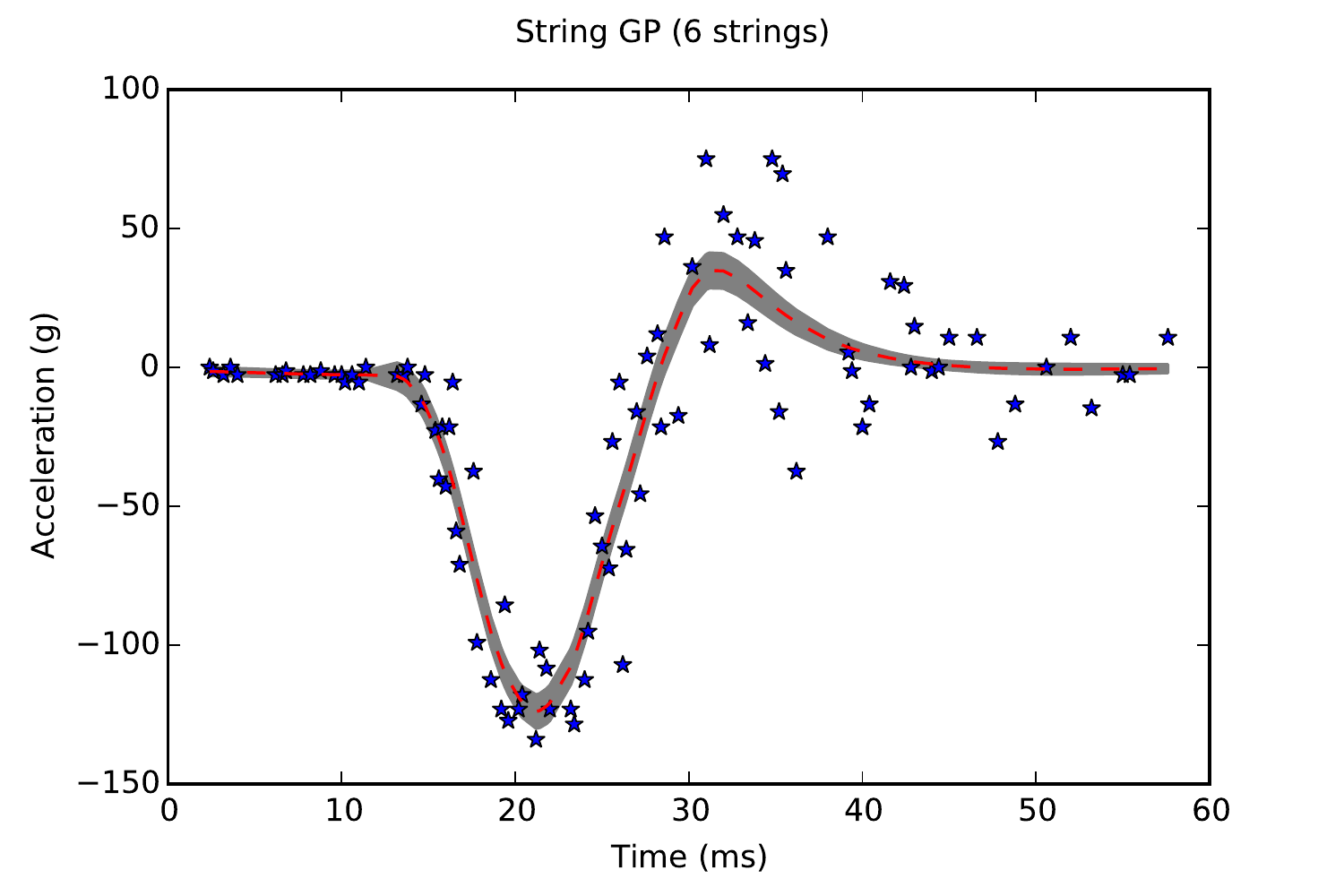}}
\centerline{\includegraphics[width=0.7\textwidth]{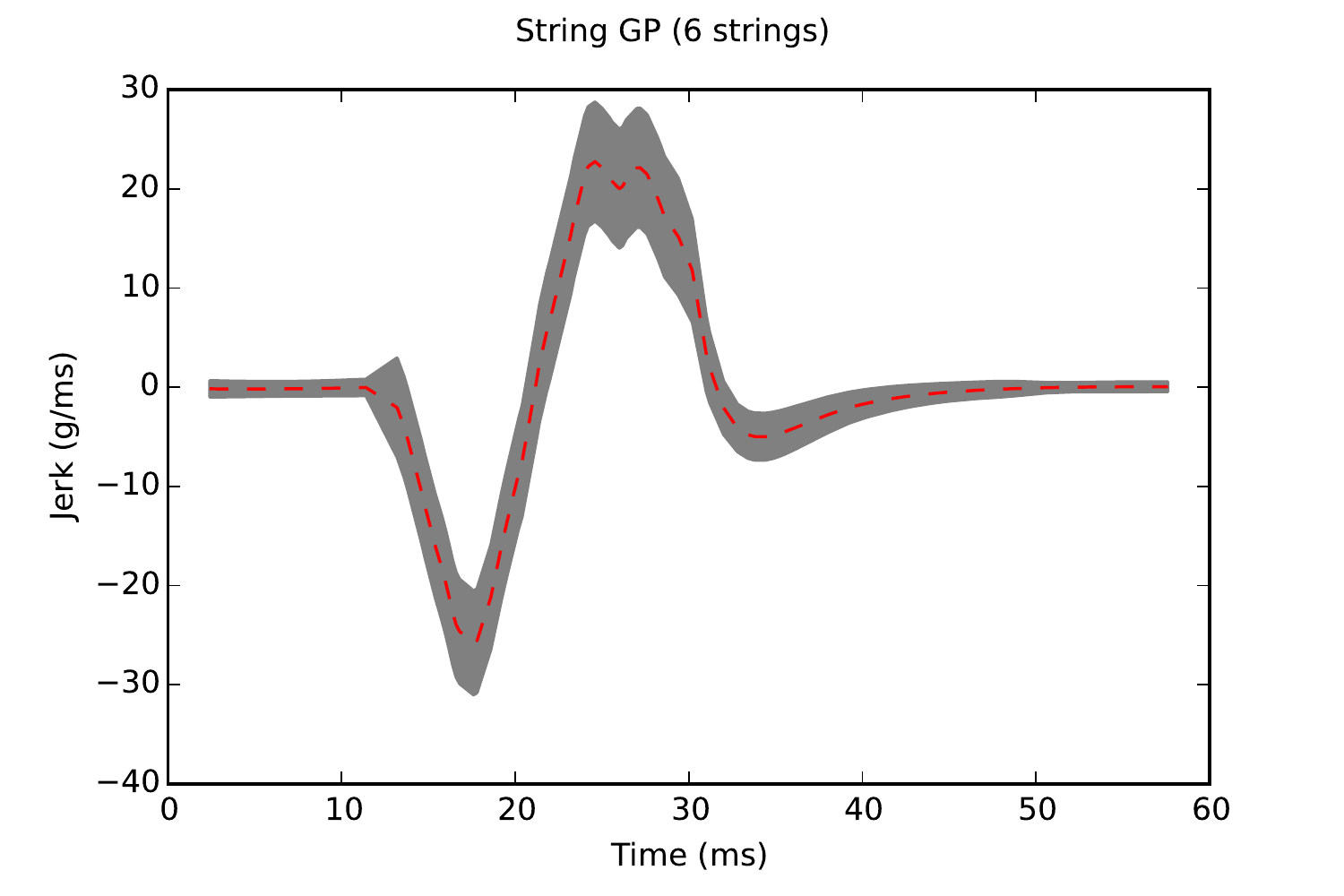}}
\caption{Posterior mean $\pm$ 2 predictive standard deviations on the motorcycle data set \citep[see][]{silverman}, under a Mat\'{e}rn 3/2 derivative \emph{string GP} prior with 6 learned strings. The top figure shows the noisy accelerations measurements and the learned latent function. The bottom function illustrates the derivative of the acceleration with respect to time learned from noisy acceleration samples. Posterior credible bands are over the latent functions rather than  noisy measurements, and as such they do not include the measurement noise.}
\label{fig:motorcycle_jerk}
\end{center}
\end{figure} 

\begin{landscape}
\begin{figure}[p]
\begin{center}
\centerline{\includegraphics[width=0.43\textwidth]{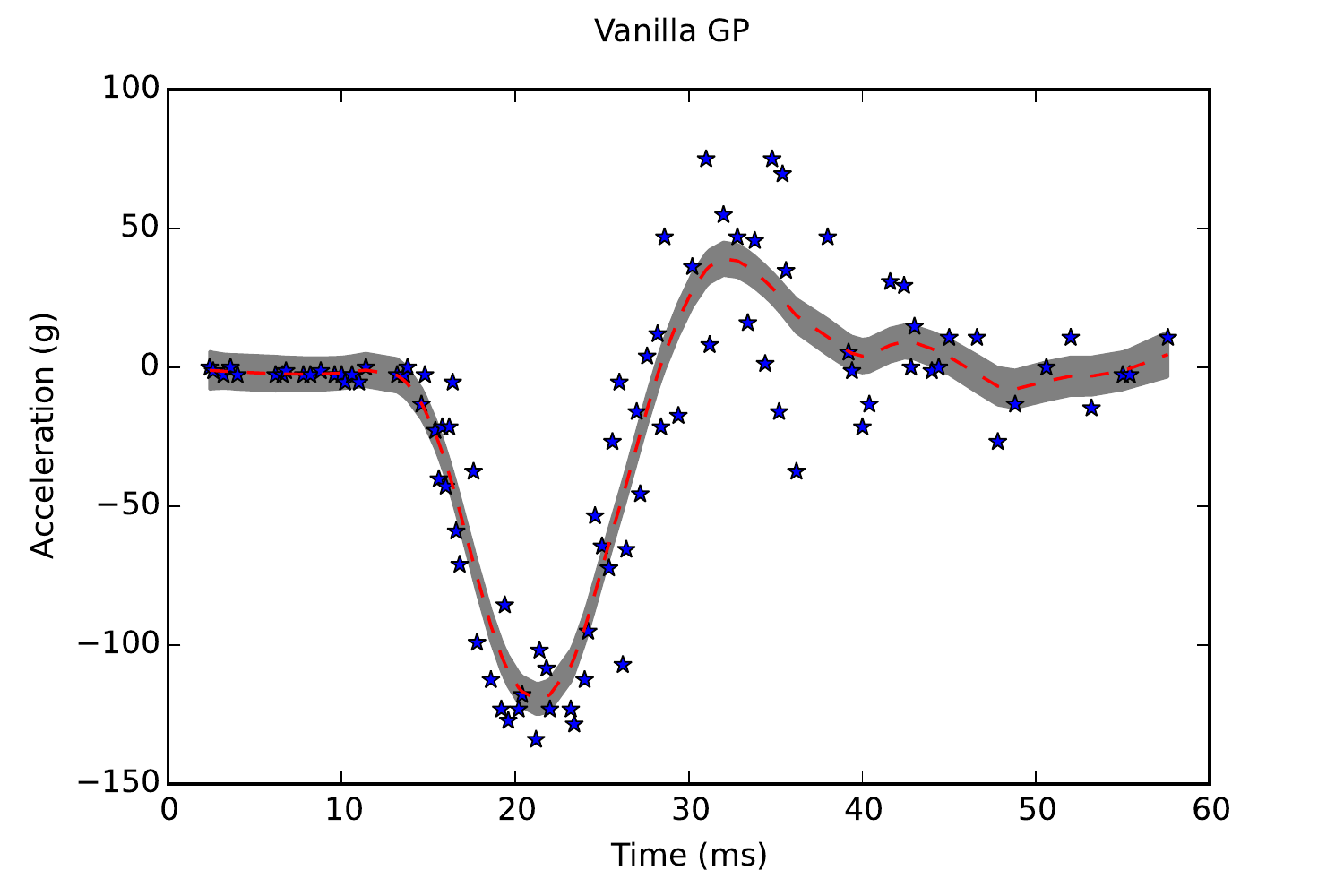}\includegraphics[width=0.43\textwidth]{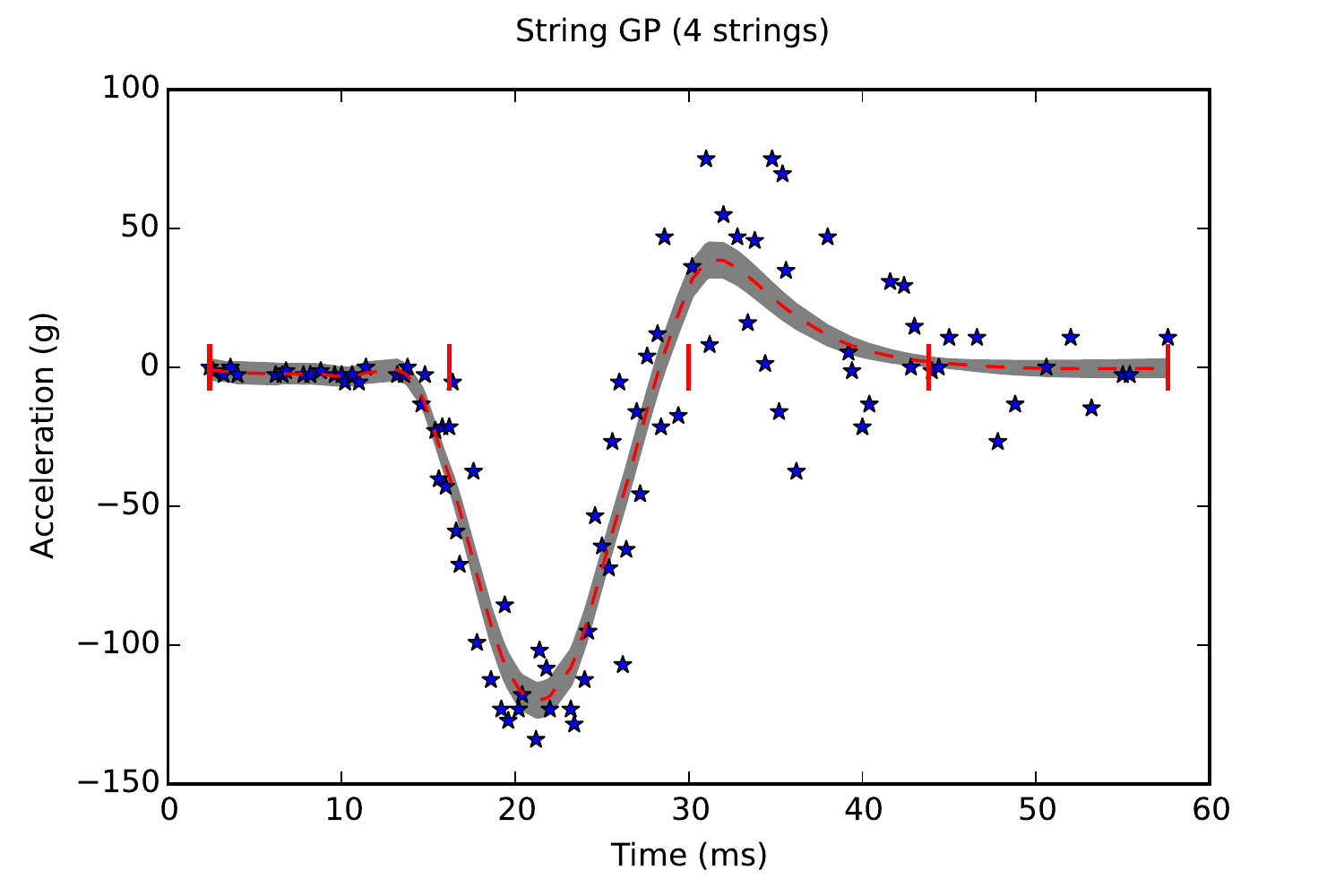}\includegraphics[width=0.43\textwidth]{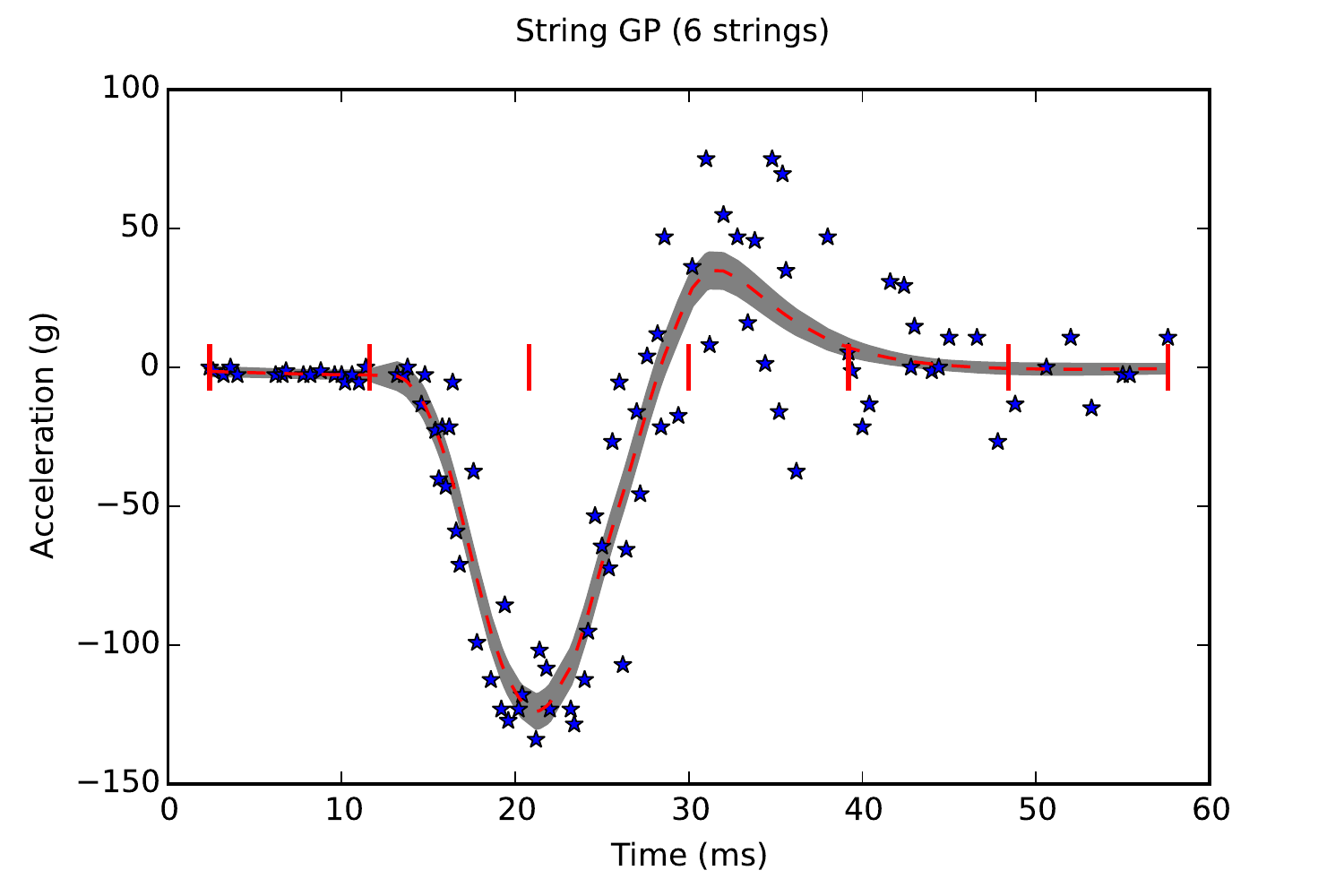}}
\centerline{\includegraphics[width=0.43\textwidth]{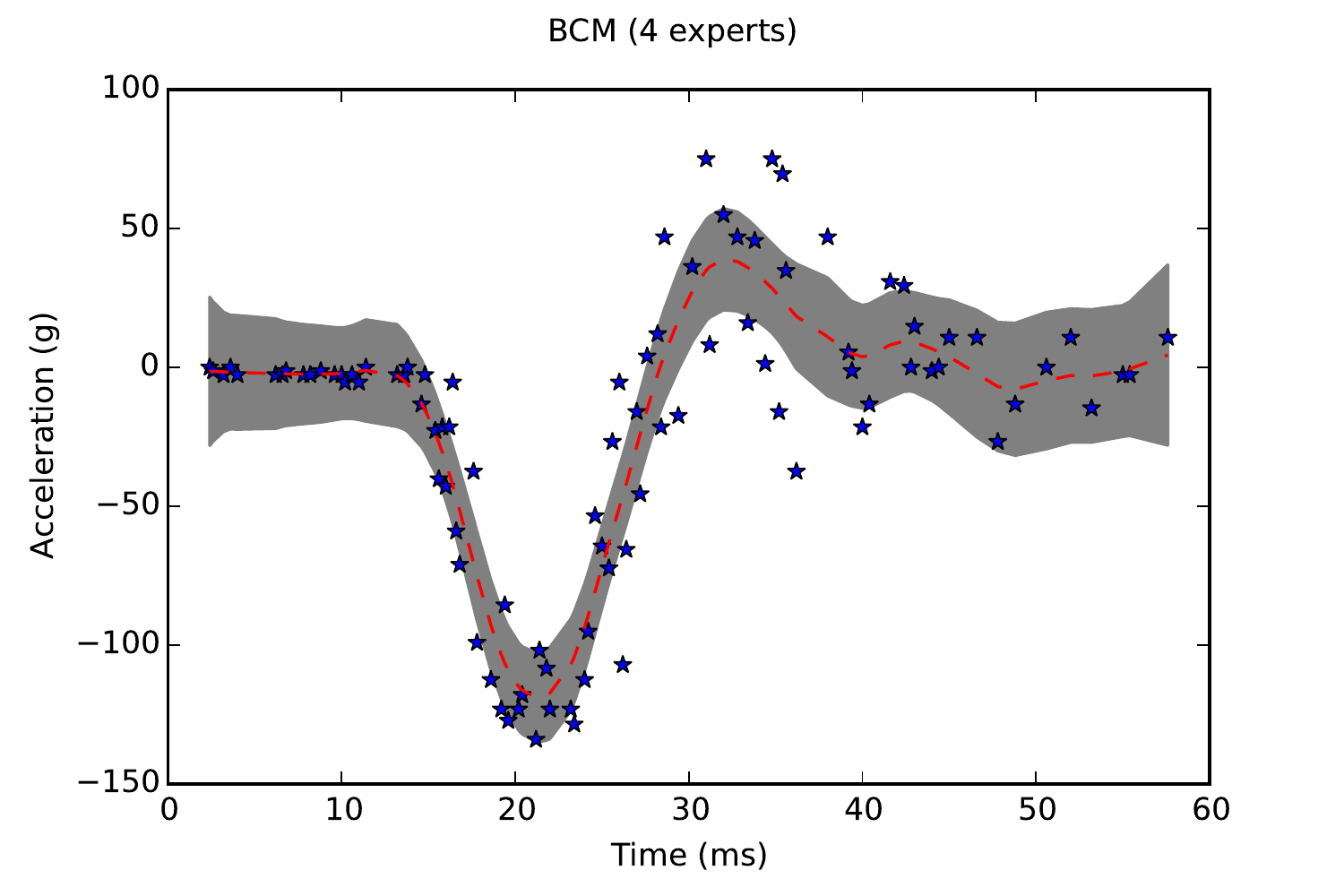}\includegraphics[width=0.43\textwidth]{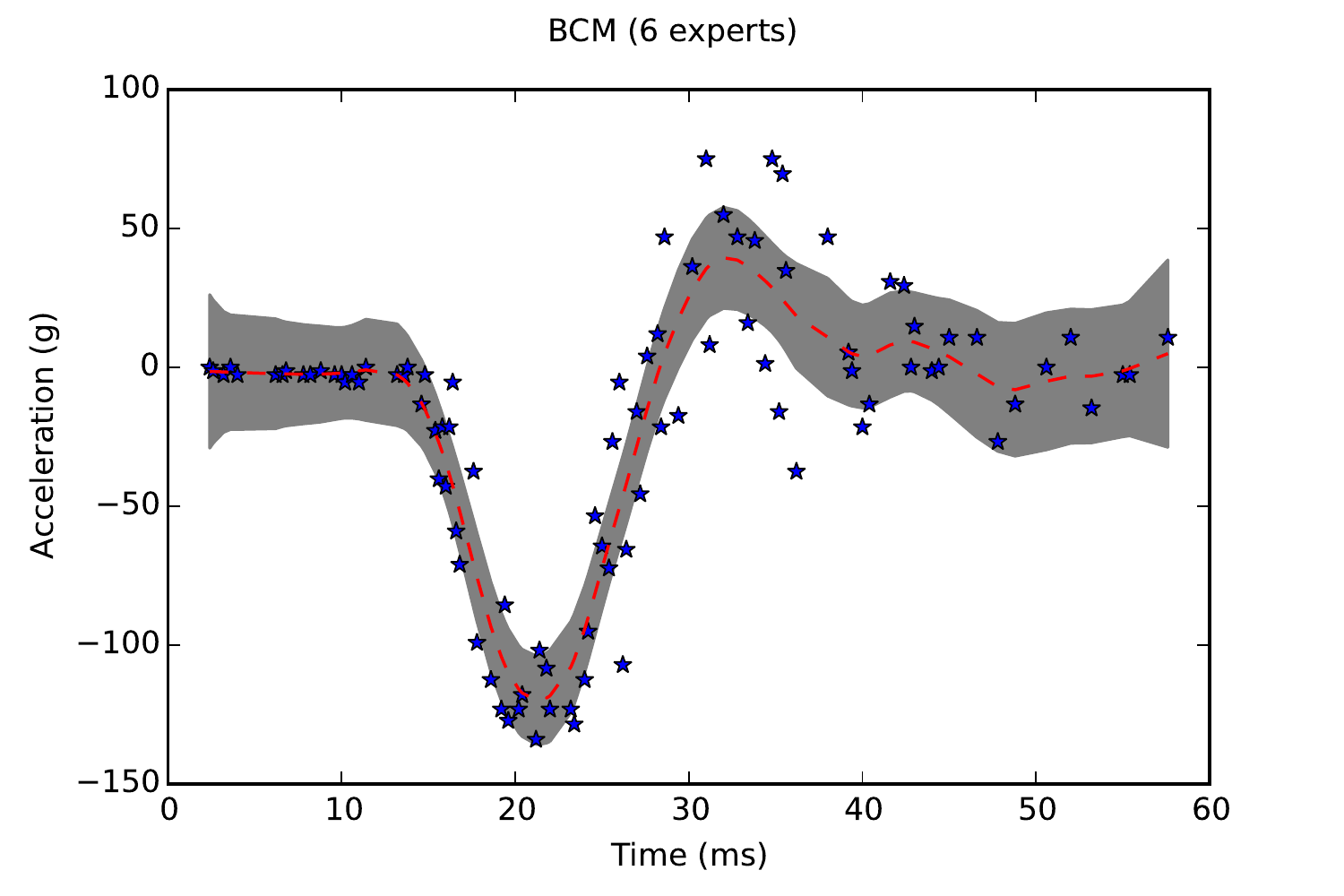}\includegraphics[width=0.43\textwidth]{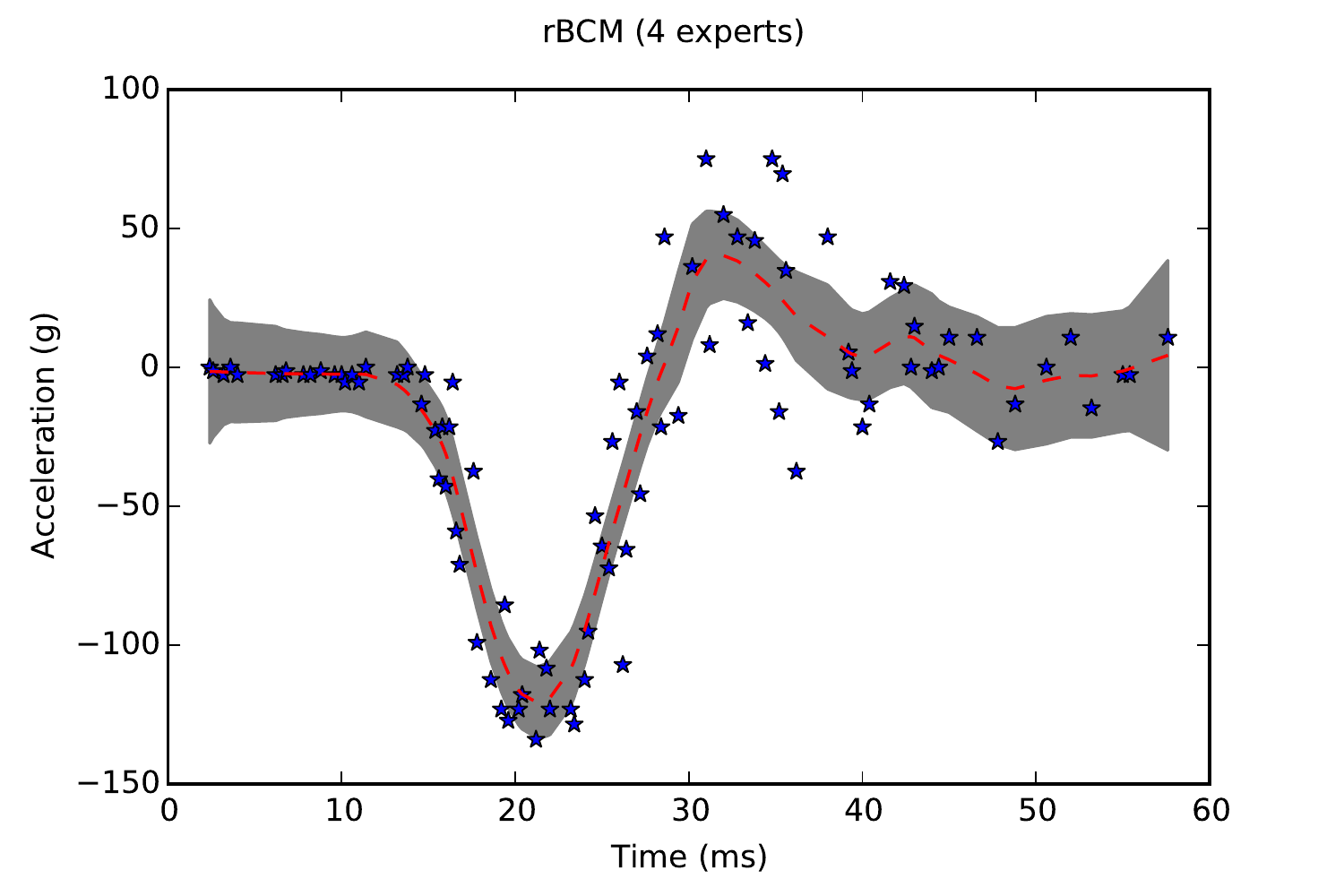}}
\centerline{\includegraphics[width=0.43\textwidth]{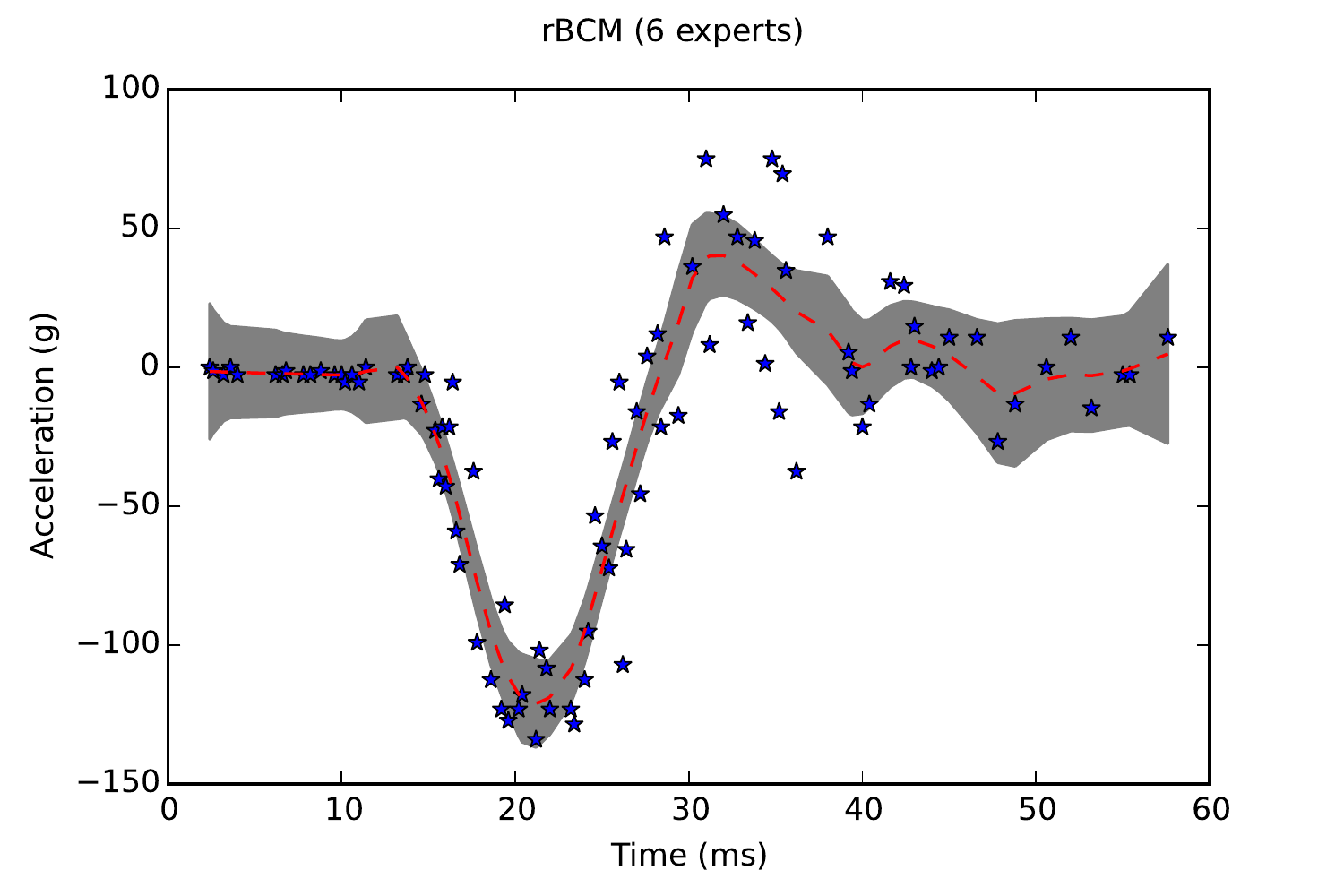}\includegraphics[width=0.43\textwidth]{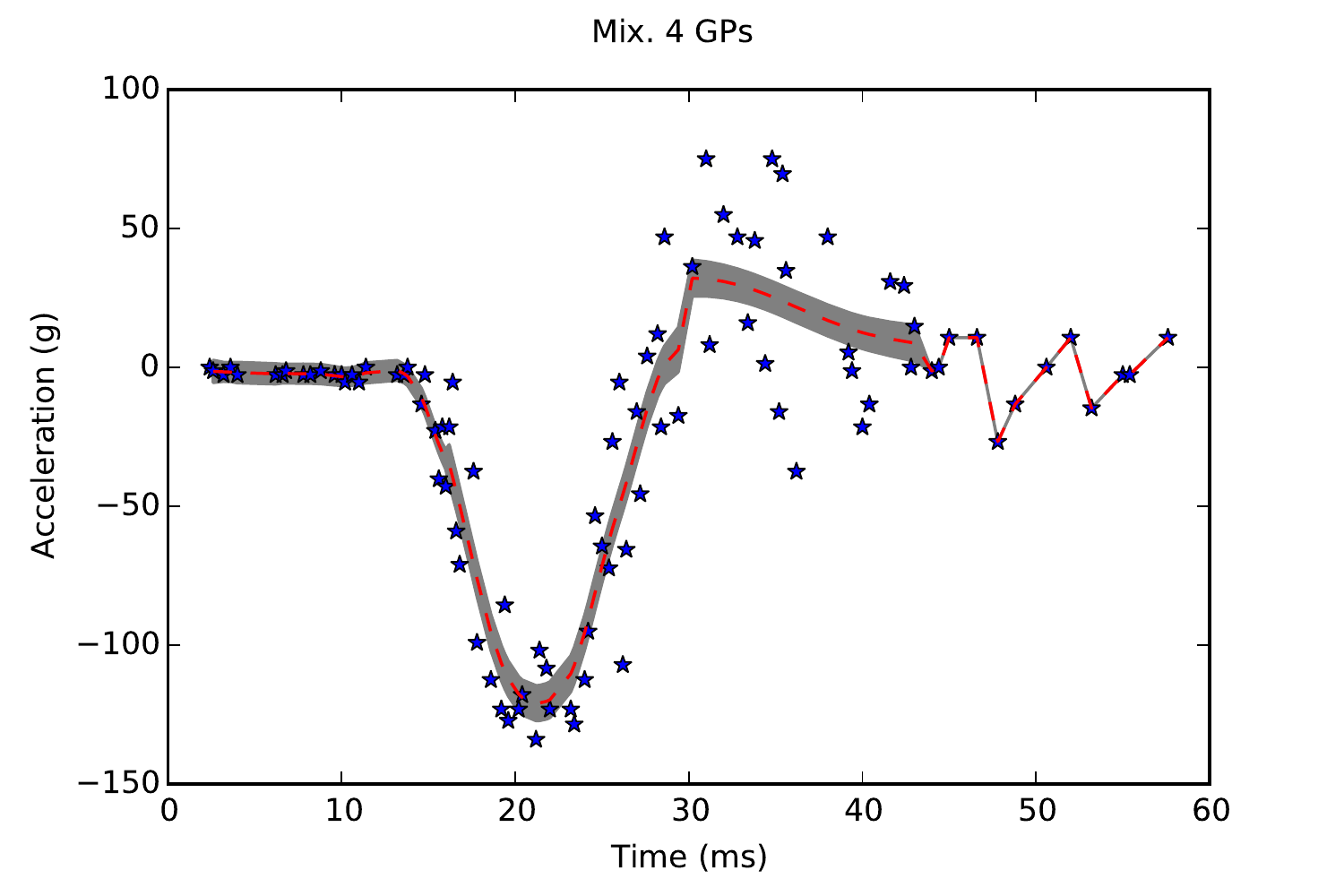}\includegraphics[width=0.43\textwidth]{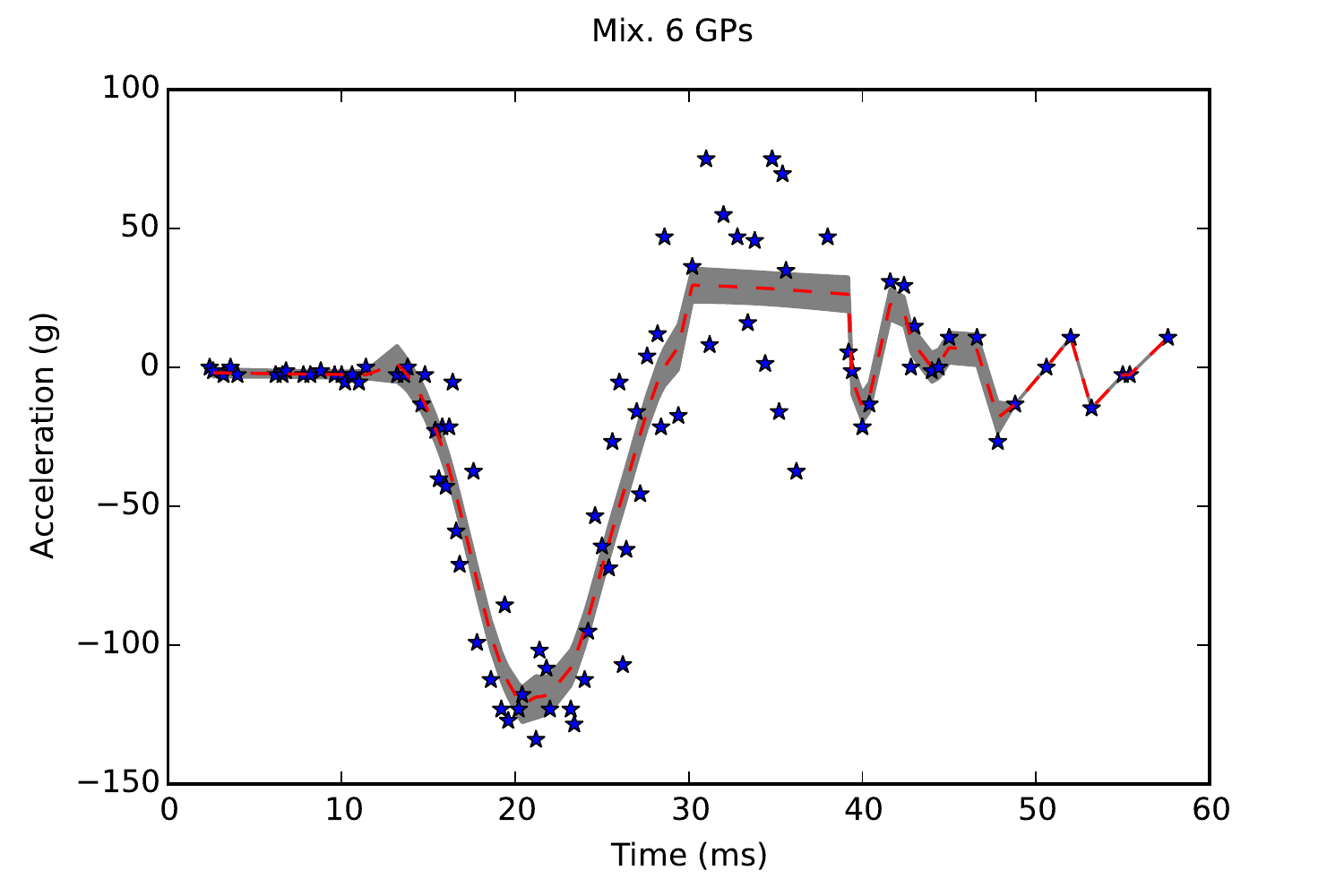}}
\caption{Bayesian nonparametric regressions on the motorcycle data set of \cite{silverman}. Models compared are  \emph{string GP} regression, vanilla GP regression, mixture of independent GP regression experts on a partition of the domain, the Bayesian committee machine (BCM) and the robust Bayesian committee machine (rBCM). Domain partitions were learned during \emph{string GP} maximum likelihood inference (red vertical bars), and reused in other experiments.  Blue stars are noisy samples, red lines are posterior means of the latent function and grey bands correspond to $\pm$ 2 predictive standard deviations of the (noise-free) latent function about its posterior mean.}
\label{fig:motorcycle}
\end{center}
\end{figure}
\end{landscape}

\subsection{Large Scale Regression}
\label{sct:airline}
To illustrate how our approach fares against competing alternatives on a standard large scale problem, we consider predicting arrival delays of commercial flights in the USA in 2008 as studied by \cite{gpbigdatareg}. We choose the same covariates as in \cite{gpbigdatareg}, namely the age of the aircraft (number of years since deployment), distance that needs to be covered, airtime, departure time, arrival time, day of the week, day of the month and month. Unlike \cite{gpbigdatareg} who only considered commercial flights between January 2008 and April 2008, we consider commercial throughout the whole year, for a total of $5.93$ million records. In addition to the whole data set, we also consider subsets so as to empirically illustrate the sensitivity of computational time to the number of samples. Selected subsets consist of $10,000$, $100,000$ and $1,000,000$ records selected uniformly at random. For each data set, we use $2/3$ of the records selected uniformly at random for training and we use the remaining $1/3$ for testing. In order to level the playing field between stationary and nonstationary approaches, we normalize training and testing data sets.\footnote{More precisely, we substract from every feature sample (both in-sample and out-of-sample) the in-sample mean of the feature and we divide the result by the in-sample standard deviation of the feature.} As competing alternatives to \emph{string GPs} we consider the SVIGP of \cite{gpbigdatareg}, the Bayesian committee machines (BCM) of \cite{tresp2000bayesian}, and the robust Bayesian committee machines (rBCM) of \cite{deisenroth2015distributed}. 

As previously discussed the prediction scheme operated by the BCM is Kolmogorov-inconsistent in that the resulting predictive distributions are not consistent by marginalization.\footnote{For instance the predictive distribution of the value of the latent function at a test input $x_1$, namely $f(x_1)$, obtained by using $\{x_1\}$ as set of test inputs in the BCM,  differs from the predictive distribution obtained by using $\{x_1, x_2\}$ as set of test inputs in the BCM and then marginalising with respect to the second input $x_2$.} Moreover, jointly predicting all function values by using the set of all test inputs as query set, as originally suggested in \cite{tresp2000bayesian}, would be impractical in this experiment given that the BCM requires inverting a covariance matrix of the size of the query set which, considering the numbers of test inputs in this experiment (which we recall can be as high as $1.97$ million), would be computationally intractable. To circumvent this problem we use the BCM algorithm to query one test input at a time. This approach is in-line with that adopted by \cite{deisenroth2015distributed}, where the authors did not address determining joint predictive distributions over multiple latent function values. For the BCM and rBCM, the number of experts is chosen so that each expert processes $200$ training points. For SVIGP we use the implementation made available by the \cite{gpy2014}, and we use the same configuration as in \cite{gpbigdatareg}. As for \emph{string GPs}, we use the symmetric sum as link function, and we run two types of experiments, one allowing for inference of change-points (String GP), and the other enforcing a single kernel configuration per input dimension (String GP*). The parameters $\bm{\alpha}$ and $\bm{\beta}$ are chosen so that the prior mean number of change-points in each input dimension is $5$\% of the number of distinct training and testing values in that input dimension, and so that the prior variance of the foregoing number of change-points is $50$ times the prior mean---the aim is to be uninformative about the number of change-points. We run $10,000$ iterations of our RJ-MCMC sampler and discarded the first $5,000$ as `burn-in'. After burn-in we record the states of the Markov chains for analysis using a 1-in-100 thinning rate. Predictive accuracies are reported in Table \ref{table:airline_bench} and CPU time requirements\footnote{We define CPU time as the cumulative CPU clock resource usage of the whole experiment (training and testing), across child processes and threads, and across CPU cores.} are illustrated in Figure \ref{fig:airline_benchmark}. We stress that all experiments were run on a multi-core machine, and that we prefer using the cumulative CPU clock resource as time complexity metric, instead of wall-clock time, so as to be agnostic to the number of CPU cores used in the experiments. This metric has the merit of illustrating how the number of CPU cores required grows as a function of the number of training samples for a fixed/desired wall-clock execution time, but also how the wall-clock execution time grows as a function of the number of training samples for a given number of available CPU cores.

The BCM and the rBCM perform the worst in this experiment both in terms of predictive accuracy (Table \ref{table:airline_bench}) and total CPU time (Figure \ref{fig:airline_benchmark}). The poor scalability of the BCM and the rBCM is primarily due to the testing phase. Indeed, if we denote $M$ the total number of experts, then $M = \lceil \frac{N}{300} \rceil$, as each expert processes $200$ training points, of which there are $\frac{2}{3}N$. In the prediction phase, each expert is required to make predictions about all $\frac{1}{3}N$ test inputs, which requires evaluating $M$ products of an $\frac{1}{3}N \times 200$ matrix with a $200 \times 200$ matrix, which results in a total CPU time requirement that grows in $\mathcal{O}(M\frac{1}{3}N 200^2)$, which is the same as $\mathcal{O}(N^2)$. Given that training CPU time grows linearly in $N$ the cumulative training and testing CPU time grows quadratically in $N$. This is well illustrated in Figure \ref{fig:airline_benchmark}, where it can be seen that the slopes of total CPU time profiles of the BCM and the rBCM in log-log scale are approximately $2$. The airline delays data set was also considered by \cite{deisenroth2015distributed}, but the authors restricted themselves to a fixed size of the test set of $100,000$ points. However, this limitation might be restrictive as in many `smoothing' applications, the test data set can be as large as the training data set---neither the BCM nor the rBCM would be sufficiently scalable in such applications.

As for SVIGP, although it was slightly more accurate than \emph{string GPs} on this data set, it can be noted from Figure \ref{fig:airline_benchmark} that \emph{string GPs} required $10$ times less CPU resources. In fact we were unable to run the experiment on the full data set with SVIGP---we gave up after 500 CPU hours, or more than a couple of weeks wall-clock time given that the GPy implementation of SVIGP makes little use of multiple cores. As a comparison, the full experiment took $91.0$ hours total CPU time ($\approx 15$ hours wall-clock time on our $8$ cores machine) when change-points were inferred and $83.11$ hours total CPU time ($\approx 14$ hours wall-clock time on our $8$ cores machine) when change-points were not inferred. Another advantage of additively separable \emph{string GPs} over GPs, and subsequently over SVIGP, is that they are more interpretable. Indeed, one can determine at a glance from the learned posterior mean \emph{string GPs} of Figure \ref{fig:airline_learned} the effect of each of the $8$ covariates considered on arrival delays. It turns out that the three most informative factors in predicting arrival delays are departure time, distance and arrival time, while the age of the aircraft, the day of the week and the day of the month seem to have little to no effect. Finally, posterior distributions of the number of change-points are illustrated in Figure \ref{fig:airline_n_cp}, and posterior distributions of the locations of change-points are illustrated in Figure \ref{fig:airline_pos_cp}.

\begin{table}
\centering
\begin{tabular}{l*{6}{c}r}
N             		& String GP & String GP* & BCM & rBCM & SVIGP  \\\midrule
$10,000$ 		& $1.03 \pm 0.10$ & $1.06 \pm 0.10$ & $1.06 \pm 0.10$  & $1.06 \pm 0.10$  & $\bm{0.90 \pm 0.09}$   \\
$100,000$          	& $0.93 \pm 0.03$ & $0.96 \pm 0.03$ & $1.66 \pm 0.03$ & $1.04 \pm 0.04$ &  $\bm{0.88 \pm 0.03}$  \\
$1,000,000$        	& $0.93 \pm 0.01$ & $0.92 \pm 0.01$ & N/A & N/A &  $\bm{0.82 \pm 0.01}$  \\
$5,929,413$      	& $0.90 \pm 0.01$ & $0.93 \pm 0.01$ & N/A & N/A &  N/A  \\
\end{tabular}
\caption{Predictive mean squared errors (MSEs) $\pm$ one standard error on the airline arrival delays experiment. Squared errors are expressed as fraction of the sample variance of airline arrival delays, and hence are unitless. With this normalisation, a MSE of $1.00$ is as good as using the training mean arrival delays as predictor. The * in String GP* indicates that inference was performed without allowing for change-points. N/A entries correspond to experiments that were not over after $500$ CPU hours.}
\label{table:airline_bench}
\end{table}

\begin{figure}[p]
\begin{center}
\centerline{\includegraphics[width=0.75\textwidth]{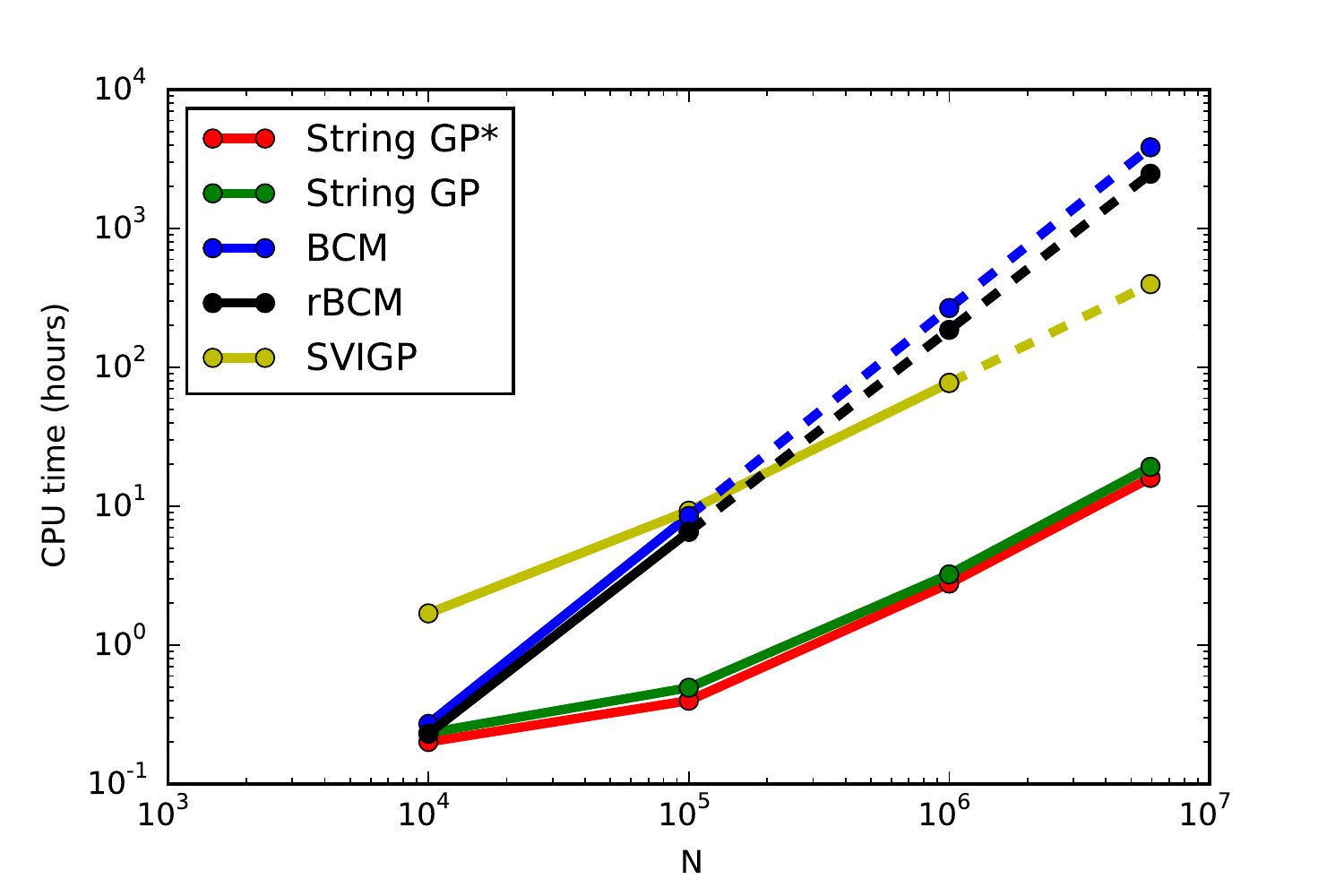}}
\caption{Total CPU time (training and testing) taken by various regression approaches on the airline delays data set as a function of the size of the subset considered, in log-log scale. The experimental setup is described in Section \ref{sct:airline}.  The CPU time reflects actual CPU clock resource usage in each experiment, and is therefore agnostic to the number of CPU cores used. It can be regarded as the wall-clock time the experiment would have taken to complete on a single-core computer (with the same CPU frequency). Dashed lines are extrapolated values, and correspond to experiments that did not complete after $500$ hours of CPU time.}
\label{fig:airline_benchmark}
\end{center}
\end{figure}

\begin{figure}[p]
\begin{center}
\centerline{\includegraphics[width=0.5\textwidth]{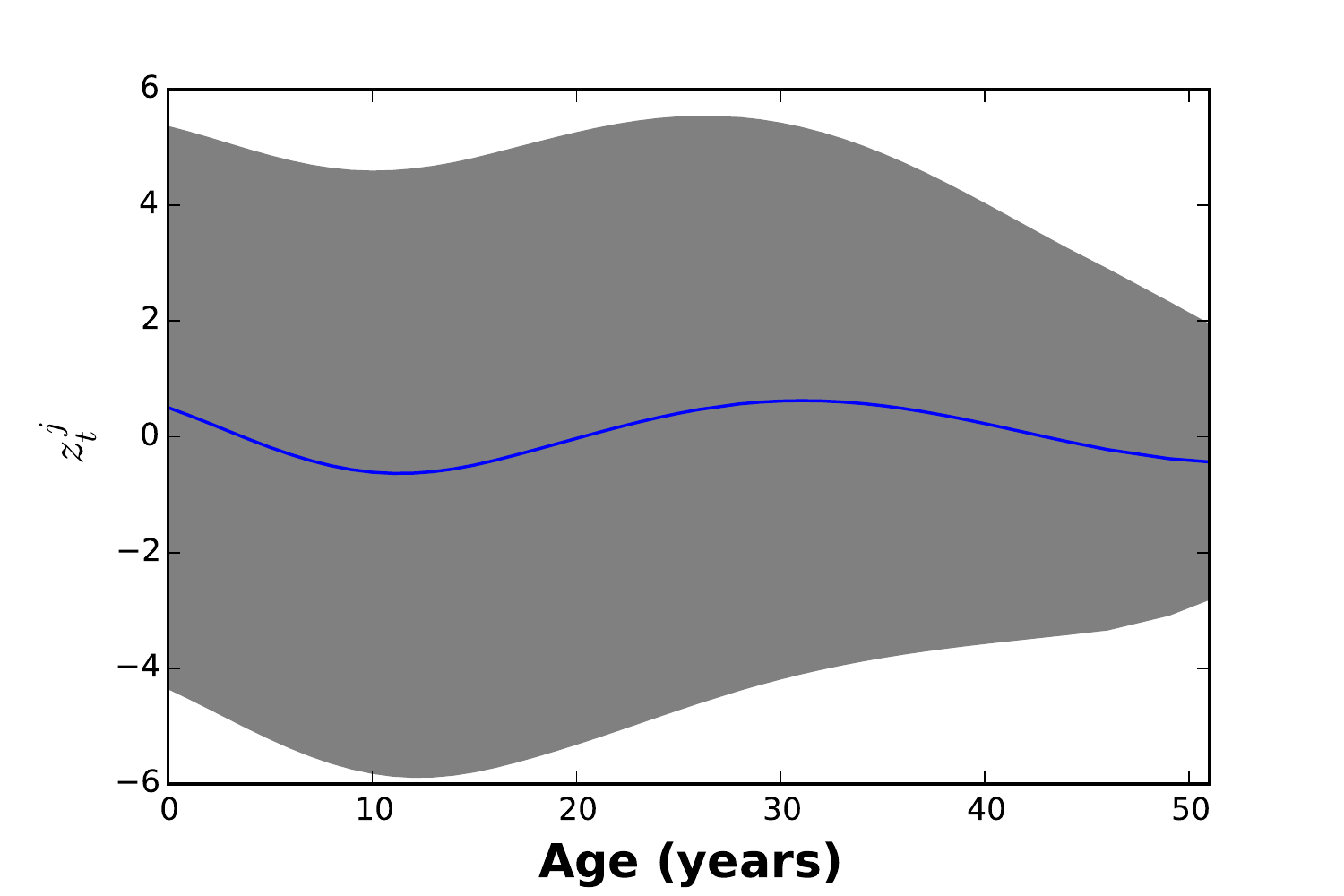}\includegraphics[width=0.5\textwidth]{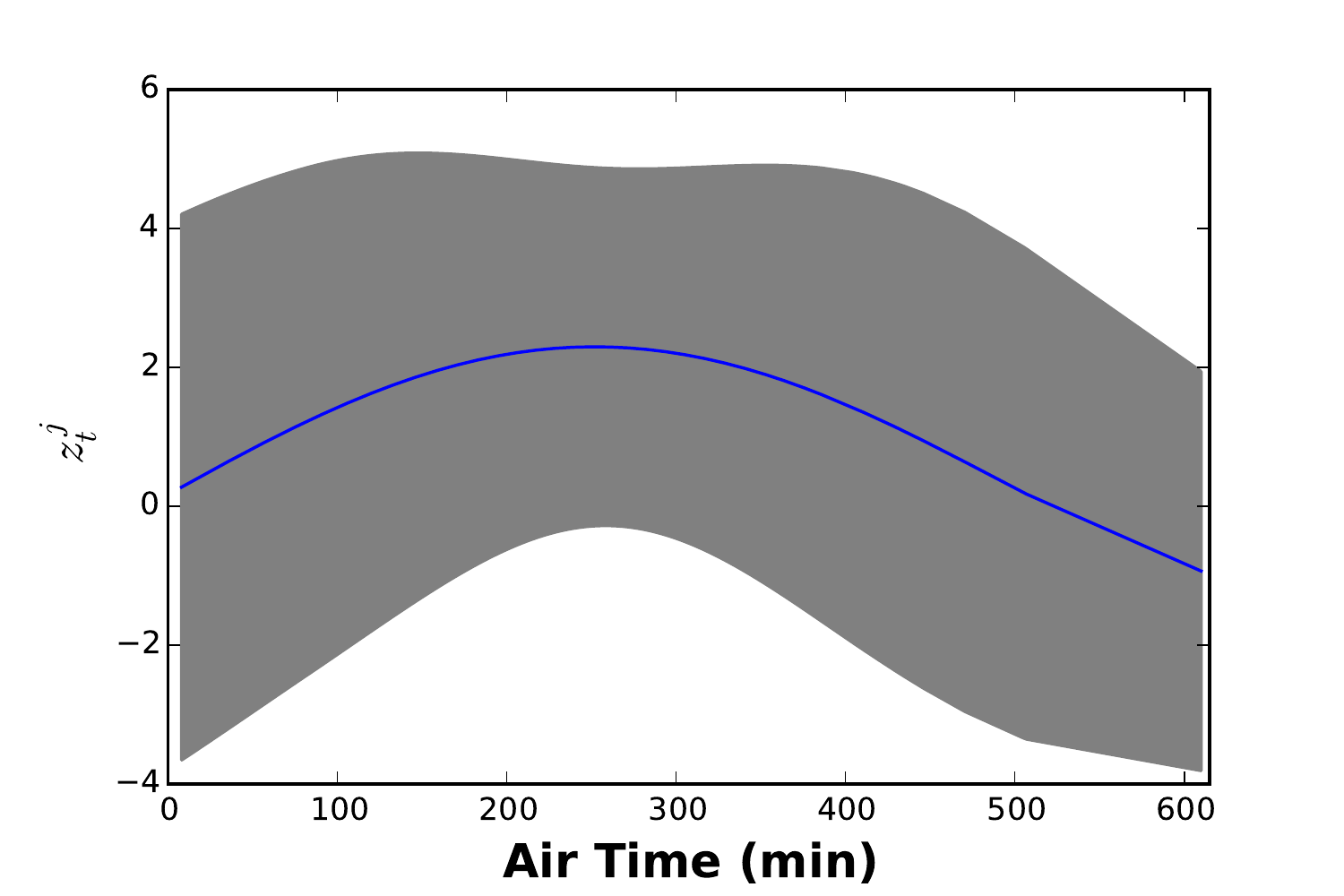}}
\centerline{\includegraphics[width=0.5\textwidth]{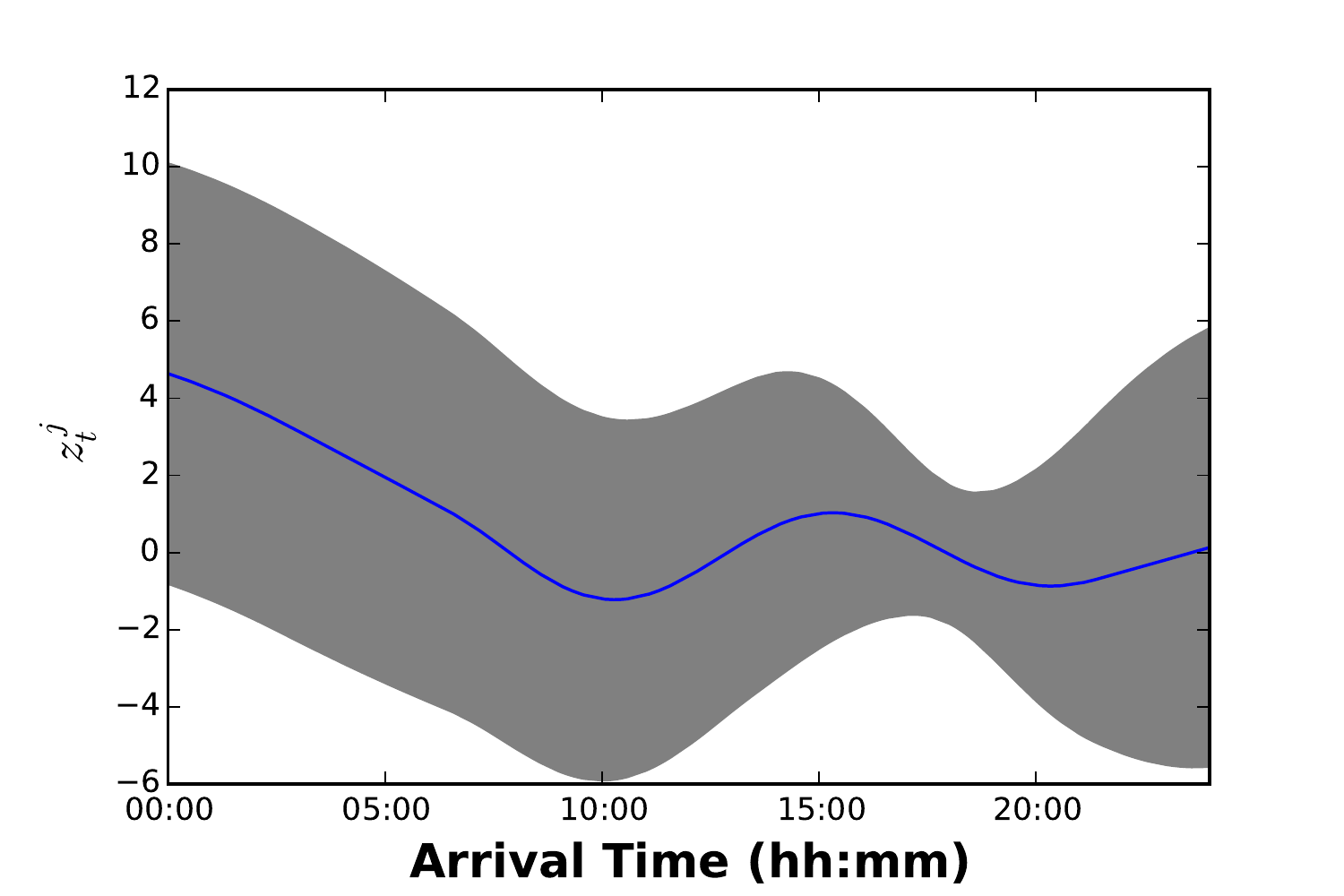}\includegraphics[width=0.5\textwidth]{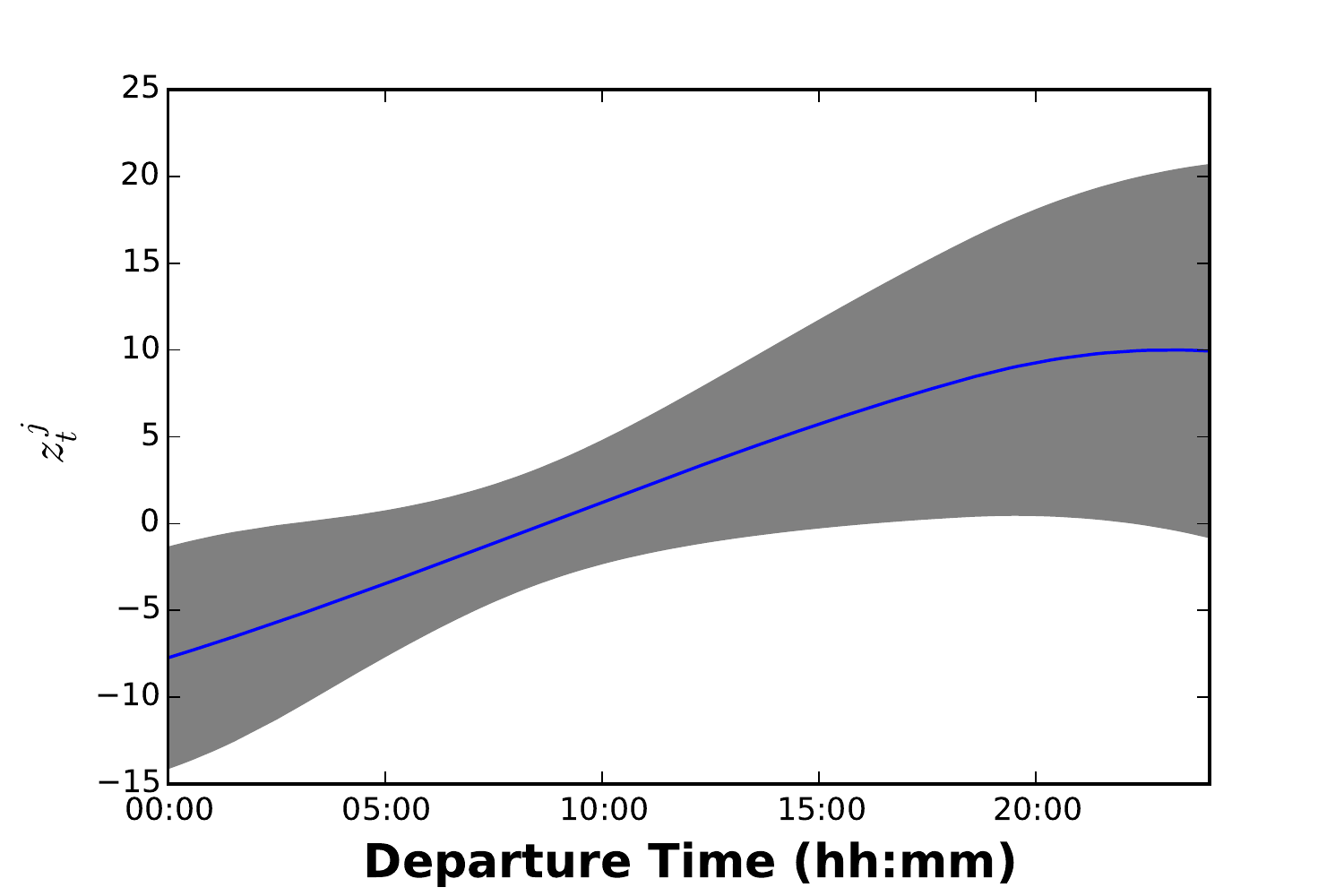}}
\centerline{\includegraphics[width=0.5\textwidth]{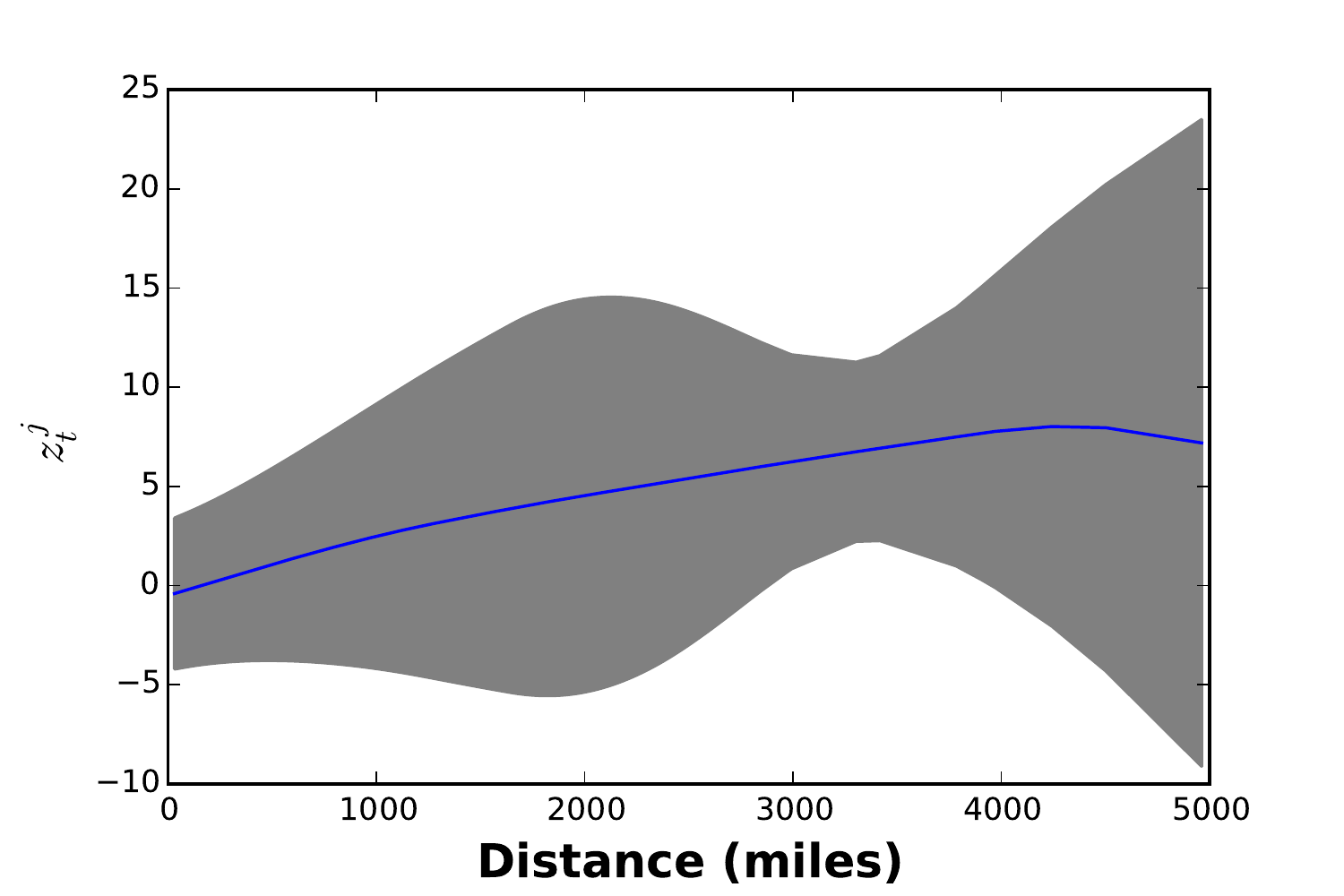}\includegraphics[width=0.5\textwidth]{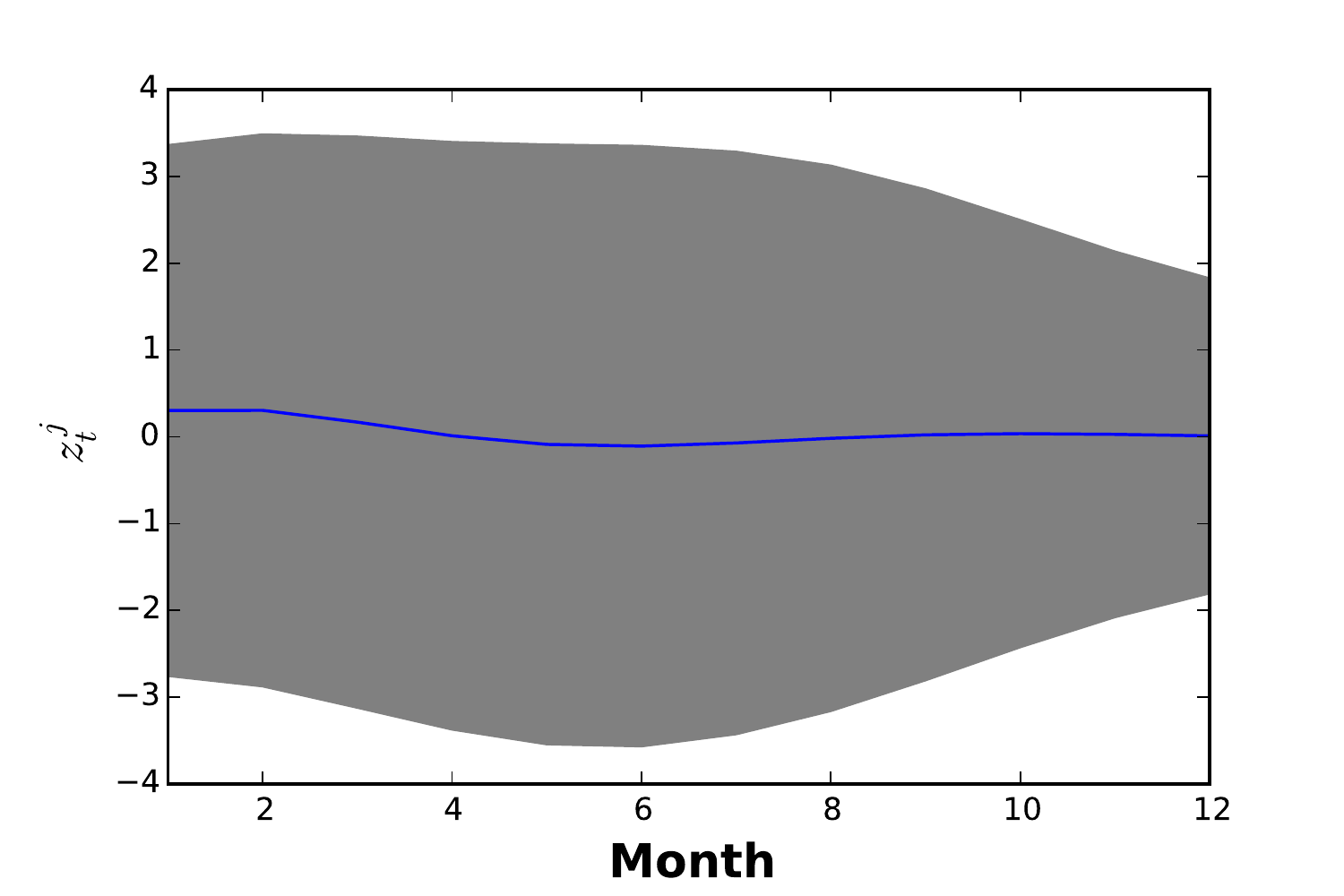}}
\centerline{\includegraphics[width=0.5\textwidth]{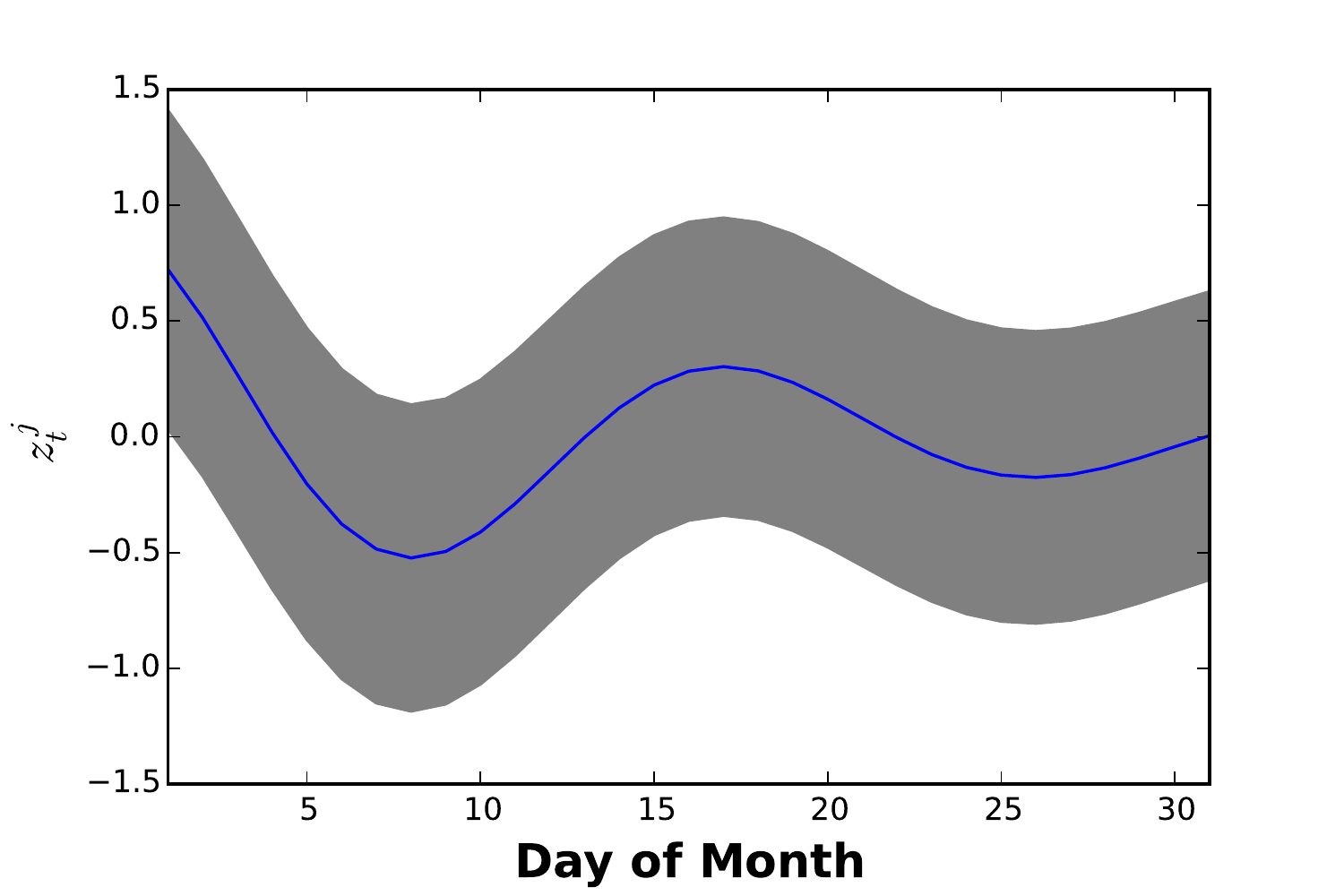}\includegraphics[width=0.5\textwidth]{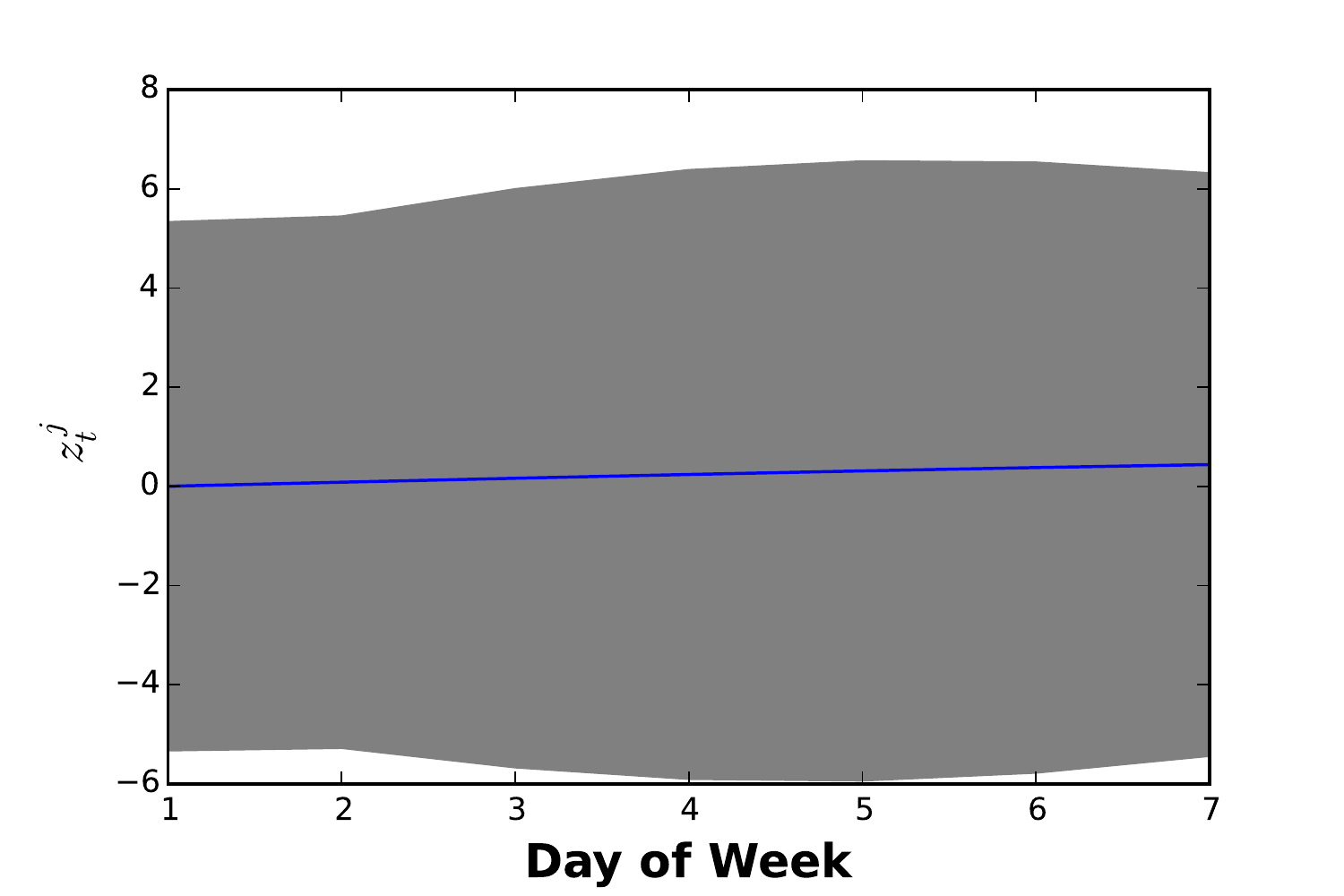}}
\caption{Posterior mean $\pm$ one posterior standard deviation of univariate \emph{string GPs} in the airline delays experiment of Section \ref{sct:airline}. Change-points were automatically inferred in this experiment.}
\label{fig:airline_learned}
\end{center}
\end{figure}

\begin{figure}[p]
\begin{center}
\centerline{\includegraphics[width=0.5\textwidth]{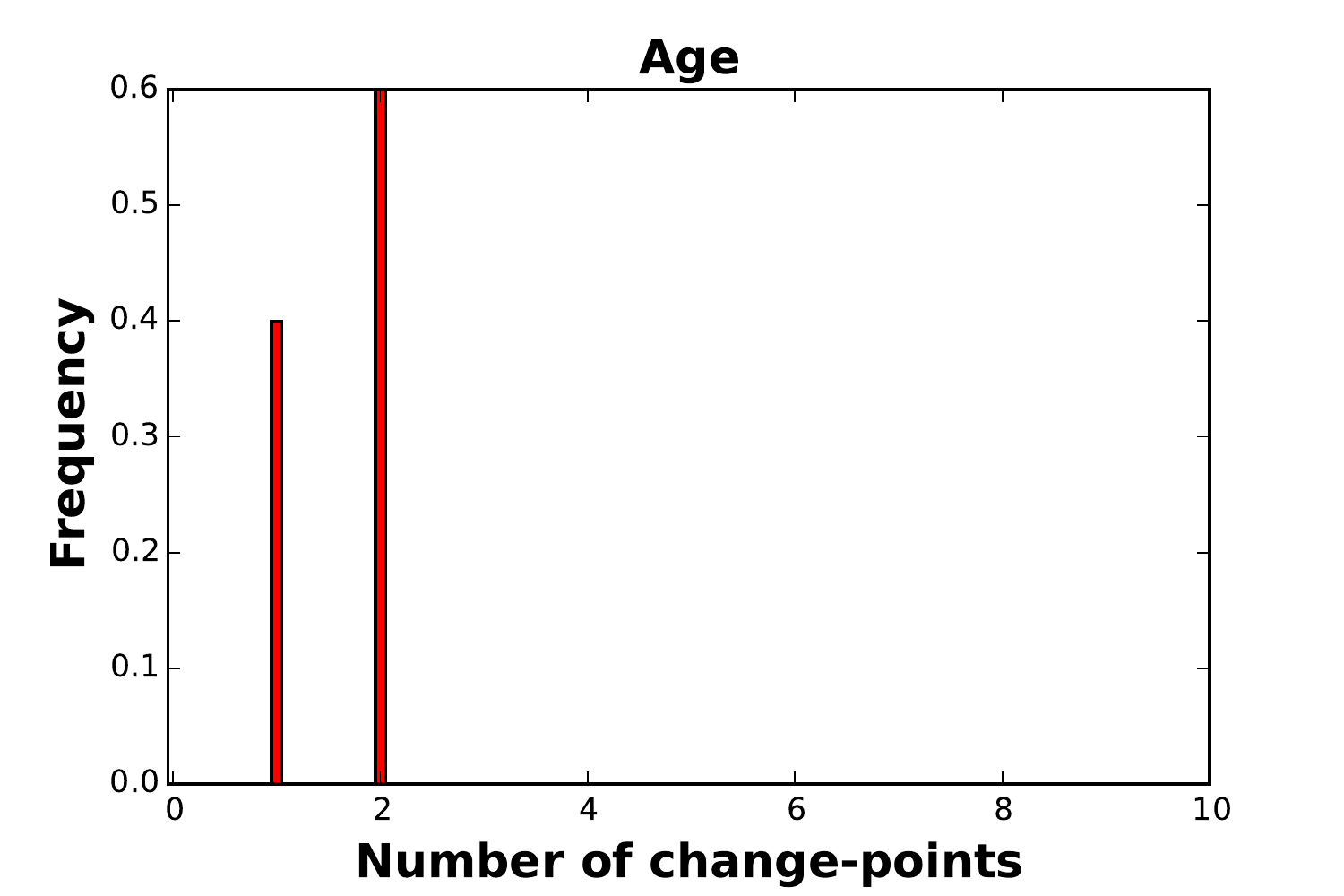}\includegraphics[width=0.5\textwidth]{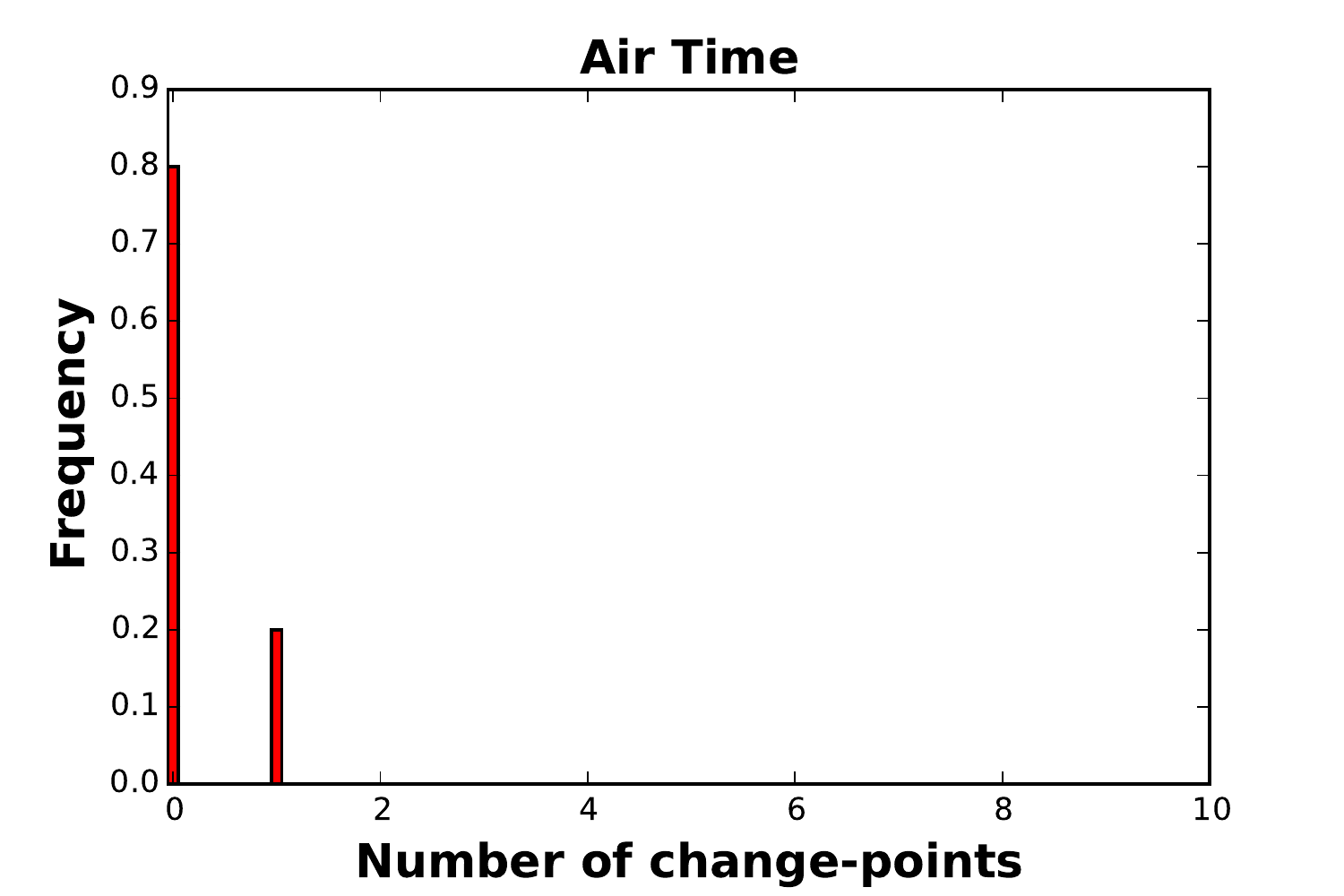}}
\centerline{\includegraphics[width=0.5\textwidth]{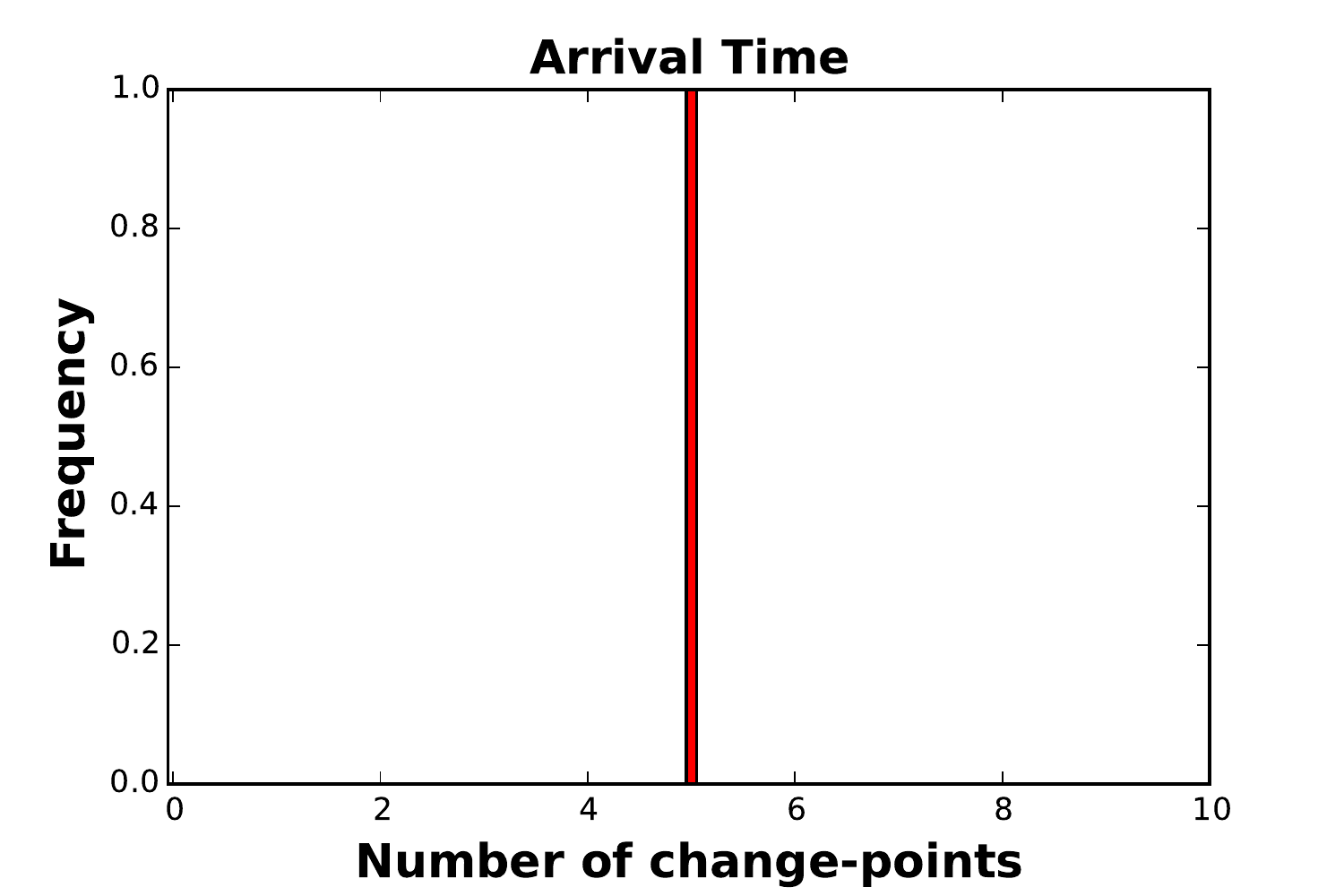}\includegraphics[width=0.5\textwidth]{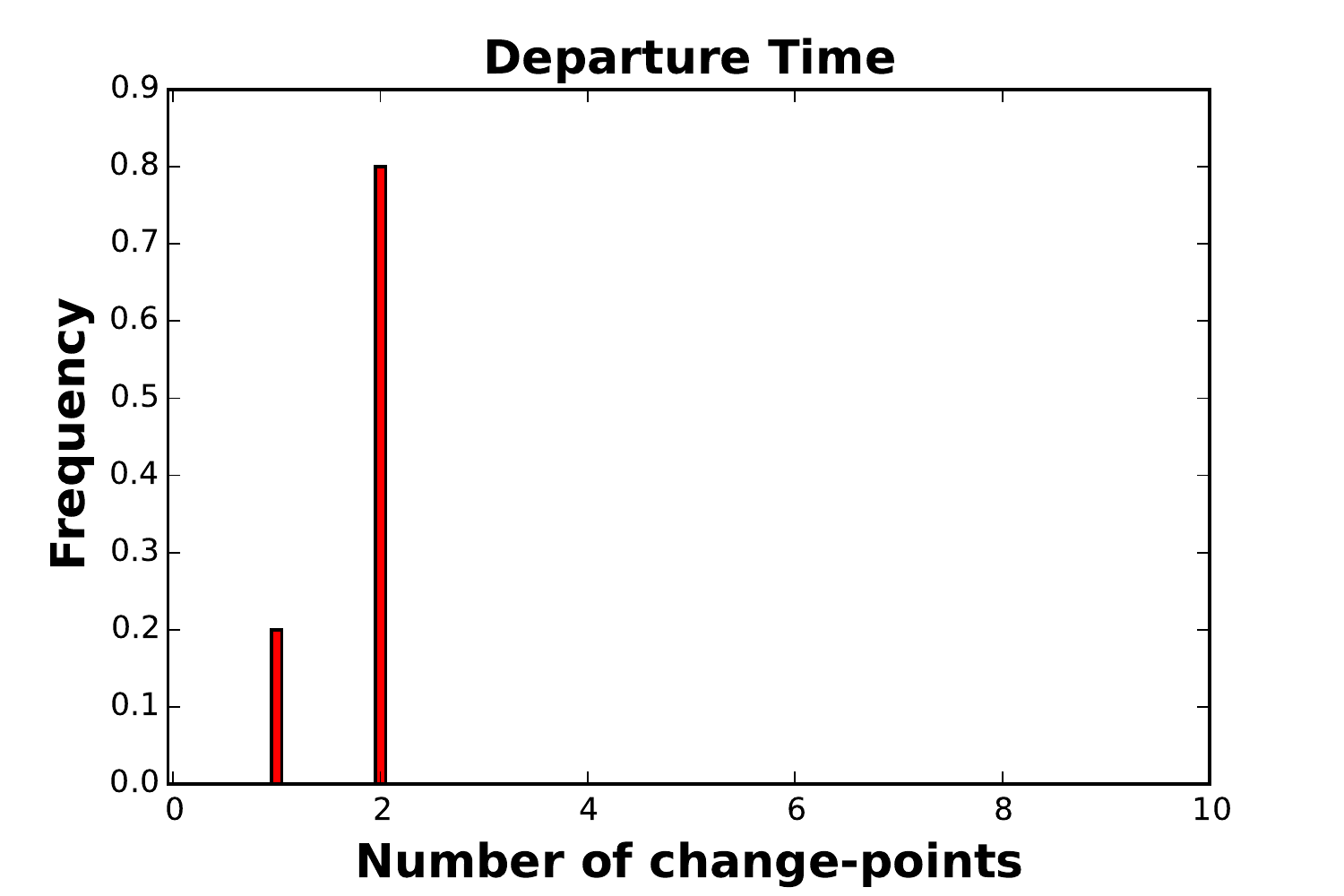}}
\centerline{\includegraphics[width=0.5\textwidth]{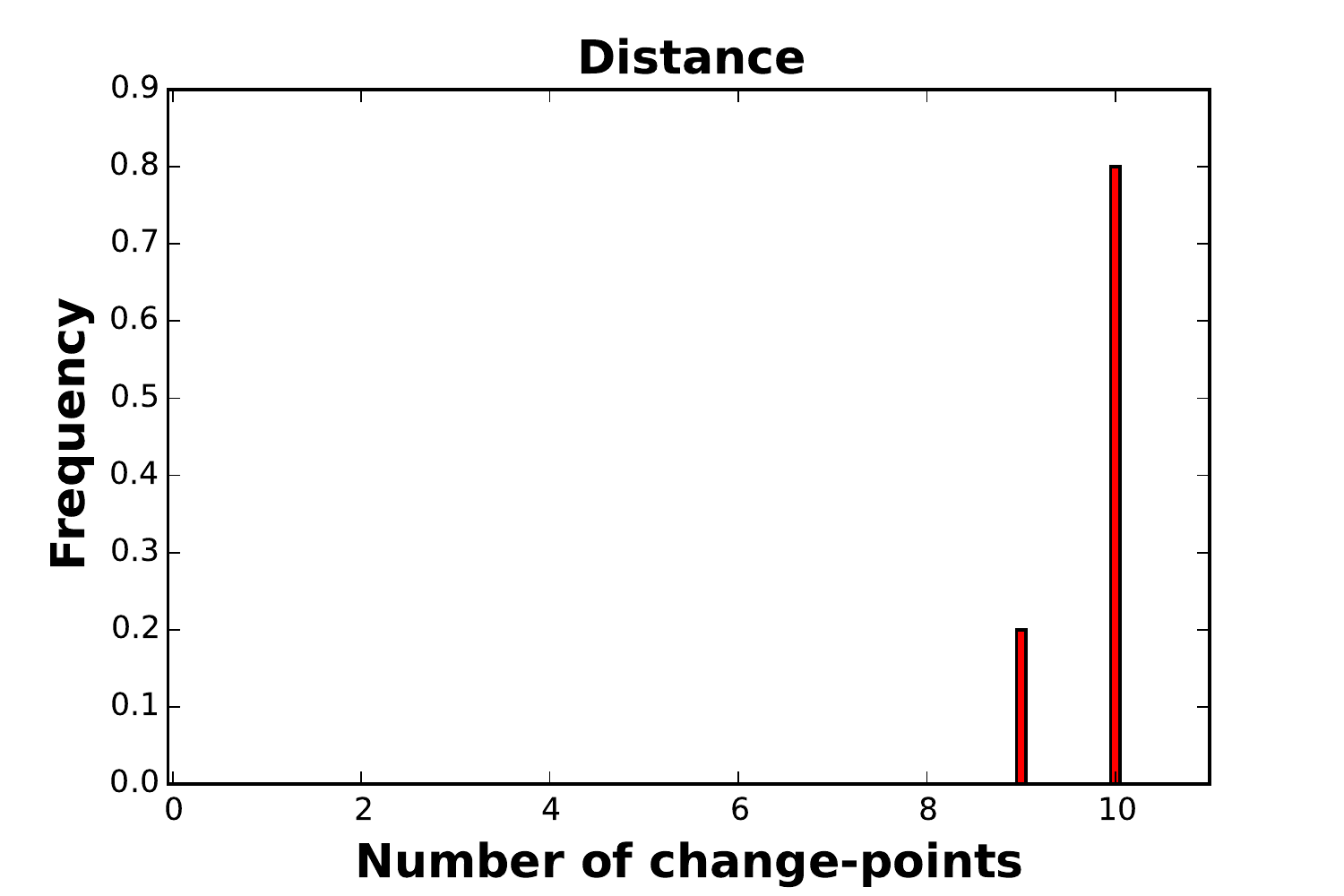}\includegraphics[width=0.5\textwidth]{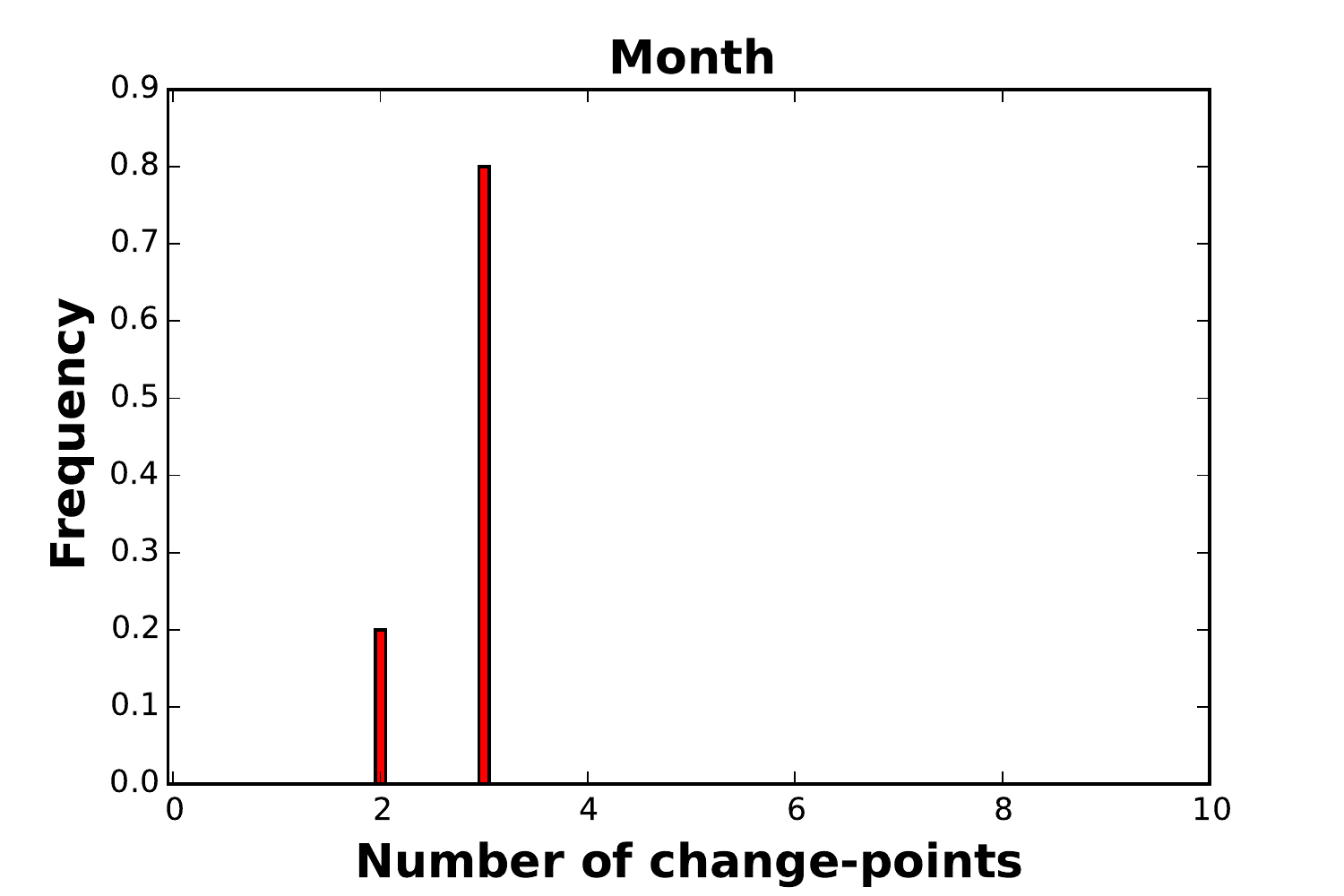}}
\centerline{\includegraphics[width=0.5\textwidth]{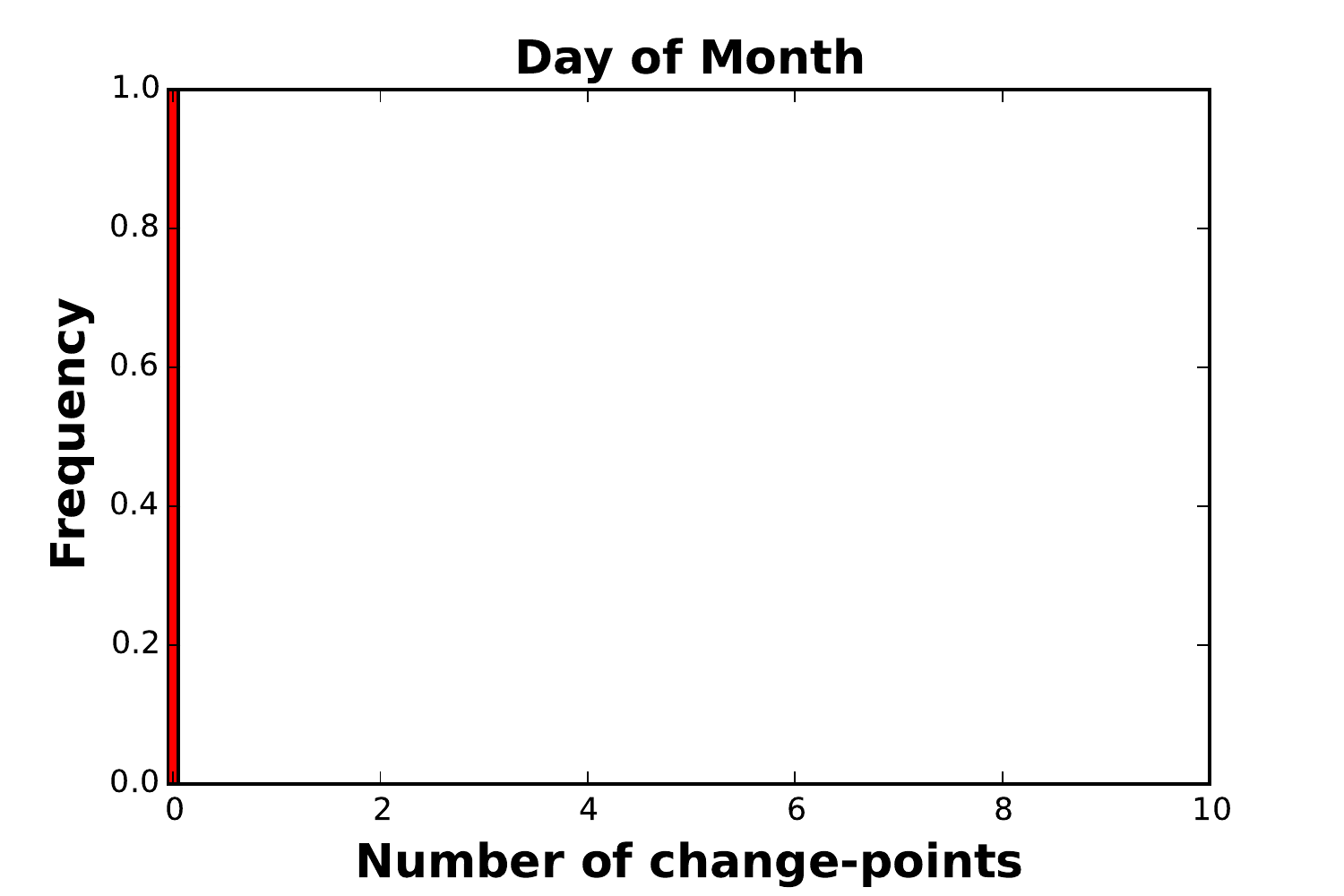}\includegraphics[width=0.5\textwidth]{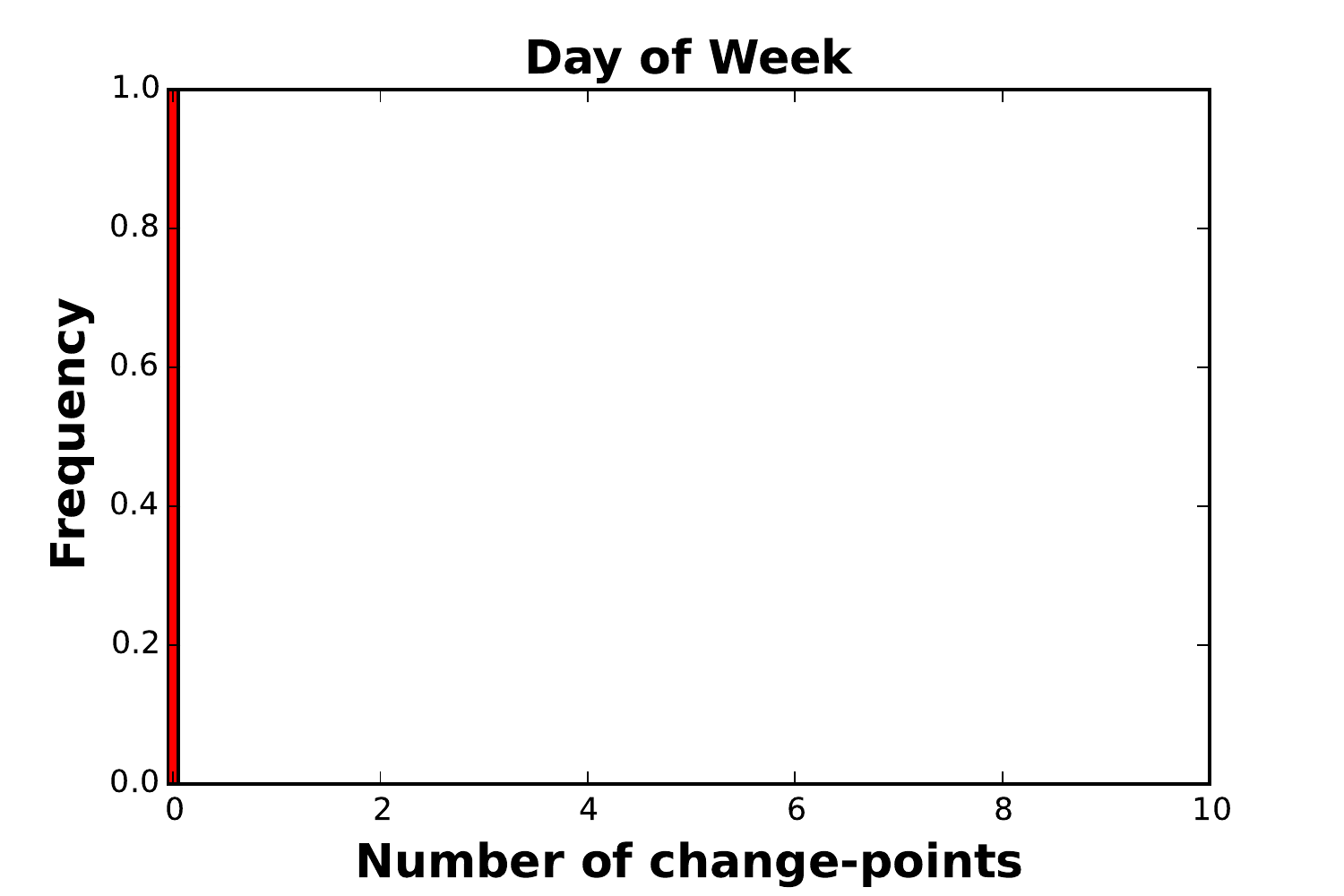}}
\caption{Posterior distributions of the numbers of change-points in each input dimension in the airline delays experiment of Section \ref{sct:airline}.}
\label{fig:airline_n_cp}
\end{center}
\end{figure}

\begin{figure}[p]
\begin{center}
\centerline{\includegraphics[width=0.5\textwidth]{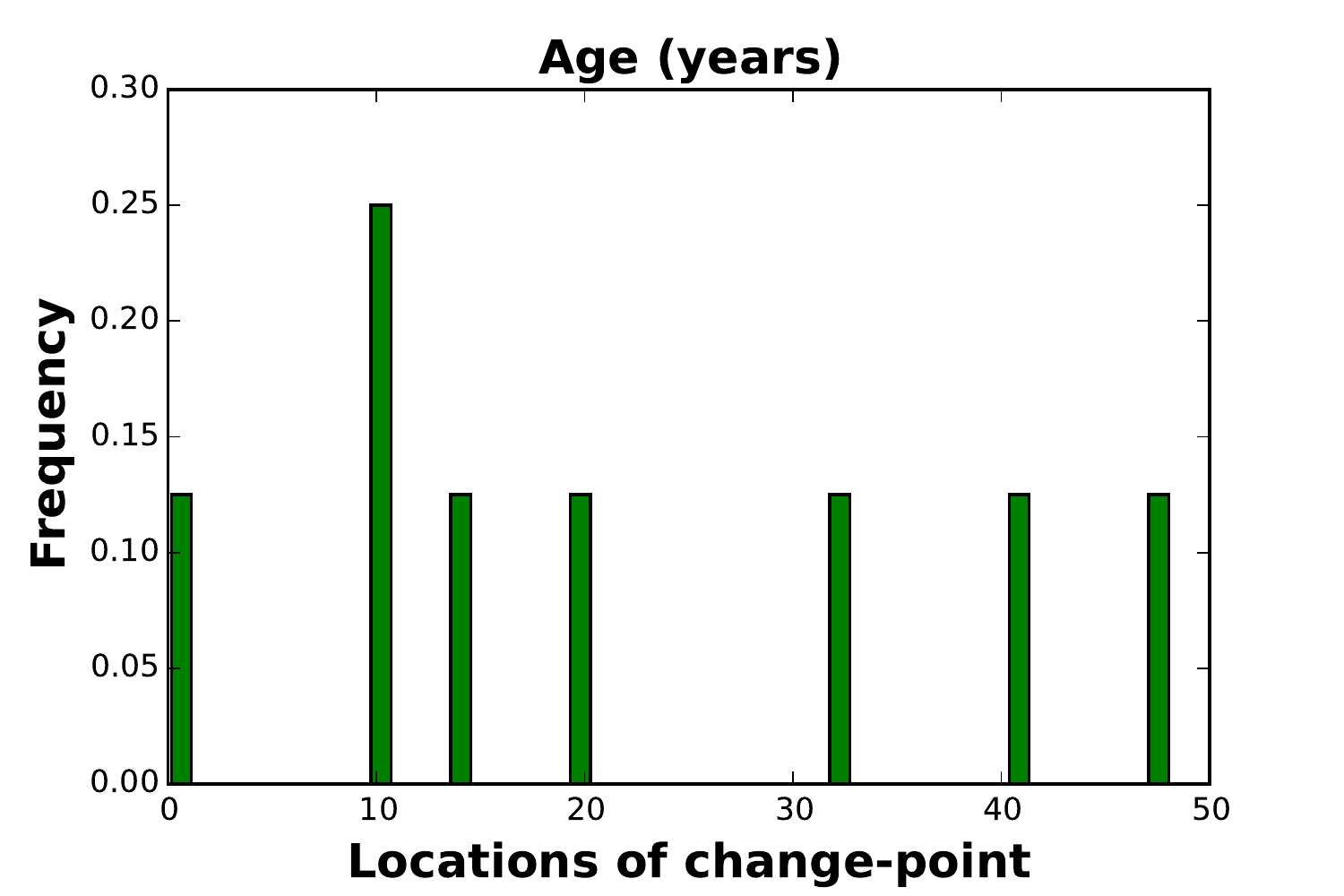}\includegraphics[width=0.5\textwidth]{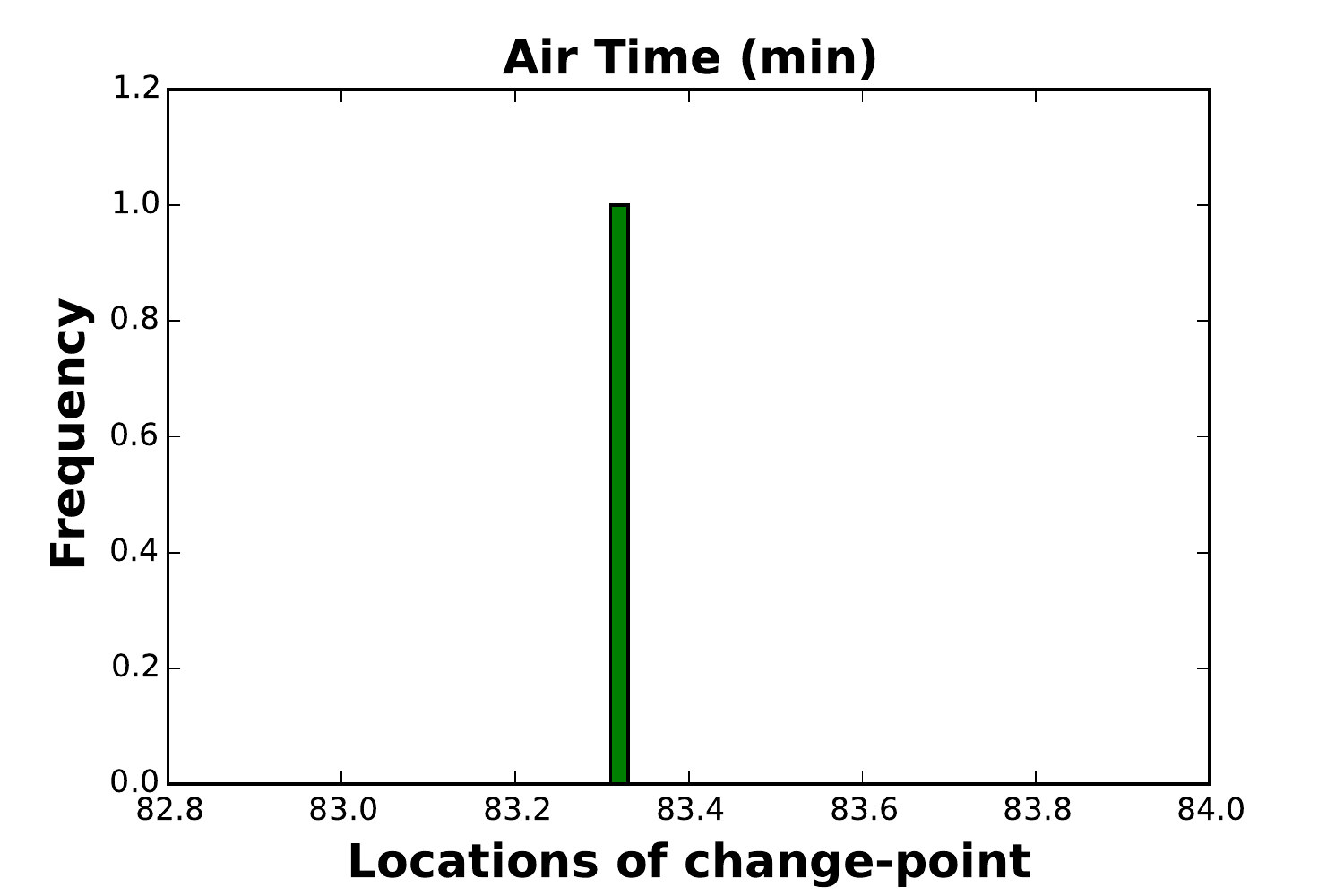}}
\centerline{\includegraphics[width=0.5\textwidth]{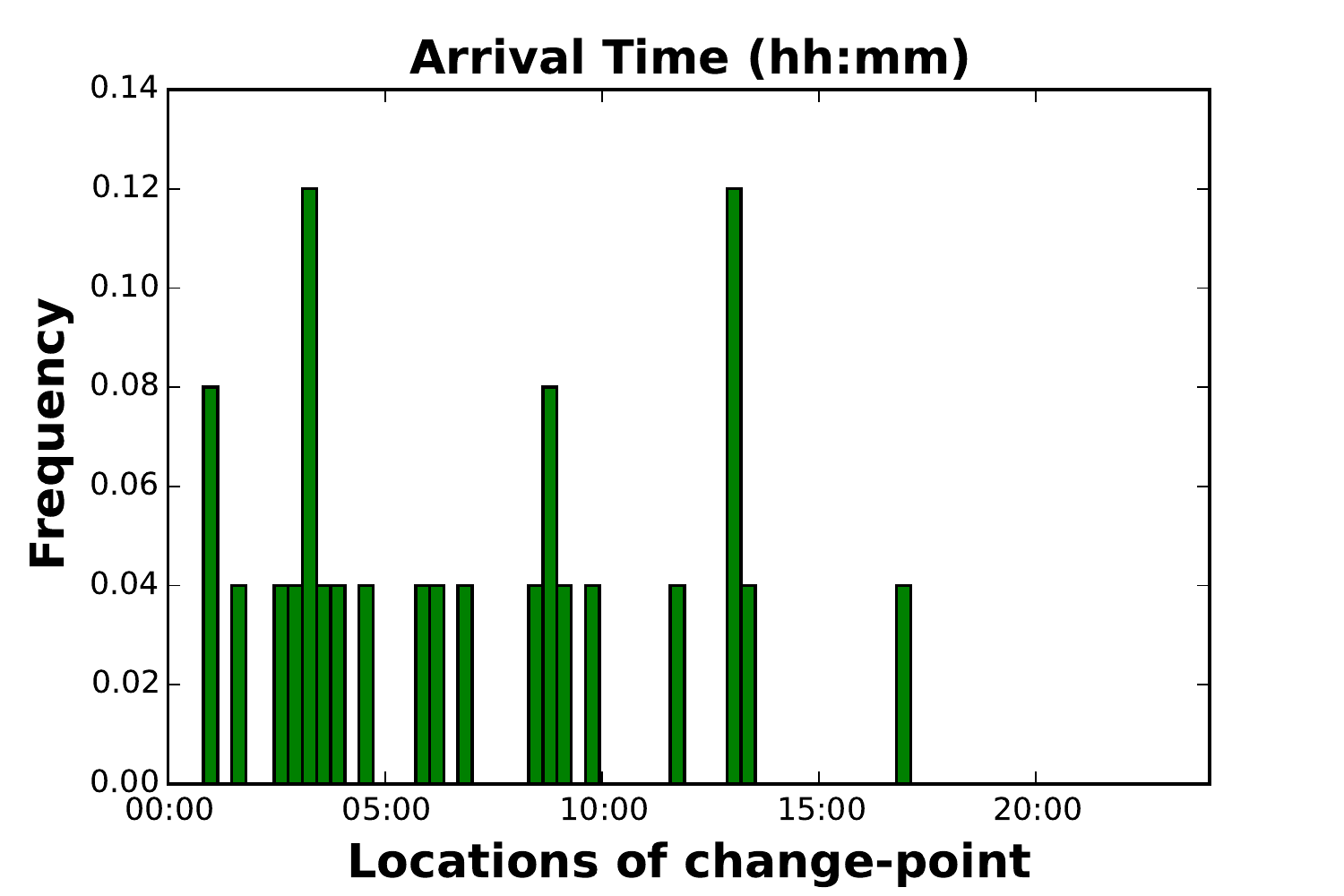}\includegraphics[width=0.5\textwidth]{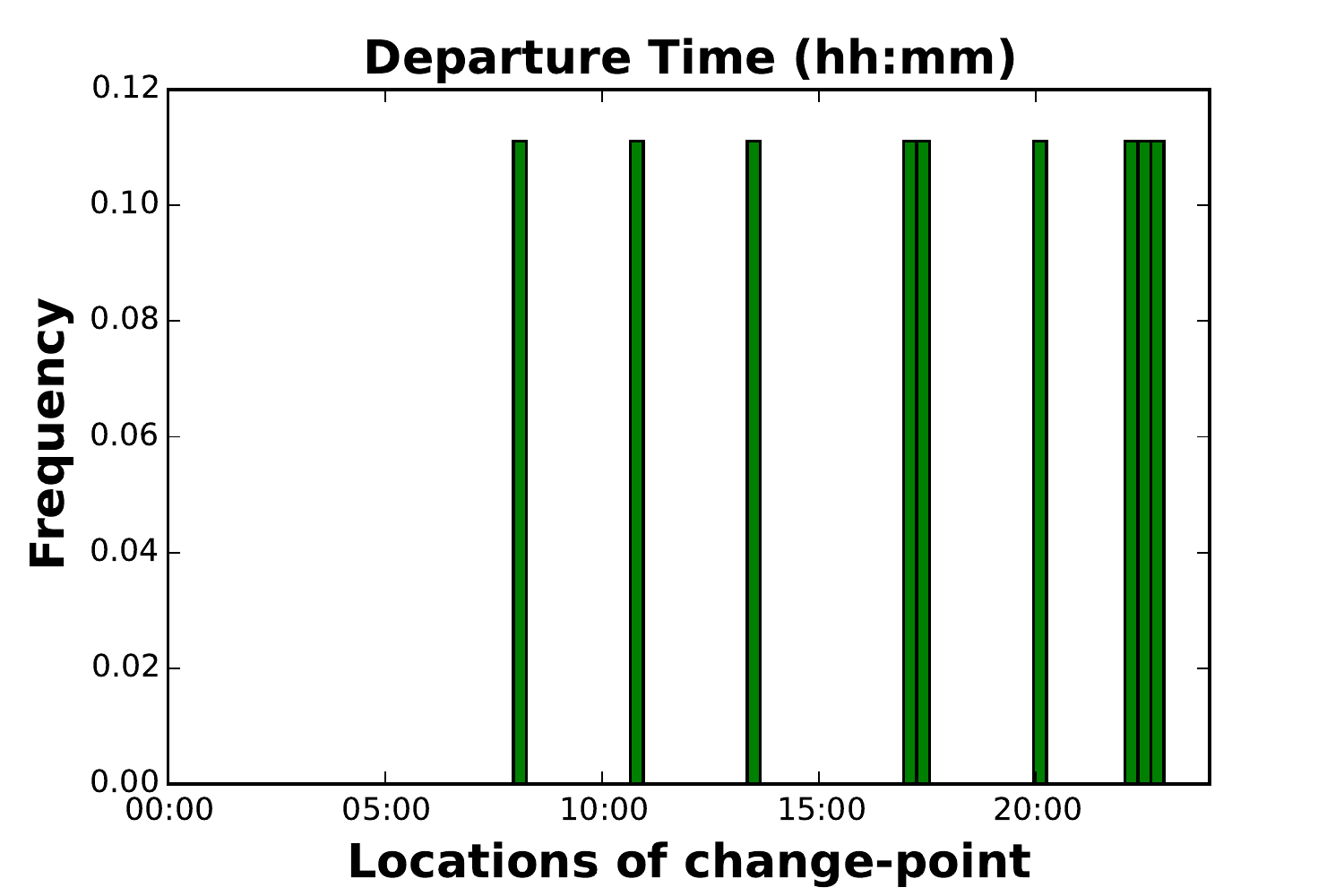}}
\centerline{\includegraphics[width=0.5\textwidth]{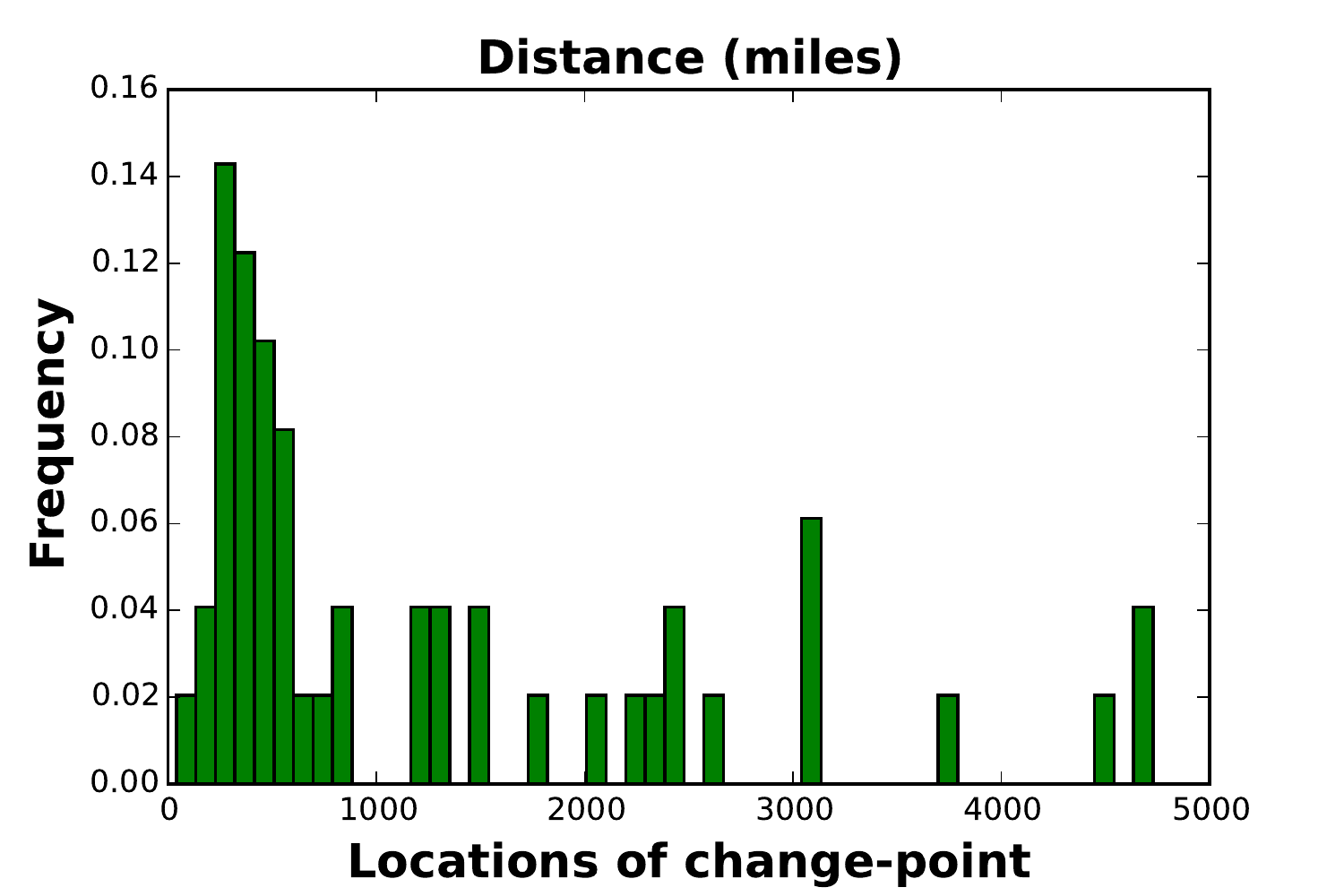}\includegraphics[width=0.5\textwidth]{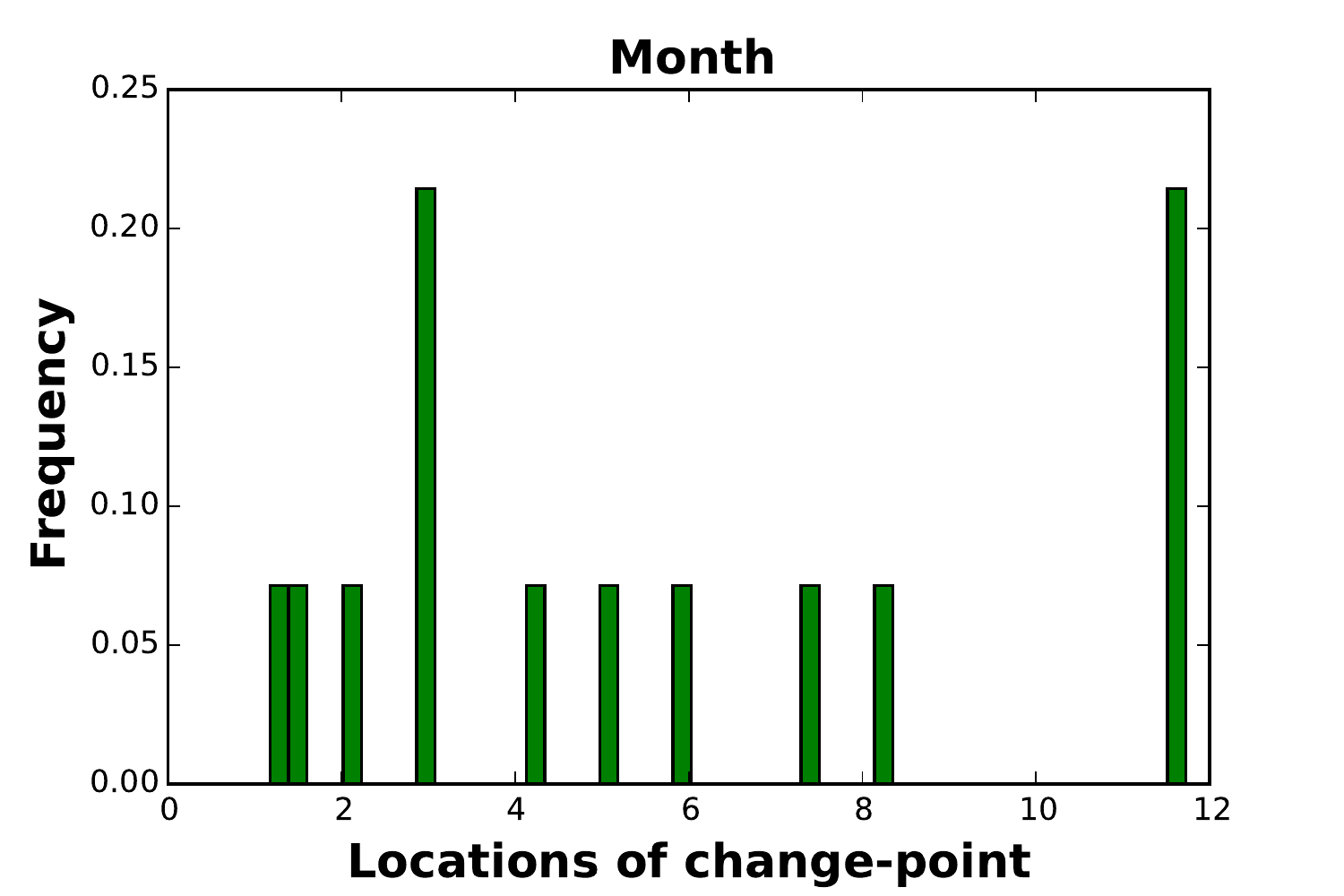}}
\caption{Posterior distributions of the locations of change-points in each input dimension in the airline delays experiment of Section \ref{sct:airline}. Dimensions that were learned to exhibit no change-point have been omitted here.}
\label{fig:airline_pos_cp}
\end{center}
\end{figure}

\clearpage

\subsection{Large Scale Dynamic Asset Allocation}
\label{sct:spt_ml}
An important feature of our proposed RJ-MCMC sampler (Algorithm \ref{alg:mcmc}) is that, unlike the BCM, the rBCM and SVIGP, which are restricted to Bayesian nonparametric regression and classification, Algorithm \ref{alg:mcmc} is agnostic with regard to the likelihood model, so long as it takes the form $p(\mathcal{D} \vert \mathbf{f}, \mathbf{u})$. Thus, it may be used as is on a wide variety of problems that go beyond classification and regression. In this experiment we aim to illustrate the efficacy of our approach on one such large scale problem in quantitative finance.

\subsubsection{Background} 
Let $\left(x_i(t)\right)_{t>0}$ for $i=1, \dots, n$ be $n$ stock price processes. Let $\left(X_i(t)\right)_{t>0}$ for $i=1, \dots, n$ denote the \emph{market capitalisation} processes, that is $X_i(t)=n_i(t) x_i(t)$ where $n_i(t)$ is the number of shares in company $i$ trading in the market at time $t$. We call \emph{long-only portfolio} any vector-valued stochastic process $\pi = (\pi_1, \dots, \pi_n)$ taking value on the unit simplex on $\mathbb{R}^n$, that is $$\forall i, t, ~\pi_i(t) \geq 0 ~~\text{ and }~~ \sum_{i=1}^n \pi_i(t) =1.$$ Each process $\pi_i$ represents the proportion of an investor's wealth invested in (holding) shares in asset $i$. An example long-only portfolio is the market portfolio $\mu = (\mu_1, \dots, \mu_n)$ where 
\begin{equation}
\mu_i(t) = \frac{X_i(t)}{X_1(t) + \dots + X_n(t)}
\end{equation}
is the market weight of company $i$ at time $t$, that is its size relative to the total market size (or that of the universe of stocks considered). The market portfolio is very important to practitioners as it is often perceived not be to subject to idiosyncracies, but only to systemic risk. It is often used as an indicator of how the stock market (or a specific universe of stocks) performs as a whole. We denote $Z_{\pi}$ the value process of a portfolio $\pi$ with initial capital $Z_{\pi}(0)$. That is, $Z_{\pi}(t)$ is the wealth at time $t$ of an investor who had an initial wealth of $Z_{\pi}(0)$, and dynamically re-allocated all his wealth between the $n$ stocks in our universe up to time $t$ following the continuous-time strategy $\pi$. 

A mathematical theory has recently emerged, namely \emph{stochastic portfolio theory} (SPT) \citep[see][]{karatzas2009stochastic} that studies the stochastic properties of the wealth processes of certain portfolios called \emph{functionally-generated portfolio} under realistic assumptions on the market capitalisation processes $\left(X_i(t)\right)_{t>0}$. Functionally-generated portfolios are rather specific in that the allocation at time $t$, namely $(\pi_1(t), \dots, \pi_n(t))$, solely depends on the market weights vector $(\mu_1(t), \dots, \mu_n(t))$. Nonetheless, some functionally-generated portfolios $\pi^*$ have been found that, under the mild (so-called diversity) condition
\begin{equation}
\exists \mu_{\text{max}},  0 < \mu_{\text{max}} < 1 ~~~\text{s.t.}~~~ \forall i \leq n, ~t \leq T, ~~~\mu_i(t) \leq  \mu_{\text{max}},
\end{equation}
outperform the market portfolio over the time horizon $[0, T]$ with probability one \citep[see][]{Vervuurt2015,karatzas2009stochastic}. More precisely,
\begin{equation}
\mathbb{P}\left(Z_{\pi^*}(T) \geq Z_{\mu}(T)\right) = 1 ~~~\text{and}~~~ \mathbb{P}\left(Z_{\pi^*}(T) > Z_{\mu}(T)\right) >0.
\end{equation} 
Galvanized by this result, we here consider the inverse problem consisting of learning from historical market data a portfolio whose wealth process has desirable user-specified properties. This inverse problem is perhaps more akin to the problems faced by investment professionals: i) their benchmarks depend on the investment vehicles pitched to investors and may vary from one vehicle to another, ii) they have to take into account liquidity costs, and iii) they often find it more valuable to go beyond market weights and leverage multiple company characteristics in their investment strategies. 

\subsubsection{Model Construction} 
We consider portfolios $\pi^f=\left(\pi_1^f, \dots, \pi_n^f\right)$ of the form 
\begin{equation}
\label{eq:pf_form}
\pi_i^f(t)  = \frac{f(c_i(t))}{f(c_1(t)) + \dots + f(c_n(t))},
\end{equation}
where $c_i(t) \in \mathbb{R}^d$ are some quantifiable characteristics of asset $i$ that may be observed in the market at time $t$, and $f$ is a positive-valued function. Portfolios of this form include all functionally-generated portfolios studied in SPT as a special case.\footnote{We refer the reader to \cite{karatzas2009stochastic} for the definition of  functionally-generated portfolios.} A crucial departure of our approach from the aforementioned type of portfolios is that the market characteristics processes $c_i$ need not be restricted to size-based information, and may contain additional information such as social media sentiments, stock price path-properties, but also characteristics relative to other stocks such as performance relative to the best/worst performing stock last week/month/year etc. We place a mean-zero \emph{string GP} prior on $\log f$. Given some historical data $\mathcal{D}$ corresponding to a training time horizon $[0, T]$, the likelihood model $p\left(\mathcal{D} \vert \pi^f\right)$ is defined by the investment professional and reflects the extent to which applying the investment strategy $ \pi^f$ over the training time horizon would have achieved a specific investment objective. An example investment objective is to achieve a high excess return relative to a benchmark portfolio $\alpha$
\begin{equation}
\mathcal{U}_{\text{ER}}\left( \pi^f \right) =  \log Z_{\pi^f}(T)- \log Z_{\alpha}(T).
\end{equation}
$\alpha$ can be the market portfolio (as in SPT) or any stock index. Other risk-adjusted investment objectives may also be used. One such objective is to achieve a high Sharpe-ratio, defined as 
\begin{equation}
\mathcal{U}_{\text{SR}}\left( \pi^f \right) = \frac{\bar{r}\sqrt{252}}{\sqrt{\frac{1}{T} \sum_{t=1}^T (r(t)-\bar{r})^2}},
\end{equation}
where the time $t$ is in days, $r(t) := \log Z_{\pi^f}(t)- \log Z_{\pi^f}(t-1)$ are the daily returns the portfolio $\pi^f$ and $\bar{r} = \frac{1}{T} \sum_{t=1}^T r(t)$ its average daily return. More generally, denoting $\mathcal{U}\left(\pi^f\right)$ the performance of the portfolio $\pi^f$ over the training horizon $[0, T]$ (as per the user-defined investment objective), we may choose as likelihood model a distribution over $\mathcal{U}\left(\pi^f\right)$ that reflects what the investment professional considers good and bad performance. For instance, in the case of the excess return relative to a benchmark portfolio or the Sharpe ratio, we may choose $\mathcal{U}\left(\pi^f\right)$ to be supported on $]0, +\infty[$ (for instance $\mathcal{U}\left(\pi^f\right)$ can be chosen to be Gamma distributed) so as to express that portfolios that do not outperform the benchmark or loose money overall in the training data are not of interest. We may then choose the mean and standard deviation of the Gamma distribution based on our expectation as to what performance a good candidate portfolio can achieve, and how confident we feel about this expectation. Overall we have, 
\begin{equation}
p\left(\mathcal{D} \vert \pi^f\right) = \gamma\left(\mathcal{U}\left(\pi^f\right); \alpha_e, \beta_e\right),
\end{equation}
where $\gamma(. ; \alpha, \beta)$ is the probability density function of the Gamma distribution. Noting, from Equation (\ref{eq:pf_form}) that $\pi^f(t)$ only depends on $f$ through its values at $(c_1(t), \dots, c_n(t))$, and assuming that $\mathcal{U}\left(\pi^f\right)$ only depends on $\pi^f$ evaluated at a finite number of times (as it is the case for excess returns and the Sharpe ratio), it follows that $\mathcal{U}(\pi^f)$ only depends on $\mathbf{f}$, a vector of values of $f$ at a finite number of points. Hence the likelihood model, which we may rewrite as 
\begin{equation}
\label{eq:sptt_lik}
p(\mathcal{D} \vert \mathbf{f}) = \gamma\left(\mathcal{U}\left(\pi_i^f\right); \alpha_e, \beta_e\right),
\end{equation}
is of the form required by the RJ-MCMC sampler previously developed. By sampling from the posterior distribution $p(\mathbf{f}, \mathbf{f}^*, \nabla \mathbf{f}, \nabla \mathbf{f}^* \vert \mathcal{D})$, the hope is to learn a portfolio that did well during the training horizon, to analyse the sensitivity of its investment strategy to the underlying market characteristics through the gradient of $f$, and to evaluate the learned investment policy on future market conditions.

\subsubsection{Experimental Setup} 
The universe of stocks we considered for this experiment are the constituents of the S\&P 500 index, accounting for changes in constituents over time and corporate events. We used the period 1\textsuperscript{st} January 1990 to 31\textsuperscript{st} December 2004 for training and we tested the learned portfolio during the period 1\textsuperscript{st} January 2005 to  31\textsuperscript{st} December 2014. We rebalanced the portfolio daily, for a total of $2.52$ million input points at which the latent function $f$ must be learned. We considered as market characteristics the market weight (CAP), the latest return on asset (ROA) defined as the ratio between the net yearly income and the total assets as per the latest balance sheet of the company known at the time of investment, the previous close-to-close return (PR), the close-to-close return before the previous  (PR2), and the S\&P long and short term credit rating (LCR and SCR). While the market weight is a company size characteristic, the ROA reflects how well a company performs relative to its size, and we hope that S\&P credit ratings will help further discriminate successful companies from others. The close-to-close returns are used to learn possible `momentum' patterns from the data. The data originate from the CRSP and Compustat databases. In the experiments we considered as performance metric the annualised excess return $\mathcal{U}_{\text{ER-EWP}}$ relative to the equally-weighted portfolio. We found the equally-weighted portfolio to be a harder benchmark to outperform than the market portfolio. We chose $\alpha_e$ and $\beta_e$ in Equation (\ref{eq:sptt_lik}) so that the mean of the Gamma distribution is $10.0$ and its variance $0.5$, which expresses a very greedy investment target.

It is worth pointing out that none of the scalable GP alternatives previously considered can cope with our likelihood model Equation (\ref{eq:sptt_lik}). We compared the performance of the learned \emph{string GP} portfolio out-of-sample to those of the best three SPT portfolios studied in \cite{Vervuurt2015}, namely the equally weighted portfolio
\begin{equation}
\pi_i^{\text{EWP}}(t) = \frac{1}{n},
\end{equation}
and the diversity weighted portfolios 
\begin{equation}
\pi_i^{\text{DWP}}(t; p) = \frac{\mu_i(t)^p}{\mu_1(t)^p + \dots + \mu_n(t)^p},
\end{equation}
with parameter $p$ equals to $-0.5$ and $0.5$, and the market portfolio. Results are provided Table \ref{table:spt_ml_bench}, and Figure \ref{fig:sptml_benchmark} displays the evolution of the wealth process of each strategy. It can be seen that the learned \emph{string GP} strategy considerably outperforms the next best SPT portfolio. This experiment not only demonstrates (once more) that \emph{string GPs} scale to large scale problems, it also illustrates that our inference scheme is able to unlock commercial value in new intricate large scale applications where no alternative is readily available. In effect, this application was first introduced by \cite{kom2016stochastic}, where the authors used a Gaussian process prior on a Cartesian grid and under a separable covariance function so as to speed up inference with Kronecker techniques. Although the resulting inference scheme has time complexity that is linear in the total number of points on the grid, for a given grid resolution, the time complexity grows \emph{exponentially} in the dimension of the input space (that is the number of trading characteristics), which is impractical for $d \geq 4$. On the other hand, \emph{string GPs} allow for a time complexity that grows linearly with the number of trading characteristics, thereby enabling the learning of subtler market inefficiencies from the data.

\begin{figure}
\begin{center}
\centerline{\includegraphics[width=0.75\textwidth]{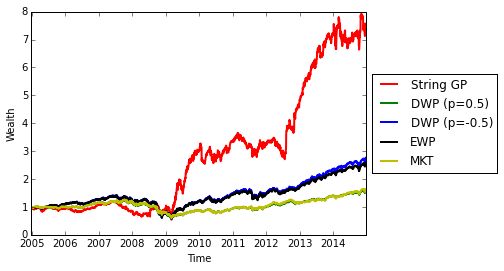}}
\caption{Evolution of the wealth processes of various long-only trading strategies on the S\&P $500$ universe of stocks between 1\textsuperscript{st} January 2005 (where we assume a starting wealth of $1$) and 31\textsuperscript{st} December 2014. The String GP strategy was learned using market data from 1\textsuperscript{st} January 1990 to 31\textsuperscript{st} December 2004 as descried in Section \ref{sct:spt_ml}. EWP refers to the equally-weighted portfolio, MKT refers to the market portfolio (which weights stocks proportionally to their market capitalisations) and DWP (p) refers to the diversity-weighted portfolio with exponent $p$ (which weights stocks proportionally to the $p$-th power of their market weights).}
\label{fig:sptml_benchmark}
\end{center}
\end{figure}

\begin{table}
\centering
\begin{tabular}{lccc}
Strategy			&Sharpe Ratio         & $Z_{\pi}(T)/Z_{\text{EWP}}(T)$	& Avg. Ann. Ret. 	\\\midrule
String GP			& $\bm{0.73}$ 	& $\bm{2.87}$ 				& $\bm{22.07}$\textbf{\%} 	\\
DWP ($p=-0.5$)       	& $0.55$ 		& $1.07$ 					& $10.56$\%  \\
EWP          			& $0.53$ 		& $1.00$ 					& $9.84$\% 	\\
MKT				& $0.34$		& $0.62$					& $4.77$\% \\
DWP ($p=0.5$)     		& $0.33$ 		& $0.61$ 					& $4.51$\%  
\end{tabular}
\caption{Performance of various long-only trading strategies on the S\&P $500$ universe of stocks between 1\textsuperscript{st} January 2005 (where we assume a starting wealth of $1$) and 31\textsuperscript{st} December 2014. The String GP strategy was learned using market data from 1\textsuperscript{st} January 1990 to 31\textsuperscript{st} December 2004 as descried in Section \ref{sct:spt_ml}. EWP refers to the equally-weighted portfolio, MKT refers to the market portfolio (which weights stocks proportionally to their market capitalisations) and DWP (p) refers to the diversity-weighted portfolio with exponent $p$  (which weights stocks proportionally to the $p$-th power of the market weight of the asset). $Z_{\pi}(T)$ denotes the terminal wealth of strategy $\pi$, and Avg. Ann. Ret. is the strategy's equivalent constant annual return over the test horizon.}
\label{table:spt_ml_bench}
\end{table}

\section{Discussion}
\label{sct:discussion}

In this paper, we introduce a novel class of smooth functional priors (or stochastic processes), which we refer to as \emph{string GPs}, with the aim of simultaneously addressing the lack of scalability and the lack of flexibility of Bayesian kernel methods. Unlike existing approaches, such as Gaussian process priors (\cite{rasswill}) or student-t process priors (\cite{shah2014student}), which are parametrised by \emph{global} mean and covariance functions, and which postulate \emph{fully dependent} finite-dimensional marginals, the alternative construction we propose adopts a \emph{local} perspective and the resulting finite-dimensional marginals exhibit \emph{conditional independence} structures. Our local approach to constructing \emph{string GPs} provides  a principled way of postulating that the latent function we wish to learn might exhibit locally homogeneous patterns, while the conditional independence structures constitute the core ingredient needed for developing scalable inference methods. Moreover, we provide theoretical results relating our approach to Gaussian processes, and we illustrate that our approach can often be regarded as a more scalable and/or more flexible extension. We argue and empirically illustrate that \emph{string GPs} present an unparalleled opportunity for learning local patterns in small scale regression problems using nothing but standard Gaussian process regression techniques. More importantly, we propose a novel \emph{scalable} RJ-MCMC inference scheme to learn latent functions in a wide variety of machine learning tasks, while simultaneously determining \emph{whether} the data set exhibits local patterns, \emph{how many} types of local patterns the data might exhibit, and \emph{where} do changes in these patterns are likely to occur. The proposed scheme has time complexity and memory requirement that are both \emph{linear} in the sample size $N$. When the number of available computing cores is at least equal to the dimension $d$ of the input space, the time complexity is independent from the dimension of the input space. Else, the time complexity grows in $\mathcal{O}(dN)$. The memory requirement grows in $\mathcal{O}(dN)$. We empirically illustrate that our approach scales considerably better than competing alternatives on a standard benchmark data set, and is able to process data sizes that competing approaches cannot handle in a reasonable time.

\subsection{Limitations}
The main limitation of our approach is that, unlike the \emph{standard GP paradigm} in which the time complexity of marginal likelihood evaluation does not depend on the dimension of the input space (other than through the evaluation of the Gram matrix), the \emph{string GP paradigm} requires a number of computing cores that increases linearly with the dimension of the input space, or alternatively has a time complexity linear in the input space dimension on single-core machines. This is a by-product of the fact that in the \emph{string GP paradigm}, we jointly infer the latent function and its gradient. If the gradient of the latent function is inferred in the \emph{standard GP paradigm}, the resulting complexity will also be linear in the input dimension. That being said, overall our RJ-MCMC inference scheme will typically scale  better per iteration to large input dimensions than gradient-based marginal likelihood inference in the \emph{standard GP paradigm}, as the latter typically requires numerically evaluating an Hessian matrix, which requires computing the marginal likelihood a number of times per iterative update that grows quadratically with the input dimension. In contrast, a Gibbs cycle in our MCMC sampler has worst case time complexity that is linear in the input dimension.

\subsection{Extensions}
Some of the assumptions we have made in the construction of \emph{string GPs} and \emph{membrane GPs} can be relaxed, which we consider in detail below.
\subsubsection{Stronger Global Regularity}
We could have imposed more (multiple continuous differentiability) or less (continuity) regularity as boundary conditions in the construction of \emph{string GPs}. We chose continuous differentiability as it is a relatively mild condition guaranteed by most popular kernels, and yet the corresponding treatment can be easily generalised to other regularity requirements. It is also possible to allow for discontinuity at a boundary time $a_k$ by replacing $\mu_k^b$ and $\Sigma_k^b$ in Equation (\ref{eq:bound_pdf}) with ${}_kM_{a_k}$ and ${}_k\textbf{K}_{a_k;a_k}$ respectively, or equivalently by preventing any communication between the $k$-th and the $(k+1)$-th strings. This would effectively be equivalent to having two independent \emph{string GPs} on $[a_0, a_k]$ and $]a_k, a_K]$.

\subsubsection{Differential Operators as Link Functions}
Our framework can be further extended to allow differential operators as link functions, thereby considering the latent multivariate function to infer as the response of a differential system to independent univariate \emph{string GP} excitations. The RJ-MCMC sampler we propose in Section \ref{sct:infer} would still work in this framework, with the only exception that, when the differential operator is of first order, the latent multivariate function will be continuous but not differentiable, except if global regularity is upgraded as discussed above. Moreover, Proposition \ref{prop:is_gp} can be generalised to first order linear differential operators.

\subsubsection{Distributed String GPs}
The RJ-MCMC inference scheme we propose may be easily adapted to handle applications where the data set is so big that it has to be stored across multiple clusters, and inference techniques have to be developed as data flow graphs\footnote{A data flow graph is a computational (directed) graph whose nodes represent calculations (possibly taking place on different computing units) and directed edges correspond to data flowing between calculations or computing units.} (for instance using libraries such as TensorFlow). 

To do so, the choice of string boundary times can be adapted so that each string has the same number of inner input coordinates, and such that in total there are as many strings across dimensions as a target number of available computing cores. We may then place a prior on kernel memberships similar to that of previous sections. Here, the change-points may be restricted to coincide with boundary times, and we may choose priors such that the sets of change-points are independent between input dimensions. In each input dimension the prior on the number of change-points can be chosen to be a truncated Poisson distribution (truncated to never exceed the total number of boundary times), and conditional on their number we may choose change-points to be uniformly distributed in the set of boundary times. In so doing, any two strings whose shared boundary time is not a change-point will be driven by the same kernel configuration.

This new setup presents no additional theoretical or practical challenges, and the RJ-MCMC techniques previously developed are easily adaptable to jointly learn change-points and function values. Unlike the case we developed in previous sections where an update of the univariate \emph{string GP} corresponding to an input dimension, say the $j$-th, requires looping through all distinct $j$-th input coordinates, here no step in the inference scheme requires a full view of the data set in any input dimension. Full RJ-MCMC inference can be constructed as a data flow graph. An example such graph is constructed as follows. The leaves correspond to computing cores responsible for generating change-points and kernel configurations, and mapping strings to kernel configurations. The following layer is made of compute cores that use kernel configurations coming out of the previous layer to sequentially compute boundary conditions corresponding to a specific input dimension---there are $d$ such compute cores, where $d$ is the input dimension. These compute cores then pass computed boundary conditions to subsequent compute cores we refer to as string compute cores. Each string compute core is tasked with computing \emph{derivative string GP} values for a specific input dimension and for a specific string in that input dimension, conditional on previously computed boundary conditions. These values are then passed to a fourth layer of compute cores, each of which being tasked with computing function and gradient values corresponding to a small subset of training inputs from previously computed \emph{derivative string GP} values. The final layers then computes the log-likelihood using a distributed algorithm such as Map-Reduce when possible. This proposal data flow graph is illustrated Figure \ref{fig:sgp_inference_flow}.

We note that the approaches of \cite{kim}, \cite{gramacy}, \cite{tresp2000bayesian}, and \cite{deisenroth2015distributed} also allow for fully-distributed inference on regression problems. Distributed \emph{string GP} RJ-MCMC inference improves on these in that it places little restriction on the type of likelihood. Moreover, unlike \cite{kim} and \cite{gramacy} that yield discontinuous latent functions, \emph{string GPs} are continuously differentiable, and unlike \cite{tresp2000bayesian} and  \cite{deisenroth2015distributed}, local experts in the \emph{string GP paradigm} (i.e. strings) are driven by possibly different sets of hyper-parameters, which facilitates the learning of local patterns.
\subsubsection{Approximate MCMC for I.I.D. Observations Likelihoods}
As discussed in Section \ref{sct:gen_rjmcmc}, the bottleneck of our proposed inference scheme is the evaluation of likelihood. When the likelihood factorises across training samples, the linear time complexity of our proposed approach can be further reduced using a Monte Carlo approximation of the log-likelihood (see for instance \cite{bardenet2014towards} and references therein). Although the resulting Markov chain will typically not converge to the true posterior distribution, in practice its stationary distribution can be sufficiently close to the true posterior when reasonable Monte Carlo sample sizes are used. Convergence results of such approximations have recently been studied by \cite{bardenet2014towards} and \cite{alquier2016noisy}. We expect this extension to speed-up inference when the number of compute cores is in the order of magnitude of the input dimension, but we would recommend the previously mentioned fully-distributed string GP inference extension when compute cores are not scarce.

\subsubsection{Variational Inference}
It would be useful to develop suitable variational methods for inference under \emph{string GP} priors, that we hope will scale similarly to our proposed RJ-MCMC sampler but will converge faster. We anticipate that the main challenge here will perhaps be the learning of model complexity, that is the number of distinct kernel configurations in each input dimension.

%
\begin{landscape}
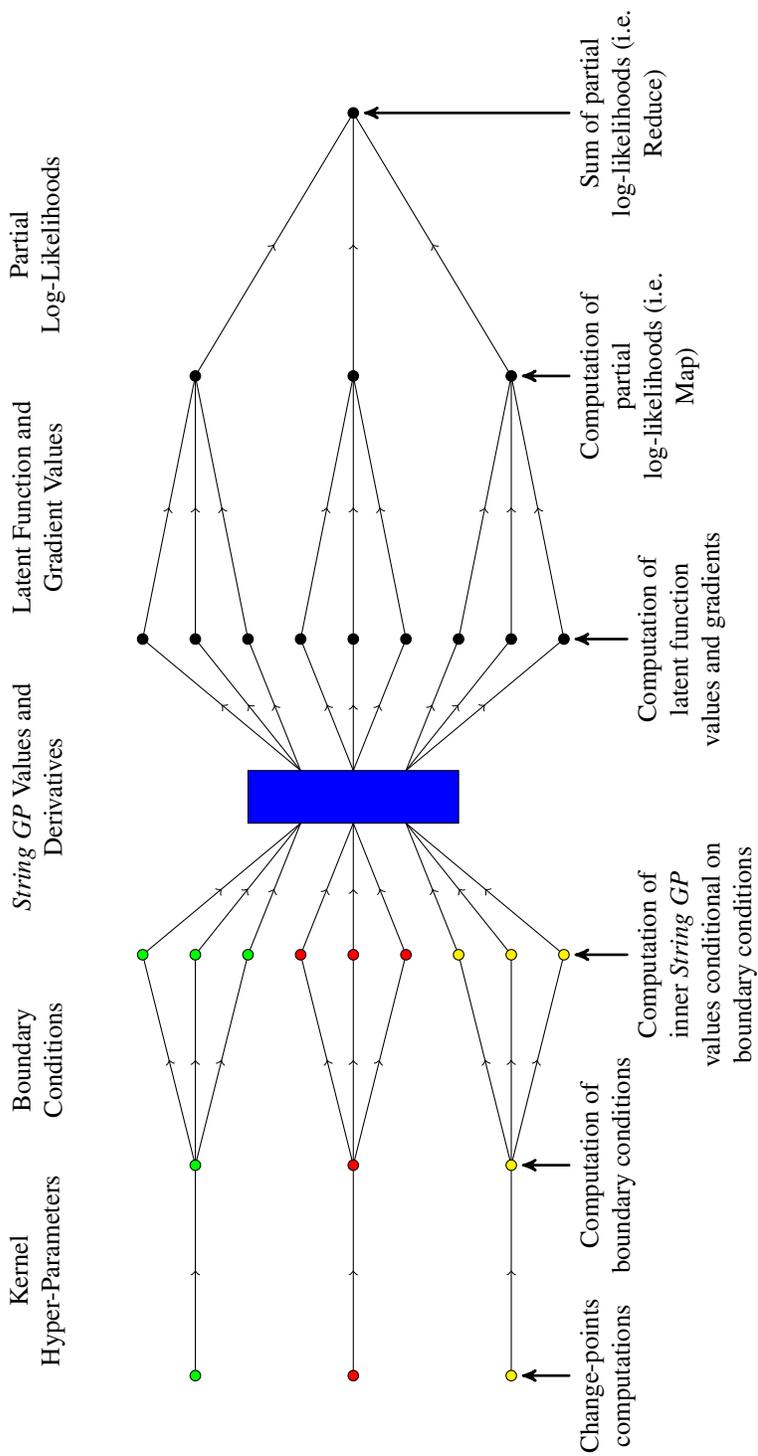
\begin{figure}
\centering
\pgfdeclarelayer{background}
\pgfsetlayers{background,main}
\tikzstyle{information text}=[text badly centered,font=\small,text width=3cm]
\begin{tikzpicture}[scale=.7,cap=round]
    \begin{scope}[>=stealth', line width=1pt]
        \draw[->] (-3,.9) node[below, information text]
            {Change-points computations} -- (-3,1.8);
        \draw[->] (1,.9) node[below, information text]
            {Computation of boundary conditions} -- (1,1.8);
        \draw[->] (5,-.2) node[below,information text]
            {Computation of inner \emph{String GP} values conditional on boundary conditions} -- (5,.8);
        \draw[->] (11,-.2) node[below,information text]
            {Computation of latent function values and gradients} -- (11,.8);
        \draw[->] (16,0.9) node[below,information text]
            {Computation of partial log-likelihoods (i.e. Map) } -- (16,1.8);
        \draw[->] (21,0.9) node[below,information text]
            {Sum of partial log-likelihoods (i.e. Reduce) } -- (21,4.8);
    \end{scope}
    \draw (3,11) node[information text] {Boundary Conditions };
    \draw (8,11) node[information text, text width=4cm, ]
        { \emph{String GP} Values and Derivatives };
    \draw (13.5,11) node[information text] { Latent Function and Gradient Values};
    \draw (-1,11) node[information text] {Kernel Hyper-Parameters};
    \draw (18.5,11) node[information text] {Partial Log-Likelihoods};
    \filldraw[fill=yellow] (1,2) circle (0.1);
    \filldraw[fill=red] (1,5) circle (0.1);
    \filldraw[fill=green] (1,8) circle (0.1);
    \filldraw[fill=black] (16,2) circle (0.1);
    \filldraw[fill=black] (16,5) circle (0.1);
    \filldraw[fill=black] (16,8) circle (0.1);    
    \filldraw[fill=yellow] (-3,2) circle (0.1);
    \filldraw[fill=red] (-3,5) circle (0.1);
    \filldraw[fill=green] (-3,8) circle (0.1);
    \foreach \x in {5}
        \foreach \y in {1,2,3} {
            \filldraw[fill=yellow] (\x,\y) circle (0.1);
        }
    \foreach \x in {5}
        \foreach \y in {4,5,6} {
            \filldraw[fill=red] (\x,\y) circle (0.1);
        }
    \foreach \x in {5}
        \foreach \y in {7,8,9} {
            \filldraw[fill=green] (\x,\y) circle (0.1);
        }
    \foreach \x in {11}
        \foreach \y in {1,2,3} {
            \filldraw[fill=black] (\x,\y) circle (0.1);
        }
    \foreach \x in {11}
        \foreach \y in {4,5,6} {
            \filldraw[fill=black] (\x,\y) circle (0.1);
        }
    \foreach \x in {11}
        \foreach \y in {7,8,9} {
            \filldraw[fill=black] (\x,\y) circle (0.1);
        }
    \filldraw[fill=black] (21, 5) circle (0.1);
    \filldraw[fill=blue] (7.5,3.0) rectangle (8.5,7.0);
    %

    \begin{pgfonlayer}{background}
	\begin{scope}[decoration={markings,mark=at position 0.5 with {\arrow{>}}}] 
        		\draw[postaction={decorate}] (-3,2) -- (1,2);
        		\draw[postaction={decorate}] (-3,5) -- (1,5);
        		\draw[postaction={decorate}] (-3,8) -- (1,8);
        		
        		\draw[postaction={decorate}] (1,2) -- (5,1);
        		\draw[postaction={decorate}] (1,2) -- (5,2);
        		\draw[postaction={decorate}] (1,2) -- (5,3);
        		\draw[postaction={decorate}] (1,5) -- (5,4);
        		\draw[postaction={decorate}] (1,5) -- (5,5);
        		\draw[postaction={decorate}] (1,5) -- (5,6);
        		\draw[postaction={decorate}] (1,8) -- (5,7);
        		\draw[postaction={decorate}] (1,8) -- (5,8);
        		\draw[postaction={decorate}] (1,8) -- (5,9);
        		
        		\foreach \ya in {1 , 2, 3}{
            			\draw[postaction={decorate}] (5,\ya) -- (7.5,4);
            			\draw[postaction={decorate}]  (8.5,4) -- (11,\ya);
		}

        		\foreach \ya in {4 , 5, 6}{
            			\draw[postaction={decorate}] (5,\ya) -- (7.5,5);
            			\draw[postaction={decorate}]  (8.5,5) -- (11,\ya);
		}

        		\foreach \ya in {7 , 8, 9}{
            			\draw[postaction={decorate}] (5,\ya) -- (7.5,6);
            			\draw[postaction={decorate}]  (8.5,6) -- (11,\ya);
		}

        		\foreach \ya in {7,8,9}
            			\draw[postaction={decorate}] (11,\ya) -- (16,8);

        		\foreach \ya in {4,5,6}
            			\draw[postaction={decorate}] (11,\ya) -- (16,5);
        			
        		\foreach \ya in {1,2,3}
            			\draw[postaction={decorate}] (11,\ya) -- (16,2);
            			
        		\draw[postaction={decorate}] (16,8) -- (21,5);
        		\draw[postaction={decorate}] (16,5) -- (21,5);
        		\draw[postaction={decorate}] (16,2) -- (21,5);
        \end{scope}
   \end{pgfonlayer}
\end{tikzpicture}
\caption{Example data flow graph for fully-distributed \emph{string GP} inference under an i.i.d observations likelihood model. Here the input space is three-dimensional to ease illustration. Filled circles represent compute cores, and edges correspond to flows of data. Compute cores with the same colour (green, red or yellow) perform operations pertaining to the same input dimension, while black-filled circles represent compute cores performing cross-dimensional operations. The blue rectangle plays the role of a hub that relays \emph{string GP} values to the compute cores that need them to compute the subset of latent function values and gradients they are responsible for. These values are then used to compute the log-likelihood in a distributed fashion using the Map-Reduce algorithm. Each calculation in the corresponding RJ-MCMC sampler would be initiated at one of the compute cores, and would trigger updates of all edges accessible from that compute core.}
\label{fig:sgp_inference_flow}
\end{figure}
\end{landscape}

\acks{Yves-Laurent is a Google Fellow in Machine Learning and would like to acknowledge support from the Oxford-Man Institute. Wharton Research Data Services (WRDS) was used in preparing the data for Section \ref{sct:spt_ml} of this paper. This service and the data available thereon constitute valuable intellectual property and trade secrets of WRDS and/or its third-party suppliers.}
\begin{appendices}
\renewcommand\thesection{Appendix \Alph{section}}
\section{}
We begin by recalling Kolmogorov's extension theorem, which we will use to prove the existence of \textit{derivative Gaussian processes} and \textit{string Gaussian processes}.

\begin{theorem}(Kolmogorov's extension theorem, \citep[Theorem 2.1.5]{oks})\\
Let $I$ be an interval, let all $t_1, \dots, t_i \in I, ~ i, n \in \mathbb{N}^{*}$, let $\nu_{t_1, \dots, t_i}$ be probability measures on $\mathbb{R}^{ni}$ such that:
\begin{equation}
\label{cond:perm}
\nu_{t_{\pi(1)}, \dots, t_{\pi(i)}}(F_{\pi(1)}, \dots, F_{\pi(i)}) = \nu_{t_1, \dots, t_i}(F_1, \dots, F_i) 
\end{equation}
for all permutations $\pi$ on $\{1, \dots, i\}$ and 
\begin{align}
\label{cond:margin}
&\nu_{t_1, \dots, t_i}(F_1, \dots, F_i) = \nu_{t_1, \dots, t_i, t_{i+1}, \dots, t_{i+m}}(F_1, \dots, F_i, \mathbb{R}^{n}, \dots, \mathbb{R}^{n}) 
\end{align}
for all $m \in \mathbb{N}^{*}$ where the set on the right hand side has a total of $i+m$ factors. Then there exists a probability space $(\Omega, \mathcal{F}, \mathbb{P})$ and an $\mathbb{R}^n$ valued stochastic process $(X_t)_{t \in I}$ on $\Omega$, \[X_t : \Omega \to \mathbb{R}^n\] such that 
\begin{equation}
\nu_{t_1, \dots, t_i}(F_1, \dots, F_i) = \mathbb{P}(X_{t_1} \in F_1, \dots, X_{t_i} \in F_i)
\end{equation}
for all $t_1, \dots, t_i \in I, ~ i \in \mathbb{N}^{*}$ and for all Borel sets $F_1, \dots, F_i$.
\end{theorem}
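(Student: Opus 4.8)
The plan is to realize the process on the canonical path space and define $\mathbb{P}$ first on cylinder sets, then extend by Carath\'eodory's theorem. First I would take $\Omega := (\mathbb{R}^n)^I$, the set of all functions $\omega : I \to \mathbb{R}^n$, and set $X_t(\omega) := \omega(t)$. Let $\mathcal{A}$ be the algebra of finite-dimensional cylinder sets, i.e. sets of the form $C = \{\omega : (\omega(t_1),\dots,\omega(t_i)) \in B\}$ with $t_1,\dots,t_i \in I$ distinct and $B$ a Borel subset of $\mathbb{R}^{ni}$, and let $\mathcal{F} := \sigma(\mathcal{A})$. On $\mathcal{A}$ I would put $\mathbb{P}(C) := \nu_{t_1,\dots,t_i}(B)$. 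A given cylinder admits many representations (reordering the indices, enlarging the index set by coordinates constrained only to lie in $\mathbb{R}^n$); hypotheses (\ref{cond:perm}) and (\ref{cond:margin}) are exactly what is needed to show all representations give the same value, and finite additivity of $\mathbb{P}$ on $\mathcal{A}$ together with $\mathbb{P}(\Omega)=1$ then follows by passing any two cylinders to a common finite index set.

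The crucial step, and the one I expect to be the main obstacle, is upgrading finite additivity to countable additivity on $\mathcal{A}$, equivalently continuity at $\emptyset$: if $A_m \in \mathcal{A}$ with $A_m \downarrow \emptyset$, then $\mathbb{P}(A_m) \to 0$. Suppose not, so $\mathbb{P}(A_m) \ge \varepsilon > 0$ for all $m$. Enumerate the countably many time points occurring in the $A_m$ as $t_1, t_2, \dots$, and by padding assume $A_m$ depends only on $t_1,\dots,t_m$. Since each $\nu_{t_1,\dots,t_m}$ is a Borel probability measure on the Polish space $\mathbb{R}^{nm}$, it is inner regular with respect to compact sets, so I can shrink the base of $A_m$ to a compact base, obtaining $K_m \in \mathcal{A}$ with $K_m \subseteq A_m$ and $\mathbb{P}(A_m \setminus K_m) < \varepsilon 2^{-m-1}$. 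Then $\tilde{K}_m := \bigcap_{j=1}^m K_j$ still satisfies $\mathbb{P}(\tilde{K}_m) \ge \varepsilon/2 > 0$, hence $\tilde{K}_m \ne \emptyset$; pick $\omega_m \in \tilde{K}_m$. For each fixed $\ell$ and all $m \ge \ell$ one has $(\omega_m(t_1),\dots,\omega_m(t_\ell)) \in \hat{K}_\ell$, the compact base of $K_\ell$. A diagonal extraction over $\ell$ then yields a subsequence along which $\omega_m(t_\ell)$ converges for every $\ell$; defining $\omega_\infty(t_\ell)$ as these limits and arbitrarily elsewhere, closedness of the $\hat{K}_\ell$ gives $\omega_\infty \in K_\ell \subseteq A_\ell$ for every $\ell$, contradicting $\bigcap_m A_m = \emptyset$.

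Finally, Carath\'eodory's extension theorem produces a (unique) probability measure $\mathbb{P}$ on $(\Omega,\mathcal{F})$ agreeing with the prescription on $\mathcal{A}$, and by construction, for Borel $F_1,\dots,F_i$, $\mathbb{P}(X_{t_1}\in F_1,\dots,X_{t_i}\in F_i) = \mathbb{P}(\{\omega : (\omega(t_1),\dots,\omega(t_i)) \in F_1\times\dots\times F_i\}) = \nu_{t_1,\dots,t_i}(F_1\times\dots\times F_i)$, which is the asserted identity. Everything outside the continuity-at-$\emptyset$ step is formal bookkeeping powered directly by the two consistency hypotheses; the only genuinely delicate ingredients are the inner regularity of finite-dimensional Borel measures on $\mathbb{R}^{nm}$ and the compactness/diagonalization argument used to manufacture the common point $\omega_\infty$.
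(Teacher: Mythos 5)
The paper does not actually prove this statement: it is quoted as Theorem 2.1.5 of \cite{oks} and used as a black box (the surrounding text only remarks that the two consistency conditions are necessary for any process and that the theorem asserts their sufficiency), so there is no internal argument to compare yours against. Your proposal is a correct rendition of the standard proof of the Kolmogorov extension theorem: realization on the canonical product space $(\mathbb{R}^n)^I$ with coordinate maps, a set function on the cylinder algebra whose well-definedness across representations is exactly what the permutation and marginalisation hypotheses guarantee, countable additivity via continuity at $\emptyset$ using inner regularity of Borel probability measures on $\mathbb{R}^{nm}$ together with a compactness/diagonal extraction, and finally Carath\'eodory's extension. Two details are worth spelling out if you write this in full. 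First, after padding $K_j$ to depend on $t_1,\dots,t_m$ its base becomes $\hat{K}_j \times \mathbb{R}^{n(m-j)}$, which is closed but no longer compact; the base of $\tilde{K}_m$ is nevertheless compact because it is a closed subset of the compact set $\hat{K}_m$ contributed by the $j=m$ factor, which is what the diagonal argument actually needs. Second, the bound $\mathbb{P}(\tilde{K}_m) \ge \varepsilon/2$ rests on the inclusion $A_m \setminus \tilde{K}_m \subseteq \bigcup_{j \le m}(A_j \setminus K_j)$, which uses the monotonicity $A_m \subseteq A_j$ for $j \le m$. Both points are routine, and your identification of continuity at $\emptyset$ as the only genuinely delicate step is accurate.
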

It is easy to see that every stochastic process satisfies the permutation and marginalisation conditions ($\ref{cond:perm}$) and ($\ref{cond:margin}$). The power of Kolmogorov's extension theorem is that it states that those two conditions are sufficient to guarantee the existence of a stochastic process. 
%
%
%
%
%
%
%
%
%
\section{Proof of Proposition \ref{prop:derivative_processes}}
\label{app:derivative_processes}
In this section we prove Proposition \ref{prop:derivative_processes}, which we recall below.\\\\
\textbf{Proposition \ref{prop:derivative_processes} (Derivative Gaussian processes)}\\
Let $I$ be an interval, $k: I \times I \rightarrow \mathbb{R}$ a $\mathcal{C}^2$ symmetric positive semi-definite function,\footnote{$\mathcal{C}^1$ (resp. $\mathcal{C}^2$) functions denote functions that are once (resp. twice) continuously differentiable on their domains.} $m: I \rightarrow \mathbb{R}$ a $\mathcal{C}^1$ function.

\noindent (A) There exists a $\mathbb{R}^2$-valued stochastic process $\left(D_t\right)_{t \in I}, ~D_t=(z_t, z_t^\prime)$, such that for all $t_1, \dots, t_n \in I$, 
$$(z_{t_1}, \dots, z_{t_n}, z_{t_1}^\prime, \dots, z_{t_n}^\prime)$$ is a Gaussian vector with mean 
$$\left(m(t_1), \dots, m(t_n), \frac{\text{d}m}{\text{dt}}(t_1), \dots, \frac{\text{d}m}{\text{dt}}(t_n)\right)$$ and covariance matrix such that 
$$\text{cov}(z_{t_i}, z_{t_j})=k(t_i, t_j), ~~~ \text{cov}(z_{t_i}, z_{t_j}^\prime)=\frac{\partial k}{\partial y }(t_i, t_j), ~~~\text{and}~~~ \text{cov}(z_{t_i}^\prime, z_{t_j}^\prime)=\frac{\partial^2 k}{\partial x \partial y }(t_i, t_j).$$ 
We herein refer to $(D_t)_{t \in I}$ as a \textbf{derivative Gaussian process}.

\noindent (B) $(z_t)_{t \in I}$ is a Gaussian process with mean function $m$, covariance function $k$ and that is $\mathcal{C}^1$ in the $L^2$ (mean square) sense. 

\noindent (C) $(z^\prime_t)_{t \in I}$ is a Gaussian process with mean function $\frac{\text{d}m}{\text{dt}}$ and covariance function $\frac{\partial^2 k}{\partial x \partial y }$. Moreover, $(z^\prime_t)_{t \in I}$ is the $L^2$ derivative of the process $(z_t)_{t \in I}$.\\

\begin{proof}
%
%
%
%
%
%
%
%
%
\subsection{Proof of Proposition \ref{prop:derivative_processes} (A)} Firstly, we need to show that the matrix suggested in the proposition as the covariance matrix of $(z_{t_1}, \dots, z_{t_n}, z_{t_1}^\prime, \dots, z_{t_n}^\prime)$ is indeed positive semi-definite. To do so, we will show that it is the limit of positive definite matrices (which is sufficient to conclude it is positive semi-definite, as $x^TM_nx \ge 0$ for a convergent sequence of positive definite matrices implies $ x^T M_{\infty} x \ge 0 $). 

Let $k$ be as in the proposition, $h$ such that $\forall i \leq n, ~ t_i +h \in I$ and $(\tilde{z}_t)_{t \in I}$ be a Gaussian process with covariance function $k$. The vector $$\left(\tilde{z}_{t_1}, \dots, \tilde{z}_{t_n}, \frac{\tilde{z}_{t_1+h}-\tilde{z}_{t_1}}{h}, \dots, \frac{\tilde{z}_{t_n+h}-\tilde{z}_{t_n}}{h}\right)$$ is a Gaussian vector whose covariance matrix is positive definite and such that 
\begin{align}
\label{eq:p1}
\text{cov}\left(\tilde{z}_{t_i}, \tilde{z}_{t_j}\right) = k(t_i, t_j),
\end{align}
\begin{align}
\label{eq:p2}
\text{cov}\left(\tilde{z}_{t_i}, \frac{\tilde{z}_{t_j+h}-\tilde{z}_{t_j}}{h}\right)=\frac{k(t_i, t_j + h)-k(t_i, t_j)}{h},
\end{align}
and
\begin{align}
\label{eq:p3}
&\text{cov}\left(\frac{\tilde{z}_{t_i+h}-\tilde{z}_{t_i}}{h}, \frac{\tilde{z}_{t_j+h}-\tilde{z}_{t_j}}{h}\right) \nonumber \\
&=\frac{1}{h^2 }\left(k(t_i + h, t_j + h)-k(t_i + h, t_j) -k(t_i, t_j+ h) + k(t_i,t_j)\right).
\end{align}
As $k$ is $\mathcal{C}^2$, $h \to k(x, y+h)$ admits a second order Taylor expansion about $h=0$ for every $x$, and we have:
\begin{align}
\label{eq:tay1}
k(x, y + h) &= k(x, y) + \frac{\partial k}{\partial y}(x, y) h + \frac{1}{2} \frac{\partial^2 k}{\partial y^2}(x, y) h^2 + o(h^2) = k(y + h, x).
\end{align}
Similarly, $h \to  k(x+h, y+h)$ admits a second order Taylor expansion about $h=0$ for every $x, ~y$ and we have:
\begin{align}
\label{eq:tay2}
k(x + h, y + h) =& ~k(x, y) + \left[\frac{\partial k}{\partial x}(x, y) + \frac{\partial k}{\partial y}(x, y)\right]h + \Bigg[\frac{\partial^2 k}{\partial x \partial y}(x, y) + \frac{1}{2} \frac{\partial^2 k}{\partial x^2}(x, y) \nonumber \\ 
&+\frac{1}{2} \frac{\partial^2 k}{\partial y^2}(x, y)\Bigg]h^2  + o(h^2).
\end{align}
Hence, \begin{align}
\label{eq:p4}
k(t_i, t_j + h)-k(t_i, t_j) = \frac{\partial k}{\partial y} (t_i, t_j)h + o(h),
\end{align} and 
\begin{align}
\label{eq:p5}
&k(t_i + h, t_j + h)-k(t_i + h, t_j) -k(t_i, t_j+ h)+k(t_i,t_j)  = \frac{\partial^2 k}{\partial x \partial y}(t_i, t_j) h^2 +o(h^2).
\end{align}
Dividing Equation (\ref{eq:p4}) by $h$, dividing Equation (\ref{eq:p5}) by $h^2$, and taking the limits, we obtain:
\[\underset{h \to 0}{\lim}~\text{cov}\left(\tilde{z}_{t_i}, \frac{\tilde{z}_{t_j+h}-\tilde{z}_{t_j}}{h}\right)=\frac{\partial k}{\partial y} (t_i, t_j),\] and 
\[\lim_{h \to 0}~\text{cov}\left(\frac{\tilde{z}_{t_i+h}-\tilde{z}_{t_i}}{h}, \frac{\tilde{z}_{t_j+h}-\tilde{z}_{t_j}}{h}\right) = \frac{\partial^2 k}{\partial x \partial y}(t_i, t_j), \]
which corresponds to the covariance structure of Proposition \ref{prop:derivative_processes}. In other words the proposed covariance structure is indeed positive semi-definite.

Let $\nu^{\mathcal{N}}_{t_1, \dots, t_n}$ be the Gaussian probability measure corresponding to the joint distribution of $(z_{t_1}, \dots, z_{t_n}, z_{t_1}^\prime, \dots, z_{t_n}^\prime)$  as per the Proposition \ref{prop:derivative_processes}, and let $\nu^{D}_{t_1, \dots, t_n}$ be the measure on the Borel $\sigma$-algebra $\mathcal{B}(\underbrace{\mathbb{R}^{2} \times \dots \times \mathbb{R}^{2})}_{n \text{ times }}$ such that for any $2n$ intervals $I_{11}, I_{12}, \dots, I_{n1}, I_{n2}$, 
\begin{align}
&\nu^{D}_{t_1, \dots, t_n}(I_{11}\times I_{12}, \dots, I_{n1} \times I_{n2}) :=\nu^{\mathcal{N}}_{t_1, \dots, t_n}(I_{11}, \dots, I_{n1}, I_{12}, \dots, I_{n2}).
\end{align} The measures $\nu^D_{t_1, \dots, t_n}$ are the finite dimensional measures corresponding to the stochastic object $(D_t)_{t \in I}$ sampled at times $t_1, \dots, t_n$. They satisfy the time permutation and marginalisation conditions of Kolmogorov's extension theorem as the Gaussian measures $\nu^{\mathcal{N}}_{t_1, \dots, t_n}$ do. Hence, the $\mathbb{R}^2$-valued stochastic process $(D_t)_{t \in I}$ defined in Proposition \ref{prop:derivative_processes} does exist.
%
%
%
%
%
%
%
%
%
\subsection{Proof of Proposition \ref{prop:derivative_processes} (B)} That $(z_t)_{t \in I}$ is a Gaussian process results from the fact that the marginals $(z_{t_1}, \dots, z_{t_n})$ are Gaussian vectors with mean $(m(t_1), \dots, m(t_n))$ and covariance matrix $[k(t_i, t_j)]_{i,j \in [1..n]}$. The fact that $(z_t)_{t \in I}$ is $\mathcal{C}^1$ in the $L^2$ sense is a direct consequence of the twice continuous differentiability of $k$.
%
%
%
%
%
%
%
%
%
\subsection{Proof of Proposition \ref{prop:derivative_processes} (C)}  In effect, it follows from  Proposition \ref{prop:derivative_processes}(A) that $\frac{z_{t+h}-z_t}{h}-z^\prime_t$ is a Gaussian random variable with mean 
$$\frac{m(t+h)-m(t)}{h}- \frac{\text{d} m}{\text{dt}}(t)$$ 
and variance
$$\frac{k(t+h, t+h) -2k(t+h, t) + k(t, t)-2\frac{\partial k}{\partial y}(t+h, t)h+2\frac{\partial k}{\partial y}(t, t)h + \frac{\partial^2 k}{\partial x \partial y}(t, t) h^2}{h^2}.$$
Taking the second order Taylor expansion of the numerator in the fraction above about $h=0$ we get $o(h^2)$, hence 
$$\underset{h \to 0}{\lim}~ \text{Var}\left(\frac{z_{t+h}-z_t}{h}-z^\prime_t\right)=0.$$  
We also have  
$$\underset{h \to 0}{\lim}~ \text{E}\left(\frac{z_{t+h}-z_t}{h}-z^\prime_t\right)= \frac{\text{d}m}{dt}(t) - \text{E}(z^\prime_t)  = 0.$$
Therefore, 
$$\underset{h \to 0}{\lim}~ \text{E}\left[\left(\frac{z_{t+h}-z_t}{h}-z^\prime_t\right)^2\right]=0,$$
which proves that $(z^\prime_t)$ is the $L^2$ derivative of $(z_t)$. The fact that $(z^\prime_t)$ is a Gaussian process with mean function $\frac{\text{d}m}{dt}$ and covariance function $\frac{\partial^2 k}{\partial x \partial y}$ is a direct consequence of the distribution of the marginals $(z_{t_1}^\prime, \dots, z_{t_n}^\prime)$. Moreover, the continuity of $(z^\prime_t)$ in the $L^2$ sense is a direct consequence of the continuity of $\frac{\partial^2 k}{\partial x \partial y}$ \citep[see][p. 81 4.1.1]{rasswill}.
\end{proof}
%
%
%
%
%
%
%
%
%
\section{Proof of Theorem \ref{theo:sgp}}
\label{app:sgp}
%
%
%
%
%
%
%
%
%
In this section we prove Theorem \ref{theo:sgp} which we recall below.\\\\
\textbf{Theorem  \ref{theo:sgp} (String Gaussian process)}\\Let $a_0<\dots<a_k< \dots<a_K$, $I=[a_0, a_K]$ and let $p_\mathcal{N}(x; \mu, \Sigma)$ be the multivariate Gaussian density with mean vector $\mu$ and covariance matrix $\Sigma$. Furthermore, let $(m_k:  [a_{k-1}, a_k] \to \mathbb{R})_{k \in [1..K]}$ be $\mathcal{C}^1$ functions, and   $(k_k: [a_{k-1}, a_k] \times  [a_{k-1}, a_k]\to \mathbb{R})_{k \in [1..K]}$ be $\mathcal{C}^3$ symmetric positive semi-definite functions, neither degenerate at $a_{k-1}$, nor degenerate at $a_k$ given $a_{k-1}$.

\noindent (A) There exists an $\mathbb{R}^2$-valued stochastic process $(SD_{t})_{t \in I}, ~ SD_t=(z_t, z_t^\prime)$  satisfying the following conditions:

\noindent 1) The probability density of $(SD_{a_0}, \dots, SD_{a_K})$ reads: 
\begin{equation*}
p_{b}(x_0, \dots, x_K) := \prod_{k=0}^K p_\mathcal{N}\left(x_k;  \mu^b_k, \Sigma_k^b\right)
\end{equation*}
\begin{equation*}
\text{where:}~~~~
\Sigma_0^b = {}_1\textbf{K}_{a_0; a_0}, ~~
\forall ~ k>0 ~~~\Sigma_k^b = {}_k\textbf{K}_{a_k; a_k} - {}_k\textbf{K}_{a_k; a_{k-1}}~{}_k\textbf{K}_{a_{k-1}; a_{k-1}}^{-1}~{}_k\textbf{K}_{a_k; a_{k-1}}^T,
\end{equation*}
\begin{equation*}
\mu_0^b={}_1\textbf{M}_{a_0}, ~~
\forall ~ k>0 ~~~\mu^b_k={}_k\textbf{M}_{a_k} + {}_k\textbf{K}_{a_k; a_{k-1}}~{}_k\textbf{K}_{a_{k-1}; a_{k-1}}^{-1}(x_{k-1}-{}_k\textbf{M}_{a_{k-1}}), 
\end{equation*}
\[
\text{with}~~~~{}_k\textbf{K}_{u;v} = 
\begin{bmatrix} 
k_k(u, v) & \frac{\partial k_k}{\partial y}(u, v) \\
\frac{\partial k_k}{\partial x}(u, v)  & \frac{\partial^2 k_k}{\partial x \partial y}(u, v) 
\end{bmatrix}, ~~~~\text{and}~~~~
{}_k\textbf{M}_u =\begin{bmatrix} m_k(u) \\ \frac{d m_k}{dt}(u) \end{bmatrix}.
\]

\noindent 2) Conditional on $(SD_{a_k} = x_k)_{k \in [0..K]}$, the restrictions $(SD_{t})_{t \in  ]a_{k-1}, a_k[},~ k \in [1..K]$ are \textbf{independent conditional derivative Gaussian processes}, respectively with unconditional mean function $m_k$ and unconditional covariance function $k_k$ and that are conditioned to take values $x_{k-1}$ and $x_k$ at $a_{k-1}$ and $a_k$ respectively. We refer to $(SD_{t})_{t \in I}$  as a \textbf{string derivative Gaussian process}, and to its first coordinate $(z_{t})_{t \in I}$ as a \textbf{string Gaussian process} namely, \[(z_{t})_{t \in I} \sim \mathcal{SGP}(\{a_k\}, \{m_k\}, \{k_k\}).\]

\noindent (B) The \textbf{string Gaussian process} $(z_t)_{t \in I}$ defined in (A) is $\mathcal{C}^1$ in the $L^2$ sense and its $L^2$ derivative is the process $(z_t^\prime)_{t \in I}$ defined in (A).\\

\begin{proof}
\subsection{Proof of Theorem \ref{theo:sgp} (A)} We will once again turn to Kolmogorov's extension theorem to prove the existence of the stochastic process $(SD_t)_{t \in I}$. The core of the proof is in the finite dimensional measures implied by Theorem \ref{theo:sgp} (A-1) and (A-2). Let $\left\{t^k_i \in ]a_{k-1}, a_k[ \right\}_{i \in [1..N_k], k \in [1..K]}$ be $n$ times. We first formally construct the finite dimensional measures implied by Theorem \ref{theo:sgp} (A-1) and (A-2), and then verify that they satisfy the conditions of Kolmogorov's extension theorem.

\noindent Let us define the measure $\nu^{SD}_{t_1^1,\dots, t_{N_1}^1, \dots, t_1^K,\dots, t_{N_K}^K, a_0, \dots, a_K}$ as the probability measure having density with respect to the Lebesgue measure on $\mathcal{B}(\underbrace{\mathbb{R}^{2} \times \dots \times \mathbb{R}^{2})}_{1+n+K \text{ times }}$ that reads:
\begin{align}
\label{eq:proof_density}
 p_{SD}(x_{t_1^1}, \dots, x_{t_{N_1}^1}, \dots, x_{t_1^K}, \dots, x_{t_{N_K}^K}, x_{a_0}, \dots, x_{a_K}) =&p_{b}(x_{a_0}, \dots, x_{a_K}) \times \nonumber \\  
 & \prod_{k=1}^{K} p^{x_{a_{k-1}}, x_{a_k}}_{\mathcal{N}}(x_{t_1^k}, \dots, x_{t_{N_k}^k})
 \end{align}
where $p_{b}$ is as per Theorem \ref{theo:sgp} (A-1) and $p^{x_{a_{k-1}}, x_{a_k}}_{\mathcal{N}}(x_{t_1^k}, \dots, x_{t_{N_k}^k})$ is the (Gaussian) pdf of the joint distribution of the values at times $\{t_i^k \in ]a_{k-1}, a_k[\}$ of the \textit{conditional derivative Gaussian process} with unconditional mean functions $m_k$ and unconditional covariance functions $k_k$ that is conditioned to take values $x_{a_{k-1}}=(z_{a_{k-1}}, z^\prime_{a_{k-1}})$ and $x_{a_k}=(z_{a_{k}}, z^\prime_{a_{k}})$ at times $a_{k-1}$ and $a_k$ respectively (the corresponding---conditional---mean and covariance functions are derived from Equations (\ref{eq:double_cond_mean} and \ref{eq:double_cond_cov}). Let us extend the family of measures $\nu^{SD}$ to cases where some or all boundary times $a_k$ are missing, by integrating out the corresponding variables in Equation (\ref{eq:proof_density}). For instance when $a_0$ and $a_1$ are missing,
\begin{align}
\label{eq:margin}
&\nu^{SD}_{t_1^1,\dots, t_{N_1}^1, \dots, t_1^K,\dots, t_{N_K}^K, a_2, \dots, a_K}(T_1^1,\dots, T_{N_1}^1, \dots, T_1^K,\dots, T_{N_K}^K, A_2, \dots, A_K) \nonumber \\
&  := \nu^{SD}_{t_1^1,\dots, t_{N_1}^1, \dots, t_1^K,\dots, t_{N_K}^K, a_0, \dots, a_K}(T_1^1,\dots, T_{N_1}^1, \dots, T_1^K,\dots, T_{N_K}^K, \mathbb{R}^2, \mathbb{R}^2, A_2, \dots, A_K) \nonumber \\
 \end{align}
where $A_i$ and $T^i_j$ are rectangle in $\mathbb{R}^2$. Finally, we extend the family of measures $\nu^{SD}$ to any arbitrary set of indices $\{t_1, \dots, t_n\}$ as follows:
\begin{align}
\label{eq:ext_perm}
\nu^{SD}_{t_1, \dots, t_n}(T_1, \dots, T_n):=\nu^{SD}_{t_{\pi^{*}(1)}, \dots, t_{\pi^{*}(n)}}(T_{\pi^{*}(1)}, \dots, T_{\pi^{*}(n)}),
\end{align}
where $\pi^{*}$ is a permutation of $\{1, \dots, n\}$ such that $\{t_{\pi^{*}(1)}, \dots, t_{\pi^{*}(n)}\}$ verify the following conditions:
\begin{enumerate}
\item $\forall ~i, ~ j, ~\text{if } t_i \in ]a_{k_1 -1}, a_{k_1}[, ~ t_j \in ]a_{k_2 -1}, a_{k_2}[, \text{ and } k_1 < k_2,$ then $\text{Idx}(t_i) < \text{Idx}(t_j).$ Where $\text{Idx}(t_i)$ stands for the index of $t_i$ in $\{t_{\pi^{*}(1)}, \dots, t_{\pi^{*}(n)}\}$;
\item \text{if } $t_i \notin \{a_0, \dots, a_K\} \text{ and } t_j \in \{a_0, \dots, a_K\}$ then $\text{Idx}(t_i) < \text{Idx}(t_j);$
\item \text{if } $t_i \in \{a_0, \dots, a_K\} \text{ and } t_j \in \{a_0, \dots, a_K\}$ then $\text{Idx}(t_i) < \text{Idx}(t_j)$ if and only if $t_i < t_j.$
\end{enumerate}
Any such measure $\nu^{SD}_{t_{\pi^{*}(1)}, \dots, t_{\pi^{*}(n)}}$ will fall in the category of either Equation (\ref{eq:proof_density}) or Equation (\ref{eq:margin}). Although $\pi^{*}$ is not unique, any two permutations satisfying the above conditions will only differ by a permutation of times belonging to the same string interval $]a_{k-1}, a_k[$.  Moreover, it follows from Equations (\ref{eq:proof_density}) and (\ref{eq:margin}) that the measures $\nu^{SD}_{t_{\pi^{*}(1)}, \dots, t_{\pi^{*}(n)}}$ are invariant by permutation of times belonging to the same string interval $]a_{k-1}, a_k[$, and as a result any two $\pi^{*}$ satisfying the above conditions will yield the same probability measure.

The finite dimensional probability measures $\nu^{SD}_{t_1, \dots, t_n}$ are the measures implied by Theorem \ref{theo:sgp}. The permutation condition (\ref{cond:perm}) of Kolmogorov's extension theorem is met by virtue of Equation (\ref{eq:ext_perm}). In effect for every permutation $\pi$ of $\{1, \dots, n\}$, if we let $\pi^\prime: ~ \{\pi(1), \dots, \pi(n)\} \to \{\pi^{*}(1), \dots, \pi^{*}(n)\}$, then
\begin{align}
\nu^{SD}_{t_{\pi(1)}, \dots, t_{\pi(n)}}(T_{\pi(1)}, \dots, T_{\pi(n)}) &:= \nu^{SD}_{t_{\pi^{\prime}\circ\pi(1)}, \dots, t_{\pi^{\prime}\circ\pi(n)}}(T_{\pi^{\prime}\circ\pi(1)}, \dots, T_{\pi^{\prime}\circ\pi(n)}) \nonumber \\
& = \nu^{SD}_{t_{\pi^{*}(1)}, \dots, t_{\pi^{*}(n)}}(T_{\pi^{*}(1)}, \dots, T_{\pi^{*}(n)}) \nonumber \\
& = \nu^{SD}_{t_1, \dots, t_n}(T_1, \dots, T_n) \nonumber.
\end{align}
As for the marginalisation condition (\ref{cond:margin}), it is met for every boundary time by virtue of how we extended $\nu^{SD}$ to missing boundary times. All we need to prove now is that the marginalisation condition is also met at any non-boundary time. To do so, it is sufficient to prove that the marginalisation condition holds for $t_1^1$, that is:
\begin{align}
\label{eq:to_proove}
&\nu^{SD}_{t_1^1,\dots, t_{N_1}^1, \dots, t_1^K,\dots, t_{N_K}^K, a_0, \dots, a_K}(\mathbb{R}^2, T_2^1,\dots, T_{N_1}^1, \dots, T_1^K,\dots, T_{N_K}^K, A_0, \dots, A_K)  \nonumber \\
& = \nu^{SD}_{t_2^1,\dots, t_{N_1}^1, \dots, t_1^K,\dots, t_{N_K}^K, a_0, \dots, a_K}(T_2^1,\dots, T_{N_1}^1, \dots, T_1^K,\dots, T_{N_K}^K, A_0, \dots, A_K) \nonumber \\
 \end{align}
 for every rectangles $A_i$ and $T^i_j$ in $\mathbb{R}^2$. In effect, cases where some boundary times are missing are special cases with the corresponding rectangles $A_j$ set to $\mathbb{R}^2$. Moreover, if we prove Equation (\ref{eq:to_proove}), the permutation property (\ref{cond:perm}) will allow us to conclude that the marginalisation also holds true for any other (single) non-boundary time. Furthermore, if Equation (\ref{eq:to_proove}) holds true, it can be shown that the marginalisation condition will also hold over multiple non-boundary times by using the permutation property (\ref{cond:perm}) and marginalising one non-boundary time after another.

By Fubini's theorem, and considering Equation (\ref{eq:proof_density}), showing that Equation (\ref{eq:to_proove}) holds true is equivalent to showing that: 
\begin{align}
& \int_{\mathbb{R}^2} p^{x_{a_0}, x_{a_1}}_{\mathcal{N}}(x_{t_1^1}, \dots, x_{t_{N_1}^1}) dx_{t_1^1} =  p^{x_{a_{0}}, x_{a_1}}_{\mathcal{N}}(x_{t_2^1}, \dots, x_{t_{N_1}^1}) \end{align} which holds true as $p^{x_{a_{0}}, x_{a_1}}_{\mathcal{N}}(x_{t_1^1}, \dots, x_{t_{N_1}^1})$ is a multivariate Gaussian density, and the corresponding marginal is indeed the density of the same \textit{conditional derivative Gaussian process} at times $t_2^1, \dots, t_{N_1}^1$.

This concludes the proof of the existence of the stochastic process $(SD_t)_{t \in I}$.
%
%
%
%
%
%
%
%
%
\subsection{Proof of Theorem \ref{theo:sgp} (B)} 

As conditional on boundary conditions the restriction of a \textit{string derivative Gaussian process} on a string interval $[a_{k-1}, a_k]$ is a \textit{derivative Gaussian process}, it follows from Proposition \ref{prop:derivative_processes} (C) that
\begin{align}
&\forall ~ \tilde{x}_{a_0}, \dots, \tilde{x}_{a_K}, ~ \forall ~ t, t+h \in [a_{k-1}, a_k],\nonumber \\
&\lim_{h \to 0}~ \text{E}\left( \left[\frac{z_{t+h}-z_t}{h} - z_t^\prime\right]^2 \bigg\vert x_{a_0} = \tilde{x}_{a_0}, \dots,  x_{a_K} = \tilde{x}_{a_K}\right) = 0,
\end{align}
or equivalently that:
\begin{align}
\label{eq:miss1}
 \Delta z_h := \text{E}\left( \left[\frac{z_{t+h}-z_t}{h} - z_t^\prime\right]^2 \bigg\vert x_{a_0}, \dots,  x_{a_K}\right) \underset{h \to 0}{\overset{a.s.}{\longrightarrow}} 0.
\end{align}
Moreover, 
\begin{equation}
\label{eq:dzh}
\Delta z_h  = \text{Var}\left(\frac{z_{t+h}-z_t}{h} - z_t^\prime \bigg\vert x_{a_0}, \dots,  x_{a_K}\right) + \text{E}\left(\frac{z_{t+h}-z_t}{h} - z_t^\prime \bigg\vert x_{a_0}, \dots,  x_{a_K}\right)^2.
\end{equation}
As both terms in the sum of the above equation are non-negative, it follows that 
\[\text{Var}\left(\frac{z_{t+h}-z_t}{h} - z_t^\prime \bigg\vert x_{a_0}, \dots,  x_{a_K}\right) \underset{h \to 0}{\overset{a.s.}{\longrightarrow}} 0 ~~~~\text{and}~~~~ \text{E}\left(\frac{z_{t+h}-z_t}{h} - z_t^\prime \bigg\vert x_{a_0}, \dots,  x_{a_K}\right)^2 \underset{h \to 0}{\overset{a.s.}{\longrightarrow}} 0.
\]
From which we deduce 
$$\text{E}\left(\frac{z_{t+h}-z_t}{h} - z_t^\prime \Big\vert x_{a_0}, \dots,  x_{a_K}\right) \underset{h \to 0}{\overset{a.s.}{\longrightarrow}} 0.$$ As $\text{E}\left(\frac{z_{t+h}-z_t}{h} - z_t^\prime \Big\vert x_{a_0}, \dots,  x_{a_K}\right)$
depends linearly on the boundary conditions, and as the boundary conditions are jointly-Gaussian (see \ref{app:is_gp} step 1), it follows that $$\text{E}\left(\frac{z_{t+h}-z_t}{h} - z_t^\prime \Big\vert x_{a_0}, \dots,  x_{a_K}\right)$$ is Gaussian. Finally we note that $$\text{Var}\left(\frac{z_{t+h}-z_t}{h} - z_t^\prime \Big\vert x_{a_0}, \dots,  x_{a_K}\right)$$ does not depend on the values of the boundary conditions $x_{a_k}$ (but rather on the boundary times), and we recall that convergence almost sure of Gaussian random variables implies convergence in $L^2$. Hence, taking the expectation on both side of Equation (\ref{eq:dzh}) and then the limit as $h$ goes to $0$ we get 
\[\text{E}\left( \left[\frac{z_{t+h}-z_t}{h} - z_t^\prime \right]^2 \right) = \text{E}(\Delta z_h) \underset{h \to 0}{\longrightarrow} 0,\]
which proves that the \textit{string GP} $(z_t)_{t \in I}$ is differentiable in the $L^2$ sense on $I$ and has derivative $(z_t^\prime)_{t \in I}$.
 
We prove the continuity in the $L^2$ sense of $(z_t^\prime)_{t \in I}$ in a similar fashion, noting that conditional on the boundary conditions, $(z^\prime_t)_{t \in I}$ is a Gaussian process whose mean function $\frac{d m_{ck}^{a_{k-1}, a_k}}{dt}$ and covariance function $\frac{\partial^2 k_{ck}^{a_{k-1}, a_k}}{\partial x \partial y}$ are continuous, thus is continuous in the $L^2$ sense on $[a_{k-1}, a_k]$ (conditional on the boundary conditions). We therefore have that:
 \begin{align}
&\forall ~ \tilde{x}_{a_0}, \dots, \tilde{x}_{a_K}, ~\forall ~ t, t+h \in [a_{k-1}, a_k], ~ \lim_{h \to 0} \text{E}\left((z^\prime_{t+h} - z_t^\prime)^2 \big\vert x_{a_0} = \tilde{x}_{a_0}, \dots,  x_{a_K} = \tilde{x}_{a_K}\right) = 0,
\end{align} from which we get that:
\begin{align}
\label{eq:miss2}
\Delta z_h^\prime := \text{E}\left( \left[z^\prime_{t+h} - z_t^\prime \right]^2 \big\vert x_{a_0}, \dots,  x_{a_K}\right) \underset{h \to 0}{\overset{a.s.}{\longrightarrow}} 0.
\end{align}
Moreover, 
\begin{equation}
\label{eq:dzhp}
\Delta z_h^\prime = \text{Var}\left( z^\prime_{t+h} - z_t^\prime \big\vert x_{a_0}, \dots,  x_{a_K}\right) + \text{E}\left( z^\prime_{t+h} - z_t^\prime \big\vert x_{a_0}, \dots,  x_{a_K}\right)^2,
\end{equation}
which implies that $$\text{Var}\left( z^\prime_{t+h} - z_t^\prime \big\vert x_{a_0}, \dots,  x_{a_K}\right) \underset{h \to 0}{\overset{a.s.}{\longrightarrow}} 0$$ and $$\text{E}\left( z^\prime_{t+h} - z_t^\prime \big\vert x_{a_0}, \dots,  x_{a_K}\right)^2 \underset{h \to 0}{\overset{a.s.}{\longrightarrow}} 0,$$ as both terms in the sum in Equation (\ref{eq:dzhp}) are non-negative. Finally, $$\text{Var}\left( z^\prime_{t+h} - z_t^\prime \big\vert x_{a_0}, \dots,  x_{a_K}\right)$$ does not depend on the values of the boundary conditions, and $$\text{E}\left( z^\prime_{t+h} - z_t^\prime \big\vert x_{a_0}, \dots,  x_{a_K}\right)$$ is Gaussian for the same reason as before. Hence, taking the expectation on both sides of Equation (\ref{eq:dzhp}), we get that 
\[\text{E}\left( \left[z_{t+h}^\prime- z_t^\prime \right]^2 \right) = \text{E}(\Delta z_h^\prime) \underset{h \to 0}{\longrightarrow} 0,
\]
which proves that $(z_t^\prime)$ is continuous in the $L^2$ sense.
\end{proof}
%
%
%
%
%
%
%
%
%
\section{Proof of the Condition for Pathwise Regularity Upgrade of \textit{String GPs} from $L^2$}
\label{app:path_reg}
In this section we prove that a sufficient condition for the process $(z_t^\prime)_{t \in I}$ in Theorem \ref{theo:sgp} to be almost surely continuous and to be the almost sure derivative of the string Gaussian process $(z_t)_{t \in I}$, is that the Gaussian processes on $I_k=[a_{k-1}, a_k]$ with mean and covariance functions $m_{ck}^{a_{k-1}, a_k}$ and $k^{a_{k-1}, a_k}_{ck}$ (as per Equations \ref{eq:double_cond_mean} and \ref{eq:double_cond_cov} with $m:= m_k$ and $k:= k_k$) are themselves almost surely $\mathcal{C}^1$ for every boundary condition. 

Firstly we note that the above condition guarantees that the result holds at non-boundary times. As for boundary times, the condition implies that the \textit{string GP} is almost surely right differentiable (resp. left differentiable) at every left (resp. right) boundary time, including $a_0$ and $a_K$. Moreover, the \textit{string GP} being differentiable in $L^2$, the right hand side and left hand side almost sure derivatives are the same, and are equal to the $L^2$ derivative, which proves that the $L^2$ derivatives at inner boundary times are also in the almost sure sense. A similar argument holds to conclude that the right (resp. left) hand side derivative at $a_0$ (resp. $a_K$) is also in the almost sure sense. Moreover, the derivative process $(z_t^\prime)_{t \in I}$ admits an almost sure right hand side limit and an almost sure left hand side limit at every inner boundary time and both are equal as the derivative is continuous in $L^2$, which proves its almost sure continuity at inner boundary times. Almost sure continuity of $(z_t^\prime)_{t \in I}$ on the right (resp. left) of $a_0$ (resp. $a_K$) is a direct consequence of the above condition.
%
%
%
%
%
%
%
%
%
\section{Proof of Proposition \ref{prop:diversity}}
\label{app:diversity}
In this section, we prove Proposition \ref{prop:diversity}, which we recall below.\\\\
\textbf{Proposition \ref{prop:diversity}} \textbf{(Additively separable \emph{string GPs} are flexible)}\\
Let $k(x, y) := \rho\left(\vert\vert x-y \vert\vert^2_{{L^2}}\right)$ be a stationary covariance function generating a.s. $\mathcal{C}^1$ GP paths indexed on $\mathbb{R}^d, ~ d>0$, and $\rho$ a function that is $\mathcal{C}^2$ on $]0, +\infty[$ and continuous at $0$. Let $\phi_s(x_1, \dots, x_d)=\sum_{j=1}^d x_j$, let $(z_t^j)_{t \in I^j, ~ j \in [1..d]}$ be independent stationary Gaussian processes with mean $0$ and covariance function $k$ (where the $L^2$ norm is on $\mathbb{R}$), and let $f(t_1, \dots, t_d)=\phi_s(z_{t_1}^1, \dots, z_{t_d}^d)$ be the corresponding stationary string GP. Finally, let $g$ be an isotropic Gaussian process indexed on $I^1 \times \dots \times I^d$ with mean 0 and covariance function $k$ (where the $L^2$ norm is on $\mathbb{R}^d$). Then: \\
1)$~\forall ~ x \in I^1 \times \dots \times I^d, ~ H(\nabla f(x)) = H(\nabla g(x))$,\\
2)$~\forall ~ x \neq y \in I^1 \times \dots \times I^d, ~ I(\nabla f(x); \nabla f(y)) \leq I(\nabla g(x); \nabla g(y))$.\\ \\
To prove Proposition \ref{prop:diversity} we need a lemma, which we state and prove below.
\begin{lemma}
\label{lem2}
Let $X_n$ be a sequence of Gaussian random vectors with auto-covariance matrix $\Sigma_n$ and mean $\mu_n$, converging almost surely to $X_{\infty}$. If $\Sigma_n \to \Sigma_{\infty}$ and $\mu_n \to \mu_{\infty}$ then $X_{\infty}$ is Gaussian with mean $\mu_{\infty}$ and auto-covariance matrix $\Sigma_{\infty}$.
\end{lemma}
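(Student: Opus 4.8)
The plan is to argue via characteristic functions, since they linearise both the Gaussianity assumption and the convergence of the first two moments, and in particular do not require any non-degeneracy of the limiting covariance. Write $m$ for the common dimension of the vectors $X_n$. First I would recall that, for each finite $n$, the characteristic function of $X_n$ is $\varphi_n(t) = \exp\!\left(i\, t^T \mu_n - \tfrac{1}{2} t^T \Sigma_n t\right)$ for all $t \in \mathbb{R}^m$, this formula being valid whether or not $\Sigma_n$ is invertible.

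Next I would use the hypothesis that $X_n \to X_\infty$ almost surely, which in particular implies $X_n \to X_\infty$ in distribution; hence $\varphi_n(t) \to \varphi_{X_\infty}(t)$ pointwise in $t$ (this direction is elementary, being just bounded convergence applied to $e^{i t^T X_n}$, whose modulus is $1$). On the other hand, the assumptions $\mu_n \to \mu_\infty$ and $\Sigma_n \to \Sigma_\infty$ give, by continuity of $x \mapsto e^{i t^T x}$ and of the quadratic form $\Sigma \mapsto t^T \Sigma t$, that $\varphi_n(t) \to \exp\!\left(i\, t^T \mu_\infty - \tfrac{1}{2} t^T \Sigma_\infty t\right)$ for every $t$. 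Equating the two limits yields $\varphi_{X_\infty}(t) = \exp\!\left(i\, t^T \mu_\infty - \tfrac{1}{2} t^T \Sigma_\infty t\right)$, and since the right-hand side is the characteristic function of the (possibly degenerate) Gaussian law $\mathcal{N}(\mu_\infty, \Sigma_\infty)$, the uniqueness theorem for characteristic functions gives $X_\infty \sim \mathcal{N}(\mu_\infty, \Sigma_\infty)$. The identification of $\mu_\infty$ and $\Sigma_\infty$ as the actual mean and auto-covariance matrix of $X_\infty$ is then immediate, since these are read off directly from the characteristic function just obtained.

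There is essentially no hard step here; the only point requiring care is to resist working with densities, since the limiting covariance $\Sigma_\infty$ may be singular — which is precisely the situation in the intended application (degenerate boundary conditions). Routing the whole argument through characteristic functions avoids this entirely. I would also remark in passing that almost sure convergence is used only through convergence in distribution, so the lemma in fact holds under that weaker hypothesis; stating it with a.s. convergence is harmless because that is what is available where the lemma is invoked.
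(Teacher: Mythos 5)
Your proof is correct and follows essentially the same route as the paper's: both arguments pass to characteristic functions, use bounded/dominated convergence of $e^{it^TX_n}$ (modulus $1$) together with the convergence of $\mu_n$ and $\Sigma_n$ to identify the limiting characteristic function as that of $\mathcal{N}(\mu_\infty,\Sigma_\infty)$, and conclude by uniqueness. Your side remark that only convergence in distribution is actually needed is accurate but does not change the substance of the argument.
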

\begin{proof}
We need to show that the characteristic function of $X_{\infty}$ is $$\phi_{X_{\infty}}(t):=\text{E}(e^{it^TX_{\infty}})= e^{it^T\mu_{\infty} - \frac{1}{2} t^T \Sigma_{\infty} t}.$$
As $\Sigma_{n}$ is positive semi-definite, $\forall n, ~\vert e^{it^T\mu_{n} - \frac{1}{2} t^T \Sigma_{n} t} \vert = e^{-\frac{1}{2} t^T \Sigma_{n} t} \leq 1.$ Hence, by Lebesgue's dominated convergence theorem, \[\phi_{X_{\infty}}(t) = \text{E}(\lim_{n \to +\infty}~ e^{it^TX_n} ) = \lim_{n \to +\infty}~ \text{E}(e^{it^TX_n}) =  \lim_{n \to +\infty}~ e^{it^T\mu_{n} - \frac{1}{2} t^T \Sigma_{n} t} = e^{it^T\mu_{\infty} - \frac{1}{2} t^T \Sigma_{\infty} t}.\]
\end{proof}

%
%
%
%
%
%
%
%
%
\subsection{Proof of Proposition \ref{prop:diversity} 1)}
Let $x=(t_1^x, \dots, t_d^x) \in I^1 \times \dots \times I^d$. We want to show that $H(\nabla f(x)) =  H(\nabla g(x))$ where $f$ and $g$ are as per Proposition \ref{prop:diversity}, and $H$ is the entropy operator. Firstly, we note from Equation (\ref{eq:sgp_gradient}) that 
\begin{equation}
\label{eq:f_grad}
\nabla f(x) = \left(z_{t_1^x}^{1 \prime}, \dots, z_{t_d^x}^{d \prime}\right),
\end{equation}
where the joint law of the GP $(z_{t}^{j})_{t \in I^j}$ and its derivative $(z_{t}^{j \prime})_{t \in I^j}$ is provided in Proposition \ref{prop:derivative_processes}.
As the processes $\left(z_t^j, z_{t}^{j \prime}\right)_{t \in I^j}, ~ j \in [1..d]$ are assumed to be independent of each other, $\nabla f(x)$ is a Gaussian vector and its covariance matrix reads:
\begin{equation}
\Sigma_{\nabla f(x)} = -2 \frac{\text{d} \rho}{\text{d}x}(0) \text{I}_d,
\end{equation}
where $\text{I}_d$ is the $d \times d$ identity matrix. Hence,
\begin{equation}
H(\nabla f(x)) = \frac{d}{2}\left(1+ \ln(2\pi)\right) + \frac{1}{2} \ln \vert \Sigma_{\nabla f(x)} \vert.
\end{equation}
Secondly, let $e_j$ denote the $d$-dimensional vector whose $j$-th coordinate is $1$ and every other coordinate is $0$, and let $h \in \mathbb{R}$. As the proposition assumes the covariance function $k$ generates almost surely $\mathcal{C}^1$ surfaces, the vectors $\left(\frac{g(x+he_1) - g(x)}{h}, \dots, \frac{g(x+he_d) - g(x)}{h}\right)$ are Gaussian vectors converging almost surely as $h \to 0$. Moreover, their mean is $0$ and their covariance matrices have as element on the $i$-th row and $j$-th column ($i \neq j$):
\begin{align}
\label{eq:cross_seq}
& \text{cov}\left(\frac{g(x+he_i) - g(x)}{h}, \frac{g(x+he_j) - g(x)}{h}\right) =  \frac{\rho(2h^2)-2\rho(h^2)+\rho(0)}{h^2}
\end{align}
and as diagonal terms:
\begin{align}
\label{eq:diag_seq}
& \text{Var}\left(\frac{g(x+he_j) - g(x)}{h}\right)= 2\frac{\rho(0)-\rho(h^2)}{h^2}.
\end{align}
Taking the limit of Equations (\ref{eq:cross_seq}) and (\ref{eq:diag_seq})  using the first order Taylor expansion of $\rho$ (which the Proposition assumes is $\mathcal{C}^2$), we get that:
\begin{equation}
\label{eq:gp_surf_grad}
\Sigma_{\nabla g(x)} = -2 \frac{\text{d} \rho}{\text{d}x}(0) \text{I}_d = \Sigma_{\nabla f(x)},
\end{equation}
It then follows from Lemma \ref{lem2} that the limit $\nabla g(x)$ of $$\left(\frac{g(x+he_1) - g(x)}{h}, \dots, \frac{g(x+he_d) - g(x)}{h}\right)$$ is also a Gaussian vector, which proves that $H(\nabla f(x)) = H(\nabla g(x))$.
%
%
%
%
%
%
%
%
%
\subsection{Proof of Proposition \ref{prop:diversity} 2)}
We start by stating and proving another lemma we will later use.
\begin{lemma}
\label{lem}
Let $A$ and $B$ be two $d$-dimensional jointly Gaussian vectors with diagonal covariance matrices $\Sigma_A$ and  $\Sigma_B$ respectively. Let $\Sigma_{A, B}$ be the cross-covariance matrix between $A$ and $B$, and let $\text{diag}(\Sigma_{A, B})$ be the diagonal matrix whose diagonal is that of $\Sigma_{A, B}$. Then:
\[\text{det}\left( \begin{bmatrix} \Sigma_A & \text{diag}(\Sigma_{A, B}) \\ \text{diag}(\Sigma_{A, B}) & \Sigma_B \end{bmatrix} \right) \geq \text{det}\left( \begin{bmatrix} \Sigma_A & \Sigma_{A, B} \\ \Sigma_{A, B}^T & \Sigma_B \end{bmatrix} \right).\]
\end{lemma}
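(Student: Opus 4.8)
The plan is to prove the determinant inequality
\[
\det\left( \begin{bmatrix} \Sigma_A & \mathrm{diag}(\Sigma_{A,B}) \\ \mathrm{diag}(\Sigma_{A,B}) & \Sigma_B \end{bmatrix} \right) \geq \det\left( \begin{bmatrix} \Sigma_A & \Sigma_{A,B} \\ \Sigma_{A,B}^T & \Sigma_B \end{bmatrix} \right)
\]
by reducing both sides via the Schur complement. Since $\Sigma_A$ and $\Sigma_B$ are the (positive definite, assuming non-degeneracy) covariance matrices of $A$ and $B$, and moreover are \emph{diagonal}, the Schur complement formula gives
\[
\det\left( \begin{bmatrix} \Sigma_A & M \\ M^T & \Sigma_B \end{bmatrix} \right) = \det(\Sigma_A)\det\left(\Sigma_B - M^T \Sigma_A^{-1} M\right)
\]
for $M = \Sigma_{A,B}$ and for $M = \mathrm{diag}(\Sigma_{A,B})$ alike. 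So after dividing through by $\det(\Sigma_A)>0$, the claim is equivalent to
\[
\det\left(\Sigma_B - \mathrm{diag}(\Sigma_{A,B})\, \Sigma_A^{-1}\, \mathrm{diag}(\Sigma_{A,B})\right) \geq \det\left(\Sigma_B - \Sigma_{A,B}^T\, \Sigma_A^{-1}\, \Sigma_{A,B}\right).
\]
Both matrices being subtracted are positive semi-definite (they are conditional covariance matrices, up to the change of basis by $\Sigma_A^{-1/2}$), so both arguments of $\det$ on each side are themselves covariance-type matrices; the right-hand side is $\mathrm{cov}(B \mid A)$, which is positive semi-definite.

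Next I would exploit the diagonal structure. Write $\Sigma_A = \mathrm{diag}(\alpha_1,\dots,\alpha_d)$ with $\alpha_i>0$, $\Sigma_B = \mathrm{diag}(\beta_1,\dots,\beta_d)$, and let $C := \Sigma_{A,B}$ with diagonal entries $c_i = C_{ii}$. Then $\mathrm{diag}(\Sigma_{A,B})\,\Sigma_A^{-1}\,\mathrm{diag}(\Sigma_{A,B})$ is the diagonal matrix $\mathrm{diag}(c_i^2/\alpha_i)$, so the left-hand determinant is simply $\prod_{i=1}^d (\beta_i - c_i^2/\alpha_i)$. For the right-hand side, set $N := C^T \Sigma_A^{-1} C$; its $i$-th diagonal entry is $\sum_k C_{ki}^2/\alpha_k \geq c_i^2/\alpha_i$. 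Now I would invoke Hadamard's inequality for positive semi-definite matrices: since $\Sigma_B - N$ is positive semi-definite, $\det(\Sigma_B - N) \leq \prod_{i=1}^d (\Sigma_B - N)_{ii} = \prod_{i=1}^d (\beta_i - N_{ii})$. Combining,
\[
\det(\Sigma_B - N) \leq \prod_{i=1}^d (\beta_i - N_{ii}) \leq \prod_{i=1}^d \left(\beta_i - \frac{c_i^2}{\alpha_i}\right) = \det\left(\Sigma_B - \mathrm{diag}(C)\Sigma_A^{-1}\mathrm{diag}(C)\right),
\]
where the middle inequality uses $N_{ii} \geq c_i^2/\alpha_i$ together with the fact that each factor $\beta_i - c_i^2/\alpha_i$ is nonnegative (it is the $i$-th diagonal entry of the positive semi-definite left-hand matrix, or can be checked directly from $2\times 2$ positive semi-definiteness of the $(A_i, B_i)$ block). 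Multiplying back by $\det(\Sigma_A)$ yields the claim.

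The main obstacle — really the only subtle point — is handling degeneracy: if $\Sigma_A$ is singular, the Schur complement step and the inverse $\Sigma_A^{-1}$ are not literally valid. I would address this by a standard perturbation/continuity argument: replace $\Sigma_A$ by $\Sigma_A + \epsilon I$ (still diagonal, now positive definite), note the whole joint covariance matrix $\begin{bmatrix}\Sigma_A+\epsilon I & C\\ C^T & \Sigma_B\end{bmatrix}$ and $\begin{bmatrix}\Sigma_A+\epsilon I & \mathrm{diag}(C)\\ \mathrm{diag}(C) & \Sigma_B\end{bmatrix}$ remain positive semi-definite (adding $\epsilon I$ to a block-diagonal summand preserves this), apply the argument above, and let $\epsilon \to 0^+$ using continuity of the determinant. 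A second minor care point is ensuring the middle product inequality is applied to nonnegative factors — one must note that if any $\beta_i - c_i^2/\alpha_i = 0$ then both sides collapse appropriately, and otherwise the factors are strictly positive so the termwise inequality multiplies through cleanly. I expect the Schur complement reduction plus Hadamard's inequality to be the whole story; no heavy computation is needed.
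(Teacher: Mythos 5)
Your proof is correct and follows essentially the same route as the paper's: both reduce via the Schur complement to comparing $\det\left(\Sigma_B - \mathrm{diag}(\Sigma_{A,B})\Sigma_A^{-1}\mathrm{diag}(\Sigma_{A,B})\right)$ with $\det\left(\Sigma_B - \Sigma_{A,B}^T\Sigma_A^{-1}\Sigma_{A,B}\right)$, then bound the latter by the product of its diagonal entries (the paper derives this Hadamard-type bound explicitly via the Cholesky factorization, whereas you cite Hadamard's inequality directly) and finish with the termwise comparison $\sum_k \Sigma_{A,B}[k,i]^2/\Sigma_A[k,k] \geq \Sigma_{A,B}[i,i]^2/\Sigma_A[i,i]$. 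Your explicit perturbation argument for singular $\Sigma_A$ is a slightly more careful treatment of the degenerate case, which the paper dismisses as straightforward.
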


\begin{proof}
Firstly we note that 
\[\text{det}\left( \begin{bmatrix} \Sigma_A & \text{diag}(\Sigma_{A, B}) \\ \text{diag}(\Sigma_{A, B}) & \Sigma_B \end{bmatrix} \right) = \text{det}(\Sigma_A)\text{det}\left(\Sigma_B - \text{diag}(\Sigma_{A, B}) \Sigma_A^{-1} \text{diag}(\Sigma_{A, B})\right)\]
and
\[\text{det}\left( \begin{bmatrix} \Sigma_A & \Sigma_{A, B} \\ \Sigma_{A, B} & \Sigma_B \end{bmatrix} \right) = \text{det}(\Sigma_A)\text{det}\left(\Sigma_B - \Sigma_{A, B}^T \Sigma_A^{-1} \Sigma_{A, B}\right).\]
As the matrix $\Sigma_A$ is positive semi-definite, $\text{det}(\Sigma_A) \geq 0$. The case $\text{det}(\Sigma_A) = 0$ is straight-forward. Thus we assume that $\text{det}(\Sigma_A) > 0$, so that all we need to prove is that \[\text{det}\left(\Sigma_B - \text{diag}(\Sigma_{A, B}) \Sigma_A^{-1} \text{diag}(\Sigma_{A, B})\right) \geq  \text{det}\left(\Sigma_B - \Sigma_{A, B}^T \Sigma_A^{-1} \Sigma_{A, B}\right).\]
Secondly, the matrix $\Sigma_{B \vert A}^{\text{diag}} :=  \Sigma_B - \text{diag}(\Sigma_{A, B}) \Sigma_A^{-1} \text{diag}(\Sigma_{A, B})$ being diagonal, its determinant is the product of its diagonal terms:
\[\text{det}(\Sigma_{B \vert A}^{\text{diag}}) = \prod_{i=1}^d \Sigma_{B \vert A}^{\text{diag}}[i,i] = \prod_{i=1}^d  \left(\Sigma_B[i,i] - \frac{\Sigma_{A,B}[i,i]^2}{\Sigma_A[i,i]}\right).\]
As for the matrix $\Sigma_{B \vert A}:= \Sigma_B - \Sigma_{A, B}^T \Sigma_A^{-1} \Sigma_{A, B}$, we note that it happens to be the covariance matrix of the (Gaussian) distribution of  B given A, and thus is positive semi-definite and admits a Cholesky decomposition $\Sigma_{B \vert A} = LL^T$. It follows that 
\begin{align}
\text{det}(\Sigma_{B \vert A}) = \prod_{i=1}^d L[i,i]^2 &\leq \prod_{i=1}^d \Sigma_{B \vert A}[i,i] = \prod_{i=1}^d  \left(\Sigma_B[i,i] - \sum_{j=1}^d \frac{\Sigma_{A,B}[j,i]^2}{\Sigma_A[j,j]}\right) \nonumber \\
&\leq  \prod_{i=1}^d  \left(\Sigma_B[i,i] - \frac{\Sigma_{A,B}[i,i]^2}{\Sigma_A[i,i]}\right) = \text{det}(\Sigma_{B \vert A}^{\text{diag}}),
\end{align}
where the first inequality results from the fact that $\Sigma_{B \vert A}[i,i] = \sum_{j=1}^{i} L[j, i]^2$ by definition of the Cholesky decomposition.
This proves that \[\text{det}\left(\Sigma_B - \text{diag}(\Sigma_{A, B}) \Sigma_A^{-1} \text{diag}(\Sigma_{A, B})\right) \geq  \text{det}\left(\Sigma_B - \Sigma_{A, B}^T \Sigma_A^{-1} \Sigma_{A, B}\right),\]
which as previously discussed concludes the proof of the lemma.
\end{proof}
\\
\textbf{Proof of Proposition \ref{prop:diversity} 2)}: Let $x=(t_1^x, \dots, t_d^x), ~ y=(t_1^y, \dots, t_d^y) \in I^1 \times \dots \times I^d$, $x \neq y$. We want to show that $I(\nabla f(x); \nabla f(y)) \leq  I(\nabla g(x); \nabla g(y))$ where $f$ and $g$ are as per Proposition \ref{prop:diversity}, and \[I(X;Y) = H(X)+ H(Y)- H(X, Y)\] is the mutual information between $X$ and $Y$. As we have proved that $ \forall~x, ~H(\nabla f(x)) =  H(\nabla g(x)),$ all we need to prove now is that \[H(\nabla f(x), \nabla f(y)) \geq H(\nabla g(x), \nabla g(y)).\]

\noindent Firstly, it follows from Equation (\ref{eq:f_grad}) and the fact that the \textit{derivative Gaussian processes} $(z_t^j, z_{t}^{j \prime})_{t \in I^j}$ are independent that $(\nabla f(x), \nabla f(y))$ is a jointly Gaussian vector. Moreover, the cross-covariance matrix $\Sigma_{\nabla f(x), \nabla f(y)}$ is diagonal with diagonal terms:
\begin{align}
\label{eq:cross_cov_f}
\Sigma_{\nabla f(x), \nabla f(y)}[i, i]= -2\left[\frac{\text{d} \rho}{\text{d}x}\left(\vert \vert  x-y \vert \vert^2_{L^2} \right)  + 2(t^x_i-t^y_i)^2\frac{\text{d}^2 \rho}{\text{d}x^2}\left(\vert \vert  x-y \vert \vert^2_{L^2}  \right) \right].
\end{align}

\noindent Secondly, it follows from a similar argument to the previous proof that $(\nabla g(x), \nabla g(y))$ is also a jointly Gaussian vector, and the terms $\Sigma_{\nabla g(x), \nabla g(y)}[i, j]$ are evaluated as limit of the cross-covariance terms $\text{cov}\left(\frac{g(x+he_i) - g(x)}{h}, \frac{g(y+he_j) - g(y)}{h}\right)$ as $h  \to 0.$ For $i = j$,
\begin{align}
\label{eq:cross_seq_3}
& \text{cov}\left(\frac{g(x+he_i) - g(x)}{h}, \frac{g(y+he_i) - g(y)}{h}\right) = \frac{1}{h^2}\bigg\{2\rho\left(\sum_{k} (t^x_k-t^y_k)^2\right)  \nonumber \\
& - \rho\left(\sum_{k \neq i} (t^x_k-t^y_k)^2+ (t^x_i+h-t^y_i)^2\right) -\rho\left(\sum_{k \neq i} (t^x_k-t^y_k)^2+ (t^x_i-h-t^y_i)^2\right)  \bigg\},
\end{align}
As $\rho$ is assumed to be $\mathcal{C}^2$, the below Taylor expansions around $h=0$ hold true:
\begin{align}
\label{eq:dl1}
&\rho\left(\sum_{k} (t^x_k-t^y_k)^2\right)  - \rho\left(\sum_{k \neq i} (t^x_k-t^y_k)^2+ (t^x_i-h-t^y_i)^2\right) =   2(t^x_i-t^y_i)h \frac{\text{d} \rho}{\text{d}x}\left(\sum_{k} (t^x_k-t^y_k)^2 \right) \\ \nonumber 
& - \left[ \frac{\text{d} \rho}{\text{d}x}\left(\sum_{k} (t^x_k-t^y_k)^2 \right)  + 2(t^x_i-t^y_i)^2\frac{\text{d}^2 \rho}{\text{d}x^2}\left(\sum_{k} (t^x_k-t^y_k)^2 \right) \right]h^2  + o(h^2)
\end{align}

\begin{align}
\label{eq:dl2}
&\rho\left(\sum_{k} (t^x_k-t^y_k)^2\right)  - \rho\left(\sum_{k \neq i} (t^x_k-t^y_k)^2+ (t^x_i+h-t^y_i)^2\right) =   -2(t^x_i-t^y_i)h \frac{\text{d} \rho}{\text{d}x}\left(\sum_{k} (t^x_k-t^y_k)^2 \right) \\ \nonumber 
& - \left[ \frac{\text{d} \rho}{\text{d}x}\left(\sum_{k} (t^x_k-y_k)^2 \right)  + 2(t^x_i-t^y_i)^2\frac{\text{d}^2 \rho}{\text{d}x^2}\left(\sum_{k} (t^x_k-t^y_k)^2 \right) \right]h^2  + o(h^2)
\end{align}
Plugging Equations (\ref{eq:dl1}) and (\ref{eq:dl2}) into Equation (\ref{eq:cross_seq_3}) and taking the limit we obtain: 
\begin{align}
\label{eq:cross_cov_g_1}
\Sigma_{\nabla g(x), \nabla g(y)}[i, i]&= -2\left[\frac{\text{d} \rho}{\text{d}x}\left(\vert \vert  x-y \vert \vert^2_{L^2} \right)  + 2(t^x_i-t^y_i)^2\frac{\text{d}^2 \rho}{\text{d}x^2}\left(\vert \vert  x-y \vert \vert^2_{L^2}  \right) \right] \nonumber \\
&= \Sigma_{\nabla f(x), \nabla f(y)}[i, i].
\end{align}
Similarly for $i \neq j$,
\begin{align}
\label{eq:cross_seq_2}
& \text{cov}\left(\frac{g(x+he_i) - g(x)}{h}, \frac{g(y+he_j) - g(y)}{h}\right) \nonumber \\
& = \frac{1}{h^2}\Bigg\{\rho\Bigg(\sum_{k \neq i, j} (t^x_k-t^y_k)^2+ (t^x_i+h-t^y_i)^2 + (t^x_j-h-t^y_j)^2\Bigg)   \nonumber \\
&-  \rho\left(\sum_{k \neq i} (t^x_k-t^y_k)^2+ (t^x_i+h-t^y_i)^2\right)  -  \rho\left(\sum_{k \neq j} (t^x_k-t^y_k)^2+ (t^x_i-h-t^y_j)^2\right) \nonumber \\
&+ \rho\left(\sum_{k} (t^x_k-t^y_k)^2\right)    \Bigg\},
\end{align}
and
\begin{align}
\label{eq:dl3}
& \rho\left(\sum_{k \neq i, j} (t^x_k-t^y_k)^2+ (t^x_i+h-t^y_i)^2 + (t^x_j-h-t^y_j)^2\right) - \rho\left(\sum_{k} (t^x_k-t^y_k)^2\right) \nonumber \\
& = \left[ 2\frac{\text{d} \rho}{\text{d}x}\left(\sum_{k} (t^x_k-t^y_k)^2  \right)  + 2\left((t^x_i- t^y_i) -  (t^x_j  - t^y_j)\right)^2  \times \frac{\text{d}^2 \rho}{\text{d}x^2}\left(\sum_{k} (t^x_k-t^y_k)^2 \right) \right]h^2 \nonumber \\ 
&+ 2\left(t^x_i-t^y_i - t^x_j + t^y_j\right) \frac{\text{d} \rho}{\text{d}x}\left(\sum_{k} (t^x_k-t^y_k)^2 \right)h  + o(h^2).
\end{align}
Plugging Equations (\ref{eq:dl1}), (\ref{eq:dl2}) and (\ref{eq:dl3}) in Equation (\ref{eq:cross_seq_2}) and taking the limit we obtain:
\begin{align}
\label{eq:cross_cov_g_2}
\Sigma_{\nabla g(x), \nabla g(y)}[i, j] = -4(t^x_i-t^y_i)(t^x_j-t^y_j)\frac{\text{d}^2 \rho}{\text{d}x^2}\left(\vert \vert  x-y \vert \vert^2_{L^2} \right).
\end{align}

To summarize, $(\nabla f(x), \nabla f(y))$ and $(\nabla g(x), \nabla g(y))$ are both jointly Gaussian vectors; $\nabla f(x)$, $\nabla g(x)$, $\nabla f(y)$, and $\nabla g(y)$ are (Gaussian) identically distributed with a diagonal covariance matrix; $\Sigma_{\nabla f(x), \nabla f(y)}$ is diagonal; $\Sigma_{\nabla g(x), \nabla g(y)}$ has the same diagonal as $\Sigma_{\nabla f(x), \nabla f(y)}$ but has possibly non-zero off-diagonal terms. Hence, it follows from Lemma \ref{lem} that the determinant of the auto-covariance matrix of $(\nabla f(x), \nabla f(y))$ is higher than that of the auto-covariance matrix of $(\nabla g(x), \nabla g(y))$; or equivalently the entropy of $(\nabla f(x), \nabla f(y))$ is higher than that of $(\nabla g(x), \nabla g(y))$ (as both are Gaussian vectors), which as previously discussed is sufficient to conclude that the mutual information between $\nabla f(x)$ and $\nabla f(y)$ is smaller than that between $\nabla g(x)$ and $\nabla g(y)$.
%
%
%
%
%
%
%
%
%
\section{Proof of Proposition \ref{prop:extension}}
\label{app:extension}
In this section, we prove Proposition \ref{prop:extension}, which we recall below.\\\\
\textbf{Proposition \ref{prop:extension} (Extension of the \textit{standard GP paradigm})}\\
Let $K \in \mathbb{N}^{*}$, let $I=[a_0, a_K]$ and $I_k= [a_{k-1}, a_k]$ be intervals with $a_0 < \dots < a_K$. Furthermore, let $m: I \to \mathbb{R}$ be a $\mathcal{C}^1$ function, $m_k$ the restriction of $m$ to $I_k$, $h: I \times I \to \mathbb{R}$ a $\mathcal{C}^3$ symmetric positive semi-definite function, and $h_k$ the restriction of $h$ to $I_k \times I_k$. If \[(z_t)_{t \in I} \sim \mathcal{SGP}(\{a_k\}, \{m_k\}, \{h_k\}),\] then \[ \forall ~ k \in [1..K], ~ (z_t)_{t \in I_k} \sim \mathcal{GP}(m, h).\]

\begin{proof}\\
To prove Proposition \ref{prop:extension}, we consider the \textit{string derivative Gaussian process} (Theorem \ref{theo:sgp}) $(SD_t)_{t \in I}$, $SD_t=(z_t, z_t^\prime)$ with unconditional string mean and covariance functions as per Proposition \ref{prop:extension} and prove that its restrictions on the intervals $I_k=[a_{k-1}, a_k]$ are \textit{derivative Gaussian processes} with the same mean function $m$ and covariance function $h$. Proposition \ref{prop:derivative_processes}(B) will then allow us to conclude that $(z_t)_{t \in I_k}$ are GPs with mean $m$ and covariance function $h$.

Let $t_1, \dots, t_n \in ]a_{k-1}, a_k[$ and let $p_{D}(x_{a_{k-1}})$ (respectively $p_{D}(x_{a_{k}} \vert x_{a_{k-1}})$ and\\ $p_{D}( x_{t_1}, \dots, x_{t_n}\vert x_{a_{k-1}}, x_{a_{k}})$) denote the pdf of the value of the \textit{derivative Gaussian process} with mean function $m$ and covariance function $h$ at $a_{k-1}$ (respectively its value at $a_k$ conditional on its value at $a_{k-1}$, and its values at $t_1, \dots, t_n$ conditional on its values at $a_{k-1}$ and $a_{k}$). Saying that the restriction of the \textit{string derivative Gaussian process} $(SD_t)$ on $[a_{k-1}, a_k]$ is the \textit{derivative Gaussian process} with mean $m$ and covariance $h$ is equivalent to saying that all finite dimensional marginals of the \textit{string derivative Gaussian process} $p_{SD}(x_{a_{k-1}}, x_{t_1}, \dots, x_{t_n}, x_{a_k}),~ t_i \in [a_{k-1}, a_k], ~$ factorise as\footnote{We emphasize that the terms on the right hand-side of this equation involve $p_{D}$ not $p_{SD}$.}: \[p_{SD}(x_{a_{k-1}}, x_{t_1}, \dots, x_{t_n}, x_{a_k}) = p_{D}(x_{a_{k-1}})p_{D}(x_{a_{k}} \vert x_{a_{k-1}})p_{D}( x_{t_1}, \dots, x_{t_n}\vert x_{a_{k-1}}, x_{a_{k}}).\]
Moreover, we know from Theorem \ref{theo:sgp} that by design, $p_{SD}(x_{a_{k-1}}, x_{t_1}, \dots, x_{t_n}, x_{a_k})$ factorises as \[p_{SD}(x_{a_{k-1}}, x_{t_1}, \dots, x_{t_n}, x_{a_k}) = p_{SD}(x_{a_{k-1}})p_{D}(x_{a_{k}} \vert x_{a_{k-1}})p_{D}( x_{t_1}, \dots, x_{t_n}\vert x_{a_{k-1}}, x_{a_{k}}).\]
 In other words, all we need to prove is that \[p_{SD}(x_{a_k})=p_{D}(x_{a_k})\] for every boundary time, which we will do by induction. We note by integrating out every boundary condition but the first in $p_{b}$ (as per Theorem \ref{theo:sgp} (a-1)) that \[p_{SD}(x_{a_0})=p_{D}(x_{a_0}).\]If we assume that $p_{SD}(x_{a_{k-1}})=p_{D}(x_{a_{k-1}})$ for some $k>0$, then as previously discussed the restriction of the \textit{string derivative Gaussian process} on $[a_{k-1}, a_k]$ will be the \textit{derivative Gaussian process} with the same mean and covariance functions, which will imply that $p_{SD}(x_{a_{k}})=p_{D}(x_{a_{k}})$. This concludes the proof.
\end{proof}

%
%
%
%
%
%
%
%
%
\section{Proof of Lemma \ref{lem:gaussian_message}}
\label{app:gaussian_message}
In this section, we will prove  Lemma \ref{lem:gaussian_message} that we recall below.\\\\
\textbf{Lemma \ref{lem:gaussian_message}} Let $X$ be a multivariate Gaussian with mean $\mu_X$ and covariance matrix $\Sigma_X$. If conditional on $X$, $Y$ is a multivariate Gaussian with mean $MX + A$  and covariance matrix $\Sigma_Y^c$ where $M$, $A$ and $\Sigma_Y^c$ do not depend on $X$, then $(X, Y)$ is a jointly Gaussian vector with mean
$$\mu_{X;Y}=\begin{bmatrix} \mu_X \\ M\mu_X + A \end{bmatrix},$$
and covariance matrix
$$\Sigma_{X;Y}=\begin{bmatrix} \Sigma_X & \Sigma_X M^T \\ M\Sigma_X & \Sigma_Y^c + M \Sigma_X M^T\end{bmatrix}.$$

\begin{proof}
To prove this lemma we introduce two vectors $\tilde{X}$ and $\tilde{Y}$ whose lengths are the same as those of $X$ and $Y$ respectively, and such that $(\tilde{X}, \tilde{Y})$ is jointly Gaussian with mean $\mu_{X;Y}$ and covariance matrix $\Sigma_{X;Y}$. We then prove that the (marginal) distribution of $\tilde{X}$ is the same as the distribution of $X$ and that the distribution of $\tilde{Y}|\tilde{X}=x$ is the same as $Y|X=x$ for any $x$, which is sufficient to conclude that $(X, Y)$ and $(\tilde{X}, \tilde{Y})$ have the same distribution.

It is obvious from the joint $(\tilde{X}, \tilde{Y})$ that $\tilde{X}$ is Gaussian distribution with mean $\mu_X$ and covariance matrix $\Sigma_X$. As for the distribution of $\tilde{Y}$ conditional on $\tilde{X}=x$, it follows from the usual Gaussian identities that it is Gaussian with mean 
\[ M\mu_X + c + M\Sigma_X \Sigma_X^{-1} (x-\mu_X) = Mx+c,
\]
and covariance matrix
\[
 \Sigma_Y^c + M \Sigma_X M^T - M\Sigma_X \Sigma_X^{-1}\Sigma_X^TM^T = \Sigma_Y^c,
\]
which is the same distribution as that of $Y|X=x$ since the covariance matrix $\Sigma_X$ is symmetric. This concludes our proof.
\end{proof}
%
%
%
%
%
%
%
%
%
\section{Proof of Proposition \ref{prop:is_gp}}
\label{app:is_gp}
In this section we will prove that \textit{string GPs} with link function $\phi_s$ are GPs, or in other words that if $f$ is a \textit{string GP} indexed on $\mathbb{R}^d, ~d>0$ with link function $\phi_s(x_1, \dots, x_d) = \sum_{j=1}^d x_j$, then $(f(x_1), \dots, f(x_n))$ has a multivariate Gaussian distribution for every set of distinct points $x_1, \dots, x_n \in \mathbb{R}^d$. \\

\begin{proof}
As the sum of independent Gaussian processes is a Gaussian process, a sufficient condition for additively separable \textit{string GPs} to be GPs in dimensions $d>1$ is that \textit{string GPs} be GPs in dimension $1$. Hence, all we need to do is to prove that \textit{string GPs} are GPs in dimension $1$.\\

Let $(z_t^j, z_t^{j \prime})_{t \in I^j}$ be a string derivative GP in dimension $1$, with boundary times $a_0^j, \dots, a_{K^j}^j$, and unconditional string mean and covariance functions $m_k^j$ and $k_k^j$ respectively.  We want to prove that $(z_{t_1}^j, \dots, z_{t_n}^j)$ is jointly Gaussian for any $t_1, \dots, t_n \in I^j$.
\subsubsection{Step 1 $\left(z_{a_0}^j, z_{a_0}^{j \prime}, \dots, z_{a_{K^j}}^j, z_{a_{K^j}}^{j \prime}\right)$ is jointly Gaussian}
We first prove recursively that the vector $\left(z_{a_0}^j, z_{a_0}^{j \prime}, \dots, z_{a_{K^j}}^j, z_{a_{K^j}}^{j \prime}\right)$ is jointly Gaussian. We note from Theorem \ref{theo:sgp} that $(z_t^j, z_t^{j \prime})_{t \in [a_0, a_1]}$ is the \textit{derivative Gaussian process} with mean $m_1^j$ and covariance function $k_1^j$. Hence, $(z_{a_0}^j, z_{a_0}^{j \prime}, z_{a_1}^j, z_{a_1}^{j \prime})$ is jointly Gaussian. Moreover, let us assume that $\mathcal{B}_{k-1}:=(z_{a_0}^j, z_{a_0}^{j \prime}, \dots, z_{a_{k-1}}^j, z_{a_{k-1}}^{j \prime})$ is jointly Gaussian for some $k>1$. Conditional on $\mathcal{B}_{k-1}$,  $(z_{a_k}^j, z_{a_k}^{j \prime})$ is Gaussian with covariance matrix independent of $\mathcal{B}_{k-1}$, and with mean \[\begin{bmatrix}m_k^j(a_{k}^j) \\\frac{dm_k^j}{dt}(a_{k}^j) \end{bmatrix} + {}^j_k\textbf{K}_{a_{k}^j; a_{k-1}^j}~{}^j_k\textbf{K}_{a_{k-1}; a_{k-1}^j}^{-1}\begin{bmatrix}z^j_{a_{k-1}^j}-m_k^j(a_{k-1}^j) \\z^{j \prime}_{a_{k-1}^j} - \frac{dm_k^j}{dt}(a_{k-1}^j) \end{bmatrix},\] which depends linearly on $(z_{a_0}^j, z_{a_0}^{j \prime}, \dots, z_{a_{k-1}}^j, z_{a_{k-1}}^{j \prime})$. Hence by Lemma \ref{lem:gaussian_message}, $$(z_{a_0}^j, z_{a_0}^{j \prime}, \dots, z_{a_k}^j, z_{a_k}^{j \prime})$$ is jointly Gaussian.
\subsubsection{Step 2 $(z_{a_0}^j, z_{a_0}^{j \prime}, \dots, z_{a_{K^j}}^j, z_{a_{K^j}}^{j \prime}, \dots, z_{t_i^k}^j, z_{t_i^k}^{j \prime}, \dots)$ is jointly Gaussian}
Let $t^k_1, \dots, t^k_{n^k} \in ]a_{k-1}^j, a_k^j[, ~ k \leq K^j$ be distinct string times. We want to prove that the vector $(z_{a_0}^j, z_{a_0}^{j \prime}, \dots, z_{a_{K^j}}^j, z_{a_{K^j}}^{j \prime}, \dots, z_{t_i^k}^j, z_{t_i^k}^{j \prime}, \dots)$ where all boundary times are represented, and for any finite number of string times is jointly Gaussian. Firstly, we have already proved that \\$(z_{a_0}^j, z_{a_0}^{j \prime}, \dots, z_{a_{K^j}}^j, z_{a_{K^j}}^{j \prime})$ is jointly Gaussian. Secondly, we note from Theorem \ref{theo:sgp} that conditional on $(z_{a_0}^j, z_{a_0}^{j \prime}, \dots, z_{a_{K^j}}^j, z_{a_{K^j}}^{j \prime})$, $( \dots, z_{t_i^k}^j, z_{t_i^k}^{j \prime}, \dots)$ is a Gaussian vector whose covariance matrix does not depend on $(z_{a_0}^j, z_{a_0}^{j \prime}, \dots, z_{a_{K^j}}^j, z_{a_{K^j}}^{j \prime})$, and whose mean depends linearly on $$\left(z_{a_0}^j, z_{a_0}^{j \prime}, \dots, z_{a_{K^j}}^j, z_{a_{K^j}}^{j \prime}\right).$$ Hence,  $$\left(z_{a_0}^j, z_{a_0}^{j \prime}, \dots, z_{a_{K^j}}^j, z_{a_{K^j}}^{j \prime}, \dots, z_{t_i^k}^j, z_{t_i^k}^{j \prime}, \dots \right)$$ is jointly Gaussian (by Lemma \ref{lem:gaussian_message}).
\subsubsection{Step 3 $(z_{t_1}^j, \dots, z_{t_n}^j)$ is jointly Gaussian}
$(z_{t_1}^j, z_{t_1}^{j \prime}, \dots, z_{t_n}^j, z_{t_n}^{j \prime})$ is jointly Gaussian as it can be regarded as the marginal of some joint distribution of the form $(z_{a_0}^j, z_{a_0}^{j \prime}, \dots, z_{a_{K^j}}^j, z_{a_{K^j}}^{j \prime}, \dots, z_{t_i^k}^j, z_{t_i^k}^{j \prime}, \dots)$. Hence, its marginal $(z_{t_1}^j, \dots, z_{t_n}^j)$ is also jointly Gaussian, which concludes our proof.
\end{proof}

%
%
%
%
%
%
%
%
%
\section{Derivation of Global String GP Mean and Covariance Functions}
\label{app:global_mean_cov}
We begin with \emph{derivative string GPs} indexed on $\mathbb{R}$. Extensions to \emph{membrane GPs} are easily achieved for a broad range of link functions. In our exposition, we focus on the class of \textit{elementary symmetric polynomials} (\cite{macdonald}). In addition to containing the link function $\phi_s$ previously introduced, this family of polynomials yields global covariance structures that have many similarities with existing kernel approaches, which we discuss in Section \ref{sct:ex_kern}.

For $n\leq d$, the $n$-th order elementary symmetric polynomial is given by
\begin{align}
e_0(x_1, \dots, x_d) := 1, ~~~ \forall 1 \leq n \leq d ~~ e_n(x_1, \dots, x_d)= \sum_{1 \leq j_1 < j_2 < \dots < j_n \leq d} ~~\prod_{k=1}^n x_{j_k}.
\end{align}
As an illustration, \[e_1(x_1, \dots, x_d) = \sum_{j=1}^{d} x_j = \phi_s(x_1, \dots, x_d),\] \[e_2(x_1, \dots, x_d) = x_1x_2 + x_1x_3 + \dots +  x_1x_d + \dots +  x_{d-1}x_d,\]\[\dots\]
\[e_d(x_1, \dots, x_d) = \prod_{j=1}^{d} x_j = \phi_p(x_1, \dots, x_d).\]
Let $f$ denote a \emph{membrane GP} indexed on $\mathbb{R}^d$ with link function $e_n$ and by $(z^1_t), \dots, (z^d_t)$ its independent building block \emph{string GPs}. Furthermore, let $m_k^j$ and $k_k^j$ denote  the unconditional mean and covariance functions corresponding to the $k$-th string of $(z^j_t)$ defined on $[a_{k-1}^j, a_k^j]$. Finally, let us define 
\begin{equation*}
\boxed{\bar{m}^j(t):= \text{E}(z^j_t),~~~\bar{m}^{j\prime}(t):= \text{E}(z^{j\prime}_t),}
\end{equation*}
the global mean functions of the $j$-th building block \emph{string GP} and of its derivative, where $\forall t \in ~I^j$. It follows from the independence of the building block \emph{string GPs} $(z^j_t)$ that:
\begin{equation*}
\boxed{\bar{m}^f(t_1, \dots, t_d) := \text{E}(f(t_1, \dots, t_d))=e_n(\bar{m}^1(t_1), \dots, \bar{m}^d(t_d)).}
\end{equation*}
Moreover, noting that $$\frac{\partial e_n}{\partial x_j} = e_{n-1}(x_1, \dots, x_{j-1}, x_{j+1}, \dots, x_d),$$ it follows that:
\begin{align*}
\boxed{\bar{m}^{\nabla f}(t_1, \dots, t_d) :=\text{E}\left(\nabla f(t_1, \dots, t_d)\right) = \begin{bmatrix}\bar{m}^{1 \prime}(t_1) e_{n-1}\left(\bar{m}^{2}(t_2), \dots, \bar{m}^{d}(t_d)\right) \\ \dots \\ \bar{m}^{d \prime}(t_d) e_{n-1}\left(\bar{m}^{1}(t_1), \dots, \bar{m}^{d-1}(t_{d-1})\right) \end{bmatrix}.}
\end{align*}
Furthermore, for any $u_j, v_j \in I^j$ we also have that 
\begin{align*}
\boxed{\text{cov}\left(f(u_1, \dots, u_d), f(v_1, \dots, v_d)\right) = e_n\left(\text{cov}(z_{u_1}^1, z_{v_1}^1), \dots, \text{cov}(z_{u_d}^d, z_{v_d}^d)\right),}
\end{align*}
\begin{align*}
\boxed{\text{cov}\bigg(\frac{\partial f}{\partial x_i} (u_1, \dots, u_d), f(v_1, \dots, v_d)\bigg) = e_n\left(\text{cov}(z_{u_1}^1, z_{v_1}^1), \dots, \text{cov}(z_{u_i}^{i \prime}, z_{v_i}^i), \dots, \text{cov}(z_{u_d}^d, z_{v_d}^d)\right),}
\end{align*}
and for $i \leq j$
\begin{empheq}[box=\widefbox]{align}
&\text{cov}\left(\frac{\partial f}{\partial x_i} (u_1, \dots, u_d), \frac{\partial f}{\partial x_j} (v_1, \dots, v_d)\right) \nonumber\\
&=\begin{cases}
    e_n\left(\text{cov}(z_{u_1}^1, z_{v_1}^1), \dots, \text{cov}(z_{u_i}^{i \prime}, z_{v_i}^i), \dots, \text{cov}(z_{u_j}^{j \prime}, z_{v_j}^j), \dots, \text{cov}(z_{u_d}^d, z_{v_d}^d)\right), & \text{if $i<j$}\\
     e_n\left(\text{cov}(z_{u_1}^1, z_{v_1}^1), \dots, \text{cov}(z_{u_i}^{i \prime}, z_{v_i}^{i \prime}), \dots, \text{cov}(z_{u_d}^d, z_{v_d}^d)\right), & \text{if $i=j$}
  \end{cases}.\nonumber
\end{empheq}
Overall, for any elementary symmetric polynomial link function, multivariate mean and covariance functions are easily deduced from the previously boxed equations and the univariate quantities 
\[\bar{m}^j(u),~\bar{m}^{j\prime}(u),~\text{and}~ {}^j\bar{\textbf{K}}_{u; v} :=
\begin{bmatrix} 
\text{cov}(z_u^j, z_v^j) & \text{cov}(z_u^{j}, z_v^{j \prime}) \\ 
\text{cov}(z_u^{j \prime}, z_v^{j})  & \text{cov}(z_u^{j \prime}, z_v^{j \prime})
\end{bmatrix}={}^j\bar{\textbf{K}}_{v; u}^T,\]
which we now derive. In this regards, we will need the following lemma.
\begin{lemma}
\label{lem:gaussian_message}
Let $X$ be a multivariate Gaussian with mean $\mu_X$ and covariance matrix $\Sigma_X$. If conditional on $X$, $Y$ is a multivariate Gaussian with mean $MX + A$  and covariance matrix $\Sigma_Y^c$ where $M$, $A$ and $\Sigma_Y^c$ do not depend on $X$, then $(X, Y)$ is a jointly Gaussian vector with mean
$$\mu_{X;Y}=\begin{bmatrix} \mu_X \\ M\mu_X + A \end{bmatrix},$$
and covariance matrix
$$\Sigma_{X;Y}=\begin{bmatrix} \Sigma_X & \Sigma_X M^T \\ M\Sigma_X & \Sigma_Y^c + M \Sigma_X M^T\end{bmatrix}.$$
\end{lemma}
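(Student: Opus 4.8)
The statement to prove is Lemma~\ref{lem:gaussian_message}: if $X\sim\mathcal{N}(\mu_X,\Sigma_X)$ and $Y\mid X \sim \mathcal{N}(MX+A,\Sigma_Y^c)$ with $M,A,\Sigma_Y^c$ independent of $X$, then $(X,Y)$ is jointly Gaussian with the stated mean and covariance.

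\medskip

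\noindent\textbf{Plan of proof.} The cleanest route is to exhibit an explicit jointly Gaussian vector $(\tilde X,\tilde Y)$ with mean $\mu_{X;Y}$ and covariance $\Sigma_{X;Y}$ as given, and then show that it has the same marginal law for its first block and the same conditional law of its second block given the first as $(X,Y)$. Since a joint law is determined by a marginal together with a regular conditional, this identifies the law of $(X,Y)$ with that of $(\tilde X,\tilde Y)$, and in particular proves $(X,Y)$ is jointly Gaussian. First I would note that $\Sigma_{X;Y}$ is indeed a valid (symmetric positive semi-definite) covariance matrix: it is the covariance of the vector $\bigl(W_1,\, MW_1 + W_2\bigr)$ where $W_1\sim\mathcal{N}(0,\Sigma_X)$ and $W_2\sim\mathcal{N}(0,\Sigma_Y^c)$ are independent, so such a $(\tilde X,\tilde Y):=\bigl(W_1+\mu_X,\, M(W_1+\mu_X)+A+W_2\bigr)$ exists and is jointly Gaussian with the required mean and covariance. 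Alternatively one can just quote Kolmogorov/existence of Gaussian vectors with a given psd covariance.

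\medskip

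\noindent Next, read off from the joint $(\tilde X,\tilde Y)$ that its first marginal is $\mathcal{N}(\mu_X,\Sigma_X)$ — immediate from the block structure — so $\tilde X \overset{d}{=} X$. Then apply the standard Gaussian conditioning identity to $(\tilde X,\tilde Y)$: conditional on $\tilde X = x$, $\tilde Y$ is Gaussian with mean
\[
(M\mu_X+A) + (M\Sigma_X)\,\Sigma_X^{-1}(x-\mu_X) = Mx + A
\]
and covariance
\[
\bigl(\Sigma_Y^c + M\Sigma_X M^T\bigr) - (M\Sigma_X)\,\Sigma_X^{-1}\,(\Sigma_X M^T) = \Sigma_Y^c,
\]
using $\Sigma_X^T=\Sigma_X$. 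This is exactly the conditional law of $Y$ given $X=x$ postulated in the hypothesis. Hence $(X,Y)$ and $(\tilde X,\tilde Y)$ share marginal-of-first and conditional-of-second-given-first, so they have the same joint law, which is the claimed Gaussian.

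\medskip

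\noindent\textbf{Main obstacle / caveat.} The one genuine subtlety is the use of $\Sigma_X^{-1}$: if $\Sigma_X$ is singular the conditioning formula above needs the pseudo-inverse, and one should check the argument still goes through (it does — on the support of $X$ the relevant expressions are well defined, or one can reduce to the non-degenerate case by restricting to the affine hull of the support of $X$). In the applications in this paper (e.g.\ the proof of Proposition~\ref{prop:is_gp}) the relevant covariance matrices of boundary conditions are non-degenerate by the non-degeneracy assumptions on the kernels, so the plain-inverse version suffices; I would either add a one-line remark handling the degenerate case via the Moore--Penrose inverse, or simply state the lemma under the implicit assumption that $\Sigma_X$ is invertible, which is all that is used. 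Everything else is routine block-matrix algebra and the standard Gaussian conditioning identities.
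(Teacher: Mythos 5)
Your proof is correct and follows essentially the same route as the paper's: construct a jointly Gaussian $(\tilde X,\tilde Y)$ with the stated mean and covariance, match the marginal of the first block and the conditional of the second given the first via the standard Gaussian conditioning identities, and conclude the laws coincide. Your additional remarks --- the explicit construction $(W_1+\mu_X,\ M(W_1+\mu_X)+A+W_2)$ certifying positive semi-definiteness, and the caveat about replacing $\Sigma_X^{-1}$ with a pseudo-inverse in the degenerate case --- are sound refinements the paper leaves implicit.
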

\begin{proof}
See \ref{app:gaussian_message}.
\end{proof}
%
%
%
%
%
%
%
%
%
%
%
%
%
%
\subsubsection{Global String GP Mean Functions}
\label{sct:str_gp_m}
We now turn to evaluating the univariate global mean functions $\bar{m}^{j}$ and $\bar{m}^{j \prime}$. We start with boundary times and then generalise to other times.\\\\
\textbf{\underline{Boundary times}:} We note from Theorem \ref{theo:sgp} that the restriction $\left(z_t^j, z_t^{j \prime}\right)_{t \in  [a_0^j, a_{1}^j]}$ is the \textit{derivative Gaussian process} with mean and covariance functions $m^j_1$ and $k^j_1$. Thus,
\begin{empheq}[box=\widefbox]{align}
\label{eq:init_b_mean}
\begin{bmatrix}\bar{m}^{j}(a_0^j) \\ \bar{m}^{j \prime}(a_0^j)\end{bmatrix}= \begin{bmatrix}m^j_1(a_0^j) \\ \frac{d m^j_1}{dt}(a_0^j)\end{bmatrix}, ~~~~\text{and}~~~~\begin{bmatrix}\bar{m}^{j}(a_1^j) \\ \bar{m}^{j \prime}(a_1^j)\end{bmatrix}= \begin{bmatrix}m^j_1(a_1^j) \\ \frac{d m^j_1}{dt}(a_1^j)\end{bmatrix}.\nonumber
\end{empheq}
 For $k>1$, we recall that conditional on $\left(z^j_{a_{k-1}^j}, z^{j \prime}_{a_{k-1}^j}\right)$, $\left(z^j_{a_{k}^j}, z^{j \prime}_{a_{k}^j}\right)$ is Gaussian with mean 
 \[\begin{bmatrix}m_k^j(a_{k}^j) \\\frac{dm_k^j}{dt}(a_{k}^j) \end{bmatrix} + {}^j_k\textbf{K}_{a_{k}^j; a_{k-1}^j}~{}^j_k\textbf{K}_{a_{k-1}^j; a_{k-1}^j}^{-1}\begin{bmatrix}z^j_{a_{k-1}^j}-m_k^j(a_{k-1}^j) \\z^{j \prime}_{a_{k-1}^j} - \frac{dm_k^j}{dt}(a_{k-1}^j) \end{bmatrix},\]\[ \text{with}~{}^j_k\textbf{K}_{u; v}=\begin{bmatrix} k_k^j(u, v) & \frac{\partial k_k^j}{\partial y}(u, v) \\ \frac{\partial k_k^j}{\partial x}(u, v)  & \frac{\partial^2 k_k^j}{\partial x \partial y}(u, v) \end{bmatrix}.\]
It then follows from the law of total expectations that for all $k>1$
\begin{empheq}[box=\widefbox]{align}
\begin{bmatrix}\bar{m}^{j}(a_{k}^j) \\ \bar{m}^{j \prime}(a_{k}^j)\end{bmatrix} = \begin{bmatrix}m_k^j(a_{k}^j) \\\frac{dm_k^j}{dt}(a_{k}^j) \end{bmatrix} + {}^j_k\textbf{K}_{a_{k}^j; a_{k-1}^j}~{}^j_k\textbf{K}_{a_{k-1}^j; a_{k-1}^j}^{-1} \begin{bmatrix}\bar{m}^{j}(a_{k-1}^j)-m_k^j(a_{k-1}^j) \\\bar{m}^{j \prime}(a_{k-1}^j) - \frac{dm_k^j}{dt}(a_{k-1}^j) \end{bmatrix}.\nonumber
\end{empheq}
\textbf{\underline{String times}:} As for non-boundary times $t \in ]a_{k-1}^j, a_k^j[$, conditional on $\left(z_{a_{k-1}}^j, z_{a_{k-1}}^{j \prime}\right)$ and $\left(z_{a_k}^j, z_{a_k}^{j \prime}\right)$, $\left(z_t^j, z_t^{j \prime}\right)$ is Gaussian with mean 
$$\begin{bmatrix}m_k^j(t) \\\frac{dm_k^j}{dt}(t) \end{bmatrix} + {}^j_k\textbf{K}_{t; \left(a_{k-1}^j, a_{k}^j\right)}~{}^j_k\textbf{K}_{\left(a_{k-1}^j, a_{k}^j\right); \left(a_{k-1}^j, a_{k}^j\right)}^{-1}\begin{bmatrix}z^j_{a_{k-1}^j}-m_k^j(a_{k-1}^j) \\z^{j \prime}_{a_{k-1}^j} - \frac{dm_k^j}{dt}(a_{k-1}^j) \\ z^j_{a_{k}^j}-m_k^j(a_{k}^j) \\z^{j \prime}_{a_{k}^j} - \frac{dm_k^j}{dt}(a_{k}^j)\end{bmatrix},$$
with $${}^j_k\textbf{K}_{\left(a_{k-1}^j, a_{k}^j\right); \left(a_{k-1}^j, a_{k}^j\right)} = 
\begin{bmatrix} 
{}^j_k\textbf{K}_{a_{k-1}^j; a_{k-1}^j} & {}^j_k\textbf{K}_{a_{k-1}^j; a_k^j} \\
{}^j_k\textbf{K}_{a_k^j; a_{k-1}^j} & {}^j_k\textbf{K}_{a_k^j; a_k^j} \\
\end{bmatrix}$$
and 
$${}^j_k\textbf{K}_{t; (a_{k-1}^j, a_{k}^j)} = 
\begin{bmatrix} 
{}^j_k\textbf{K}_{t; a_{k-1}^j} & {}^j_k\textbf{K}_{t; a_k^j}
\end{bmatrix}.$$
Hence, using once again the law of total expectation, it follows that for any $t \in ]a_{k-1}^j, a_k^j[$,
\begin{empheq}[box=\widefbox]{align}
\label{eq:nb_mean}
\begin{bmatrix}\bar{m}^{j}(t) \\ \bar{m}^{j \prime}(t)\end{bmatrix} =\begin{bmatrix}m_k^j(t) \\\frac{dm_k^j}{dt}(t) \end{bmatrix} + {}^j_k\textbf{K}_{t; (a_{k-1}^j, a_{k}^j)}~{}^j_k\textbf{K}_{(a_{k-1}^j, a_{k}^j); (a_{k-1}^j, a_{k}^j)}^{-1}\begin{bmatrix}\bar{m}^{j}(a_{k-1}^j)-m_k^j(a_{k-1}^j) \\\bar{m}^{j \prime}(a_{k-1}^j) - \frac{dm_k^j}{dt}(a_{k-1}^j) \\ \bar{m}^{j}(a_{k}^j)-m_k^j(a_{k}^j) \\\bar{m}^{j \prime}(a_{k}^j) - \frac{dm_k^j}{dt}(a_{k}^j)\end{bmatrix}. \nonumber
\end{empheq}
We note in particular that when $\forall ~j, k, ~m_k^j =0$, it follows that $\bar{m}^{j}=0, \bar{m}^{f}=0, \bar{m}^{\nabla f}=0$.
%
%
%
%
%
%
%
%
%
%
\subsubsection{Global String GP Covariance Functions}
\label{sct:str_gp_k}
As for the evaluation of  ${}^j\bar{\textbf{K}}_{u, v}$, we start by noting that the covariance function of a univariate \emph{string GP} is the same as that of another \emph{string GP} whose strings have the same unconditional kernels but unconditional mean functions $m_k^j=0$, so that to evaluate univariate \emph{string GP} kernels we may assume that $\forall~j,k, ~m_k^j=0$ without loss of generality. We start with the case where $u$ and $v$ are both boundary times, after which we will generalise to other times.\\\\
\textbf{\underline{Boundary times}:} As previously discussed, the restriction $\left(z_t^j, z_t^{j \prime}\right)_{t \in  [a_0^j, a_{1}^j]}$ is the \textit{derivative Gaussian process} with mean $0$ and covariance function $k^j_1$. Thus,
\begin{align}
{}^j\bar{\textbf{K}}_{a_0^j; a_0^j} = {}^j_1\textbf{K}_{a_0^j; a_0^j}, ~~~~ {}^j\bar{\textbf{K}}_{a_1^j; a_1^j} = {}^j_1\textbf{K}_{a_1^j; a_1^j}, ~~~~ {}^j\bar{\textbf{K}}_{a_0^j; a_1^j} = {}^j_1\textbf{K}_{a_0^j; a_1^j}.
\end{align}
We recall that conditional on the boundary conditions at or prior to $a_{k-1}^j$, $\left(z_{a_k^j}^j, z_{a_k^j}^{j \prime}\right)$ is Gaussian with mean 
\[
 {}^b_kM
\begin{bmatrix}
z^j_{a_{k-1}^j} \\
z^{j \prime}_{a_{k-1}^j}
\end{bmatrix}~~~~\text{with}~~~~
{}^b_kM= {}^j_k\textbf{K}_{a_{k}^j; a_{k-1}^j}~{}^j_k\textbf{K}_{a_{k-1}^j; a_{k-1}^j}^{-1},
\] 
and covariance matrix 
\[
{}^b_k\Sigma =  {}^j_k\textbf{K}_{a_k^j; a_k^j} - {}^b_kM ~{}^j_k\textbf{K}_{a_{k-1}^j; a_k^j}.
\] 
Hence using Lemma \ref{lem:gaussian_message} with $M=\begin{bmatrix}{}^b_kM & 0 & \dots & 0\end{bmatrix}$ where there are $(k-1)$ null block $2 \times 2$ matrices, and noting that $\left(z_{a_0^j}^j, z_{a_0^j}^{j \prime}, \dots, z_{a_{k-1}^j}^j, z_{a_{k-1}^j}^{j \prime}\right)$ is jointly Gaussian,  it follows that the vector $\left(z_{a_0^j}^j, z_{a_0^j}^{j \prime}, \dots, z_{a_k^j}^j, z_{a_k^j}^{j \prime}\right)$ is jointly Gaussian, that $\left(z_{a_k^j}^j, z_{a_k^j}^{j \prime}\right)$ has covariance matrix
\begin{empheq}[box=\widefbox]{align}
{}^j\bar{\textbf{K}}_{a_k^j; a_k^j} = {}^b_k\Sigma + {}^b_kM~{}^j\bar{\textbf{K}}_{a_{k-1}^j; a_{k-1}^j}~{}^b_kM^T, \nonumber
\end{empheq}
and that the covariance matrix between the boundary conditions at $a_k^j$ and at any earlier boundary time $a_l^j, ~ l<k$ reads:
\begin{empheq}[box=\widefbox]{align}
 {}^j\bar{\textbf{K}}_{a_k^j; a_l^j} = {}^b_kM~{}^j\bar{\textbf{K}}_{a_{k-1}^j; a_l^j}. \nonumber
\end{empheq}
\textbf{\underline{String times}:}  Let $u \in [a_{p-1}^j, a_p^j], ~ v\in [a_{q-1}^j, a_q^j]$. By the law of total expectation, we have that
\[
{}^j\bar{\textbf{K}}_{u; v} := \text{E}\left( 
\begin{bmatrix}
z^j_{u} \\
z^{j \prime}_{u}
\end{bmatrix}
\begin{bmatrix}
z^j_{v} & z^{j \prime}_{v}
\end{bmatrix}\right) =
\text{E}\left( \text{E}\left( 
\begin{bmatrix}
z^j_{u} \\
z^{j \prime}_{u}
\end{bmatrix}
\begin{bmatrix}
z^j_{v} & z^{j \prime}_{v}
\end{bmatrix}\bigg| \mathcal{B}(p, q)\right) \right),
\]
where $\mathcal{B}(p, q)$ refers to the boundary condtions at the boundaries of the $p$-th and $q$-th strings, in other words $\bigg\{z^j_{x}, z^{j \prime}_{x}, ~x \in \big\{a_{p-1}^j, a_p^j,a_{q-1}^j, a_q^j \big\}\bigg\}$. Furthermore,  using the definition of  the covariance matrix under the conditional law, it follows that
\begin{equation}
\label{eq:tot_kern_cond_cov}
\text{E}\bigg( 
\begin{bmatrix}
z^j_{u} \\
z^{j \prime}_{u}
\end{bmatrix}
\begin{bmatrix}
z^j_{v} & z^{j \prime}_{v}
\end{bmatrix}\bigg| \mathcal{B}(p, q)\bigg) = {}^j_c\bar{\textbf{K}}_{u; v} + \text{E}\bigg( 
\begin{bmatrix}
z^j_{u} \\
z^{j \prime}_{u}
\end{bmatrix}
\bigg| \mathcal{B}(p, q)\bigg)
 \text{E}\bigg( 
\begin{bmatrix}
z^j_{v} & z^{j \prime}_{v}
\end{bmatrix}\bigg| \mathcal{B}(p, q)\bigg),
\end{equation}
where ${}^j_c\bar{\textbf{K}}_{u; v}$ refers to the covariance matrix between $(z^j_u, z^{j \prime}_u)$ and $(z^j_v, z^{j \prime}_v)$ conditional on the boundary conditions $\mathcal{B}(p, q)$, and can be easily evaluated from Theorem \ref{theo:sgp}. In particular,
\begin{equation}
\label{eq:kern_cond_cov}
\text{if}~ p \neq q, ~{}^j_c\bar{\textbf{K}}_{u; v} =0, ~\text{and if}~ p=q, ~{}^j_c\bar{\textbf{K}}_{u; v} = {}^j_p\textbf{K}_{u;v} - {}^j_p\Lambda_u
\begin{bmatrix}
{}^j_p\textbf{K}_{v;a_{p-1}^j}^T 
\\  {}^j_p\textbf{K}_{v;a_p^j}^T 
\end{bmatrix},
\end{equation}
where 
\[\forall ~ x, l, ~{}^j_l\Lambda_x = 
\begin{bmatrix}
{}^j_l\textbf{K}_{x;a_{l-1}^j} &  {}^j_l\textbf{K}_{x;a_l^j} 
\end{bmatrix} 
\begin{bmatrix} 
{}^j_l\textbf{K}_{a_{l-1}^j;a_{l-1}^j} &  {}^j_l\textbf{K}_{a_{l-1}^j;a_l^j} \\
{}^j_l\textbf{K}_{a_l^j;a_{l-1}^j} &  {}^j_l\textbf{K}_{a_l^j;a_l^j}
\end{bmatrix}^{-1}.\]
We also note that \[
\text{E}\bigg( 
\begin{bmatrix}
z^j_{u} \\
z^{j \prime}_{u}
\end{bmatrix}
\bigg| \mathcal{B}(p, q)\bigg) = {}^j_p\Lambda_u 
\begin{bmatrix}
z^j_{a_{p-1}^j} \\
z^{j \prime}_{a_{p-1}^j} \\
z^j_{a_p^j} \\
z^{j \prime}_{a_p^j} \\
\end{bmatrix}~\text{and}~~
\text{E}\bigg( 
\begin{bmatrix}
z^j_{v} & z^{j \prime}_{v}
\end{bmatrix}
\bigg| \mathcal{B}(p, q)\bigg) = 
\begin{bmatrix}
z^j_{a_{q-1}^j} & z^{j \prime}_{a_{q-1}^j} & z^j_{a_q^j} & z^{j \prime}_{a_q^j} 
\end{bmatrix}
{}^j_q\Lambda_v ^T.
\]
Hence, taking the expectation with respect to the boundary conditions on both sides of Equation (\ref{eq:tot_kern_cond_cov}), we obtain:
\begin{empheq}[box=\widefbox]{align}
\forall ~u \in [a^j_{p-1}, a^j_{p}], ~  v \in [a^j_{q-1}, a^j_{q}],~~
{}^j\bar{\textbf{K}}_{u; v} = {}^j_c\bar{\textbf{K}}_{u; v} +  {}^j_p\Lambda_u 
\begin{bmatrix} 
{}^j\bar{\textbf{K}}_{a_{p-1}^j;a_{q-1}^j} &  {}^j\bar{\textbf{K}}_{a_{p-1}^j;a_q^j} \\
{}^j\bar{\textbf{K}}_{a_p^j;a_{q-1}^j} &  {}^j\bar{\textbf{K}}_{a_p^j;a_q^j}
\end{bmatrix}
{}^j_q\Lambda_v ^T,\nonumber
\end{empheq}
where $ {}^j_c\bar{\textbf{K}}_{u; v}$ is provided in Equation (\ref{eq:kern_cond_cov}).
\end{appendices}
\newpage
\bibliography{string_gp}
\end{document}